\documentclass[11pt]{article}

\usepackage{arxivtemplate}
\usepackage{tcolorbox}
\usepackage{tabularx}
\usepackage{multirow}
\usepackage{subcaption}

\DeclareMathOperator{\TV}{TV}
\DeclareMathOperator{\Var}{Var}

\title{A Task-Centric Theory for Iterative Self-Improvement \\ with Easy-to-Hard Curricula}

\author{%
Chenruo Liu\qquad Yijun Dong\qquad Yiqiu Shen\qquad Qi Lei\\[0.8ex]
New York University
}

\date{}

\begin{document}
\maketitle

\begingroup
\renewcommand{\thefootnote}{}
\footnotetext{Accepted at the International Conference on Machine Learning (ICML) 2026.}
\endgroup

\begin{abstract}
Iterative self-improvement fine-tunes an autoregressive large language model (LLM) on reward-verified outputs generated by the LLM itself.
In contrast to the empirical success of self-improvement, the theoretical foundation of this generative, iterative procedure in a practical, finite-sample setting remains limited. 
We make progress toward this goal by modeling each round of self-improvement as maximum-likelihood fine-tuning on a reward-filtered distribution and deriving finite-sample guarantees for the expected reward. 
Our analysis reveals an explicit feedback loop where better models accept more data per iteration, supporting sustained self-improvement while explaining eventual saturation of such improvement. 
Adopting a task-centric view by considering reasoning tasks with multiple difficulty levels, we further prove quantifiable conditions on model initialization, task difficulty, and sample budget where easy-to-hard curricula provably achieve better guarantees than training on fixed mixtures of tasks.
Our analyses are validated through Monte-Carlo simulations and experiments spanning a synthetic graph-based reasoning task and multiple standard mathematical reasoning benchmarks.
\end{abstract}

\section{Introduction}\label{sec:intro}
Conditioned on strong pre-training, modern large language models (LLMs) increasingly acquire their unprecedented reasoning skills during post-training not only from human-annotated supervision but also via \emph{iterative self-improvement}---a supervision-free loop where the model iteratively generates candidate answers for questions from the downstream task and then gets fine-tuned on curated question-answer pairs that pass certain external verifications~\citep{xin2024deepseek,xin2024deepseek2,zelikman2022star,lin2025goedel,lin2025goedelv2,ren2025deepseek,zhang2025leanabell,guan2025rstar}.
A closely related practice is to schedule these self-improvement iterations using an \emph{easy-to-hard curriculum}, where gradually shifting the downstream question distribution toward more challenging instances often improves the final performance~\citep{ren2025deepseek,koh2025adastar,lee2025self}.

From a theoretical perspective, the empirical success of self-improvement is arguably surprising because the training data is not exogenous, i.e., the candidate solutions are generated by the model itself.
Recent theories on self-improvement have made progress on clarifying why learning such endogenous data succeeds without violating the data processing inequality~\citep{shannon1948mathematical}, mainly from the model evolution perspective. For example, \citet{huang2024self} casts self-improvement as a form of probability-mass ``sharpening''; while \citet{sun2025theoretical} takes a solver–verifier-gap view of the learning dynamics at the continuous limit.
However, as a post-training strategy, the success of self-improvement is highly dependent on the interaction between the pre-trained model and the downstream task. 
In addition, the discrete, multi-step iterations are critical for the appealing empirical gain of self-improvement.
What remains theoretically under-specified for such self-improvement pipelines is a finite-sample, task-centric account that answers two practical questions:

\begin{center}
\begin{tcolorbox}[left=3pt, right=3pt, top=3pt, bottom=3pt, width=0.8\textwidth]
    \centering
    \emph{With a single task, when does (multi-step) self-improvement happen? \\
    With a mixture of tasks, when does task scheduling, like easy-to-hard curricula, provably help?}
\end{tcolorbox}
\end{center}

\paragraph{Our contributions.} 
We provide theory-grounded answers to the above questions that closely match self-improvement in practice, which can be summarized as follows:
\begin{itemize}
    \item \textbf{A task-centric framework unveiling when does (multi-step) self-improvement happen on a single task.} Toward the first question, we model each self-improvement iteration as maximum-likelihood fine-tuning on a reward-filtered distribution induced by the model’s own generation and an external verifier (\Cref{sec:prob}). On a single task, our expected reward lower bounds for single/multi-step self-improvement highlight a feedback loop in which the filtered distributions of better models have more data per iteration being accepted, effectively increasing the training set size. 
    This lens further clarifies the effects of model initialization, task difficulty, and finite sample sizes on the sustained self-improvement and its eventual saturation (\Cref{sec:ISI}).
    \item \textbf{Across multiple tasks, moderate separation in task difficulties is essential for effective easy-to-hard curricula.} Toward the second question, we compare easy-to-hard scheduling across self-improvement iterations with training on a fixed-mixture baseline under the same budget. We derive quantifiable conditions under which easy-to-hard curricula enjoy strictly better expected reward lower bounds. The analysis unveils three levers for the effectiveness of easy-to-hard scheduling in self-improvement: moderate separation of task difficulties, a critical sample budget, and model initialization (\Cref{sec:E2H}).
    \item \textbf{Empirical validation across diverse mathematical reasoning tasks.} We validate the analysis for both questions through experiments on a synthetic graph-based reasoning task and multiple standard mathematical reasoning benchmarks, including GSM8K~\cite{cobbe2021training} and DeepMind Mathematics~\cite{saxton2019analysing} (\Cref{sec:exp}), and complement these experiments with Monte-Carlo simulations that directly visualize the evolution of the expected reward lower bound across self-improvement iterations (\Cref{sec:E2H}).
\end{itemize}
\section{Related Work}
\label{sec:related}

\paragraph{Self-improvement for LLM mathematical reasoning.}
In the realm of LLMs, self-improvement broadly refers to a family of procedures in which a LLM produces its own supervision signal and then leverages this signal to improve its capabilities. Empirically, many pipelines for reasoning tasks, especially mathematical reasoning, instantiate this idea via an iterative generate-and-filter loop: the model generates one or multiple candidate solutions for each problem, retains a subset of correct solutions, and then fine-tunes on the retained solutions to enhance performance \cite{xin2024deepseek,xin2024deepseek2,zelikman2022star,guo2025deepseek,lin2025goedel,lin2025goedelv2,ren2025deepseek,zhang2025leanabell,guan2025rstar}. 
While existing approaches vary in implementation details (e.g., incorporating long chain-of-thought (CoT) reasoning \cite{lin2025goedelv2}, tactic annotations \cite{xin2024deepseek2}, code-augmented CoT data \cite{guan2025rstar}, or reflection steps \cite{zhang2025leanabell}), their overall framework shares the same spirit of bootstrapping from model-generated attempts filtered by an explicit correctness signal. 
Additionally, a growing line of work \citep{ren2025deepseek,koh2025adastar, lee2025self} shows that combining self-improvement with an explicit easy-to-hard curriculum across rounds can further strengthen model performance.

\paragraph{Theoretical Understanding of LLM self-improvement.}
While LLM self-improvement has shown strong empirical success, another line of work seeks to understand its underlying mechanisms from a theoretical perspective. \citet{huang2024self} formalizes self-improvement as a consequence of a sharpening mechanism, which encourages the model to place larger probability mass on higher-quality sequences. From a different perspective, \citet{mohri2025coherence} studies self-improvement through the lens of coherence. \citet{yang2025spend} investigates the impact of budget allocation policies across self-improvement iterations. \citet{sun2025theoretical} models the training dynamics of self-improvement via the solver-verifier gap. 
In contrast to these works, our theoretical modeling of self-improvement is more closely aligned with mathematical reasoning settings, i.e., binary and verifiable rewards together with reject sampling for data collection.
More importantly, our analysis characterizes multi-step iterative self-improvement and further incorporates its interaction with easy-to-hard curricula.

\paragraph{Self-distillation, self-consuming loops, and model collapse.}
First, several theoretical works on self-distillation analyze training a model with supervision signals generated by the model itself \cite{mobahi2020self,das2023understanding,pareek2024understanding}.
In contrast, we study LLM self-improvement for generative reasoning, which is typically outside the scope of standard self-distillation analyses (e.g., linear predictors).
Second, another related line of work concerns self-consuming loops in generative models, a failure mode where repeatedly training on generated data can degrade performance and even lead to model collapse \cite{fu2024towards, fu2025theoretical}. 
Recent theory shows that such degradation can be mitigated under suitable mechanisms \cite{gillman2024self,gerstgrasser2024model,ferbach2024self,feng2024beyond, fu2025selfverification, fu2025theoretical}. 
While the objective of preventing model collapse in this literature differs from LLM self-improvement, \cite{ferbach2024self} is particularly relevant to our work as it explicitly characterizes how curated data can optimize a reward signal. 
However, it is not tailored to mathematical reasoning, and the resulting trends do not align as closely with empirical practice. A primary reason is that it does not account for the finite-sample regime, which, as we argue in this paper, is important to understanding self-improvement in mathematical reasoning.

\section{Problem Setup and Notation}
\label{sec:prob}

\paragraph{Problem setup.}
We introduce a theoretical formulation of a practical single-iteration self-improvement procedure for mathematical reasoning~\citep{zelikman2022star, xin2024deepseek2, guo2025deepseek, lin2025goedelv2}. At iteration $t$ (with $t$ starting from $0$), we sample questions $q \sim p_0$ and generate answers $a \sim \pi_{\theta_t}(\cdot \mid q)$ using the current model parameters $\theta_t$. Unless otherwise stated, we generate a single candidate answer per question. For each pair $(q,a)$ we define a reward (score) function $s(q,a)\in[0,1]$, where a larger reward indicates a better answer. We retain only samples whose reward is at least a threshold $\tau\in(0,1]$ and discard the rest.

At the population level, this filtering induces the distribution
\[
D'_{p_0,\theta_t}(q,a)
= \frac{p_0(q)\,\pi_{\theta_t}(a \mid q)\,\mathbf{1}_{\{\,s(q,a)\ge \tau\,\}}}{Z_{p_0}(\theta_t)},
\]
where
\(
\alpha(\theta,q) \footnote{Since an LLM can, in principle, assign nonzero probability to any reasonable text continuation, including a correct solution, we treat $\alpha(\theta,q)>0$ throughout.}
:= \Pr_{a\sim \pi_{\theta}(\cdot \mid q)}\!\big[s(q,a)\ge \tau\big]\)
and \(
Z_p(\theta)
:= \mathbb{E}_{q\sim p}\big[\alpha(\theta,q)\big]
\)
denote the per-question and global acceptance rates, respectively. The idealized model update is then
\[
\theta_{t+1}
= \arg\max_{\theta}\;
\mathbb{E}_{(q,a)\sim D'_{p_0,\theta_t}}\!\big[ \log \pi_{\theta}(a\mid q) \big].
\]
In practice, with finite samples, given the current model $\hat\theta_t$ and a dataset of $n$ sampled questions, we obtain a random number $n_t$ (with $n_t \le n$) of accepted samples
\(
\{(q_i,a_i)\}_{i=1}^{n_t} \sim D'_{p_0,\hat\theta_t},
\)
and perform empirical maximum likelihood estimation:
\[
\hat\theta_{t+1}
=\arg\max_{\theta}\;
\frac{1}{n_t}\sum_{i=1}^{n_t}\log \pi_\theta(a_i\mid q_i).
\]
Our goal is to relate this empirical self-improvement objective to the evaluation metric, namely the expected reward under $p_0$,
\[
V_{p_0}(\hat\theta_{t+1})
:= \mathbb{E}_{(q,a)\sim D_{p_0,\hat\theta_{t+1}}}\!\big[s(q,a)\big],\]
where \(
D_{p,\theta}(q,a):=p(q)\pi_\theta(a\mid q).
\)

\paragraph{Notation.}
Appendix~Table~\ref{tab:notation} summarizes the notation used throughout the paper, together with brief descriptions and definitions.

\section{Iterative Self-Improvement}
\label{sec:ISI}

This section develops basic theoretical tools for iterative self-improvement. We start with a single-step analysis under a general reward function $s(q,a)$ and threshold $\tau$. We then specialize to mathematical reasoning and study the resulting multi-step self-improvement, which will serve as a foundation for our analysis of the easy-to-hard curriculum in Section~\ref{sec:E2H}.

\subsection{Single-Step Self-Improvement}
\label{subsec:single-step}

We begin by analyzing a single round of self-improvement in the finite-sample regime.

\begin{theorem}
\label{thm:formalmulti-sample}
Fix an iteration $t$ with current model $\hat\theta_t$.
Let $\mathcal Q$ denote the question space and let $\Delta(\mathcal A)$ be the set of probability measures on the answer space $\mathcal A$.  
Let $\Pi \subset (\mathcal Q \to \Delta(\mathcal A))$ be a finite model class, and suppose that the conditional distribution over answers induced by $D'_{p_0,\hat\theta_t}$ belongs to $\Pi$.
Suppose that for each $q\sim p_0$ we draw \(m\) (\(m \ge 1\)) i.i.d.\ candidates
$a_1,a_2,\ldots,a_m \sim \pi_{\hat\theta_t}(\cdot\mid q)$ sequentially and keep the first accepted candidate:
we include $(q,a_j)$ in the training data where $j:=\min\{i\in[m]: s(q,a_i)\ge \tau\}$, and discard $q$ if no such $j$ exists.
Let $n_t^{(m)}$ be the resulting number of accepted training pairs. Assume that
\(
\operatorname{ess\,inf}_{q}\,\alpha(\hat\theta_t, q)
> 0.
\) Then, with probability at least $1-\delta$,
\[
V_{p_0}(\hat\theta_{t+1})
\;\ge\;
\tau\left(
1-\frac{Z^{(m)}_{p_0}(\hat\theta_t)}{\alpha^{(m)}(\hat\theta_t)}\,
\sqrt{\frac{2\log\!\big(|\Pi|\,\delta^{-1}\big)}{n_t^{\smash{(m)}}\,}}
\right),
\]
where
\(
\alpha^{(m)}(\hat\theta_t,q)
:= 1-\big(1-\alpha(\hat\theta_t,q)\big)^m,
\)
\(Z^{(m)}_{p_0}(\hat\theta_t)
:= \mathbb{E}_{q\sim p_0}\!\big[\alpha^{(m)}(\hat\theta_t,q)\big],
\)
and \(
\alpha^{(m)}(\hat\theta_t)
:= \operatorname{ess\,inf}_{q}\,\alpha^{(m)}(\hat\theta_t,q).
\)
Moreover, the ratio $Z^{(m)}_{p_0}(\hat\theta_t)/\alpha^{(m)}(\hat\theta_t)$ is non-increasing in $m$ and satisfies
\(\lim_{m\to\infty} Z^{(m)}_{p_0}(\hat\theta_t) / \alpha^{(m)}(\hat\theta_t) = 1.\)
\end{theorem}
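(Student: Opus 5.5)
The plan is a standard MLE-in-TV argument, followed by a change of measure from the filtered distribution back to $D_{p_0,\cdot}$. First I identify the data distribution under $m$-sample "first-accept" filtering. For a fixed $q$, the kept answer is distributed as $\pi_{\hat\theta_t}(\cdot\mid q)$ conditioned on $s(q,a)\ge\tau$. This follows because the first accepted draw among i.i.d. draws has the conditional law given acceptance. The question $q$ is retained with probability $\alpha^{(m)}(\hat\theta_t,q)$. Hence the accepted pairs are i.i.d. from
\[
D'^{(m)}(q,a)=\frac{p_0(q)\,\alpha^{(m)}(\hat\theta_t,q)}{Z^{(m)}_{p_0}(\hat\theta_t)}\cdot\frac{\pi_{\hat\theta_t}(a\mid q)\mathbf 1_{\{s(q,a)\ge\tau\}}}{\alpha(\hat\theta_t,q)}.
\]
The conditional $\pi^\star(a\mid q)$ is the filtered conditional, which is in $\Pi$ by assumption. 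Call the question marginal $p^{(m)}$.

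Next I invoke a finite-class MLE guarantee (e.g.\ the Hellinger bound of Zhang / Foster et al.). Conditional on $n_t^{(m)}$, with probability $1-\delta$,
\[
\mathbb E_{q\sim p^{(m)}}\,\mathrm{H}^2\big(\pi_{\hat\theta_{t+1}}(\cdot\mid q),\pi^\star(\cdot\mid q)\big)\lesssim \log(|\Pi|/\delta)/n_t^{(m)}.
\]
Converting Hellinger to TV and applying Jensen gives
\[
\mathbb E_{q\sim p^{(m)}}\TV\big(\pi_{\hat\theta_{t+1}}(\cdot\mid q),\pi^\star(\cdot\mid q)\big)\le\sqrt{2\log(|\Pi|/\delta)/n_t^{(m)}}.
\]
The constant should be tracked so that it matches the $\sqrt{2\cdot}$ in the statement.

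Then I lower bound the reward. Since $s\ge\tau\mathbf 1_{\{s\ge\tau\}}$, we have $V_{p_0}(\hat\theta_{t+1})\ge \tau\,\mathbb E_{q\sim p_0}\Pr_{a\sim\pi_{\hat\theta_{t+1}}}[s\ge\tau]$. The filtered conditional $\pi^\star$ puts all its mass on $\{s\ge\tau\}$, so $\Pr_{\pi_{\hat\theta_{t+1}}}[s\ge\tau]\ge 1-\TV(\pi_{\hat\theta_{t+1}},\pi^\star)$ for each $q$. This gives $V\ge\tau(1-\mathbb E_{p_0}\TV)$. To change measure I use $dp_0/dp^{(m)}=Z^{(m)}_{p_0}/\alpha^{(m)}(\hat\theta_t,q)\le Z^{(m)}_{p_0}/\alpha^{(m)}(\hat\theta_t)$. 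This holds $p_0$-a.e., and the essential infimum is positive by the assumption on $\alpha$. Hence $\mathbb E_{p_0}\TV\le (Z^{(m)}/\alpha^{(m)})\,\mathbb E_{p^{(m)}}\TV$, which yields the claimed bound. Because the MLE bound holds conditionally on every value of the random count $n_t^{(m)}$, it also holds unconditionally.

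For monotonicity, let $x=\alpha(\hat\theta_t,q)$ and $f_m(x)=1-(1-x)^m$. The ratio is $\mathbb E_{p_0}[f_m(\alpha(q))]/f_m(\underline\alpha)$, with $\underline\alpha$ the essential infimum. It suffices to show that $f_m(x)/f_m(\underline\alpha)$ is non-increasing in $m$ for each $x\ge\underline\alpha$, and then integrate. Write $u=1-x\le v=1-\underline\alpha$. We need
\[
(1-u^{m+1})(1-v^m)\le(1-u^m)(1-v^{m+1}),
\]
which is equivalent to $\frac{1-u^{m+1}}{1-u^m}\le\frac{1-v^{m+1}}{1-v^m}$. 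That holds because $g(y)=\frac{1-y^{m+1}}{1-y^m}=1+\frac{y^m}{1+y+\dots+y^{m-1}}$ is increasing on $[0,1)$. For the limit, $f_m(\underline\alpha)\to1$ since $\underline\alpha>0$, and $f_m(\alpha(q))\to1$ pointwise and boundedly, so dominated convergence gives the limit $1$. I expect the main obstacle to be pinning down the exact MLE constant: getting $\sqrt{2\log(|\Pi|/\delta)/n}$ rather than a looser constant. The cleanest route is likely a direct bound on $\mathbb E\,\TV^2$ through the Hellinger affinity, $\TV\le\sqrt{2}\,\mathrm H$ with $\mathrm H^2\le\log(|\Pi|/\delta)/n$ under the unnormalized-Hellinger convention.
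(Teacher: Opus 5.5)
Your proposal is correct and follows essentially the same route as the paper. Both identify the first-accept filtered distribution, apply the finite-class MLE Hellinger bound, pass to TV via Jensen, change measure with density ratio at most $Z^{(m)}_{p_0}(\hat\theta_t)/\alpha^{(m)}(\hat\theta_t)$, and get monotonicity from the increasing function $h_m(y)=(1-y^{m+1})/(1-y^m)$; your algebraic identity and dominated convergence replace the paper's log-derivative and monotone convergence, and you additionally justify conditioning on the random count $n_t^{(m)}$, which the paper glosses over. One small labeling slip: the pair $\TV\le\sqrt{2}\,\mathrm{H}$, $\mathbb{E}\,\mathrm{H}^2\le\log(|\Pi|\delta^{-1})/n$ is the normalized convention, while the paper uses the unnormalized $D_{\mathrm H}^2=\int(\sqrt{p}-\sqrt{q})^2$ with $\TV\le D_{\mathrm H}$ and $\mathbb{E}\,D_{\mathrm H}^2\le 2\log(|\Pi|\delta^{-1})/n$; both yield the same $\sqrt{2\log(|\Pi|\delta^{-1})/n_t^{(m)}}$.
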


Theorem~\ref{thm:formalmulti-sample} extends our setup in Section~\ref{sec:prob}; it reduces to the setting in Section~\ref{sec:prob} by taking $m=1$, in which case $Z^{(m)}_{p_0}(\hat\theta_t)=Z_{p_0}(\hat\theta_t)$ and $n_t^{(m)}=n_t$.
Theorem~\ref{thm:formalmulti-sample} highlight the importance of finite-sample effects for characterizing self-improvement.
Concretely, with infinite samples, the idealized update yields $\hat\theta_{t+1}$ satisfying
\[
\pi_{\hat\theta_{t+1}}(a\mid q)
=
\frac{\pi_{\hat\theta_t}(a\mid q)\,\mathbf{1}_{\{\,s(q,a)\ge\tau\,\}}}{\alpha(\hat\theta_t,q)},
\]
and hence $V_{p_0}(\hat\theta_{t+1})\ge \tau$.
In other words, an infinite-sample (population) update would suggest that a single iteration already guarantees performance above $\tau$ and that this guarantee is independent of $\hat\theta_t$, both of which are inconsistent with practice.
This motivates our finite-sample regime analysis, which further shows that (i) the ratio $Z^{(m)}_{p_0}(\hat\theta_t)/\alpha^{(m)}(\hat\theta_t)$ decreases with $m$, and (ii) the effective sample size $n_t^{(m)}$ increases (in expectation) with both $n$ and $m$.
Consequently, to obtain a stronger guarantee on self-improvement (i.e., a larger lower bound on $V_{p_0}(\hat\theta_{t+1})$), it is beneficial to increase both the question budget $n$ and the per-question answer budget $m$.

\par\addvspace{0.6\baselineskip}
\begin{remark}[\textit{On the model class $\Pi$}]\label{rmk:pi}
Our assumption of a finite model class $\Pi$ is consistent with prior theoretical treatments of self-improvement~\citep{huang2024self}.
More importantly, the effectiveness of recent work in formalizing self-improvement as tree search over a finite archive of candidate agents~\citep{wang2025huxley} suggests that the candidate set $|\Pi|$ is typically not very large in practice.
Moreover, evidence that stronger language models admit a smaller intrinsic dimension during fine-tuning~\citep{aghajanyan2021intrinsic} indicates that $|\Pi|$ tends to be smaller for stronger base models.
\end{remark}

\subsection{Multi-Step Self-Improvement}
\label{subsec:multi-step}

We specialize to mathematical reasoning by adopting a binary reward $s(q,a)\in\{0,1\}$, where $s(q,a)=1$ indicates a correct (verifiable) solution and $s(q,a)=0$ otherwise.
In this regime, combined with Assumption~\ref{assump:score-acceptance-coupling} which posits a positive relationship between the per-question acceptance rate and the expected reward for all but a $\gamma$ fraction of questions, Corollary~\ref{cor:multi-step-binary} relates $V_{p_0}(\hat\theta_{t+1})$ directly to the pretrained initialization performance $V_{p_0}(\hat\theta_0)$ through an iterated map.

\begin{assumption}
\label{assump:score-acceptance-coupling}
Let $\Theta$ be a small neighborhood of the pretrained initialization in which post-training is performed. Then there exist a constant $c \in (0,1)$ and a small constant $\gamma \ge 0$ such that for any question distribution $p$ and model $\theta \in \Theta$,
\(
\Pr_{q \sim p}\!\big[\alpha(\theta, q) < c\, V_p(\theta)\big]
\le
\gamma.
\)
\end{assumption}

\begin{corollary}
\label{cor:multi-step-binary}
Consider the binary reward setting $s(q,a)\in\{0,1\}$, where each iteration $t$ uses the same question budget $n$ with $n_t \le n$ accepted samples. Under Assumption~\ref{assump:score-acceptance-coupling}, define
\[
F(x)
\;:=\;
1-\gamma-\frac{c_\delta \nu}{c\sqrt{x-c_{\delta'}\nu}}
\]
on its natural domain \(x>c_{\delta'}\nu\), where
\(\nu:=\sqrt{1/n}\),
\(c_\delta:=\sqrt{2\log(|\Pi|\,\delta^{-1})}\), and
\(c_{\delta'}:=\sqrt{\log(\delta'^{-1})/2}\).
Then, with probability at least $1-\delta-\delta'$,
\(
V_{p_0}(\hat\theta_{t+1})
\;\ge\;
F\big(V_{p_0}(\hat\theta_t)\big).
\)
Moreover, with probability at least $1-t(\delta+\delta')$,
\[
V_{p_0}(\hat\theta_t)
\ge
F^{\circ t}\big(V_{p_0}(\hat\theta_0)\big),
\]
where $F^{\circ t}$ denotes the $t$-fold composition of $F$.
\end{corollary}

\begin{proposition}
\label{prop:mono-F-multistep}
Under the setting of Corollary~\ref{cor:multi-step-binary}, let \(\nu\) be sufficiently small such that
\[
0<
\frac{c_\delta \nu}{c\,(1-\gamma-c_{\delta'}\nu)^{3/2}}
<
\frac{2}{3\sqrt{3}}.
\]
Let \(\mathcal I(1,\nu)=(x_-(1,\nu),x_+(1,\nu))\subset(c_{\delta'}\nu,\,1-\gamma)\) be the interval
defined in Definition~\ref{def:invariant-interval-Iak} with \(a=1\).
Then, for any non-negative integer \(t\),
\(
F^{\circ(t+1)}\big(V_{p_0}(\hat\theta_0)\big)>F^{\circ t}\big(V_{p_0}(\hat\theta_0)\big)
\) and
\(
F^{\circ t}\big(V_{p_0}(\hat\theta_0)\big)\in \mathcal I(1,\nu) \) hold
if and only if
\(V_{p_0}(\hat\theta_0)\in \mathcal I(1,\nu)\).
Moreover, \(x_-(1,\nu)\) is increasing in \(\nu\), \(x_+(1,\nu)\) is decreasing in \(\nu\), and the interval length \(|\mathcal I(1,\nu)|=x_+(1,\nu)-x_-(1,\nu)\) is decreasing in \(\nu\) and satisfies
\[
|\mathcal I(1,\nu)|
\;\ge\;
(1-\gamma-c_{\delta'}\nu)
-\frac{3\sqrt3}{2}\cdot
\frac{c_\delta \nu}{c\sqrt{\,1-\gamma-c_{\delta'}\nu\,}}.
\]
\end{proposition}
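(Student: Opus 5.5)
The plan is to reduce everything to the fixed-point structure of the one-dimensional map $F$. I take Definition~\ref{def:invariant-interval-Iak} (with $a=1$) to define $x_\pm(1,\nu)$ as the two roots in $(c_{\delta'}\nu,1-\gamma)$ of the fixed-point equation $F(x)=x$. The proof then has three parts: show that exactly two such roots exist, characterize the set where $F(x)>x$, and then handle monotonicity in $\nu$ and the length bound.

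\textbf{Fixed points of $F$.} Substitute $y:=x-c_{\delta'}\nu>0$ and set $A:=1-\gamma-c_{\delta'}\nu$ and $\beta:=c_\delta\nu/c$. The equation $F(x)=x$ becomes
\[
g(y):=A-y-\beta y^{-1/2}=0 .
\]
The function $g$ is strictly concave on $(0,\infty)$, tends to $-\infty$ at both ends, and is maximized at $y^*=(\beta/2)^{2/3}$ with
\[
g(y^*)=A-3(\beta/2)^{2/3}.
\]
We have $g(y^*)>0$ if and only if $\beta<2A^{3/2}/(3\sqrt3)$, which is exactly the hypothesis $\beta/A^{3/2}<2/(3\sqrt3)$. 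In that case $g$ has exactly two roots $0<y_-<y^*<y_+<A$; the bound $y_+<A$ holds because $g(A)=-\beta A^{-1/2}<0$. Moreover $g>0$ precisely on $(y_-,y_+)$. Translating back, $x_\pm=c_{\delta'}\nu+y_\pm$, the interval $\mathcal I(1,\nu)\subset(c_{\delta'}\nu,1-\gamma)$, and $F(x)>x$ if and only if $x\in\mathcal I(1,\nu)$.

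\textbf{The equivalence.} $F$ is strictly increasing on its domain, since $\beta y^{-1/2}$ is decreasing in $y$.

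For the ``if'' direction, suppose $x_0\in\mathcal I$. Monotonicity and the fixed points give $x_-=F(x_-)<F(x_0)<F(x_+)=x_+$, so $\mathcal I$ is invariant under $F$. Since $F(x)>x$ on $\mathcal I$, induction shows the orbit stays in $\mathcal I$ and is strictly increasing.

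For the ``only if'' direction, take $t=0$: the required conditions include $F^{\circ 0}(x_0)=x_0\in\mathcal I$ directly, and also $F(x_0)>x_0$, which again forces $x_0\in\mathcal I$. One technical point needs care. If $x_0\notin\mathcal I$, the orbit may leave the natural domain, so the iterates are undefined; in that case the statement should be read as failing, and I would note this explicitly.

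\textbf{Monotonicity in $\nu$.} Write $G(x,\nu):=F(x)-x$. Then $\partial_\nu G<0$ on the domain. Increasing $\nu$ increases the subtracted term $\frac{c_\delta\nu}{c\sqrt{x-c_{\delta'}\nu}}$, because its numerator grows and its denominator shrinks. At the roots, $\partial_x G$ is positive at $x_-$ and negative at $x_+$; these roots are simple, since $g$ is strictly concave with $y_-<y^*<y_+$.

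Applying the implicit function theorem, $dx_\pm/d\nu=-\partial_\nu G/\partial_x G$. This gives $x_-'(\nu)>0$ and $x_+'(\nu)<0$, so the length $x_+-x_-$ is decreasing in $\nu$. The hypothesis defines an interval of $\nu$ on which the two roots persist, so the implicit function theorem applies throughout.

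\textbf{Length bound.} The step I expect to need the most care is the explicit lower bound on $|\mathcal I|=y_+-y_-$. Concavity of $g$ suggests comparing $g$ with a tangent or secant line. I would try the following.

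First, bound $y_-$ from above. At $y_-$ we have $\beta y_-^{-1/2}=A-y_-\le A$. Combined with $y_-<y^*$, this gives $y_-\ge\beta^2/A^2$, which is the wrong direction, so I need a different route.

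Instead, bound $y_-$ using a chord. On $[y_-,A]$, concavity of $g$ gives $g\ge$ the chord, but the cleaner approach is to locate points where $g\ge 0$. Consider
\[
\bar y:=\tfrac{3\sqrt3}{2}\,\beta A^{-1/2}\cdot\kappa
\]
for a suitable $\kappa$, and check that $g(\bar y)\ge 0$ and $g(A-\bar y')\ge0$ hold for appropriate points. Since $y_+\ge A-\beta A^{-1/2}\cdot(\text{const})$ and $y_-\le\text{const}\cdot\beta A^{-1/2}$, the target $y_+-y_-\ge A-\tfrac{3\sqrt3}{2}\beta A^{-1/2}$ suggests splitting the constant $3\sqrt3/2$ between the two ends.

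The cleanest trick is probably to evaluate $g$ at $y=A/3$ and $y=2A/3$ and use concavity together with the hypothesis $\beta\le 2A^{3/2}/(3\sqrt3)$ to linearize $\beta y^{-1/2}$. Verifying this calculation is the main obstacle.
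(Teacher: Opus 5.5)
Your fixed-point analysis, the equivalence (reading the converse off $t=0$), and the implicit-function argument for $x_-'(\nu)>0>x_+'(\nu)$ are correct. They match the paper, which does the same after rescaling to $s=y/A\in(0,1)$, the map $1-\sigma s^{-1/2}$ with $\sigma=\beta A^{-3/2}$, and the cubic $s(1-s)^2=\sigma^2$.

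The genuine gap is the explicit bound $y_+-y_-\ge A-\frac{3\sqrt3}{2}\beta A^{-1/2}$. It is part of the statement, you leave it as ``the main obstacle'', and the route you settle on cannot give it. Because $g$ is concave, tangent lines, and chords extended past their endpoints, lie \emph{above} $g$, so they only bound the roots from outside. A lower bound on $y_+-y_-$ has to come from exhibiting points where $g\ge 0$.

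Fixed test points $A/3$ and $2A/3$ can certify at most $y_+-y_-\ge A/3$. With $u:=\frac{3\sqrt3}{2}\beta A^{-3/2}\in(0,1)$ (exactly your hypothesis), one has $g(A/3)=\frac{2A}{3}(1-u)$ and $g(2A/3)=\frac{A}{3}(1-\sqrt2\,u)$, and the latter is negative for $u>1/\sqrt2$. The target $A(1-u)$ exceeds $A/3$ whenever $u<2/3$. The target is also sharp at both ends: as $u\to0$ you have $y_-\to0$ and $y_+\to A$, and as $u\to1$ the roots merge at $A/3$. So the test points must depend on $\beta$ in exactly the right way.

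Your ``split the constant $\frac{3\sqrt3}{2}$'' idea does close the gap, with $\frac13$ on the left and $\frac23$ on the right. Put $y_1=\frac{u}{3}A$ and $y_2=\bigl(1-\frac{2u}{3}\bigr)A$, so that $y_2-y_1=A(1-u)=A-\frac{3\sqrt3}{2}\beta A^{-1/2}$. Substituting $\beta=\frac{2}{3\sqrt3}uA^{3/2}$ gives
\[
g(y_1)=\frac{A}{3}\bigl(3-u-2\sqrt u\bigr)=\frac{A}{3}(1-\sqrt u)(3+\sqrt u)>0,
\qquad
g(y_2)=\frac{2uA}{3}\Bigl(1-\frac{1}{\sqrt{3-2u}}\Bigr)>0
\]
for $u\in(0,1)$. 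Hence $y_1,y_2\in(y_-,y_+)$ and the bound follows, even strictly.

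This is more elementary than the paper's proof. The paper solves the cubic trigonometrically to get $s_+-s_-=\frac{2}{\sqrt3}\sin\frac{2\theta}{3}$ with $\cos\theta=\frac{3\sqrt3}{2}\sigma$. It then proves $\frac{2}{\sqrt3}\sin\frac{2\theta}{3}\ge 1-\cos\theta$ on $[0,\pi/2]$, using concavity of the difference and its vanishing at both endpoints. As written, though, your proposal does not establish this part of the proposition.
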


\begin{remarkseries}

\begin{subremark}[Moderate task difficulty benefits iterative self-improvement]\label{rmk:multi-regime}
Corollary~\ref{cor:multi-step-binary} and Proposition~\ref{prop:mono-F-multistep} suggest that iterative self-improvement admits monotonic lower-bound guarantees only when the task difficulty is neither too hard nor too easy for the pretrained initialization such that \(V_{p_0}(\hat\theta_0)\in \mathcal I(1,\nu)\). Within $\mathcal I(1,\nu)$, better models admit more data per iteration being accepted, thereby sustaining self-improvement over successive iterations.
\end{subremark}

\begin{subremark}[Benefits of larger budgets]\label{rmk:multi-budget}
Increasing the question budget $n$ (i.e., decreasing \(\nu\)) enlarges the interval \(\mathcal I(1,\nu)\), and hence enlarges the set of initial performances for which the bound sequence \(\{F^{\circ t}(V_{p_0}(\hat\theta_0))\}_{t\ge 0}\) is guaranteed to be strictly increasing.
\end{subremark}

\begin{subremark}[Inherent upper bound]\label{rmk:multi-bounded}
Iterative self-improvement is inherently bounded: for \(V_{p_0}(\hat\theta_0)\in \mathcal I(1,\nu)\), the lower bound cannot exceed \(x_+(1,\nu)\), which is strictly below \(1-\gamma\). This provides a rationale for practical mathematical reasoning pipelines to incorporate additional optimization phases (e.g., reinforcement learning~\citep{guo2025deepseek}) to push performance further.

\end{subremark}

\end{remarkseries}

\section{Iterative Easy-to-Hard Curriculum for Self-Improvement}
\label{sec:E2H}

Combining self-improvement with an easy-to-hard curriculum across iterations has emerged as a promising approach for further improving model performance by progressively increasing the difficulty of questions encountered in different rounds \citep{ren2025deepseek,koh2025adastar, lee2025self}. Despite its empirical appeal, a principled theoretical understanding of such curriculum-guided self-improvement remains limited. In this section, we extend the tools developed in Section~\ref{sec:ISI} to study when and why integrating iterative self-improvement with an easy-to-hard curriculum can yield stronger self-improvement guarantees.

\subsection{Easy-to-Hard Curriculum and Baseline}
\label{subsec:e2h-baseline}

\paragraph{Difficulty levels.}
We assume there exist $L$ ($L \ge 2$) task distributions $p_1,\ldots,p_L$, where each $p_i$ is a valid question distribution, and the difficulty increases progressively from $p_1$ to $p_L$.
Assumption~\ref{assump:e2h-powerlaw} formalizes this notion via a power-law separation between adjacent tasks (in difficulty) $p_i$ and $p_{i+1}$.
Concretely, under Assumption~\ref{assump:e2h-powerlaw}, for every $i\in[L-1]$ and every $\theta\in\Theta$, we have
\[
1
\;<\;
\frac{i^{-\beta'}}{(i+1)^{-\beta'}}
\;\le\;
\frac{V_{p_i}(\theta)}{V_{p_{i+1}}(\theta)}
\;\le\;
\frac{i^{-\beta}}{(i+1)^{-\beta}}.
\]
This reflects the view that expected reward is a natural measure of task difficulty, and that the relative ordering of difficulty levels should be model-invariant for $\theta\in\Theta$.
Moreover, a larger $\beta'$ corresponds to a larger difficulty ratio between adjacent tasks, while a larger uncertainty width $\Delta:=\beta-\beta'$ indicates greater ambiguity in this difficulty ratio.

\begin{assumption}
\label{assump:e2h-powerlaw}
Let $\Theta$ be a small neighborhood of the pretrained initialization in which post-training is performed.
Consider $L$ question distributions $\{p_1,p_2,\ldots,p_L\}$. For $\theta\in\Theta$ and $i\in[L]$, define the expected reward
\(
V_{p_i}(\theta)
:=
\mathbb{E}_{(q,a)\sim D_{p_i,\theta}}\!\big[s(q,a)\big].
\)
Define
\[
\beta' \coloneqq \min_{i\in[L-1]}\inf_{\theta\in\Theta}\frac{\log\!\big(V_{p_i}(\theta)/V_{p_{i+1}}(\theta)\big)}{\log(1+1/i)}
\quad\text{and}\quad
\beta \coloneqq \max_{i\in[L-1]}\sup_{\theta\in\Theta}\frac{\log\!\big(V_{p_i}(\theta)/V_{p_{i+1}}(\theta)\big)}{\log(1+1/i)}.
\]
We assume that $0<\beta'<\beta$.
\end{assumption}

\paragraph{Easy-to-hard.}
For the easy-to-hard curriculum, we consider $L$ iterations of self-improvement.
At each iteration $t\in\{0,1,\ldots,L-1\}$, we sample $n$ questions from $p_{t+1}$ to reflect progressively increasing difficulty, and perform one round of self-improvement using the current model.
We initialize the curriculum with $\hat\theta^{\mathrm{E2H}}_0=\hat\theta_0$, and denote the model after iteration $t$ by $\hat\theta^{\mathrm{E2H}}_{t+1}$.

\paragraph{Baseline.}
The baseline we compare against trains for $L$ iterations using a fixed and uniform mixture over all difficulty levels.
At each iteration $t\in\{0,1,\ldots,L-1\}$, we always sample $n$ training questions from
\(p_0 := \frac{1}{L}\sum_{i=1}^L p_i.\)
We use the same initialization $\hat\theta^{\mathrm{B}}_0=\hat\theta_0$, and denote the model after iteration $t$ by $\hat\theta^{\mathrm{B}}_{t+1}$.

\subsection{Main Results}
\label{subsec:e2h-core}

We now present our core comparison between the final self-improvement performance under the baseline,
$V_{p_0}(\hat\theta^{\mathrm{B}}_L)$, and under the easy-to-hard curriculum,
$V_{p_0}(\hat\theta^{\mathrm{E2H}}_L)$.
Theorem~\ref{thm:e2h-core-WI} provides feasibility conditions that characterize when the lower bound sequences for both training schemes are monotone across iterations, and an improvement condition under which the easy-to-hard curriculum yields a strictly tighter lower bound than the baseline.
These sufficient conditions are highly predictive in practice: they closely track the trends observed in our Monte-Carlo simulations in this section and are consistent with the empirical gains on mathematical reasoning tasks reported in Section~\ref{sec:exp}.

\begin{theorem}
\label{thm:e2h-core-WI}
Follow the notation of Corollary~\ref{cor:multi-step-binary}. Fix all parameters except $\beta',\beta,\nu$ and $V_{p_0}(\hat\theta_0)$.

\noindent\textnormal{(i)} Suppose feasibility conditions
\(
\mathcal{M}_i \big(\beta',\beta,\nu,V_{p_0}(\hat\theta_0)\big)<0
\)
holds\footnote{We defer the explicit forms of $\{\mathcal{M}_i\}_{i=1}^4$ and $\mathcal{N}$ to Definition~\ref{def:MiN-explicit}.} for all $i\in [4]$. Then, w.h.p., the following statements hold.
For the baseline,
\[
V_{p_0}\!\big(\hat\theta^{\mathrm{B}}_L\big)
\;\ge\;
F^{\circ L}\Big(V_{p_0}(\hat\theta_0)\Big),
\]
and the sequence $\{F^{\circ t}(V_{p_0}(\hat\theta_0))\}_{t\ge 0}$ is monotonically increasing in $t$.
For the easy-to-hard curriculum,
\[
V_{p_0}\!\big(\hat\theta^{\mathrm{E2H}}_L\big)
\;\ge\;
(G\circ H_{L-1}\circ H_{L-2}\circ\cdots\circ H_0)\Big(V_{p_0}(\hat\theta_0)\Big),
\]
where for each $t\in\{0,1,\ldots,L-1\}$,
\[
H_t(x)
\;:=\;
1-\gamma-\frac{c_\delta \nu}{c\sqrt{a_t x-c_{\delta'}\nu}},
\qquad
G(x):=a_L x,
\]
and \(a_0
=
L/\sum_{i=1}^L i^{-\beta'}\), \(a_L
=
\sum_{i=1}^L i^{-\beta'}/L^{1-\beta'}\), \(a_t
=
(t+1)^{-\beta}/t^{-\beta}\) for \(t \in [L-1]\).
Also, the sequence
\(
\{(H_t\circ\cdots\circ H_0)\big(V_{p_0}(\hat\theta_0)\big)\}_{t\ge 0}
\)
is monotonically increasing in $t$.

\noindent\textnormal{(ii)} If the improvement condition
\(
\mathcal{N} \big(\beta',\beta,\nu,V_{p_0}(\hat\theta_0)\big)< 0
\)
further holds\footnotemark[\value{footnote}], then the easy-to-hard lower bound is strictly larger than the baseline lower bound:
\[
(G\circ H_{L-1}\circ H_{L-2}\circ\cdots\circ H_0)\Big(V_{p_0}(\hat\theta_0)\Big)
\;>\;
F^{\circ L}\Big(V_{p_0}(\hat\theta_0)\Big).
\]
\end{theorem}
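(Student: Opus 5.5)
The proof has three parts: a baseline bound from Corollary~\ref{cor:multi-step-binary}, a per-stage easy-to-hard recursion obtained from the same one-step bound applied to the stage distribution $p_{t+1}$, and a comparison of the two final lower bounds. Since $\{\mathcal M_i\}$ and $\mathcal N$ are deferred to Definition~\ref{def:MiN-explicit}, the plan is to \emph{define} them as exactly the inequalities each step needs, then check those are the stated conditions.

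\textbf{Baseline.} Apply Corollary~\ref{cor:multi-step-binary} directly with the mixture $p_0$ for $L$ rounds. This gives $V_{p_0}(\hat\theta^{\mathrm B}_L)\ge F^{\circ L}(V_{p_0}(\hat\theta_0))$ with probability at least $1-L(\delta+\delta')$. Monotonicity of the iterates follows from Proposition~\ref{prop:mono-F-multistep}. So one feasibility condition, say $\mathcal M_1<0$, should encode the $\nu$-smallness requirement of that proposition, and another, $\mathcal M_2<0$, should encode $V_{p_0}(\hat\theta_0)\in\mathcal I(1,\nu)$.

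\textbf{Easy-to-hard recursion.} At stage $t$ questions come from $p_{t+1}$, so the one-step argument behind Corollary~\ref{cor:multi-step-binary} (Theorem~\ref{thm:formalmulti-sample} with $m=1$, Assumption~\ref{assump:score-acceptance-coupling} applied to $p_{t+1}$, Hoeffding on $n_t$) gives
\[
V_{p_{t+1}}(\hat\theta^{\mathrm{E2H}}_{t+1})\ge 1-\gamma-\frac{c_\delta\nu}{c\sqrt{V_{p_{t+1}}(\hat\theta^{\mathrm{E2H}}_t)-c_{\delta'}\nu}}.
\]
The remaining work is to convert the argument on the right into the quantity tracked at the previous stage, using Assumption~\ref{assump:e2h-powerlaw}.
\begin{itemize}
\item For $t\ge1$: $V_{p_{t+1}}(\theta)\ge\big((t+1)^{-\beta}/t^{-\beta}\big)V_{p_t}(\theta)=a_tV_{p_t}(\theta)$, and the right-hand side above is increasing in its argument.
\item For $t=0$: chaining the ratios gives $i^{-\beta'}V_{p_1}\ge V_{p_i}$, hence $V_{p_0}\le V_{p_1}\sum_i i^{-\beta'}/L$, i.e.\ $V_{p_1}\ge a_0V_{p_0}$.
\item At the end: $V_{p_i}\ge (i/L)^{-\beta'}V_{p_L}$ gives $V_{p_0}\ge V_{p_L}\sum_i i^{-\beta'}/L^{1-\beta'}=a_LV_{p_L}$, which is $G$.
\end{itemize}
Composing these yields the stated bound $G\circ H_{L-1}\circ\cdots\circ H_0$. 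A union bound over the $L$ rounds gives the high-probability statement. Each $H_t$ needs its argument to lie in its domain, $a_tx>c_{\delta'}\nu$.

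\textbf{Monotonicity of the $H$-chain.} Here $a_t<1$ for $t\ge1$, so unlike $F$ the maps $H_t$ shrink their argument first; this is the main obstacle. I would treat all $H_t$, $t\ge1$, via the single worst map $H_*(x)=1-\gamma-c_\delta\nu/(c\sqrt{a_*x-c_{\delta'}\nu})$ with $a_*=\min_{t\ge1}a_t=(L/(L-1))^{-\beta}$. Then I would repeat the analysis of Proposition~\ref{prop:mono-F-multistep} with parameter $a$ (Definition~\ref{def:invariant-interval-Iak} is already stated for general $a$): $H_*(x)>x$ holds exactly on an interval $\mathcal I(a_*,\nu)$ that is nonempty under a cubic-discriminant condition analogous to the $2/(3\sqrt3)$ one. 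Since $H_t\ge H_*$ pointwise and $H_*$ maps $\mathcal I(a_*,\nu)$ into itself and increases along it, it suffices to require $H_0(V_{p_0}(\hat\theta_0))\in\mathcal I(a_*,\nu)$. The nonemptiness and membership requirements become $\mathcal M_3<0$ and $\mathcal M_4<0$. The delicate point is that monotonicity of the actual $H_t$-chain, rather than only of lower bounds on it, needs each $H_t(y)>y$ together with the invariance; I would get this from $H_t\ge H_*$ and $H_t$ being increasing.

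\textbf{Comparison (ii).} Both final bounds are bounded sandwiches. The baseline satisfies $F^{\circ L}(x_0)<x_+(1,\nu)$ and is at most $F$ evaluated at $x_+(1,\nu)$-type values. The E2H bound satisfies $G(\cdots)\ge a_L\,H_{L-1}(y)$ with $y\ge$ the previous iterate, which is at least $H_*(H_0(x_0))$ by monotonicity. So a sufficient condition is
\[
a_L\,H_*\big(H_0(x_0)\big)>\min\{x_+(1,\nu),\,F^{\circ L}(x_0)\},
\]
or a cruder explicit version of it. I would take this inequality, written out, as $\mathcal N<0$, possibly using $F^{\circ L}(x_0)<x_+(1,\nu)$ to remove the dependence on $L$-fold iterates. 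The conclusion then follows immediately.
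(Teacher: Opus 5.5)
Your baseline bound and your easy-to-hard recursion (getting $a_0$, $a_t$, $a_L$ from Assumption~\ref{assump:e2h-powerlaw}) match the paper. Part (ii), however, is not proved as stated. $\mathcal N$ is not yours to choose: it is the fixed explicit function of Definition~\ref{def:MiN-explicit}, and the theorem asserts that \emph{this} $\mathcal N<0$ forces the strict comparison.

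Write $x_0:=V_{p_0}(\hat\theta_0)$. Your inequality $a_L H_*(H_0(x_0))>F^{\circ L}(x_0)$ is a legitimate sufficient condition for a variant of (ii). But you give no argument that the stated $\mathcal N<0$ implies it, and the two conditions come from different estimates: yours keeps only one step of the easy-to-hard chain after $H_0$. So this is not bookkeeping.

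The missing idea is a perturbation argument around $\nu=0$. Write $F_\nu, H_{t,\nu}$ for the maps at level $\nu$ and $x^{\mathrm B}_t(\nu), x^{\mathrm{E2H}}_t(\nu)$ for the two chains. At $\nu=0$ we have $F_0\equiv H_{t,0}\equiv 1-\gamma$, so the gap $\Delta(\nu,x_0)$ between the two lower bounds equals $(a_L-1)(1-\gamma)>0$. Moreover $\Delta(\nu,x_0)-\Delta(0,x_0)=e^{\mathrm B}_L(\nu)-a_L e^{\mathrm{E2H}}_L(\nu)$, where $e_t(\nu):=x_t(0)-x_t(\nu)$. For $t\ge 2$, the mean value theorem together with the fact that $F'_\nu$ and $H'_{t,\nu}$ are decreasing gives
\[
e^{\mathrm B}_t\ \ge\ \frac{c_\delta\nu}{c\sqrt{1-\gamma-c_{\delta'}\nu}}+F_\nu'(1-\gamma)\,e^{\mathrm B}_{t-1},
\qquad
e^{\mathrm{E2H}}_t\ \le\ \frac{c_\delta\nu}{c\sqrt{a_{t-1}(1-\gamma)-c_{\delta'}\nu}}+H_{t-1,\nu}'\big(x^{\mathrm{E2H}}_1(\nu)\big)\,e^{\mathrm{E2H}}_{t-1}.
\]
The second bound uses the monotonicity from (i). Unroll these using $a_s\ge 2^{-\beta}$, $\prod_{s=j+1}^{L-1}a_s=((j+1)/L)^{\beta}$ and $(1-m/L)^{\beta}\le e^{-\beta m/L}$; this produces geometric sums. $\mathcal N<0$ says exactly that the resulting lower bound on $e^{\mathrm B}_L-a_Le^{\mathrm{E2H}}_L$ exceeds $-\tfrac12(a_L-1)(1-\gamma)$, which gives $\Delta(\nu,x_0)>\tfrac12(a_L-1)(1-\gamma)>0$.

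Your monotonicity argument in (i) also fails as written. Since $a_t=(t/(t+1))^{\beta}$ \emph{increases} in $t$, $\min_{t\ge1}a_t=a_1=2^{-\beta}$, which is the parameter appearing in every $\mathcal M_i$. Your value $(L/(L-1))^{-\beta}=a_{L-1}$ is the maximum, and with it $H_t\le H_*$, the wrong direction.

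Even with $a_*=2^{-\beta}$, the comparison $H_t\ge H_*$ yields $H_t(y)>y$ only while $y\in\mathcal I(a_*,\nu)$. Nothing in your argument keeps $y_{t+1}=H_t(y_t)$ below $x_+(a_*,\nu)$; for $t\ge 2$, $H_t>H_*$ strictly, and above $x_+(a_*,\nu)$ one has $H_*(y)<y$.

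The fix is to use each map's own invariant interval together with their nesting from Corollary~\ref{cor:mono-F-multistep-a}. Let $y_t:=(H_{t-1}\circ\cdots\circ H_0)(x_0)$; then $\mathcal M_3,\mathcal M_4<0$ say exactly $y_1\in\mathcal I(a_1,\nu)$. Inductively, for $1\le t\le L-1$, $y_t\in\mathcal I(a_t,\nu)$ gives $y_t<H_t(y_t)<x_+(a_t,\nu)$. For $t\le L-2$ the inclusion $\mathcal I(a_t,\nu)\subseteq\mathcal I(a_{t+1},\nu)$ then carries membership forward. If the $a_t$ decreased, such overshoot could genuinely destroy monotonicity, so the ordering is essential.

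Finally, the actual $\mathcal M_1,\mathcal M_2<0$ say $x_0\in\mathcal I(2^{-\beta},\nu)$, not a separate smallness condition plus $x_0\in\mathcal I(1,\nu)$. The baseline claim then follows from $\mathcal I(2^{-\beta},\nu)\subset\mathcal I(1,\nu)$ and Proposition~\ref{prop:mono-F-multistep}.
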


\paragraph{Interpreting $\{\mathcal M_i<0\}$.}
To enable meaningful comparisons across iterations, feasibility conditions $\{\mathcal M_i<0\}_{i=1}^4$ in Theorem~\ref{thm:e2h-core-WI} rule out degenerate regimes in which the evolution of the expected reward lower bound become ill-defined or fail to be monotonically increasing.
We provide a concrete interpretation of the region $\{\mathcal M_i<0\}_{i=1}^4$ in Remark~\ref{rmk:e2h-feasible-shrinkage-summary}.

\begin{corollary}
\label{cor:e2h-feasible-init-length}
Follow the setting of Theorem~\ref{thm:e2h-core-WI}. Let
\(
\mathcal I(2^{-\beta},\nu)=(x_-(2^{-\beta},\nu),x_+(2^{-\beta},\nu))
\)
be the interval in Definition~\ref{def:invariant-interval-Iak} with \(a=2^{-\beta}\).
Then the feasibility conditions
\(
\mathcal{M}_i\big(\beta',\beta,\nu,V_{p_0}(\hat\theta_0)\big)<0
\)
for all \(i\in[4]\) are equivalent to \(V_{p_0}(\hat\theta_0)\in \mathcal I_{\mathcal M}(\beta',\beta,\nu)\), where
\[
\mathcal I_{\mathcal M}(\beta',\beta,\nu)
:=
\Big(x_-(2^{-\beta},\nu),\,\frac{2^{-\beta}}{a_0}x_+(2^{-\beta},\nu)\Big).
\]
Moreover, the interval length \(|\mathcal I_{\mathcal M}(\beta',\beta,\nu)|\) satisfies
\begin{align*}
2^{\beta}c_{\delta'}\,\nu
\;\le\;
\big|\mathcal I_{\mathcal M}(\beta',\beta,0)\big|
-\big|\mathcal I_{\mathcal M}(\beta',\beta,\nu)\big|
\;\le\;
2^{\beta}c_{\delta'}\,\nu 
+\frac{3\sqrt3}{2}\cdot
\frac{c_\delta \nu}{c\sqrt{\,2^{-\beta}(1-\gamma)-c_{\delta'}\nu\,}}.
\end{align*}
\end{corollary}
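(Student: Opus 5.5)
We need to guess the definitions. Definition~\ref{def:invariant-interval-Iak} presumably defines $\mathcal I(a,\nu)$ as the set of $x$ where the map $F_a(x):=1-\gamma-\frac{c_\delta\nu}{c\sqrt{ax-c_{\delta'}\nu}}$ satisfies $F_a(x)>x$. Its endpoints $x_\pm(a,\nu)$ are the two roots of $F_a(x)=x$ on $x>c_{\delta'}\nu/a$.

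The plan is to unpack the four conditions $\mathcal M_i<0$ from Definition~\ref{def:MiN-explicit} and recognize each as a one-sided interval constraint on $x_0:=V_{p_0}(\hat\theta_0)$ through the first map $H_0$ and the subsequent maps $H_t$. The smallest contraction factor among $a_1,\dots,a_{L-1}$ is $a_1=2^{-\beta}$, since $a_t=(1+1/t)^{-\beta}$ is increasing in $t$. The steps are as follows.

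\emph{Step 1: the conditions coming from the later maps $H_t$, $t\ge1$.} Monotonicity of the iterates under every $H_t$, together with invariance of the relevant interval, reduces to the binding case $a=2^{-\beta}$. This holds because $F_a$ is increasing in $a$, and $x_-(a,\nu)$ is decreasing while $x_+(a,\nu)$ is increasing in $a$ (the argument is the same as in Proposition~\ref{prop:mono-F-multistep}). So the relevant conditions become: the current value stays in $\mathcal I(2^{-\beta},\nu)$.

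\emph{Step 2: the conditions coming from the first step $H_0$ and from the baseline $F$.} The first step uses $a_0x_0$, so its requirement reads $a_0x_0<2^{-\beta}\cdot\frac{x_+(2^{-\beta},\nu)}{2^{-\beta}}$, i.e. $x_0<\frac{2^{-\beta}}{a_0}x_+(2^{-\beta},\nu)$. This is the source of the factor $2^{-\beta}/a_0$. The baseline condition $x_0\in\mathcal I(1,\nu)$ is implied, because $\mathcal I(2^{-\beta},\nu)\subset\mathcal I(1,\nu)$ and $2^{-\beta}/a_0\le1$. The latter needs $a_0\ge 2^{-\beta}$, which follows from $\sum_i i^{-\beta'}\le L$. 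Intersecting the four one-sided constraints gives exactly $\mathcal I_{\mathcal M}$.

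\emph{Step 3: the length bound.} At $\nu=0$, $F_a\equiv1-\gamma$, so $x_-(a,0)=0$ and $x_+(a,0)=(1-\gamma)/a$. Hence $|\mathcal I_{\mathcal M}(\cdot,0)|=(1-\gamma)/a_0$. Then
\[
|\mathcal I_{\mathcal M}(0)|-|\mathcal I_{\mathcal M}(\nu)|=\tfrac{2^{-\beta}}{a_0}\big(x_+(0)-x_+(\nu)\big)+x_-(\nu).
\]
For the lower bound, I would use $x_-(2^{-\beta},\nu)>c_{\delta'}\nu/2^{-\beta}=2^\beta c_{\delta'}\nu$, which holds because the domain requires $ax>c_{\delta'}\nu$, together with $x_+(\nu)\le x_+(0)$.

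For the upper bound, I would rescale by $y=ax$, turning the fixed-point equation into the $a=1$ problem with $1-\gamma$ replaced by $a(1-\gamma)$. Then I would apply the length estimate from Proposition~\ref{prop:mono-F-multistep} in the rescaled form:
\[
a\,(x_+-x_-)\ge a(1-\gamma)-c_{\delta'}\nu-\tfrac{3\sqrt3}{2}\tfrac{c_\delta\nu}{c\sqrt{a(1-\gamma)-c_{\delta'}\nu}}.
\]
Combined with $2^{-\beta}/a_0\le1$, so that the loss at $x_+$ is discounted, this should yield the stated upper bound after dividing by $a=2^{-\beta}$. Bookkeeping of where the $2^\beta$ factor lands, on the $c_{\delta'}$ term versus the cubic-root term, needs care.

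The main obstacle is Step 2: matching the abstract $\mathcal M_i$ to these one-sided constraints and verifying that none is redundant or stronger. In particular, I need to check that the monotonicity of the composed E2H sequence, which mixes different $a_t$, really reduces to the single worst factor $2^{-\beta}$. To handle this, I would first show inductively that every iterate stays above $x_-(2^{-\beta},\nu)$ and below the corresponding $x_+$, using the monotonicity of $H_t$ in $a_t$.
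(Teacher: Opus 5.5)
You have the right skeleton for the length bound, but the proposal contains two computational errors that make the upper bound fail as written, and the equivalence rests on an invalid step.

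\emph{Equivalence.} The conditions are explicit. $\mathcal M_1,\mathcal M_2<0$ say $x_0\in\mathcal I(2^{-\beta},\nu)$, and $\mathcal M_3,\mathcal M_4<0$ say $H_0(x_0)\in\mathcal I(2^{-\beta},\nu)$. Here $H_0(x)=\Phi(a_0x)$ with $\Phi(u):=1-\gamma-\frac{c_\delta\nu}{c\sqrt{u-c_{\delta'}\nu}}$. So no reduction over the later maps $H_t$, $t\ge 1$, is involved. Deriving constraints from monotonicity of the dynamics would only give sufficient conditions, not the stated equivalence.

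The missing idea is the fixed-point identity $x_\pm=\Phi(2^{-\beta}x_\pm)$. Since $\Phi$ is strictly increasing, $H_0(x_0)<x_+$ iff $a_0x_0<2^{-\beta}x_+$, i.e.\ $x_0<\frac{2^{-\beta}}{a_0}x_+$. Likewise $H_0(x_0)>x_-$ iff $x_0>\frac{2^{-\beta}}{a_0}x_-$. Your line ``$a_0x_0<2^{-\beta}\cdot\frac{x_+}{2^{-\beta}}$, i.e.\ $x_0<\frac{2^{-\beta}}{a_0}x_+$'' does not derive this, because its left side simplifies to $x_0<x_+/a_0$. Once you have the four one-sided constraints, $\frac{2^{-\beta}}{a_0}<1$ makes $\mathcal M_3$ redundant given $\mathcal M_1$, and $\mathcal M_2$ redundant given $\mathcal M_4$. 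This yields $\mathcal I_{\mathcal M}$.

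\emph{Length.} At $\nu=0$ the map is the constant $1-\gamma$, so its fixed point is $x_+(a,0)=1-\gamma$, not $(1-\gamma)/a$. This already follows from your own description of $\mathcal I(a,\nu)$. Hence $|\mathcal I_{\mathcal M}(\beta',\beta,0)|=\frac{2^{-\beta}}{a_0}(1-\gamma)$. With your value, the difference $|\mathcal I_{\mathcal M}(\beta',\beta,0)|-|\mathcal I_{\mathcal M}(\beta',\beta,\nu)|$ would tend to $(1-2^{-\beta})(1-\gamma)/a_0>0$ as $\nu\to 0$, and the stated upper bound could not hold.

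Second, under $y=ax$ the fixed-point equation becomes $y=a(1-\gamma)-\frac{a\,c_\delta\nu}{c\sqrt{y-c_{\delta'}\nu}}$, so $c_\delta$ is rescaled by $a$ as well. For $a=2^{-\beta}$ the correct estimate is
\[
x_+-x_-\;\ge\;1-\gamma-2^{\beta}c_{\delta'}\nu-\frac{3\sqrt3}{2}\cdot\frac{c_\delta\nu}{c\sqrt{2^{-\beta}(1-\gamma)-c_{\delta'}\nu}},
\]
with no factor $2^{\beta}$ on the last term; your version introduces a spurious one.

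With both corrections, your identity gives
\[
\frac{2^{-\beta}}{a_0}\bigl(1-\gamma-x_+\bigr)+x_-\;\le\;(1-\gamma)-(x_+-x_-),
\]
using $\frac{2^{-\beta}}{a_0}\le 1$ and $x_+<1-\gamma$. This yields the upper bound. The lower bound follows, as you said, from $x_+<1-\gamma$ and $x_-\ge 2^{\beta}c_{\delta'}\nu$. This is then exactly the paper's argument.
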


\begin{remark}[\textit{Feasibility disfavors small budgets and large adjacent difficulty ratios}]
\label{rmk:e2h-feasible-shrinkage-summary}
Corollaries~\ref{cor:e2h-feasible-init-length} and~\ref{cor:mono-F-multistep-a} together imply that
$|\mathcal I_{\mathcal M}(\beta',\beta,\nu)|$ decreases in $\nu$, $\beta'$, and $\beta$, and that its shrinkage rate in $\nu$ is $\Theta(\nu)$ as $\nu\to 0$.
Although $\{\mathcal M_i<0\}_{i=1}^4$ only enforces feasibility (rather than directly characterizing when the easy-to-hard curriculum improves over the baseline), we generally prefer $|\mathcal I_{\mathcal M}(\beta',\beta,\nu)|$ not to be too small.
Consequently, (i) an overly small question budget $n$ and (ii) overly large difficulty ratios between adjacent tasks are both undesirable from the standpoint of feasibility.
Finally, Figure~\ref{fig:e2h-feasible-sim} shows that the condition
$V_{p_0}(\hat\theta_0)\in \mathcal I_{\mathcal M}(\beta',\beta,\nu)$
closely matches the behavior observed in direct Monte-Carlo simulations, making $\{\mathcal M_i<0\}_{i=1}^4$ a useful proxy.
\end{remark}

\begin{figure*}[t]
\centering
\setlength{\tabcolsep}{3pt}  
\begin{tabular}{cccc}
\begin{minipage}[t]{0.23\textwidth}
\centering
\includegraphics[width=0.9\linewidth]{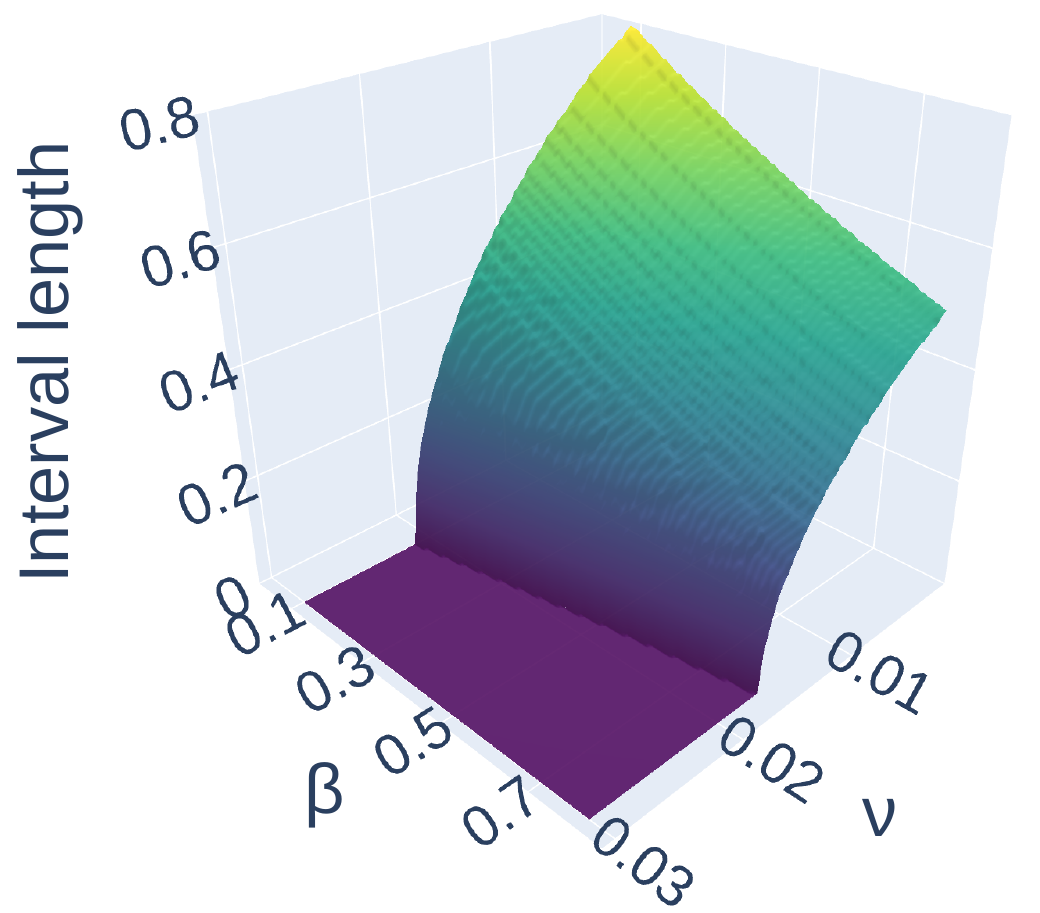}
\\[-0.35em]
{\small (a)}
\end{minipage}
&
\begin{minipage}[t]{0.23\textwidth}
\centering
\includegraphics[width=0.9\linewidth]{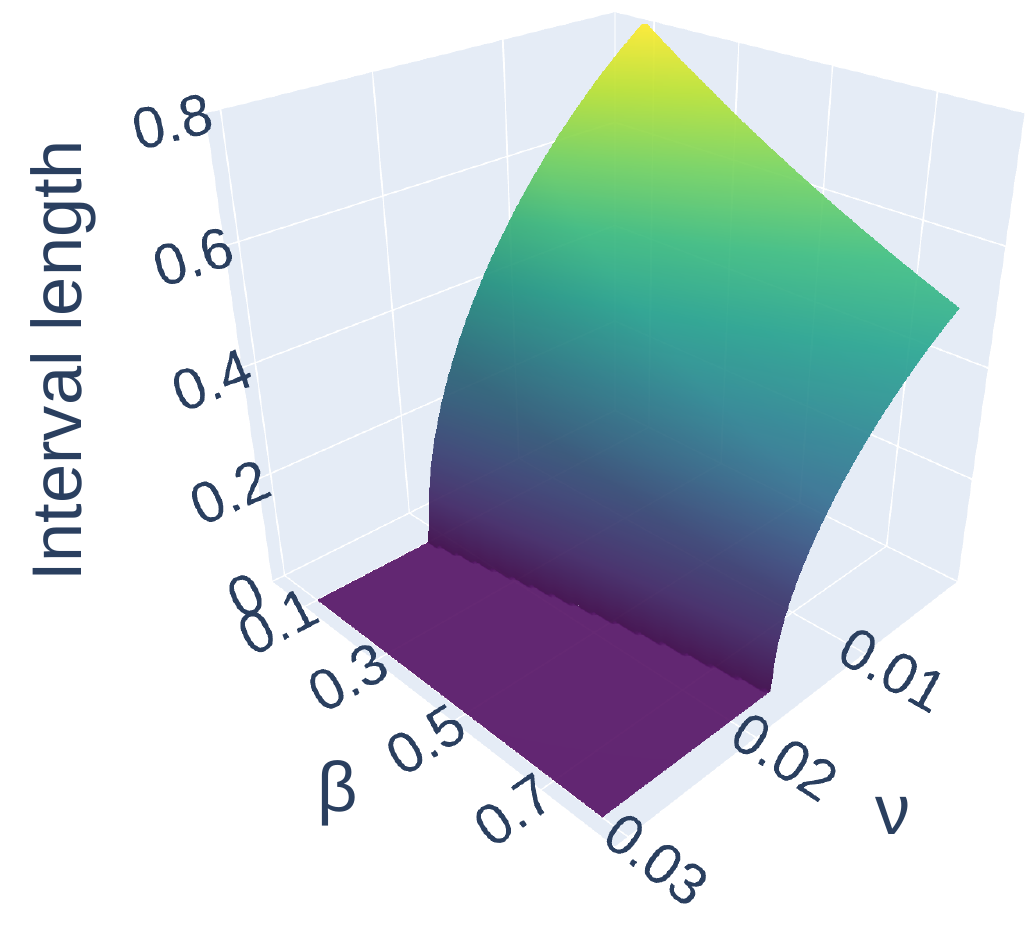}
\\[-0.35em]
{\small (b)}
\end{minipage}
&
\begin{minipage}[t]{0.23\textwidth}
\centering
\includegraphics[width=0.9\linewidth]{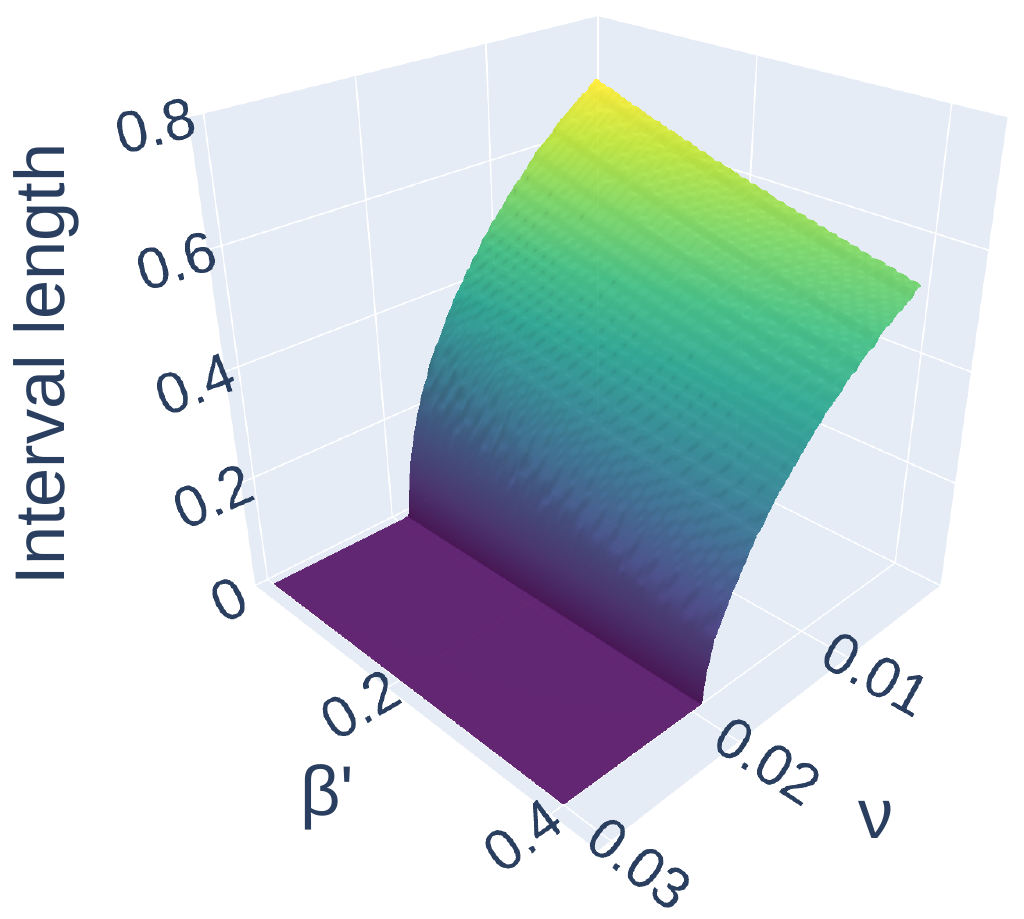}
\\[-0.35em]
{\small (c)}
\end{minipage}
&
\begin{minipage}[t]{0.23\textwidth}
\centering
\includegraphics[width=0.9\linewidth]{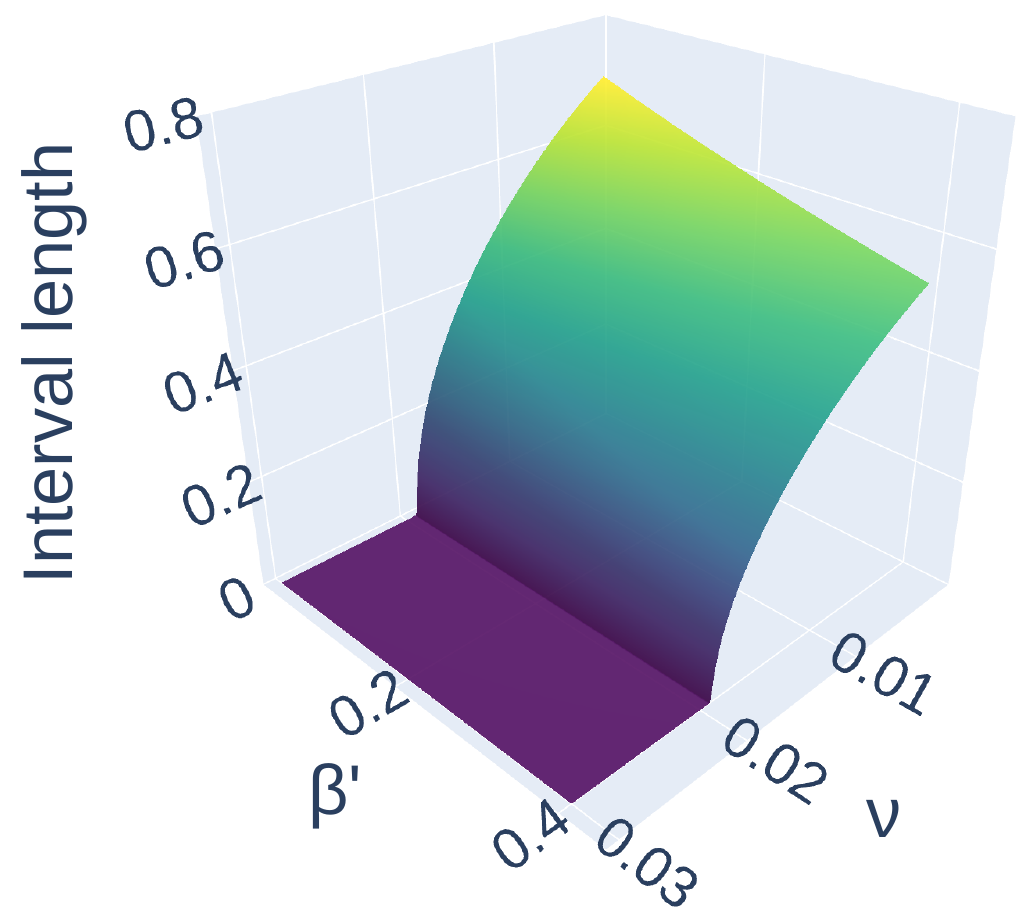}
\\[-0.35em]
{\small (d)}
\end{minipage}
\end{tabular}

\caption{\small \textbf{Feasible initialization region.}
Panels (a,c) report Monte-Carlo estimates of the length of the initialization interval
$V_{p_0}(\hat\theta_0)$ for which
$\{F^{\circ t}(V_{p_0}(\hat\theta_0))\}_{t\ge 0}$
and
$\{(H_t\circ\cdots\circ H_0)(V_{p_0}(\hat\theta_0))\}_{t\ge 0}$
are both monotonically increasing in $t$, under different $(\beta',\beta,\nu)$ settings.
Panels (b,d) show the length of the feasibility interval
$\mathcal I_{\mathcal M}(\beta',\beta,\nu)$ in Corollary~\ref{cor:e2h-feasible-init-length}.
Panels (a,b): fix $\beta'=0.1$ and vary $(\beta,\nu)$.
Panels (c,d): fix $\beta=0.4$ and vary $(\beta',\nu)$.}
\label{fig:e2h-feasible-sim}
\vspace{-2mm} 
\end{figure*}

\begin{figure*}[t]
\centering
\begin{tabular}{ccc}
\begin{minipage}[t]{0.20\linewidth}
\centering
\includegraphics[width=\linewidth]{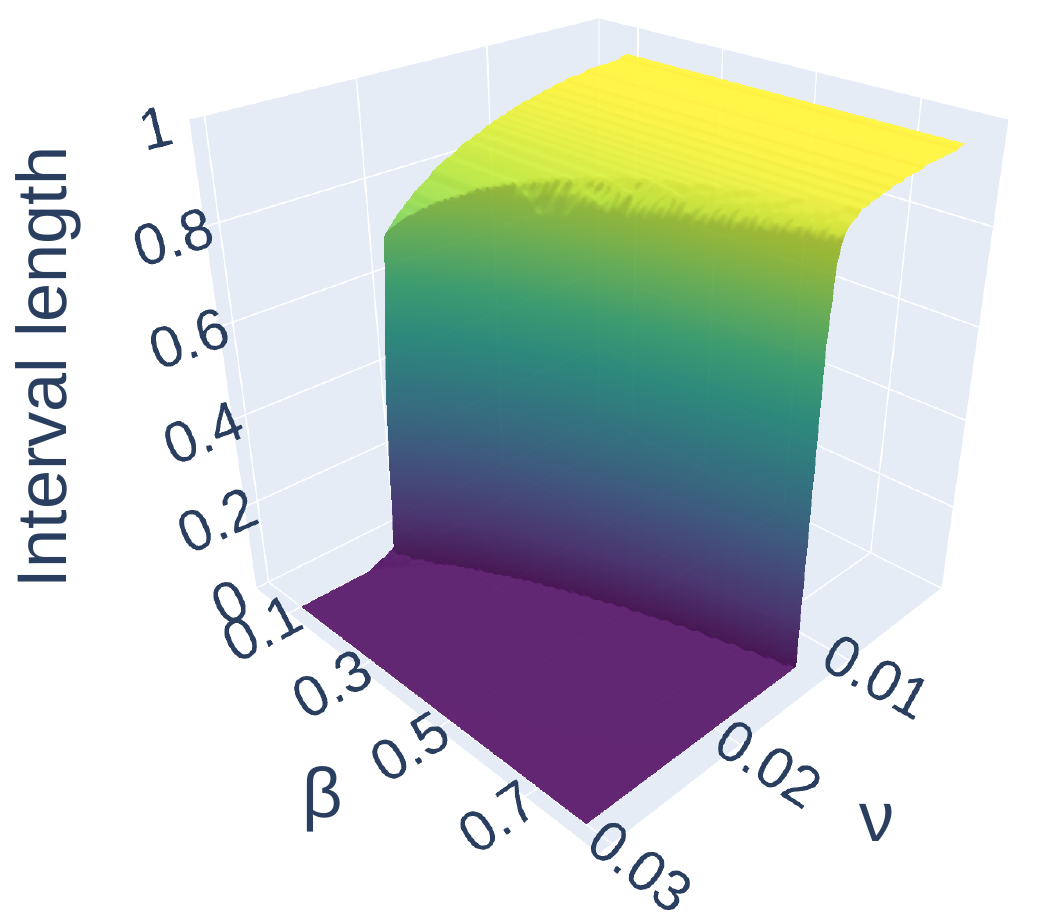}
{\small (a)}
\end{minipage}
&
\begin{minipage}[t]{0.20\linewidth}
\centering
\includegraphics[width=\linewidth]{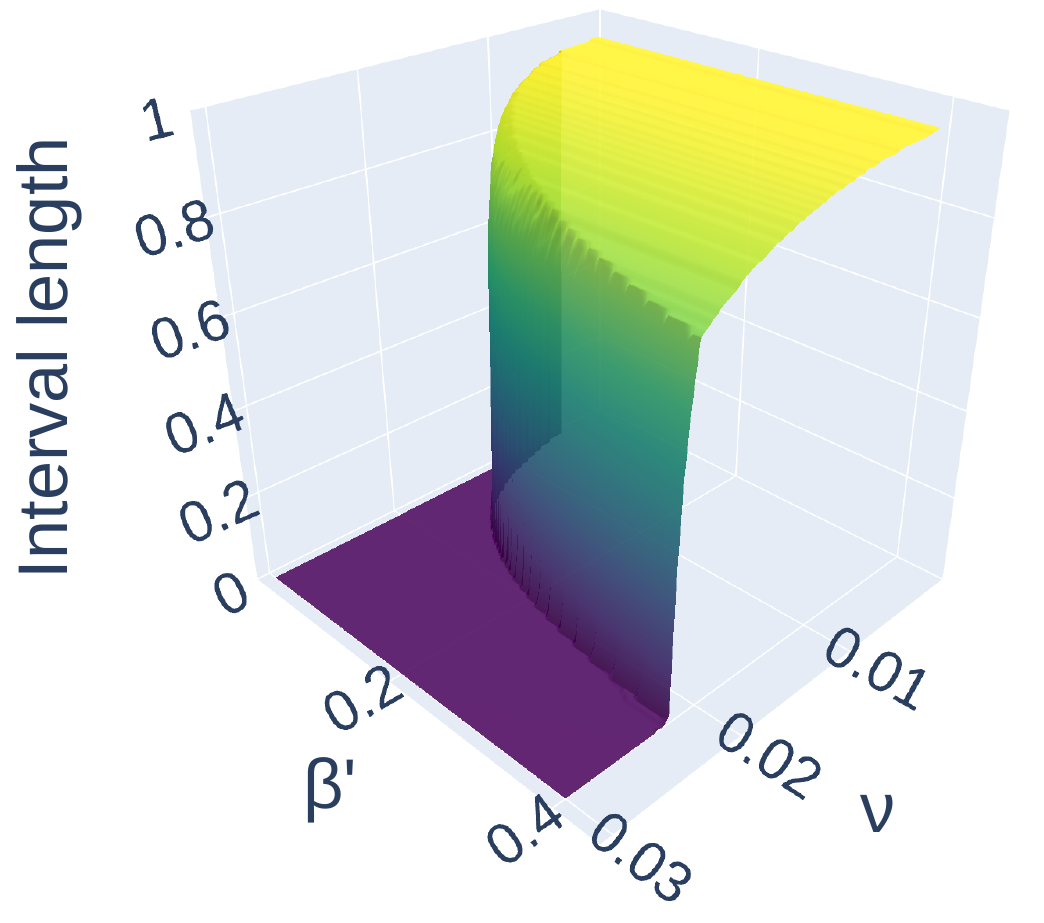}
{\small (b)}
\end{minipage}
&
\begin{minipage}[t]{0.37\linewidth}
\centering
\includegraphics[width=\linewidth]{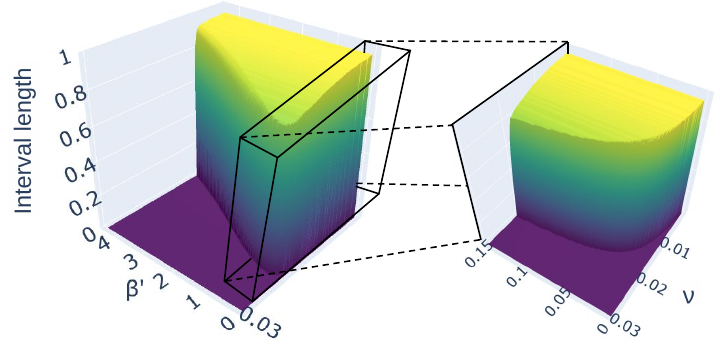}
{\small (c)}
\end{minipage}
\end{tabular}

\caption{\small \textbf{Improvement initialization region.}
Panels (a)-(c) report Monte-Carlo estimates of the length of the initialization interval
$V_{p_0}(\hat\theta_0)$ for which
\(
(G\circ H_{L-1}\circ H_{L-2}\circ\cdots\circ H_0)\big(V_{p_0}(\hat\theta_0)\big)
>
F^{\circ L}\big(V_{p_0}(\hat\theta_0)\big)
\)
holds under different $(\beta',\beta,\nu)$ settings.
Panel (a): fix $\beta'=0.1$ and vary $(\beta,\nu)$.
Panel (b): fix $\beta=0.4$ and vary $(\beta',\nu)$.
Panel (c): fix $\Delta=0.1$ and vary $(\beta',\nu)$; the same panel also includes a zoomed-in view for small $\beta'$.}
\label{fig:e2h-improvement-sim}
\vspace{-5mm}
\end{figure*}

\paragraph{Interpreting $\mathcal N<0$.}
$\mathcal N<0$ serves as the key criterion for improvement: it guarantees that the easy-to-hard curriculum attains a strictly larger final lower bound than the baseline.
A concrete interpretation of $\mathcal N<0$ is provided in Remarks~\ref{rmk:N-shrinkage-vs-M}--\ref{rmk:N-shrinkage-vs-M2}.
Notably, all the resulting predictions based on the improvement condition $\mathcal N<0$ closely match the trends observed in direct Monte-Carlo simulations in Figure~\ref{fig:e2h-improvement-sim}.

\begin{proposition}
\label{prop:N-feasible-region-shrinks-in-k}
Follow the setting of Theorem~\ref{thm:e2h-core-WI}.
Then the improvement condition
\(
\mathcal{N}\big(\beta',\beta,\nu,V_{p_0}(\hat\theta_0)\big)<0
\)
is equivalent to
\(
V_{p_0}(\hat\theta_0)\in \mathcal I_{\mathcal N}(\beta',\beta,\nu),
\)
where
\[
\mathcal I_{\mathcal N}(\beta',\beta,\nu)
\;:=\;
\bigl(x(\beta',\beta, \nu),\,1-\gamma\bigr).
\]
For fixed $(\beta',\beta)$, we write $x(\nu)\coloneqq x(\beta',\beta, \nu)$ for brevity. Then, $x(\nu)$ is monotonically increasing in $\nu$ and satisfies \(x(0)=0\),
\[
x'(\nu)
=
\frac{c_{\delta'}}{a_0}
+\frac{2}{a_0}\Big(\frac{c_\delta}{c(1-\gamma)}\Big)^{\!2}\nu
+O \big(\nu^{5/3}\big)
\quad
\text{as }\nu\to 0.
\]
Moreover, there exists a unique critical value $\nu_c>0$ and a constant $C(\nu_c)>0$ such that
\[
x'(\nu)
\;=\;
\frac{C(\nu_c)}{(\nu_c-\nu)^3}\,\bigl(1+O(\nu_c-\nu)\bigr)
\quad
\text{as }\nu\uparrow \nu_c.
\]
\end{proposition}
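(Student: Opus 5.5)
The plan is to work directly with the explicit form of $\mathcal N$ from Definition~\ref{def:MiN-explicit}, regarded as a function $\mathcal N(x,\nu)$ of $x=V_{p_0}(\hat\theta_0)$ and $\nu$ with $(\beta',\beta)$ fixed. The argument has four steps: (a) show that $\{\mathcal N<0\}$ is an upper interval in $x$; (b) define $x(\nu)$ as its left endpoint and differentiate implicitly; (c) expand near $\nu=0$; (d) analyse the blow-up at a critical $\nu_c$.

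\emph{Step (a): the set $\{\mathcal N<0\}$ is an interval.} Each building block $H_t$, $F$ and $G$ has the form $y\mapsto 1-\gamma-\frac{c_\delta\nu}{c\sqrt{a y-c_{\delta'}\nu}}$ or is linear. These maps are increasing and concave on their natural domains, and they approach $1-\gamma$ from below. I will use this to show that $x\mapsto\mathcal N(x,\nu)$ is strictly decreasing on the relevant domain, or at least changes sign exactly once there. Since $\mathcal N$ is the quantity comparing the easy-to-hard bound with $F^{\circ L}$, the advantage of the easy-to-hard bound should grow with the initialization because of the boost factor $a_0>1$ in $H_0$. It then suffices to check the sign of $\mathcal N$ at the two ends. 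At $x\uparrow 1-\gamma$ the sign is negative. At the lower end of the domain the sign is positive, because $H_0$ degenerates or is undefined there. This yields $\mathcal I_{\mathcal N}=(x(\nu),1-\gamma)$, with $x(\nu)$ the unique root of $\mathcal N(\cdot,\nu)=0$. At $\nu=0$ every map collapses to a constant or to a linear map, the comparison holds for every $x>0$, and so $x(0)=0$.

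\emph{Step (b): monotonicity.} By the implicit function theorem, $x'(\nu)=-\partial_\nu\mathcal N/\partial_x\mathcal N$. The sign of $\partial_x\mathcal N$ comes from Step (a). For $\partial_\nu\mathcal N$, increasing $\nu$ raises both the penalty term $c_\delta\nu/(c\sqrt{\cdot})$ and the shift $c_{\delta'}\nu$. The easy-to-hard chain is hurt more by these changes, since its first step has a smaller effective argument once the shift is large relative to $a_0x$. This gives $x'(\nu)>0$.

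\emph{Step (c): small-$\nu$ expansion.} I will substitute the ansatz $x=\xi_1\nu+\xi_2\nu^2+\dots$ into $\mathcal N(x,\nu)=0$. The leading balance should be $a_0x-c_{\delta'}\nu\approx\big(\tfrac{c_\delta\nu}{c(1-\gamma)}\big)^2$, which follows from equating the square-root penalty in $H_0$ with the margin near $1-\gamma$. This gives $\xi_1=c_{\delta'}/a_0$ and $2\xi_2=\tfrac{2}{a_0}\big(\tfrac{c_\delta}{c(1-\gamma)}\big)^2$. The remainder $O(\nu^{5/3})$ in $x'$ should come from a cube-root correction: the inner argument $a_0x-c_{\delta'}\nu$ is of order $\nu^2$, and solving the resulting cubic in $\sqrt{a_0x-c_{\delta'}\nu}$ perturbatively produces fractional powers. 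I will track this with Puiseux-type expansions and bound the remainder.

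\emph{Step (d): behaviour near $\nu_c$, the main obstacle.} Here I expect $x(\nu)$ to approach the boundary of the feasible regime, where $\partial_x\mathcal N$ degenerates to high order. This corresponds to the smallness condition of Proposition~\ref{prop:mono-F-multistep} becoming tight, i.e.\ the point where the invariant-interval endpoints $x_\pm$ merge as the cubic discriminant vanishes. I will define $\nu_c$ as the unique $\nu$ at which $\partial_x\mathcal N(x(\nu),\nu)=0$. Uniqueness will follow from monotonicity in $\nu$ of the discriminant-type quantity $\frac{c_\delta\nu}{c(\cdots)^{3/2}}$. Next I will Taylor expand $\mathcal N$ around $(x(\nu_c),\nu_c)$. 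If the first nonvanishing mixed structure gives $\partial_x\mathcal N\sim(\nu_c-\nu)^3$ while $\partial_\nu\mathcal N$ stays bounded away from zero, then $x'(\nu)=C(\nu_c)(\nu_c-\nu)^{-3}(1+O(\nu_c-\nu))$. The hard part is verifying that order of vanishing from the explicit composition. To do this I will first reduce $\mathcal N$ to a polynomial identity in the variables $u_t=\sqrt{a_tx_t-c_{\delta'}\nu}$, and then compute the local expansion symbolically.
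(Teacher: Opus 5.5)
Your Steps (a)--(c) follow the paper's argument. Like the paper, you read off monotonicity of $\mathcal N$ in the initialization, get $x'(\nu)>0$ from the implicit function theorem together with $\partial_\nu\mathcal N>0$, and use a Puiseux balance forcing $r(\nu):=\frac{c_\delta\nu}{c\sqrt{a_0x(\nu)-c_{\delta'}\nu}}=1-\gamma-\Theta(\nu^{2/3})$. Step (d), however, rests on a mechanism that cannot occur. In Definition~\ref{def:MiN-explicit}, the initialization $x=V_{p_0}(\hat\theta_0)$ enters $\mathcal N$ only through $r=\frac{c_\delta\nu}{c\sqrt{a_0x-c_{\delta'}\nu}}$. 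Note that $\mathcal N$ is a closed-form sufficient condition, not the gap itself; the baseline term depending on $x$ was dropped when it was formed. For $\nu>0$, $\mathcal N$ is strictly increasing in $r$, and $r$ is strictly decreasing in $x$. Hence $\partial_x\mathcal N<0$ at every finite admissible point, and the $\nu_c$ you define by $\partial_x\mathcal N(x(\nu),\nu)=0$ does not exist. More fundamentally, the target $x'(\nu)\sim C(\nu_c-\nu)^{-3}$ is not integrable at $\nu_c$, so it forces $x(\nu)\to\infty$. No Taylor expansion around a finite point $(x(\nu_c),\nu_c)$ can produce it: a degeneracy in which $\mathcal N$ vanishes to order $k$ in $x$ gives $x'\sim(\nu_c-\nu)^{1/k-1}$, which is always integrable. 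The merging of $x_\pm$ through the vanishing cubic discriminant belongs to the feasibility conditions $\mathcal M_i$ and is not what determines $\nu_c$.

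The missing idea is to place $\nu_c$ at $x=\infty$, i.e.\ at $r=0$, using the analytic extension of $\mathcal N$ past $x<1-\gamma$. Write $\mathcal N=-\frac12(a_L-1)(1-\gamma)-\mathcal E$ and let $\mathcal E_\infty(\nu)$ be $\mathcal E$ with $r=0$. The same comparison that gives $\partial_\nu\mathcal E<0$ shows that $\mathcal E_\infty$ decreases strictly from $\mathcal E_\infty(0)=0$ to $-\infty$ as $1-\frac{c_\delta\nu\,e^{-\beta/L}}{2c(2^{-\beta}(1-\gamma)-c_{\delta'}\nu)^{3/2}}\downarrow 0$. So a unique $\nu_c$ solves $\mathcal E_\infty(\nu_c)=-\frac12(a_L-1)(1-\gamma)$. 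Strict monotonicity in $x$ plus continuity then force $x(\nu)\to\infty$ as $\nu\uparrow\nu_c$: a finite limit point $x_\star$ would give $\mathcal E(\nu_c,x_\star)=\mathcal E_\infty(\nu_c)$, contradicting $\mathcal E(\nu_c,x_\star)<\mathcal E_\infty(\nu_c)$. Linearizing in $r$ gives $\mathcal E_\infty(\nu)+\frac12(a_L-1)(1-\gamma)=a_L(C_1+C_2)\,r+O(r^2)$ with $C_1,C_2>0$. The left side equals $-\mathcal E_\infty'(\nu_c)(\nu_c-\nu)+o(\nu_c-\nu)$, so $r\asymp\nu_c-\nu$ and $a_0x-c_{\delta'}\nu=(c_\delta\nu/(cr))^2\asymp(\nu_c-\nu)^{-2}$; differentiating gives the cubic blow-up. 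Your scaling ($\partial_x\mathcal N\sim(\nu_c-\nu)^3$ with $\partial_\nu\mathcal N$ bounded away from zero) is in fact what happens along the curve, but only because $\partial_x r\approx -r/(2x)$ with $r\to0$ and $x\to\infty$. It also follows that your Step (a) claim that $\mathcal N<0$ as $x\uparrow1-\gamma$ holds only for $\nu$ below the value at which $x(\nu)$ reaches $1-\gamma$. Finally, in Step (b) the easy-to-hard bracket is penalized more not because of a smaller first-step argument (it is larger, since $a_0>1$). The real reason is that the later steps use $a_t\ge 2^{-\beta}$ with $2^{-\beta}<1$, and the bracket is multiplied by $a_L>1$. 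Making this rigorous requires the comparisons $U<\widetilde U$, $U'<\widetilde U'$, $V<\widetilde V$, $V'<\widetilde V'$, which rely on $e^{-2\beta/(3L)}>2^{-\beta}$.
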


\begin{corollary}[Informal version; formal version stated in Corollary~\ref{cor:kstar-monotonicity-formal}]
\label{cor:kstar-monotonicity}
In Proposition~\ref{prop:N-feasible-region-shrinks-in-k}, fix any initialization
\(V_{p_0}(\hat\theta_0)\in(0,1-\gamma)\), and
let \(\nu^\star(\beta',\beta)\) be defined by the threshold equation
\(
x\big(\beta', \beta, \nu^\star(\beta',\beta)\big)
=
V_{p_0}(\hat\theta_0).
\)
Then
\[
\nu^\star(\beta',\beta)
\;=\;
\sup\big\{\,\nu>0:\ \mathcal{N}\big(\beta',\beta,\nu,V_{p_0}(\hat\theta_0)\big)<0\,\big\}.
\]
Moreover,
\noindent\textnormal{(i)} fixing \(\beta'\), for \(\beta>\beta'\),
\(\nu^\star(\beta',\beta)\) is decreasing in \(\beta\);

\noindent\textnormal{(ii)} fixing \(\beta\), for \(\beta'\in(0,\beta)\),
\(\nu^\star(\beta',\beta)\) is increasing in \(\beta'\);

\noindent\textnormal{(iii)} fixing \(\Delta=\beta-\beta'\) and writing
\(\nu^\star(\beta',\beta)\) as
\(\nu^\star(\beta',\beta'+\Delta)\), when \(\beta'\) is sufficiently small,
\(\nu^\star(\beta',\beta'+\Delta)\) is increasing in \(\beta'\) and scales as
\(\Theta(\beta')\); when \(\beta'\) is sufficiently large,
\(\nu^\star(\beta',\beta'+\Delta)\to0\), with the tail bound
\(O(2^{-\beta'})\).
Furthermore, for any finite range \([0,B]\) whose endpoint \(B\) is above an
explicit computable threshold, in the large-sample regime,
\(\nu^\star(\beta',\beta'+\Delta)\) is first increasing and
then decreasing in \(\beta'\), with a unique optimizer on this range.
\end{corollary}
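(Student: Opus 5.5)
The plan is to turn every claim into a statement about the threshold function $x(\beta',\beta,\nu)$ from Proposition~\ref{prop:N-feasible-region-shrinks-in-k}, and then use the implicit function theorem. The implicit-function formulas below assume $x$ is differentiable in $(\beta',\beta)$; I expect this to follow from the smoothness of $a_0,a_t,a_L$ in $(\beta',\beta)$ and of the compositions of the maps $H_t$ and $F$ in their arguments.

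\textbf{Step 1: the supremum characterization.} Proposition~\ref{prop:N-feasible-region-shrinks-in-k} says $\mathcal N<0$ iff $V_{p_0}(\hat\theta_0)\in(x(\nu),1-\gamma)$, with $x$ strictly increasing, $x(0)=0$, and $x'(\nu)\to\infty$ as $\nu\uparrow\nu_c$. Fix $V_{p_0}(\hat\theta_0)\in(0,1-\gamma)$. By continuity and strict monotonicity, the equation $x(\nu^\star)=V_{p_0}(\hat\theta_0)$ has a unique root. I expect the blow-up of $x'$ at $\nu_c$ to force $x$ to exit $(0,1-\gamma)$ before or at $\nu_c$, which is what guarantees the root exists. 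Then $\mathcal N<0$ holds iff $x(\nu)<V_{p_0}(\hat\theta_0)$, that is, iff $\nu<\nu^\star$, and this gives the supremum identity.

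\textbf{Step 2: how $\nu^\star$ responds to $\beta$ and $\beta'$.} Differentiating $x(\beta',\beta,\nu^\star)=V_{p_0}(\hat\theta_0)$ gives
\[
\partial_\beta\nu^\star=-\frac{\partial_\beta x}{\partial_\nu x},
\qquad
\partial_{\beta'}\nu^\star=-\frac{\partial_{\beta'} x}{\partial_\nu x}.
\]
Since $\partial_\nu x>0$, only the signs of $\partial_\beta x$ and $\partial_{\beta'}x$ matter. I will get these signs by a comparison argument rather than by differentiating the explicit $\mathcal N$.

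\emph{Part (i).} The baseline bound $F^{\circ L}$ does not depend on $\beta$ or $\beta'$. The parameter $\beta$ enters only through $a_t=(t/(t+1))^{\beta}$, which is decreasing in $\beta$. Each $H_t$ is increasing both in $a_t$ and in its argument. Hence the composition $G\circ H_{L-1}\circ\cdots\circ H_0$ is pointwise decreasing in $\beta$. So the set of initializations where E2H beats the baseline shrinks as $\beta$ grows, $x$ increases in $\beta$, and $\nu^\star$ decreases in $\beta$.

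\emph{Part (ii).} The parameter $\beta'$ enters through two coefficients:
\[
a_0=\frac{L}{\sum_i i^{-\beta'}},
\qquad
a_L=\frac1L\sum_i (L/i)^{\beta'}.
\]
Both are increasing in $\beta'$. So the E2H bound increases in $\beta'$, $x$ decreases in $\beta'$, and $\nu^\star$ increases in $\beta'$.

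\textbf{Step 3: the diagonal $\beta=\beta'+\Delta$.} Here the two effects compete, and the plan is to analyze the two ends and then the middle.

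\emph{Small $\beta'$.} At $\beta'=0$ we have $a_0=a_L=1$ while $a_t<1$ for $t\ge1$. So for every $\nu>0$ the E2H composition lies below $F^{\circ L}$, and $\nu^\star(0,\Delta)=0$. Expanding the gap between the two bounds to first order in $\beta'$ gives a positive leading coefficient. That coefficient comes from $\partial_{\beta'}a_0$ and $\partial_{\beta'}a_L$, which equal harmonic-log sums $\tfrac1L\sum\log(L/i)$-type terms. Balancing it against the $O(\nu)$ loss from Proposition~\ref{prop:N-feasible-region-shrinks-in-k} (note $x'(0)=c_{\delta'}/a_0$) should yield $\nu^\star=\Theta(\beta')$, increasing for small $\beta'$.

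\emph{Large $\beta'$.} The factor $a_1=2^{-\beta}$ drives $a_1x-c_{\delta'}\nu$ toward the boundary of the domain of $H_1$. Feasibility then requires $c_{\delta'}\nu\lesssim 2^{-\beta'-\Delta}$, which gives $\nu^\star=O(2^{-\beta'})\to0$.

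\emph{Unimodality on $[0,B]$.} This is the main obstacle. I would work in the large-sample regime, i.e.\ small $\nu$, and use the first-order expansion of $x$ from Proposition~\ref{prop:N-feasible-region-shrinks-in-k}. This reduces the threshold equation to $\nu^\star\approx\Phi(\beta')$, where $\Phi$ is an explicit combination of $a_0(\beta'),a_L(\beta')$ and $a_t(\beta'+\Delta)$. Then I would show that $\log\Phi$ is strictly concave, or at least that $\Phi'$ has exactly one sign change, for instance by writing $\Phi'$ as a difference of a decreasing and an increasing function of $\beta'$. The explicit threshold $B$ is chosen so that $\Phi'(B)<0$; this guarantees the maximizer is interior and unique. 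The correction terms of order $O(\nu^{5/3})$ must then be shown not to destroy the single sign change. This perturbation control is where I expect most of the technical effort.
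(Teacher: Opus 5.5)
Your Step 1 and the large-$\beta'$ tail bound are fine and match the paper: the radicand condition $2^{-\beta}(1-\gamma)-c_{\delta'}\nu^\star>0$ gives $\nu^\star=O(2^{-\beta'})$.

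The genuine gap is in Steps 2--3. There you argue about the \emph{true} improvement region $\{x_0:\ (G\circ H_{L-1}\circ\cdots\circ H_0)(x_0)>F^{\circ L}(x_0)\}$, which need not even be an interval of the form $(x,1-\gamma)$. But $x(\beta',\beta,\nu)$, and hence $\nu^\star$, is defined through $\mathcal N<0$. That condition is only sufficient: it implies the gap exceeds $\tfrac12(a_L-1)(1-\gamma)$, and it is built from a specific lower bound on $e_L^{\mathrm B}$ and upper bound on $e_L^{\mathrm{E2H}}$. Monotonicity of the true region does not transfer to this subset, and here it genuinely does not come for free.
\begin{itemize}
\item For (i), the bounding steps leave factors $L^{-\beta}$ (from $\prod_{s=1}^{L-1}a_s$) and $e^{-\beta/L}$ (from $(1-m/L)^\beta\le e^{-\beta m/L}$) in $\mathcal N$. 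These \emph{shrink} the E2H error term as $\beta$ grows and compete with the $2^{-\beta}$ radicands. The paper must check that $\tfrac{d}{d\beta}\log$ of the geometric ratio exceeds $\tfrac32\log 2-\tfrac1L>0$, and that $\tfrac{d}{d\beta}\log T_2>(L-1)\tfrac32\log2-\log L>0$.
\item For (ii), $a_L$ enters $\mathcal N$ with both signs: in the budget $-\tfrac12(a_L-1)(1-\gamma)$ and as the multiplier of the E2H error bracket. The paper rewrites the condition as $T_1-\tfrac{1-\gamma}{2}+a_L\bigl(\tfrac{1-\gamma}{2}-T_2-T_3\bigr)>0$. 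It then needs $\tfrac{1-\gamma}{2}-T_2-T_3\ge0$ at $\nu^\star$, which it obtains from $\nu^\star<\nu_T$ using $T_3>T_1$.
\item For small $\beta'$, your balancing idea is right, but it must be done on $\mathcal N$ via the implicit function theorem at $(\beta',\nu)=(0,0)$. One side is $\partial_{\beta'}\mathcal N=-\tfrac12 a_L'(0)(1-\gamma)$ with $a_L'(0)=\log(L/(L!)^{1/L})$; the other is $\partial_\nu\mathcal N=(2^{\Delta/2}-1)\tfrac{c_\delta}{c\sqrt{1-\gamma}}$. The slope $x'(0)=c_{\delta'}/a_0$ you invoke plays no role. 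It comes from a fixed-$\beta'$ expansion that is not uniform as $\beta'\to0$, where the admissible set is empty for every $\nu>0$.
\end{itemize}

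For unimodality, your reduction through the small-$\nu$ expansion of $x$ from Proposition~\ref{prop:N-feasible-region-shrinks-in-k} cannot work. To the stated orders, that expansion depends on $\beta'$ only through $a_0$ and not at all on $a_L$ or $\beta$. The resulting $\Phi$ would be $\approx a_0(\beta')V_{p_0}(\hat\theta_0)/c_{\delta'}$, monotone in $\beta'$ with no $\Delta$ trade-off. The right object is the linearization of $\mathcal N$ itself in $\nu$, uniformly over $\beta'\in[0,B]$. At first order the $x_0$-dependence drops out and the root is explicit:
$\nu^\ell_\Delta(\beta')=\frac{c(1-\gamma)^{3/2}(a_L(\beta')-1)}{2c_\delta(a_L(\beta')2^{(\beta'+\Delta)/2}-1)}$.
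Its critical-point equation is $h(\beta')=g(\beta')$, with $h=a_L(a_L-1)/a_L'$ and $g=\tfrac{2}{\log2}\bigl(1-2^{-(\beta'+\Delta)/2}\bigr)$. Uniqueness of the crossing rests on $h$ being strictly increasing and strictly convex. The paper proves this via the moment inequalities $\mathbb E[X^2]<2\mathbb E[X]^2$ and $2\mathbb E[X]\mathbb E[X^3]<3\mathbb E[X^2]^2$ for the tilted law on $\{\log(L/i)\}$ (Lemmas~\ref{lem:ratio-monotone}--\ref{lem:second-moment-ineq}). Meanwhile $g$ is concave and bounded by $2/\log 2$, which is what makes $B>h^{-1}(2/\log 2)$ the explicit threshold. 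The ``large-sample regime'' is then $\nu_T<\nu_0(B)$: the uniform bound $\nu^\star<\nu_T$ from the part (ii) argument gives $\tfrac{d}{d\beta'}\nu^\star=\tfrac{d}{d\beta'}\nu^\ell_\Delta+O_B(\nu_T)$ uniformly on $[0,B]$. Your sketch identifies neither this linearization nor the convexity/moment argument that forces the single sign change.
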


\begin{remarkseries}

\begin{subremark}[Phase transition with respect to the question budget]\label{rmk:N-shrinkage-vs-M}
Proposition~\ref{prop:N-feasible-region-shrinks-in-k} shows that the interval length
$|\mathcal I_{\mathcal N}(\beta',\beta,\nu)|$ decreases as $\nu$ increases.
Moreover, the shrinkage rate is mild when $\nu$ is small  (since $c_{\delta'}$ is typically small and $a_0>1$),
but becomes steep as $\nu$ approaches the critical value $\nu_c$ ($x'(\nu)$ blows up on the order of $\Theta((\nu_c-\nu)^{-3})$.
Equivalently, decreasing the question budget $n$ makes it harder for the easy-to-hard curriculum to
provably outperform the baseline, and there is a critical sample size
such that as $n$ decreases toward this threshold, the range of initializations $V_{p_0}(\hat\theta_0)$ for which easy-to-hard is provably advantageous collapses sharply.
\end{subremark}

\begin{subremark}[Smaller uncertainty in the adjacent difficulty ratio is better]\label{rmk:N-shrinkage-vs-M-uncertainty}
Parts (i)-(ii) of Corollary~\ref{cor:kstar-monotonicity} imply that, whether we fix $\beta'$ or $\beta$, as the uncertainty width $\Delta=\beta-\beta'$ increases, the maximal admissible $\nu$ (and hence the minimal question budget $n$) for $\mathcal N<0$ becomes more stringent.
\end{subremark}

\begin{subremark}[Moderate difficulty ratios between adjacent tasks are most favorable]\label{rmk:N-shrinkage-vs-M-moderate}
Part (iii) of Corollary~\ref{cor:kstar-monotonicity} further shows that, when
\(\Delta\) is fixed, as the difficulty ratios between adjacent tasks (captured by
\(\beta'\)) increase, the minimal admissible sample budget \(n\) required for the
easy-to-hard curriculum to be provably better than the baseline decreases in the
small-ratio regime, but diverges to infinity once the difficulty ratios between adjacent tasks become
sufficiently large. Moreover, over any sufficiently large finite interval of \(\beta'\), in the large-sample regime, there exists a unique optimal \(\beta'\)
that minimizes the required sample budget.
\end{subremark}

\begin{subremark}[Improvement dominates for small budgets, while feasibility dominates for large budgets]\label{rmk:N-shrinkage-vs-M2}
By Corollary~\ref{cor:e2h-feasible-init-length} and Proposition~\ref{prop:N-feasible-region-shrinks-in-k},
for any fixed $(\beta',\beta)$, both
\(|\mathcal I_{\mathcal M}(\beta',\beta,\nu)|\) and \(|\mathcal I_{\mathcal N}(\beta',\beta,\nu)|\)
decrease as $\nu$ increases.
Moreover, since $c_{\delta'}/a_0<2^{\beta}c_{\delta'}$, we have the following dichotomy:
as $\nu\to 0$, the shrinkage of the admissible range of $V_{p_0}(\hat\theta_0)$ is dominated by the
feasibility conditions $\{\mathcal M_i<0\}_{i=1}^4$; whereas as $\nu$ approaches $\nu_c$, the shrinkage is dominated by
the improvement condition $\mathcal N<0$.
\end{subremark}

\end{remarkseries}
\section{Experiment}
\label{sec:exp}

In this section, we empirically validate the main theoretical predictions developed in Sections~\ref{sec:ISI} and~\ref{sec:E2H} across diverse mathematical reasoning tasks. We begin with a synthetic shortest path task, which enables precise control over key variables in our theory. We then show that the resulting trends are also consistent with those observed on different standard mathematical reasoning benchmarks.

\subsection{Experiments on Synthetic Tasks}
\label{subsec:synthetic}

\subsubsection{Experimental Setup}
\label{subsubsec:synthetic-setup}

\paragraph{Shortest path.}
Given the capability of LLMs to solve graph problems in natural language~\citep{wang2023can}, we study self-improvement on a shortest path task using synthetically generated graphs.
We consider a directed unweighted graph \( \mathcal{G}\).
Our task is: given \(\mathcal{G}\) and two distinct vertices \(v_s\neq v_t\) in \(\mathcal{G}\), predict the shortest path length \(l\), i.e., the minimum number of edges among all directed paths from \(v_s\) to \(v_t\); if no such path exists, we set \(l=-1\).

\paragraph{Datasets and model training setup.}
Across experiments, we construct synthetic datasets from a sample pool consisting of a large collection of distinct graphs \(\mathcal G\) together with vertex pairs \((v_s,v_t)\), spanning different choices of the number of nodes \(N\), expected out-degree \(\bar d\), and target distance \(l\).
This synthetic shortest path task enables direct control of several key variables, including the initialization expected reward \(V_{p_0}(\hat\theta_0)\), sample budgets \(n\) and \(m\), and difficulty ratios between adjacent tasks.
We use \textsc{Llama-3.2-1B-Instruct} as our base LLM~\citep{grattafiori2024llama}.
We follow the procedure described in Section~\ref{sec:prob} and the easy-to-hard/baseline setup in Section~\ref{subsec:e2h-baseline} to run iterative self-improvement.
More details of dataset construction, parameter control, and self-improvement finetuning are provided in Appendix~\hyperref[app:exp-details-syn]{D.1}.

\paragraph{Evaluation metric.}
In the binary reward setting, the expected reward is equivalent to the population Pass@1.
Therefore, we report the Pass@1 accuracy on a held-out test set sampled from $p_0$ as our evaluation metric.

\subsubsection{Experimental Results}
\label{subsubsec:synthetic-results}

\begin{figure*}[t]
\centering
\setlength{\tabcolsep}{3pt}
\begin{tabular}{ccc}
\begin{minipage}[t]{0.32\textwidth}
\centering
\includegraphics[width=1\linewidth]{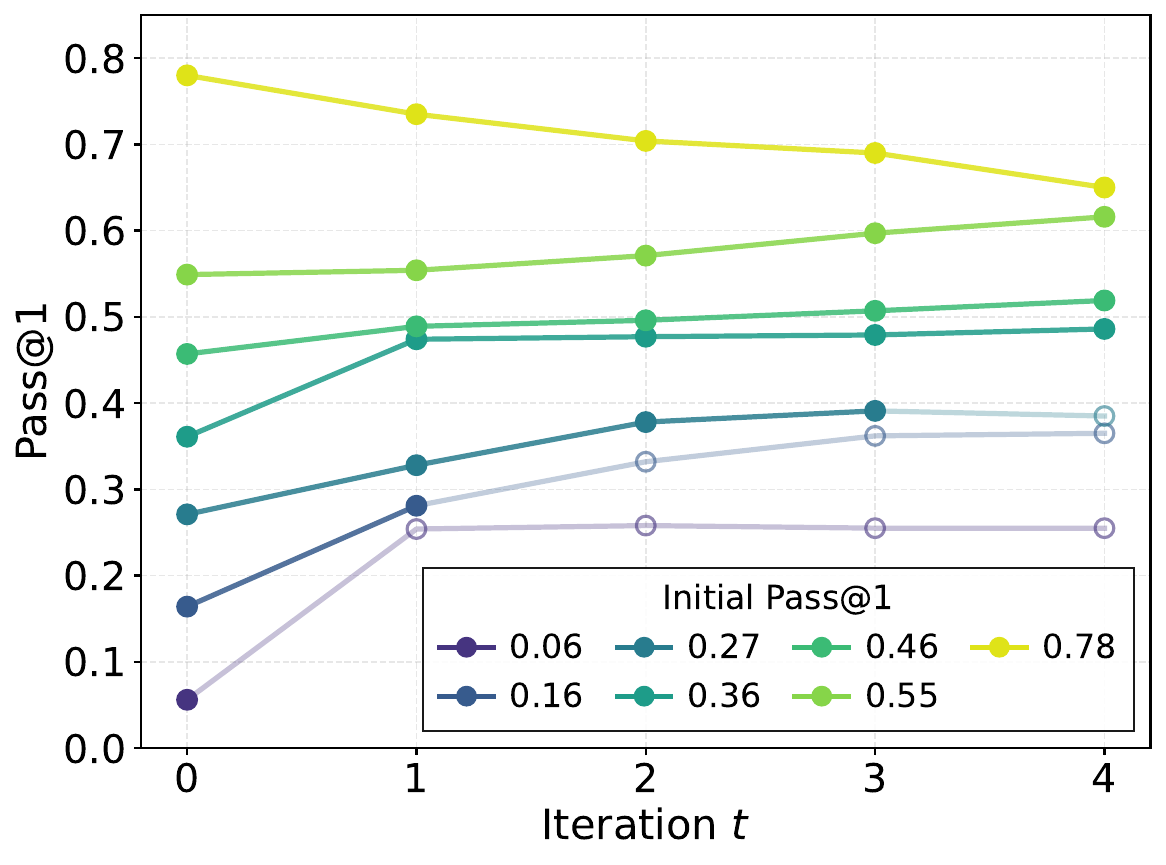}
{\small (a)}
\end{minipage}
&
\begin{minipage}[t]{0.32\textwidth}
\centering
\includegraphics[width=1\linewidth]{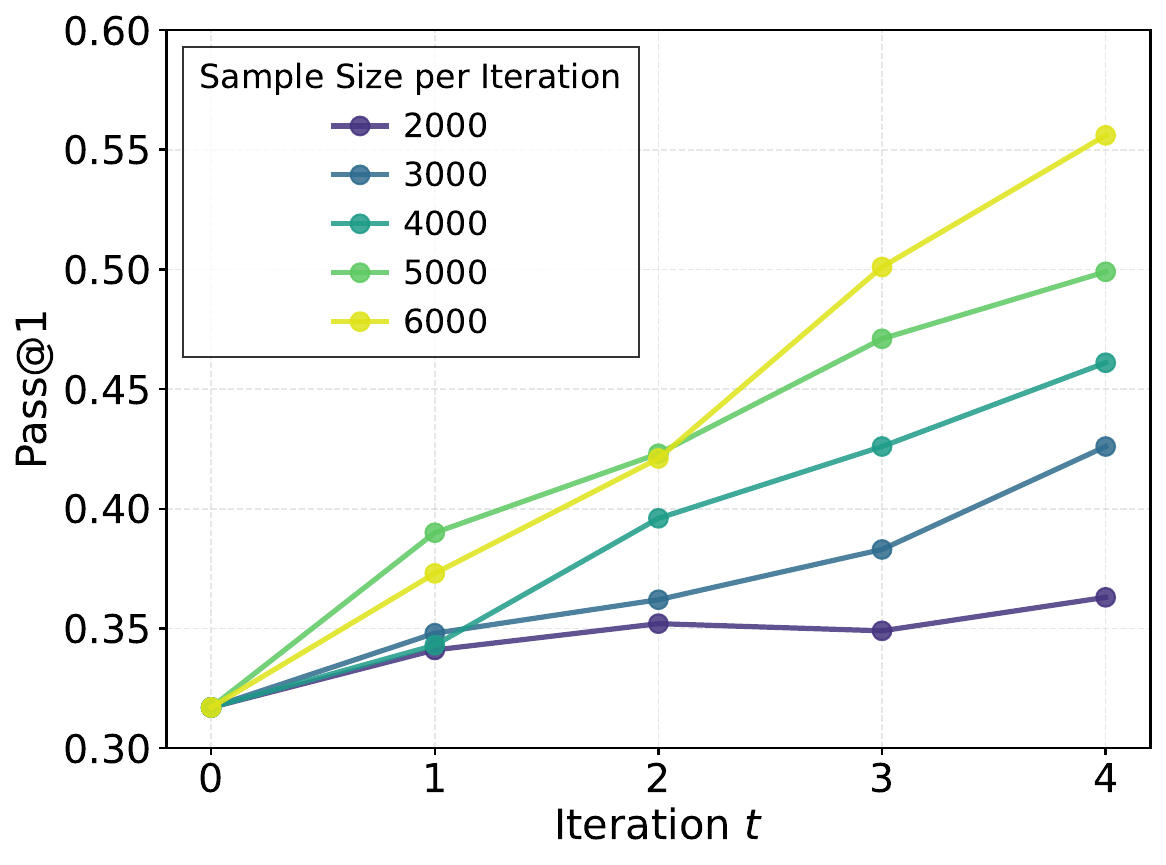}
{\small (b)}
\end{minipage}
&
\begin{minipage}[t]{0.32\textwidth}
\centering
\includegraphics[width=1\linewidth]{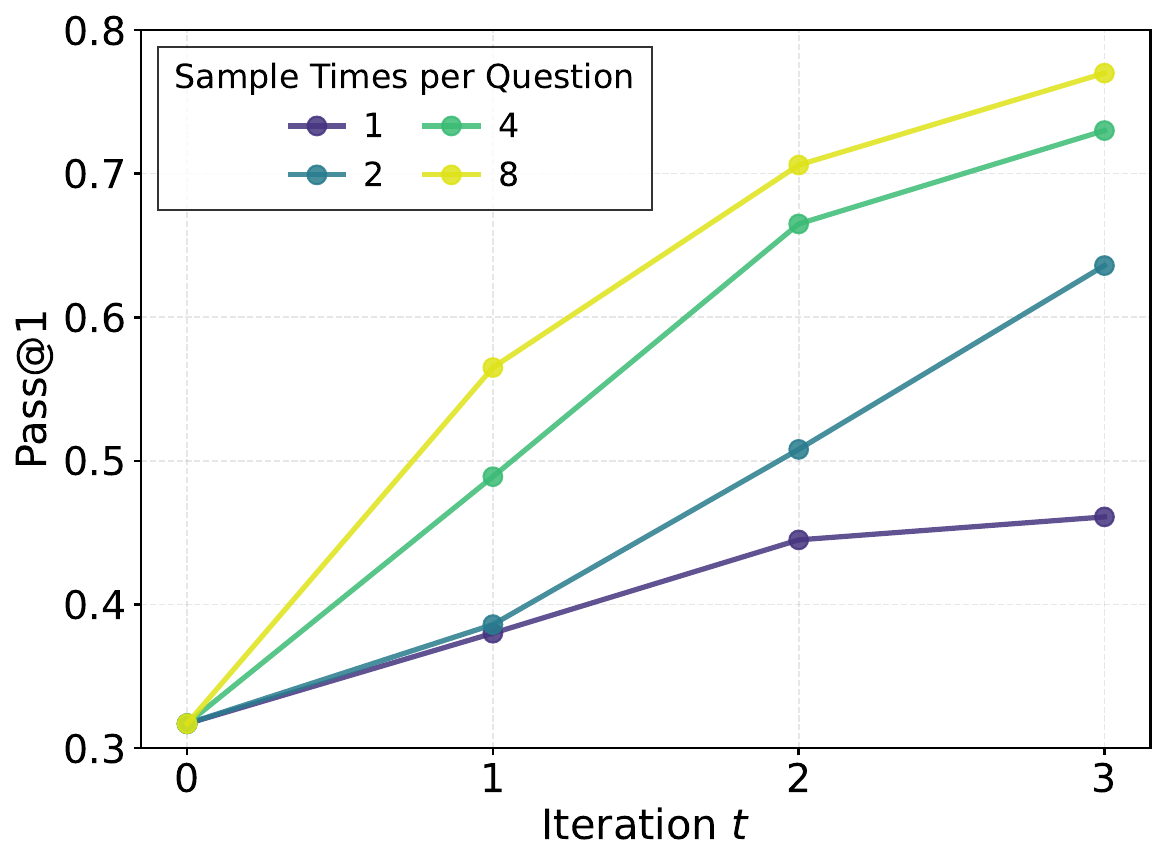}
{\small (c)}
\end{minipage}
\end{tabular}
\caption{\small Iterative self-improvement results on the synthetic shortest path task.
Panel (a) shows the self-improvement trajectories of a fixed $\hat{\theta}_0$ across tasks with different initial Pass@1 accuracies; hollow markers and faded line segments indicate model collapse (Pass@1$=0$ for at least one target distance $l$).
Panel (b) shows the performance under different question budgets $n$, with $\hat{\theta}_0$ and the initial Pass@1 fixed.
Panel (c) shows the performance under different per-question answer budgets $m$, with $\hat{\theta}_0$ and the initial Pass@1 fixed.}
\label{fig:isi_sp_main}
\vspace{-0.6em}
\end{figure*}

\paragraph{Iterative self-improvement.}
Figure~\ref{fig:isi_sp_main}(a) shows the iterative self-improvement performance under tasks of varying difficulty, where we construct task sets such that the initial Pass@1 accuracy (corresponding to $V_{p_0}(\hat\theta_0)$) ranges from $6\%$ to $78\%$.
When the task is overly easy, we observe a decreasing trend in the test Pass@1 across iterations, whereas when the task is overly hard, the Pass@1 becomes unstable and may collapse.
These trends are consistent with the analysis in Remark~\ref{rmk:multi-regime}, which predicts that effective iterative self-improvement is only guaranteed to occur in a moderate difficulty regime.
We also note that the improvement in Pass@1 often slows down after $t=2$ and the curves begin to plateau at values clearly below $1$, which is consistent with Remark~\ref{rmk:multi-bounded}.
Moreover, Figure~\ref{fig:isi_sp_main}(b)-(c) demonstrate that increasing either the question budget $n$ or the per-question answer budget $m$ consistently improves self-improvement performance. This aligns with the finite-sample interpretation in Section~\ref{subsec:single-step}.

\begin{figure*}[t]
\centering
\setlength{\tabcolsep}{3pt}
\begin{tabular}{cc}
\begin{minipage}[t]{0.45\textwidth}
\centering
\includegraphics[width=0.75\linewidth]{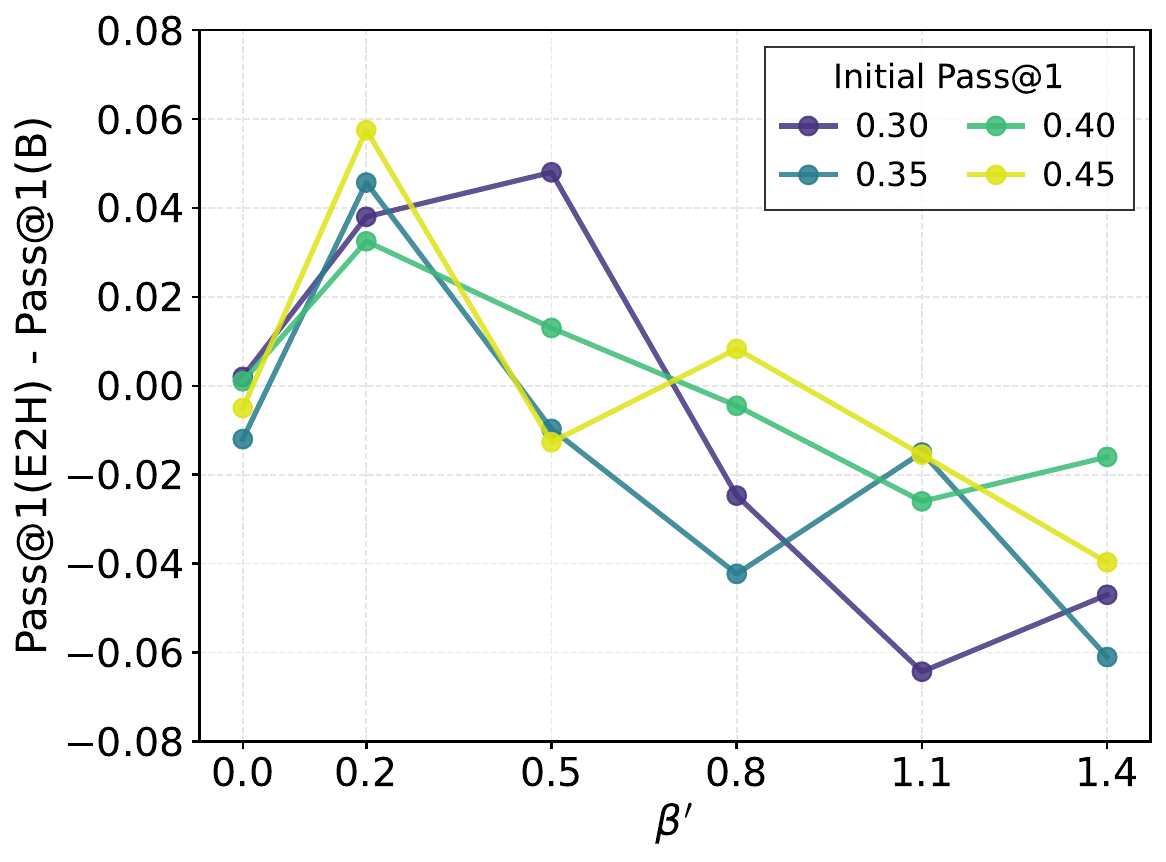}
\par
{\small (a)}
\end{minipage}
&
\begin{minipage}[t]{0.45\textwidth}
\centering
\includegraphics[width=0.75\linewidth]{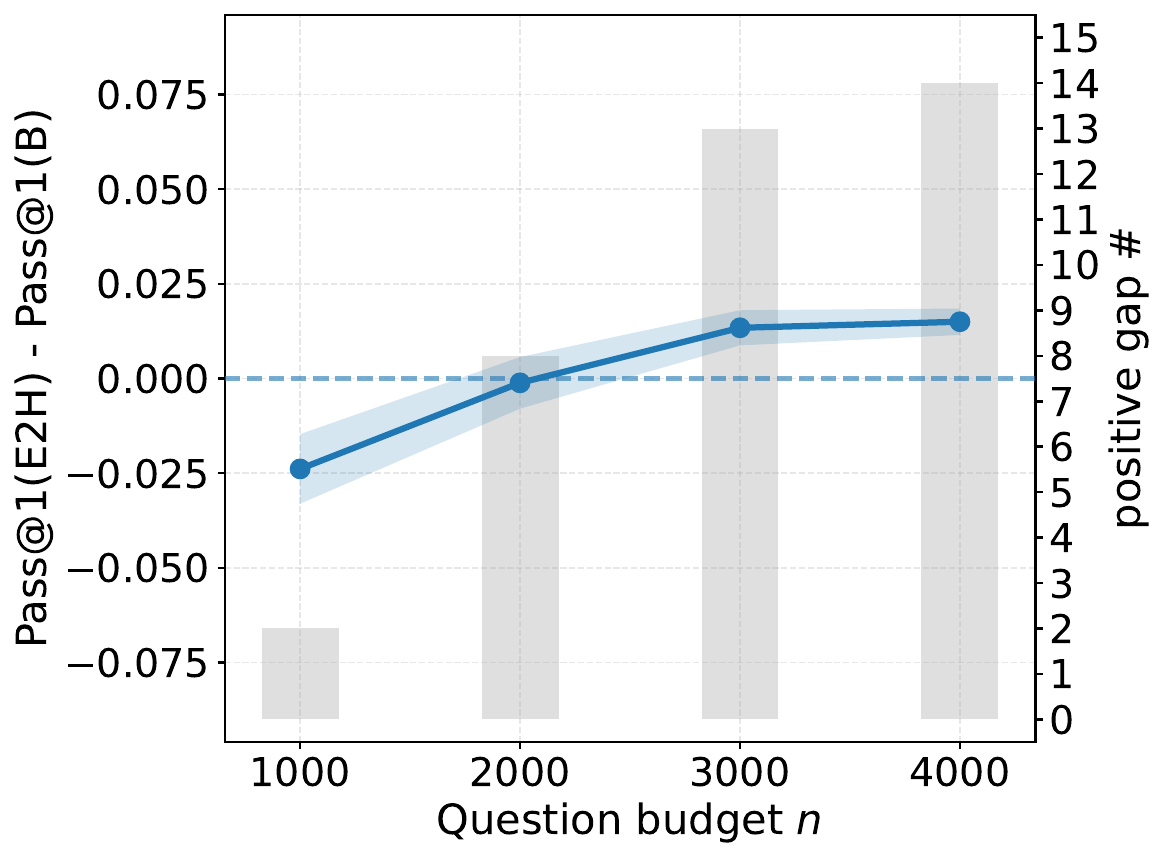}
\par
{\small (b)}
\end{minipage}
\end{tabular}
\caption{\small Iterative self-improvement with easy-to-hard curriculum on the synthetic shortest path task.
Panel (a) fixes $\Delta=0.04$ and $\hat \theta_0$, and shows for different initial Pass@1 accuracies, the final Pass@1 gap between easy-to-hard and the baseline
(i.e., $V_{p_0}(\hat\theta^{\mathrm{E2H}}_L)-V_{p_0}(\hat\theta^{\mathrm{B}}_L)$)
as a function of the adjacent task difficulty ratio (captured by $\beta'$).
Panel (b) fixes $\Delta=0.04$ and $\beta'=0.25$, and shows for different initial Pass@1 accuracies (spanning $35\%$--$55\%$), how the final Pass@1 gap varies with the question budget $n$.
The solid line reports the mean gap across 15 initializations and the shaded region indicates $\pm 1$ standard error; the gray bars (right axis) show the number of initializations with a positive gap at each $n$.}
\label{fig:e2h_sp_main}
\vspace{-0.6em}
\end{figure*}

\paragraph{Iterative self-improvement with easy-to-hard curriculum.}
In Figure~\ref{fig:e2h_sp_main}(a), each curve varies the difficulty ratios between adjacent tasks (controlled by $\beta'$) while keeping the initial Pass@1 (corresponding to $V_{p_0}(\hat\theta_0)$) fixed.
Across different values of the initial Pass@1, the final Pass@1 gap between easy-to-hard and the baseline exhibits an overall trend of first increasing and then decreasing as $\beta'$ grows, with the largest gaps typically attained around $\beta' \in [0.2, 0.5]$.
This aligns with the discussion in Remark~\ref{rmk:N-shrinkage-vs-M-moderate}, which suggests that moderate difficulty ratios between adjacent tasks are desirable.

Figure~\ref{fig:e2h_sp_main}(b) reports results under different initial Pass@1 accuracies, with a fixed relative difficulty across tasks (i.e., fixed $\beta'$ and $\Delta$).
First, we observe that larger question budgets $n$ lead to a larger final Pass@1 gap of easy-to-hard over the baseline on average; moreover, as $n$ increases, the final Pass@1 gap varies less across different initial performances, leading to a smaller standard error (the number of initializations is fixed).
Second, for a diverse set of initial Pass@1 accuracies in the range of $35\%$--$55\%$, the question budget $n$ at which easy-to-hard starts to outperform the baseline mostly clusters in a relatively narrow range, roughly between $2000$ and $3000$.
This agrees with the phase transition behavior predicted in Remark~\ref{rmk:N-shrinkage-vs-M}.

\subsection{Experiments on Standard Mathematical Reasoning Benchmarks}
\label{subsec:math-benchmarks}

\subsubsection{Experimental Setup}
\label{subsubsec:math-benchmarks-setup}

\paragraph{Datasets and model training setup.}
We further validate our theoretical analysis on standard mathematical reasoning benchmarks, including GSM8K~\cite{cobbe2021training} and DeepMind Mathematics~\cite{saxton2019analysing}.
GSM8K contains 8.5K grade-school-level math word problems.
DeepMind Mathematics consists of mathematical reasoning problems spanning multiple categories (e.g., algebra, arithmetic, and numbers) with each category further containing a large collection of fine-grained modules.
Notably, DeepMind Mathematics comes with a native easy/medium/hard difficulty partition, which naturally supports our modeling of the easy-to-hard curriculum.
Unless otherwise specified, we use \textsc{Qwen3-8B}~\citep{qwen3} as the base model throughout Section~\ref{subsec:math-benchmarks}.
Additional base models considered in this section include \textsc{Llama-3.2-1B-Instruct}, \textsc{Llama-3.2-3B-Instruct}~\citep{grattafiori2024llama}, \textsc{Qwen2.5-3B-Instruct}, \textsc{Qwen2.5-7B-Instruct}~\citep{qwen2.5}, and \textsc{Qwen3-32B}~\citep{qwen3}.
As in Section~\ref{subsec:synthetic}, we follow the setup in Section~\ref{sec:prob} and Section~\ref{subsec:e2h-baseline} to run iterative self-improvement, and use Pass@1 accuracy as the evaluation metric.
More details on dataset splits and self-improvement finetuning are provided in Appendix~\hyperref[app:exp-details-real]{D.2}.

\begin{table*}[t]
\centering
{\fontsize{8pt}{9pt}\selectfont
\setlength{\tabcolsep}{8pt}
\renewcommand{\arraystretch}{1.15}
\begin{tabularx}{0.62\linewidth}{@{} l c c c c c @{}}
\toprule
\multirow{2}{*}{Model} & \multicolumn{4}{c}{Iteration $t$} & \multirow{2}{*}{\shortstack{Absolute\\Improvement}} \\
\cmidrule(lr){2-5}
 & 0 & 1 & 2 & 3 &  \\
\midrule
Llama-3.2-1B-Instruct & 2.96 & 3.56 & 3.14 & 2.68 & -0.28 \\
Llama-3.2-3B-Instruct & 6.07 & 6.95 & 5.97 & 5.61 & -0.46 \\
Qwen2.5-3B-Instruct   & 10.39 & 11.78 & 13.04 & 13.62 & \phantom{-}3.23 \\
Qwen2.5-7B-Instruct   & 18.12 & 19.16 & 20.17 & 20.19 & \phantom{-}2.07 \\
Qwen3-8B              & 27.52 & 28.66 & 29.75 & 30.67 & \phantom{-}3.15 \\
Qwen3-32B             & 38.59 & 41.47 & 43.09 & 43.77 & \phantom{-}5.18 \\
\bottomrule
\end{tabularx}
}
\vspace{1mm}
\caption{\small Self-improvement trajectories of different base models on GSM8K. Iteration $t$ reports the model's Pass@1 accuracy before iteration $t$ (equivalently, after iteration $t{-}1$). The last column reports the absolute improvement between the final Pass@1 and the initial value. All values are percentages.}
\label{tab:si-traj-models}
\vspace{-2mm}
\end{table*}

\begin{table*}[t]
\centering
{\fontsize{8pt}{9pt}\selectfont
\setlength{\tabcolsep}{8pt}
\renewcommand{\arraystretch}{1.15}
\begin{tabularx}{0.62\linewidth}{@{} l c c c c c @{}}
\toprule
\multirow{2}{*}{Model} & \multicolumn{4}{c}{Iteration $t$} & \multirow{2}{*}{\shortstack{Absolute\\Improvement}} \\
\cmidrule(lr){2-5}
 & 0 & 1 & 2 & 3 &  \\
\midrule
Llama-3.2-1B-Instruct & 18.89 & 35.16 & 36.42 & 36.09 & \phantom{-}17.20 \\
Llama-3.2-3B-Instruct & 37.00 & 46.04 & 51.00 & 54.44 & \phantom{-}17.44 \\
Qwen2.5-3B-Instruct   & 55.89 & 61.29 & 67.31 & 70.42 & \phantom{-}14.53 \\
Qwen2.5-7B-Instruct   & 74.00 & 76.98 & 81.44 & 82.98 & \phantom{-}\,8.98 \\
Qwen3-8B              & 85.11 & 86.65 & 90.89 & 92.78 & \phantom{-}\,7.67 \\
Qwen3-32B             & 97.89 & 95.67 & 96.24 & 96.69 & -1.20 \\
\bottomrule
\end{tabularx}
}
\vspace{1mm}
\caption{\small Self-improvement trajectories of different base models on DeepMind Mathematics Dataset. Iteration $t$ reports the model's accuracy before iteration $t$ (equivalently, after iteration $t{-}1$). The last column reports the absolute improvement between the final value and the initial value. All values are percentages.}
\label{tab:si-traj-models-new}
\vspace{-3mm}
\end{table*}

\begin{table*}[t]
\centering

\begin{subtable}[t]{0.5\linewidth}
\centering
\vspace{0pt}
{\fontsize{8pt}{9pt}\selectfont
\setlength{\tabcolsep}{8pt}
\renewcommand{\arraystretch}{1.15}
\begin{tabularx}{0.83\linewidth}{@{} l c@{\hspace{8pt}}c@{\hspace{8pt}}c@{\hspace{8pt}}c@{\hspace{10pt}}c @{}}
\toprule
\multirow{2}{*}{$n$} & \multicolumn{4}{c}{Iteration $t$} & \multirow{2}{*}{\shortstack{Absolute\\Improvement}} \\
\cmidrule(lr){2-5}
 & 0 & 1 & 2 & 3 &  \\
\midrule
600  & 27.52 & 21.59 & 22.30 & 23.46 & -4.06 \\
1200 & 27.52 & 26.76 & 27.32 & 28.08 & \phantom{-}0.56 \\
1800 & 27.52 & 28.40 & 29.13 & 30.25 & \phantom{-}2.73 \\
2400 & 27.52 & 28.66 & 29.75 & 30.67 & \phantom{-}3.15 \\
\bottomrule
\end{tabularx}
}
\vspace{1mm}
\caption{\small Different question budgets \(n\).}
\label{tab:si-traj-sample-size}
\end{subtable}%
\hfill
\begin{subtable}[t]{0.5\linewidth}
\centering
\vspace{0pt}
{\fontsize{8pt}{9pt}\selectfont
\setlength{\tabcolsep}{8pt}
\renewcommand{\arraystretch}{1.15}
\begin{tabularx}{0.77\linewidth}{@{} l c@{\hspace{6pt}}c@{\hspace{6pt}}c@{\hspace{6pt}}c@{\hspace{10pt}}c @{}}
\toprule
\multirow{2}{*}{$m$} & \multicolumn{4}{c}{Iteration $t$} & \multirow{2}{*}{\shortstack{Absolute\\Improvement}} \\
\cmidrule(lr){2-5}
 & 0 & 1 & 2 & 3 &  \\
\midrule
\addlinespace[5pt]
1  & 27.52 & 28.66 & 29.75 & 30.67 & \phantom{-}3.15 \\
\addlinespace[3pt]
4  & 27.52 & 29.02 & 31.13 & 31.51 & \phantom{-}3.99 \\
\addlinespace[3pt]
16 & 27.52 & 30.42 & 31.55 & 32.86 & \phantom{-}5.34 \\
\addlinespace[1pt] 
\bottomrule
\end{tabularx}
}
\vspace{1mm}
\caption{\small Different per-question answer budgets \(m\).}
\label{tab:si-traj-sample-times}
\end{subtable}

\vspace{1mm}
\caption{\small Self-improvement trajectories of \textsc{Qwen3-8B} under different budgets on GSM8K. Iteration $t$ reports the model's Pass@1 accuracy before iteration $t$ (equivalently, after iteration $t{-}1$). The last column reports the absolute improvement between the final Pass@1 and the initial value. All values are percentages.}
\label{tab:si-traj-budgets}
\vspace{-3mm}
\end{table*}

\subsubsection{Experimental Results}
\label{subsubsec:math-benchmarks-results}

\paragraph{Iterative self-improvement.}
Table~\ref{tab:si-traj-models} and Table~\ref{tab:si-traj-models-new} report the self-improvement trajectories on GSM8K and on the easy split of the nine DeepMind Mathematics modules with the highest initial \textsc{Qwen3-8B} accuracy, respectively.
Across different base models, these trajectories clearly show that when a task is too easy or too hard relative to the model, self-improvement tends to be limited or even deteriorate, whereas moderate task difficulty is more conducive to sustained iterative self-improvement.
Moreover, as the number of iterations increases, the incremental improvement from self-improvement tends to decrease.
Table~\ref{tab:si-traj-budgets} further shows how the degree of self-improvement changes with the question budget \(n\) and the per-question answer budget \(m\).
The results indicate that larger values of \(n\) and \(m\) are both more favorable to self-improvement.

Overall, these findings are consistent with the synthetic task results in Section~\ref{subsubsec:synthetic-results}, and provide empirical support for Remark~\ref{rmk:multi-regime}, Remark~\ref{rmk:multi-bounded}, and the finite-sample analysis in Section~\ref{subsec:single-step}.
It is worth noting that, despite minor differences in self-improvement setups, a range of empirical studies further echo our findings on a broader set of real-world benchmarks: self-improvement tends to favor a moderate task difficulty regime~\citep{singh2023beyond}, benefits from larger budgets (\(n\)~\citep{singh2023beyond,wilf2025propose} and \(m\)~\citep{zeng2024b,bansal2024smaller,yao2025optimizing}), and often exhibits a saturation limit~\citep{song2024mind}.

\begin{table*}[h]
\centering
{\fontsize{8pt}{9pt}\selectfont
\setlength{\tabcolsep}{8pt}
\renewcommand{\arraystretch}{1.15}
\begin{tabularx}{1.0\linewidth}{@{} l X c c c c c c @{}}
\toprule
\multirow{2}{*}{Category} & \multirow{2}{*}{Module} & \multicolumn{6}{c}{$\rho$} \\
\cmidrule(lr){3-8}
& & 0.0 & 0.2 & 0.4 & 0.6 & 0.8 & 1.0 \\
\midrule

\multirow{1}{*}{algebra}
& \texttt{linear\_2d}
& \phantom{-}0.00 & \phantom{-}0.04$_{\scriptscriptstyle \pm 0.27}$ & \phantom{-}0.84$_{\scriptscriptstyle \pm 0.11}$ & \phantom{-}\underline{2.52}$_{\scriptscriptstyle \pm 0.54}$ & \phantom{-}1.96$_{\scriptscriptstyle \pm 0.27}$ & \phantom{-}2.04$_{\scriptscriptstyle \pm 0.25}$ \\
\midrule

\multirow{5}{*}{arithmetic}
& \texttt{add\_sub\_multiple}
& \phantom{-}0.00 & \phantom{-}2.76$_{\scriptscriptstyle \pm 2.23}$ & \phantom{-}7.68$_{\scriptscriptstyle \pm 2.97}$ & \phantom{-}11.60$_{\scriptscriptstyle \pm 3.17}$ & \phantom{-}\underline{17.08}$_{\scriptscriptstyle \pm 2.12}$ & \phantom{-}10.64$_{\scriptscriptstyle \pm 3.64}$ \\
& \texttt{div}
& \phantom{-}0.00 & \phantom{-}0.16$_{\scriptscriptstyle \pm 0.45}$ & \phantom{-}0.42$_{\scriptscriptstyle \pm 0.50}$ & \phantom{-}\underline{0.98}$_{\scriptscriptstyle \pm 0.61}$ & \phantom{-}0.88$_{\scriptscriptstyle \pm 0.49}$ & -0.54$_{\scriptscriptstyle \pm 0.50}$ \\
& \texttt{mixed}
& \phantom{-}0.00 & -1.06$_{\scriptscriptstyle \pm 1.10}$ & \phantom{-}1.64$_{\scriptscriptstyle \pm 1.36}$ & \phantom{-}3.40$_{\scriptscriptstyle \pm 1.20}$ & \phantom{-}\underline{4.62}$_{\scriptscriptstyle \pm 0.51}$ & \phantom{-}3.20$_{\scriptscriptstyle \pm 1.54}$ \\
& \texttt{mul\_div\_multiple}
& \phantom{-}0.00 & \phantom{-}2.60$_{\scriptscriptstyle \pm 1.62}$ & \phantom{-}4.76$_{\scriptscriptstyle \pm 2.41}$ & \phantom{-}9.56$_{\scriptscriptstyle \pm 2.20}$ & \phantom{-}\underline{12.32}$_{\scriptscriptstyle \pm 2.09}$ & \phantom{-}10.48$_{\scriptscriptstyle \pm 2.33}$ \\
& \texttt{nearest\_integer\_root}
& \phantom{-}0.00 & \phantom{-}2.38$_{\scriptscriptstyle \pm 2.80}$ & \phantom{-}\underline{2.50}$_{\scriptscriptstyle \pm 1.09}$ & -2.72$_{\scriptscriptstyle \pm 2.14}$ & -4.44$_{\scriptscriptstyle \pm 3.14}$ & -6.64$_{\scriptscriptstyle \pm 2.55}$ \\
\midrule

\multirow{4}{*}{numbers}
& \texttt{div\_remainder}
& \phantom{-}0.00 & \phantom{-}2.80$_{\scriptscriptstyle \pm 0.90}$ & \phantom{-}2.16$_{\scriptscriptstyle \pm 0.72}$ & \phantom{-}3.78$_{\scriptscriptstyle \pm 0.42}$ & \phantom{-}\underline{3.80}$_{\scriptscriptstyle \pm 0.88}$ & \phantom{-}2.90$_{\scriptscriptstyle \pm 0.67}$ \\
& \texttt{gcd}
& \phantom{-}0.00 & \phantom{-}0.22$_{\scriptscriptstyle \pm 0.96}$ & \phantom{-}0.32$_{\scriptscriptstyle \pm 0.62}$ & \phantom{-}\underline{0.78}$_{\scriptscriptstyle \pm 0.74}$ & \phantom{-}0.04$_{\scriptscriptstyle \pm 0.91}$ & -0.44$_{\scriptscriptstyle \pm 1.11}$ \\
& \texttt{lcm}
& \phantom{-}0.00 & \phantom{-}0.12$_{\scriptscriptstyle \pm 0.60}$ & \phantom{-}0.26$_{\scriptscriptstyle \pm 0.65}$ & \phantom{-}1.04$_{\scriptscriptstyle \pm 0.65}$ & \phantom{-}0.96$_{\scriptscriptstyle \pm 0.86}$ & \phantom{-}\underline{2.66}$_{\scriptscriptstyle \pm 1.04}$ \\
& \texttt{place\_value}
& \phantom{-}0.00 & \phantom{-}0.02$_{\scriptscriptstyle \pm 0.73}$ & \phantom{-}\underline{0.28}$_{\scriptscriptstyle \pm 0.73}$ & \phantom{-}0.02$_{\scriptscriptstyle \pm 0.54}$ & -0.44$_{\scriptscriptstyle \pm 0.78}$ & -3.12$_{\scriptscriptstyle \pm 1.52}$ \\

\bottomrule
\end{tabularx}
}
\vspace{1mm}
\caption{\small Final Pass@1 gap between easy-to-hard and the baseline on the DeepMind Mathematics Dataset using \textsc{Qwen3-8B}. Difficulty follows the dataset's native easy/medium/hard split. $\rho$ interpolates between fixed-mixture training in every round ($\rho{=}0$, baseline) and a fully staged curriculum ($\rho{=}1$: iteration~0 easy, iteration~1 medium, iteration~2 hard). We run experiments on all modules in the algebra, arithmetic, and numbers categories. To focus on modules for which curriculum effects are meaningfully measurable, we exclude modules whose canonical difficulty split yields $\beta' < 0.1$, or whose gap variation across $\rho$ is smaller than 1\%. Underlined entries denote, for each module, the maximum gap across $\rho$. All values are percentages. Standard errors are shown as subscripts.}
\label{tab:gap-rho-qwen3-8b}
\vspace{-5mm}
\end{table*}

\begin{figure*}[t]
\centering
\setlength{\tabcolsep}{3pt}
\includegraphics[width=0.6\linewidth]{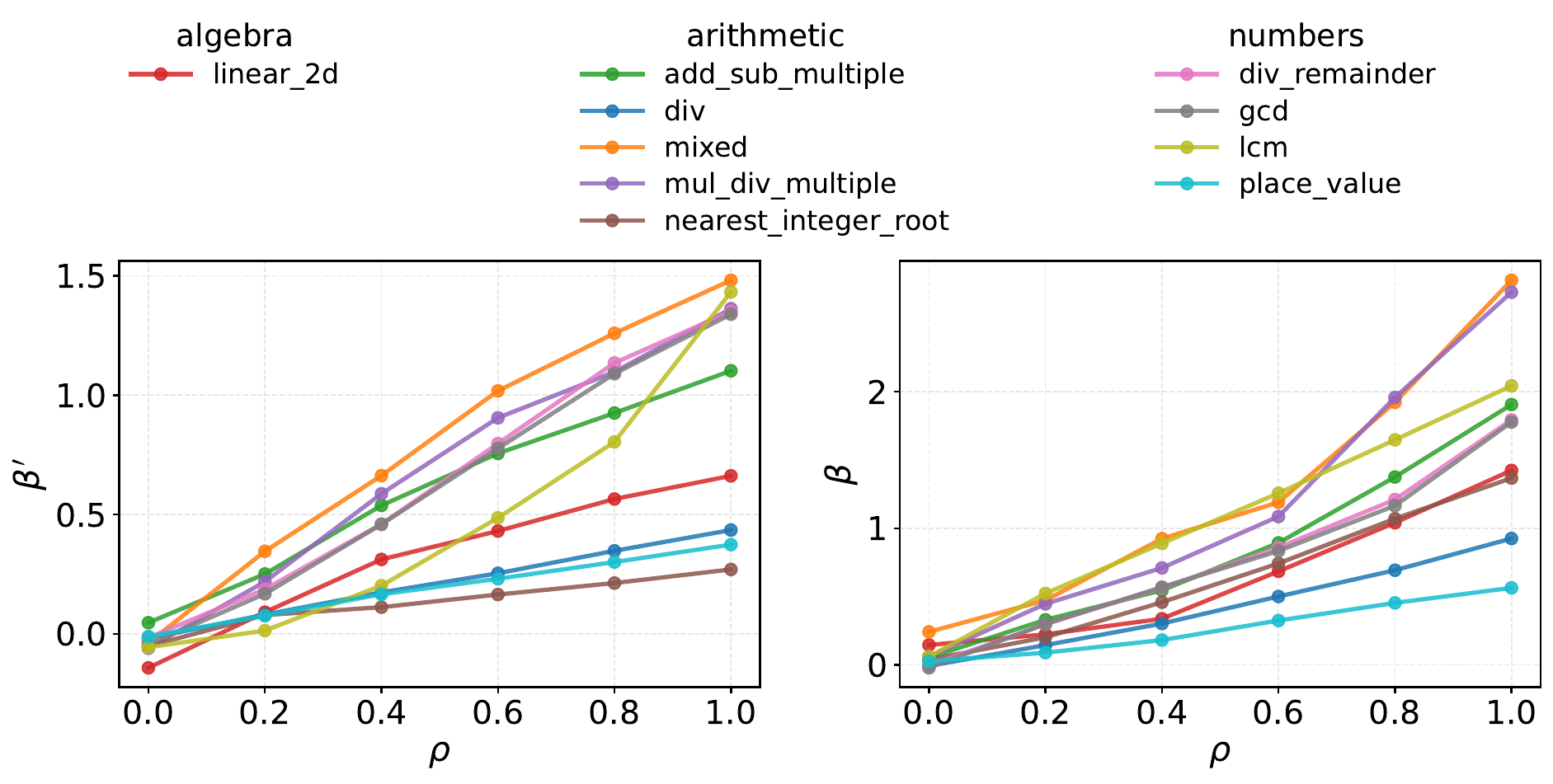}
\caption{\small
The left panel plots the estimate of $\beta'$ as a function of $\rho$, and the right panel plots the estimate of $\beta$ as a function of $\rho$, for all modules reported in Table~\ref{tab:gap-rho-qwen3-8b}.
Here, the estimates of $\beta'$ and $\beta$ are computed from the initialization model using the form in Assumption~\ref{assump:e2h-powerlaw}.}
\label{fig:rho_beta_betaPrime}
\vspace{-0.6em}
\end{figure*}

\paragraph{Iterative self-improvement with easy-to-hard curriculum.}
Beyond the synthetic experiments, here we further validate our analysis using the dataset's native difficulty partition. Concretely, for the DeepMind Mathematics dataset, we vary the difficulty ratios between adjacent tasks through a parameter \(\rho\). For the \(L=3\) iteration self-improvement setting, let \(\mathbf{W}=(w_{ij})\in\mathbb{R}^{3\times 3}\), where \(w_{ij}\) denotes the sampling weight assigned at iteration \(i-1\) to the \(j\)-th difficulty level, with \(j=1,2,3\) corresponding to the easy, medium, and hard splits, respectively. Let \(\mathbf{U}\) be the matrix whose entries are all \(1/3\), and let \(\mathbf{I}\) be the identity matrix. We define
\[
\mathbf{W}=(1-\rho)\mathbf{U}+\rho\mathbf{I}, \qquad \rho\in[0,1].
\]
Thus, when \(\rho=0\), we have \(\mathbf{W}=\mathbf{U}\), which corresponds to the baseline where each iteration samples uniformly from all three difficulty levels; when \(\rho=1\), we have \(\mathbf{W}=\mathbf{I}\), which corresponds to a fully staged easy-to-hard curriculum in which each iteration samples exclusively from one difficulty level. As \(\rho\) increases from \(0\) to \(1\), the difficulty ratios between adjacent tasks become larger.

Figure~\ref{fig:rho_beta_betaPrime} shows that the estimates of \(\beta\) and \(\beta'\) both increase clearly with \(\rho\), suggesting that these \(\beta\) and \(\beta'\) provide effective proxies for the dataset's native notion of difficulty separation. Table~\ref{tab:gap-rho-qwen3-8b} further reports, for modules in the algebra, arithmetic, and numbers categories, the final Pass@1 gap between easy-to-hard and the baseline as a function of \(\rho\). Across modules, as \(\rho\) increases from \(0\) to \(1\), the final Pass@1 gap typically exhibits a trend of first increasing and then decreasing, with the largest gain attained at an interior value of \(\rho\). These results support Remark~\ref{rmk:N-shrinkage-vs-M-moderate}, which predicts that moderate difficulty ratios between adjacent tasks are the most favorable for easy-to-hard scheduling to realize its advantage within the iterative self-improvement framework.
\section{Conclusion}
\label{sec:conclusion}

In this work, we developed a task-centric framework for understanding iterative self-improvement pipelines.
For a single task, our finite-sample analysis characterizes key factors (e.g., task difficulty and sampling budget) that determine when (multi-step) self-improvement happens, and it also exposes intrinsic limits on what can be achieved through self-improvement alone.
Beyond single-task training, we provide theoretical guidance for structuring iteration-wise curricula across difficulty levels and identify regimes where easy-to-hard scheduling yields a stronger lower bound guarantee than fixed-mixture training.
By analyzing the feasibility and improvement conditions, we highlight the role of appropriate adjacent task difficulty ratios and a critical sample size near which the provably advantageous initialization region for easy-to-hard undergoes a sharp phase transition.
Finally, our predictions are further supported by experiments on a synthetic shortest path task and multiple standard mathematical reasoning benchmarks, where we observe trends consistent with our theory.

\section*{Acknowledgments}
QL acknowledges support of NSF DMS-2523382 and DOE Office of Science under Award \#DE-SC0024721. YS was supported in part by the National Institutes of Health (grant no. 1R01EB036530-01A1).

\bibliographystyle{plainnat}
\bibliography{references}

\clearpage
\appendix

\section*{Appendix: Proofs}

\setcounter{section}{1}
\renewcommand{\thesection}{\Alph{section}}
\setcounter{theorem}{0}

\subsection*{A.\; Notation and Deferred Definitions}
\subsubsection*{\normalsize A.1\; Notation Summary}
\label{app:notation}

\begin{table*}[ht]
\centering
{\fontsize{9pt}{10pt}\selectfont
\setlength{\tabcolsep}{8pt}
\renewcommand{\arraystretch}{1.15}
\begin{tabularx}{0.92\linewidth}{@{} l X @{}}
\toprule
Symbol & Meaning \\
\midrule
$q$ & question \\
$a$ & answer \\
$s(q,a)$ & reward; $s(q,a)\in[0,1]$, where larger values indicate a better answer $a$ to the question $q$ \\
$\tau$ & acceptance threshold; $\tau\in(0,1]$ for filtering $s(q,a)\ge \tau$ \\
$\theta,\hat\theta$ & model parameters; $\hat\theta$ denotes the empirical model; includes variants with superscripts or subscripts\footnotemark \\
$p(\cdot)$ & question distribution; includes variants with subscripts \\
$\pi_\theta(\cdot \mid q)$ & answer distribution; conditional distribution induced by $\theta$ given $q$ \\
$D_{p,\theta}(q,a)$ & $D_{p,\theta}(q,a)=p(q)\pi_\theta(a\mid q)$ \\
$V_p(\theta)$ & expected reward; $V_p(\theta)=\mathbb{E}_{q \sim p(\cdot)}\,\mathbb{E}_{a \sim \pi_{\theta}(\cdot \mid q)}
\big[\,s(q,a)\,\big]$ \\
$\alpha(\theta,q)$ & per-question acceptance rate; $\alpha(\theta,q)=\Pr_{a\sim\pi_\theta(\cdot\mid q)}[s(q,a)\ge \tau]$ \\
$Z_p(\theta)$ & global acceptance rate; $Z_p(\theta)=\mathbb{E}_{q\sim p(\cdot)}[\alpha(\theta,q)]$ \\
$n$ & question sampling budget; total number of sampled questions per iteration \\
$m$ & per-question answer budget; number of sampled answers per question (unless otherwise stated, $m=1$) \\
$\nu$ & $\nu=\sqrt{1/n}$ \\
$\Pi$ & model class; see Thm.~\ref{thm:formalmulti-sample} for details \\
$c,\gamma$ & reward-acceptance coupling constants; see Assump.~\ref{assump:score-acceptance-coupling} for details \\
$\delta,\delta'$ & failure probabilities; see Thm.~\ref{thm:formalmulti-sample} and Cor.~\ref{cor:multi-step-binary} for details \\
$c_\delta,c_{\delta'}$ & $c_\delta=\sqrt{2\log(|\Pi|\,\delta^{-1})}$, $c_{\delta'}=\sqrt{\log(\delta'^{-1})/2}$ \\
$L$ & total number of iterations \\
$\beta',\beta$ & difficulty separation exponents; controls the difficulty ratios between adjacent tasks; see Assump.~\ref{assump:e2h-powerlaw} for details \\
$\Delta$ & difficulty uncertainty width; $\Delta=\beta-\beta'$ \\
\bottomrule
\end{tabularx}
}
\vspace{2mm}
\caption{\small Common notation used throughout the paper.}
\label{tab:notation}
\end{table*}

\footnotetext{A subscript $t$ ($0\le t\le L$) denotes the model before iteration $t$ (or after iteration $t-1$). Superscripts $\mathrm{B}$ and $\mathrm{E2H}$ refer to the baseline and the easy-to-hard curriculum in Section~\ref{subsec:e2h-baseline}, respectively.} 
\subsubsection*{\normalsize A.2\; Deferred Definitions and Explicit Expressions}

For readability, several auxiliary quantities are only referenced in the main text. 
Here we collect their explicit definitions and expressions.

\begin{definition}[\(\mathcal I(a,\nu)\)]
\label{def:invariant-interval-Iak}
Suppose \((a,\nu)\) are chosen such that
\[
1-\gamma-\frac{c_{\delta'}\nu}{a}>0
\qquad\text{and}\qquad
0<
\frac{a\,c_\delta \nu}{c\,(a(1-\gamma)-c_{\delta'}\nu)^{3/2}}
<
\sqrt{\frac{4}{27}}.
\]
Then the equation
\[
y(1-y)^2
=
\left(\frac{a\,c_\delta \nu}{c\,(a(1-\gamma)-c_{\delta'}\nu)^{3/2}}\right)^2
\]
admits two solutions in \((0,1)\), denoted by \(y_-(a,\nu)<y_+(a,\nu)\). Set
\[
x_-(a,\nu)=\frac{c_{\delta'}\nu}{a}+\left(1-\gamma-\frac{c_{\delta'}\nu}{a}\right)\,y_-(a,\nu),
\qquad
x_+(a,\nu)=\frac{c_{\delta'}\nu}{a}+\left(1-\gamma-\frac{c_{\delta'}\nu}{a}\right)\,y_+(a,\nu).
\]
We define
\[
\mathcal I(a,\nu)\;:=\;\big(x_-(a,\nu),\,x_+(a,\nu)\big)
\ \subset\
\left(\frac{c_{\delta'}\nu}{a},\,1-\gamma\right).
\]
\end{definition}

\begin{definition}[$\{\mathcal M_i\}_{i=1}^4$ and $\mathcal N$]
\label{def:MiN-explicit}
Let \(x_-(2^{-\beta},\nu)\) and \(x_+(2^{-\beta},\nu)\) be the endpoints of the interval \(\mathcal I(2^{-\beta},\nu)\) defined in
Definition~\ref{def:invariant-interval-Iak} with \(a=2^{-\beta}\). Let, \(a_0=L/\sum_{i=1}^L i^{-\beta'}\) and \(a_L=\sum_{i=1}^L i^{-\beta'}/L^{1-\beta'}\).
Then, we define
\[
\mathcal{M}_1\big(\beta',\beta,\nu,V_{p_0}(\hat\theta_0)\big)
\;:=\;
x_-(2^{-\beta},\nu)-V_{p_0}(\hat\theta_0),
\]
\[
\mathcal{M}_2\big(\beta',\beta,\nu,V_{p_0}(\hat\theta_0)\big)
\;:=\;
V_{p_0}(\hat\theta_0)-x_+(2^{-\beta},\nu),
\]
\[
\mathcal{M}_3\big(\beta',\beta,\nu,V_{p_0}(\hat\theta_0)\big)
\;:=\;
x_-(2^{-\beta},\nu)
-\Bigg(1-\gamma-\frac{c_\delta \nu}{c\sqrt{a_0 V_{p_0}(\hat\theta_0)-c_{\delta'}\nu}}\Bigg),
\]
\[
\mathcal{M}_4\big(\beta',\beta,\nu,V_{p_0}(\hat\theta_0)\big)
\;:=\;
\Bigg(1-\gamma-\frac{c_\delta \nu}{c\sqrt{a_0 V_{p_0}(\hat\theta_0)-c_{\delta'}\nu}}\Bigg)
-x_+(2^{-\beta},\nu),
\]

Next, define
\[
\mathcal{N}\big(\beta',\beta,\nu,V_{p_0}(\hat\theta_0)\big)
\;:=\;
-\frac{1}{2}\,(a_L-1)(1-\gamma)
-\frac{c_\delta \nu}{c\sqrt{\,1-\gamma-c_{\delta'}\nu\,}}\cdot
\frac{
1-\left(\frac{c_\delta \nu}{2c\,(1-\gamma-c_{\delta'}\nu)^{3/2}}\right)^{\!L-1}
}{
1-\frac{c_\delta \nu}{2c\,(1-\gamma-c_{\delta'}\nu)^{3/2}}
}
\]
\[
\hspace{2.0em}
+a_L\Bigg[
\frac{c_\delta \nu}{c\sqrt{2^{-\beta}(1-\gamma)-c_{\delta'}\nu}}
\cdot
\frac{1}{
1-
\frac{c_\delta \nu}{2c\Big(2^{-\beta}\Big(1-\gamma-\frac{c_\delta \nu}{c\sqrt{a_0V_{p_0}(\hat\theta_0)-c_{\delta'}\nu}}\Big)-c_{\delta'}\nu\Big)^{3/2}}
\cdot
e^{-\beta/L}
}
\]
\[
\hspace{4.2em}
+
\left(
\frac{c_\delta \nu}{2c\Big(2^{-\beta}\Big(1-\gamma-\frac{c_\delta \nu}{c\sqrt{a_0V_{p_0}(\hat\theta_0)-c_{\delta'}\nu}}\Big)-c_{\delta'}\nu\Big)^{3/2}}
\right)^{\!L-1}
L^{-\beta}\cdot
\frac{c_\delta \nu}{c\sqrt{a_0V_{p_0}(\hat\theta_0)-c_{\delta'}\nu}}
\Bigg].
\]
\end{definition}

\stepcounter{section}
\setcounter{theorem}{0}

\subsection*{B.\; Proofs for Section~\ref{sec:ISI}}
\subsubsection*{\normalsize B.1\; Proof of Theorem \ref{thm:formalmulti-sample}}
\label{app:III}

\begin{proof}
At iteration $t$ we obtain $n^{(m)}_t$ accepted samples
$\{(q_i,a_i)\}_{i=1}^{n^{(m)}_t}\!\sim\! D'^{(m)}_{p_0,\hat\theta_t}$.
Under this scheme, the population joint distribution of accepted pairs can be written as
\[
D'^{(m)}_{p_0,\hat\theta_t}(q,a)
= p'^{(m)}_{p_0,\hat\theta_t}(q)\; p'_{\hat\theta_t}(a\mid q),
\]
where the marginal over questions and the conditional over answers are, respectively,
\[
p'^{(m)}_{p_0,\hat\theta_t}(q)
=\frac{p_0(q)\,\alpha^{(m)}(\hat\theta_t,q)}{Z^{(m)}_{p_0}(\hat\theta_t)},
\qquad
p'_{\hat\theta_t}(a\mid q)
=\frac{\pi_{\hat\theta_t}(a\mid q)\,\mathbf 1_{\{\,s(q,a)\ge\tau\,\}}}{\alpha(\hat\theta_t,q)},
\]
where
\[
\alpha(\hat\theta_t,q)
:=\Pr_{a\sim\pi_{\hat\theta_t}(\cdot\mid q)}\!\big[s(q,a)\ge\tau\,\big|\,q\big]
=\sum_{a}\pi_{\hat\theta_t}(a\mid q)\,\mathbf 1_{\{\,s(q,a)\ge\tau\,\}},
\]
\[
\alpha^{(m)}(\hat\theta_t,q):=1-\big(1-\alpha(\hat\theta_t,q)\big)^{m},
\qquad
Z^{(m)}_{p_0}(\hat\theta_t):=\mathbb{E}_{q\sim p_0}\!\big[\alpha^{(m)}(\hat\theta_t,q)\big].
\]
For any $m\ge 1$, the population MLE objective
$\mathbb{E}_{(q,a)\sim D'^{(m)}_{p_0,\hat\theta_t}}\!\big[\log \pi_\theta(a\mid q)\big]$ at iteration $t$
achieves its maximum at $\theta=\theta^\star_{t+1}$ where
\[
\pi_{\theta^\star_{t+1}}(\cdot\mid q)=p'_{\hat\theta_t}(\cdot\mid q)
\]
for almost every $q$. Note that $p'_{\hat\theta_t}(\cdot\mid q)$ is independent of $m$ and coincides with the conditional distribution over answers induced by $D'_{p_0,\hat\theta_t}$; hence, by assumption, $p'_{\hat\theta_t}(\cdot\mid q)\in\Pi$. 

Define
\(
A_q\coloneqq\{a:\ s(q,a)\ge\tau\}.
\)
By construction, $\pi_{\theta^\star_{t+1}}(A_q\mid q)=1$ and hence $\pi_{\theta^\star_{t+1}}(A_q^{\mathrm c}\mid q)=0$.
Further define
\[
\delta_t(q)\coloneqq \Pr_{a\sim \pi_{\hat\theta_{t+1}}(\cdot\mid q)}[s(q,a)<\tau]
=\pi_{\hat\theta_{t+1}}(A_q^{\mathrm c}\mid q).
\]
By the definition of total variation distance,
\[
\delta_t(q)
=\big|\pi_{\theta^\star_{t+1}}(A_q^{\mathrm c}\mid q)-\pi_{\hat\theta_{t+1}}(A_q^{\mathrm c}\mid q)\big|
\;\le\;
\TV\!\Big(\pi_{\theta^\star_{t+1}}(\cdot\mid q),\,\pi_{\hat\theta_{t+1}}(\cdot\mid q)\Big).
\]
For any dominating measure $\omega$, it follows that
\[
\begin{aligned}
\TV\!\Big(\pi_{\theta^\star_{t+1}}(\cdot\mid q),\,\pi_{\hat\theta_{t+1}}(\cdot\mid q)\Big)&=\frac12\int
\Bigg|
\frac{\mathrm d\,\pi_{\theta^\star_{t+1}}(\cdot\mid q)}{\mathrm d\omega}
-
\frac{\mathrm d\,\pi_{\hat\theta_{t+1}}(\cdot\mid q)}{\mathrm d\omega}
\Bigg|\,\mathrm d\omega\\[6pt]
&\le
\Bigg(\int
\Bigg(
\sqrt{\frac{\mathrm d\,\pi_{\theta^\star_{t+1}}(\cdot\mid q)}{\mathrm d\omega}}
-
\sqrt{\frac{\mathrm d\,\pi_{\hat\theta_{t+1}}(\cdot\mid q)}{\mathrm d\omega}}
\Bigg)^{\!2}\,\mathrm d\omega\Bigg)^{\!1/2}\\
&=:~
\sqrt{D_{\mathrm H}^2\Big(\pi_{\theta^\star_{t+1}}(\cdot\mid q),\,\pi_{\hat\theta_{t+1}}(\cdot\mid q)\Big)}.
\end{aligned}
\]
Here $D_{\mathrm H}^2(\cdot,\cdot)$ denotes the Hellinger distance. Taking expectation over $q\sim p'^{(m)}_{p_0,\hat\theta_t}$ and using the bound above, we obtain
\[
\begin{aligned}
\mathbb{E}_{q\sim p'^{(m)}_{p_0,\hat\theta_t}}\big[\delta_t(q)\big]
&\le
\Bigg(
\mathbb{E}_{q\sim p'^{(m)}_{p_0,\hat\theta_t}}\!\Big[
D_{\mathrm H}^2\!\big(\pi_{\theta^\star_{t+1}}(\cdot\mid q),\,\pi_{\hat\theta_{t+1}}(\cdot\mid q)\big)
\Big]
\Bigg)^{\!1/2}.
\end{aligned}
\]

Based on Lemma~\ref{lemma:1}, with probability at least $1-\delta$,
\[
\mathbb{E}_{q\sim p'^{(m)}_{p_0,\hat\theta_t}}\big[\delta_t(q)\big]
\;\le\;
\sqrt{\frac{2\log(|\Pi|\,\delta^{-1})}{n_t^{\smash{(m)}}}}.
\]
Therefore, w.h.p.,
\[
\bar\delta_t^{(p_0)}
\;\coloneqq\;
\mathbb{E}_{q\sim p_0}\!\big[\delta_t(q)\big]
=
\mathbb{E}_{q\sim p'^{(m)}_{p_0,\hat\theta_t}}\!\Bigg[
\frac{Z^{(m)}_{p_0}(\hat\theta_t)}{\alpha^{(m)}(\hat\theta_t,q)}\,\delta_t(q)
\Bigg]
\;\le\;
\frac{Z^{(m)}_{p_0}(\hat\theta_t)}{\alpha^{(m)}(\hat\theta_t)}\,
\sqrt{\frac{2\log\!\big(|\Pi|\,\delta^{-1}\big)}{n_t^{\smash{(m)}}}},
\]
where \(\alpha^{(m)}(\hat\theta_t):=\operatorname{ess\,inf}_q \alpha^{(m)}(\hat\theta_t,q)\).
For any fixed $q$,
\[
\begin{aligned}
\mathbb{E}_{a\sim \pi_{\hat\theta_{t+1}}(\cdot\mid q)}[s(q,a)]
&= \mathbb{E}\big[s(q,a)\,\mathbf 1_{A_q}\,\big|\,q\big]
  \;+\; \mathbb{E}\big[s(q,a)\,\mathbf 1_{A_q^{\mathrm c}}\,\big|\,q\big] \\
&\ge \tau\,\pi_{\hat\theta_{t+1}}(A_q\mid q) \\
&=  \tau\big(1-\delta_t(q)\big).
\end{aligned}
\]
Taking expectation over $q\sim p_0$, with probability at least $1-\delta$, we have
\[
V_{p_0}(\hat\theta_{t+1})
=\mathbb{E}_{q\sim p_0}\mathbb{E}_{a\sim\pi_{\hat\theta_{t+1}}(\cdot\mid q)}[s(q,a)]
\;\ge\;
\tau\big(1-\bar\delta_t^{(p_0)}\big)
\;\ge\;
\tau\left(
1-\frac{Z^{(m)}_{p_0}(\hat\theta_t)}{\alpha^{(m)}(\hat\theta_t)}\,
\sqrt{\frac{2\log\!\big(|\Pi|\,\delta^{-1}\big)}{n_t^{\smash{(m)}}}}
\right).
\]

Recall the notation $\alpha(\hat\theta_t,q)\in[0,1]$ and
$\alpha(\hat\theta_t) \coloneqq \operatorname*{ess\,inf}_{q}\alpha(\hat\theta_t,q)>0$.
For $m\in\mathbb{N}$ define
\[
f_m(x)\ :=\ 1-(1-x)^m,\qquad x\in[0,1].
\]
Then
\[
\alpha^{(m)}(\hat\theta_t,q)=f_m(\alpha(\hat\theta_t,q)),\quad
\alpha^{(m)}(\hat\theta_t)=\operatorname*{ess\,inf}_{q} f_m\big(\alpha(\hat\theta_t,q)\big)=f_m(\alpha(\hat\theta_t)),
\quad
Z^{(m)}_{p_0}(\hat\theta_t)=\mathbb{E}_{q\sim p_0}\!\big[f_m(\alpha(\hat\theta_t,q))\big].
\]
Define, for $y\in[0,1)$,
\[
h_m(y)\ :=\ \frac{1-y^{m+1}}{1-y^{m}}.
\]
Using $f_{m+1}(x)/f_m(x)=h_m(1-x)$, we have
\[
\frac{Z^{(m+1)}_{p_0}(\hat\theta_t)}{\alpha^{(m+1)}(\hat\theta_t)}
= \mathbb{E}\!\left[\frac{f_{m+1}(\alpha(\hat\theta_t,q))}{f_m(\alpha(\hat\theta_t,q))}
\cdot \frac{f_m(\alpha(\hat\theta_t,q))}{f_m(\alpha(\hat\theta_t))} \cdot
\frac{f_m(\alpha(\hat\theta_t))}{f_{m+1}(\alpha(\hat\theta_t))}\right]
= \mathbb{E}_{q\sim p_0}\!\left[
\frac{h_m(1-\alpha(\hat\theta_t,q))}{h_m(1-\alpha(\hat\theta_t))}
\cdot
\frac{f_m(\alpha(\hat\theta_t,q))}{f_m(\alpha(\hat\theta_t))}
\right].
\]
We claim $h_m$ is increasing on $[0,1)$. Indeed,
\[
\frac{\mathrm d}{\mathrm dy}\log h_m(y)
= \frac{y^{m-1}(m-(m+1)y+y^{m+1})}{(1-y^m)(1-y^{m+1})} \ \ge\ 0,
\]
since the \(m-(m+1)y+y^{m+1}\) is decreasing in $y$ with value $m$ at $y=0$ and $0$ at $y=1$.
Because $\alpha(\hat\theta_t,q)\ge \alpha(\hat\theta_t)$ a.s., we have $1-\alpha(\hat\theta_t,q)\le 1-\alpha(\hat\theta_t)$ and hence
$h_m(1-\alpha(\hat\theta_t,q))\le h_m(1-\alpha(\hat\theta_t))$.
Therefore,
\[
\frac{Z^{(m+1)}_{p_0}(\hat\theta_t)}{\alpha^{(m+1)}(\hat\theta_t)}
\ \le\
\frac{Z^{(m)}_{p_0}(\hat\theta_t)}{\alpha^{(m)}(\hat\theta_t)}.
\]
Since $1>\alpha(\hat\theta_t)>0$, we have $f_m(\alpha(\hat\theta_t,q))\uparrow 1$ for every $q$ and
$f_m(\alpha(\hat\theta_t))\uparrow 1$ as $m\to\infty$. By the monotone convergence theorem,
$Z^{(m)}_{p_0}(\hat\theta_t)=\mathbb{E}[f_m(\alpha(\hat\theta_t,q))]\to 1$ and
$\alpha^{(m)}(\hat\theta_t)=f_m(\alpha(\hat\theta_t))\to 1$, hence
\[
\lim_{m\to\infty}\frac{Z^{(m)}_{p_0}(\hat\theta_t)}{\alpha^{(m)}(\hat\theta_t)}=1.
\]
\end{proof}

\begin{lemma}[\citet{wong1995probability, geer2000empirical, zhang2006varepsilon, huang2024self}]
\label{lemma:1}
Fix iteration $t$ and \(m \ge 1\). Let $\mathcal Q$ be the question space and $\Delta(\mathcal A)$ the set of probability
measures on the answer space $\mathcal A$. Let $\Pi\subset(\mathcal Q\to\Delta(\mathcal A))$ be a
finite model class and suppose the population optimizer
$\pi_{\theta^\star_{t+1}}(\cdot\mid q)=p'_{\hat\theta_t}(\cdot\mid q)$ belongs to $\Pi$.
Draw $n^{(m)}_t$ accepted samples i.i.d.
\[
(q_i,a_i)\ \sim\ D'^{(m)}_{p_0,\hat\theta_t}(q,a)
\;=\;p'^{(m)}_{p_0,\hat\theta_t}(q)\,\pi_{\theta^\star_{t+1}}(a\mid q),
\qquad i=1,\dots,n^{(m)}_t,
\]
and define the empirical MLE
\[
\hat\theta_{t+1}\ \in\ \arg\max_{\theta:\,\pi_\theta\in\Pi}\;
\sum_{i=1}^{n^{(m)}_t}\log \pi_\theta(a_i\mid q_i).
\]
Then for any $\delta\in(0,1)$, with probability at least $1-\delta$,
\[
\mathbb{E}_{q\sim p'^{(m)}_{p_0,\hat\theta_t}}\!\Big[
D_{\mathrm H}^2\big(\pi_{\hat\theta_{t+1}}(\cdot\mid q),\,\pi_{\theta^\star_{t+1}}(\cdot\mid q)\big)
\Big]
\;\le\;
\frac{2\log\!\big(|\Pi|\,\delta^{-1}\big)}{n^{(m)}_t}.
\]
\end{lemma}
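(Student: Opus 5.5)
This is the standard MLE Hellinger bound for a finite class under realizability. I would prove it by a Chernoff/union-bound argument that controls the log-likelihood ratio through Bhattacharyya affinities. Condition on the realized count \(n:=n^{(m)}_t\), and write \(\pi^\star:=\pi_{\theta^\star_{t+1}}\) and \(p':=p'^{(m)}_{p_0,\hat\theta_t}\). The accepted pairs are then i.i.d. from \(p'(q)\pi^\star(a\mid q)\); since the bound holds for every fixed \(n\), it also holds unconditionally.

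\emph{Step 1: a per-model tail bound.} Fix any \(\pi\in\Pi\) and set
\[
L_\pi:=\sum_{i=1}^n \tfrac12\log\frac{\pi(a_i\mid q_i)}{\pi^\star(a_i\mid q_i)}.
\]
By independence,
\[
\mathbb E[e^{L_\pi}]=\Big(\mathbb E_{q\sim p'}\sum_a\sqrt{\pi(a\mid q)\pi^\star(a\mid q)}\Big)^n.
\]
The inner sum is the Bhattacharyya affinity, which equals \(1-\tfrac12 D_{\mathrm H}^2(\pi(\cdot\mid q),\pi^\star(\cdot\mid q))\) under the normalization used in the statement (the unhalved squared-root difference). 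Using \(1-x\le e^{-x}\), this gives
\[
\mathbb E[e^{L_\pi}]\le \exp\!\big(-\tfrac n2\,\mathcal H(\pi)\big),\qquad \mathcal H(\pi):=\mathbb E_{q\sim p'}D_{\mathrm H}^2(\pi,\pi^\star).
\]
Markov's inequality then yields
\[
\Pr\big[L_\pi\ge -\tfrac n2\mathcal H(\pi)+\log(|\Pi|/\delta)\big]\le \delta/|\Pi|.
\]

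\emph{Step 2: union bound and the MLE property.} A union bound over \(\Pi\) shows that, with probability at least \(1-\delta\), every \(\pi\in\Pi\) satisfies
\[
\tfrac n2\,\mathcal H(\pi)\le -L_\pi+\log(|\Pi|/\delta).
\]
Apply this to \(\pi=\pi_{\hat\theta_{t+1}}\). Because \(\pi^\star\in\Pi\) (realizability) and \(\hat\theta_{t+1}\) maximizes the likelihood, we have \(L_{\pi_{\hat\theta_{t+1}}}\ge 0\). Hence
\[
\mathcal H(\pi_{\hat\theta_{t+1}})\le \frac{2\log(|\Pi|\delta^{-1})}{n},
\]
which is the claim.

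The main technical care is the normalization of \(D_{\mathrm H}^2\): the bound is cleanest when the affinity equals \(1-\tfrac12 D_{\mathrm H}^2\), and with the paper's unhalved definition this holds exactly, so the constant \(2\) comes out as stated. A further point is that \(n^{(m)}_t\) is random. The conditioning argument handles this, because given the number of accepted pairs, the accepted pairs are i.i.d. from \(D'^{(m)}\): the first-acceptance scheme applied to i.i.d. questions produces this distribution. A last detail is that zero-probability pairs under \(\pi\) (where \(\log 0=-\infty\)) are harmless, since \(e^{L_\pi}=0\) in that case.
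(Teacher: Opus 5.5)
Your argument is correct and is the standard Chernoff-plus-union-bound proof via the Bhattacharyya affinity used in the works the paper cites; the paper itself states this lemma without proof and defers to those references. With the paper's unhalved normalization $D_{\mathrm H}^2(P,Q)=\int(\sqrt{dP}-\sqrt{dQ})^2$, the affinity is exactly $1-\tfrac12 D_{\mathrm H}^2$, so the constant $2$ comes out as stated, and conditioning on the realized count correctly handles the randomness of $n^{(m)}_t$.
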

\subsubsection*{\normalsize B.2\; Proof of Corollary~\ref{cor:multi-step-binary}}

\begin{proof}
Fix iteration $t$, recall
\[
Z_{p_0}(\hat\theta_t)=\mathbb{E}_{q\sim p_0}\!\big[\alpha(\hat\theta_t,q)\big],
\qquad
\alpha(\hat\theta_t,q)=\Pr_{a\sim\pi_{\hat\theta_t}(\cdot\mid q)}\!\big[s(q,a)\ge\tau\,\big|\,q\big].
\]
When $s\in\{0,1\}$ and $\tau\in(0,1]$, we have $\mathbf{1}_{\{s(q,a)\ge\tau\}}=s(q,a)$, hence
\(
\alpha(\hat\theta_t,q)
=\mathbb{E}_{a\sim\pi_{\hat\theta_t}(\cdot\mid q)}\!\big[s(q,a)\big].
\)
Averaging over $q\sim p_0$ yields
\[
Z_{p_0}(\hat\theta_t)
=\mathbb{E}_{q\sim p_0}\mathbb{E}_{a\sim\pi_{\hat\theta_t}(\cdot\mid q)}\!\big[s(q,a)\big]
=V_{p_0}(\hat\theta_t).
\]
Moreover, since $s\in\{0,1\}$,
for any fixed $q$,
\[
\mathbb{E}_{a\sim \pi_{\hat\theta_{t+1}}(\cdot\mid q)}[s(q,a)]
=\Pr_{a\sim \pi_{\hat\theta_{t+1}}(\cdot\mid q)}[s(q,a)=1]
=1-\delta_t(q),
\]
where
\(
\delta_t(q):=\Pr_{a\sim \pi_{\hat\theta_{t+1}}(\cdot\mid q)}[s(q,a)<\tau].
\)
Therefore,
\[
V_{p_0}(\hat\theta_{t+1})
=\mathbb{E}_{q\sim p_0}\mathbb{E}_{a\sim\pi_{\hat\theta_{t+1}}(\cdot\mid q)}[s(q,a)]
=1-\bar\delta_t^{(p_0)},
\qquad
\bar\delta_t^{(p_0)}:=\mathbb{E}_{q\sim p_0}[\delta_t(q)].
\]

At iteration $t$, we propose $n$ i.i.d.\ pairs
\(
(q_i,a_i)\sim p_0(q)\,\pi_{\hat\theta_t}(a\mid q)
\),
and accept those with $s(q_i,a_i)\ge\tau$.
Let
\(
X_i:=\mathbf 1_{\{s(q_i,a_i)\ge\tau\}}\in\{0,1\}
\)
and $n_t:=\sum_{i=1}^n X_i$.
Then, by the binary reward identity above,
\[
\mathbb{E}[X_i]
=\mathbb{E}_{q\sim p_0}\Pr_{a\sim\pi_{\hat\theta_t}(\cdot\mid q)}[s(q,a)\ge\tau\mid q]
=Z_{p_0}(\hat\theta_t)
=V_{p_0}(\hat\theta_t).
\]
Hoeffding's inequality gives, for any $\epsilon>0$,
\[
\Pr\!\left[\frac{n_t}{n}\le V_{p_0}(\hat\theta_t)-\epsilon\right]\le \exp(-2n\epsilon^2).
\]
Choosing
\(
\epsilon=\sqrt{\log(\delta'^{-1})/(2n)}
\)
implies that with probability at least $1-\delta'$,
\[
\frac{n_t}{n}\ \ge\ V_{p_0}(\hat\theta_t)-\sqrt{\frac{\log(\delta'^{-1})}{2n}}.
\]

Define
\[
\mathcal G_t
:=\Big\{q:\ \alpha(\hat\theta_t,q)\ \ge\ c\,V_{p_0}(\hat\theta_t)\Big\}.
\]
By Assumption~\ref{assump:score-acceptance-coupling},
\(
\Pr_{q\sim p_0}\!\big[q\notin\mathcal G_t\big]\ \le\ \gamma.
\)
Since $\delta_t(q)\in[0,1]$, we have
\[
\bar\delta_t^{(p_0)}
=\mathbb{E}_{q\sim p_0}\!\big[\delta_t(q)\mathbf 1_{\mathcal G_t}\big]
+\mathbb{E}_{q\sim p_0}\!\big[\delta_t(q)\mathbf 1_{\mathcal G_t^{\mathrm c}}\big]
\ \le\
\mathbb{E}_{q\sim p_0}\!\big[\delta_t(q)\mathbf 1_{\mathcal G_t}\big]
+\Pr_{q\sim p_0}\!\big[q\notin\mathcal G_t\big]
\ \le\
\mathbb{E}_{q\sim p_0}\!\big[\delta_t(q)\mathbf 1_{\mathcal G_t}\big]+\gamma.
\]
Furthermore,
\[
\mathbb{E}_{q\sim p_0}\!\big[\delta_t(q)\mathbf 1_{\mathcal G_t}\big]
=
\mathbb{E}_{q\sim p'_{p_0,\hat\theta_t}}
\!\left[
\frac{Z_{p_0}(\hat\theta_t)}{\alpha(\hat\theta_t,q)}\,\delta_t(q)\mathbf 1_{\mathcal G_t} 
\right]\ \le \  \frac{1}{c}\,
\mathbb{E}_{q\sim p'_{p_0,\hat\theta_t}}\!\big[\delta_t(q)\big],
\]
where we use the fact that On $\mathcal G_t$, we have $\alpha(\hat\theta_t,q)\ge c V_{p_0}(\hat\theta_t)=c Z_{p_0}(\hat\theta_t)$.
Thus
\[
\bar\delta_t^{(p_0)}
\ \le\
\gamma+\frac{1}{c}\,
\mathbb{E}_{q\sim p'_{p_0,\hat\theta_t}}\!\big[\delta_t(q)\big].
\]

By repeating the identical argument in the proof of Theorem~\ref{thm:formalmulti-sample} with \(m=1\),
we obtain that with probability at least $1-\delta$,
\[
\mathbb{E}_{q\sim p'_{p_0,\hat\theta_t}}\!\big[\delta_t(q)\big]
\ \le\
\sqrt{\frac{2\log\!\big(|\Pi|\,\delta^{-1}\big)}{n_t}}.
\]
Then, we get (w.p.\ $\ge 1-\delta$):
\[
V_{p_0}(\hat\theta_{t+1})
=1-\bar\delta_t^{(p_0)}
\ \ge\
1-\gamma-\frac{1}{c}\sqrt{\frac{2\log\!\big(|\Pi|\,\delta^{-1}\big)}{n_t}}.
\]

Moreover, w.p.\ $\ge 1-\delta'$,
\[
\frac{1}{\sqrt{n_t}}
\ \le\
\frac{1}{\sqrt{\,n\Big(V_{p_0}(\hat\theta_t)-\sqrt{\log(\delta'^{-1})/(2n)}\Big)\,}}.
\]
Therefore, by a union bound over the two events,
with probability at least $1-\delta-\delta'$,
\[
V_{p_0}(\hat\theta_{t+1})
\;\ge\;
1-\gamma-\frac{1}{c}\,
\sqrt{\frac{2\log\!\big(|\Pi|\delta^{-1}\big)/n}
{\,V_{p_0}(\hat\theta_t)\;-\;\sqrt{\log(\delta'^{-1})/(2n)}}}\,.
\]
Define
\[
F(x)
\;:=\;
1-\gamma-\frac{c_\delta \nu}{c\sqrt{x-c_{\delta'}\nu}},
\qquad
\nu:=\sqrt{\frac{1}{n}},
\qquad
c_\delta:=\sqrt{2\log\!\big(|\Pi|\,\delta^{-1}\big)},
\qquad
c_{\delta'}:=\sqrt{\frac{\log(\delta'^{-1})}{2}},
\]
on its natural domain \(x>c_{\delta'}\nu\).Then the preceding bound can be rewritten as
\[
V_{p_0}(\hat\theta_{t+1})
\;\ge\;
F \big(V_{p_0}(\hat\theta_t)\big).
\]
Finally, since \(F\) is monotone increasing on its domain, iterating the one-step inequality yields that, with probability at least \(1-t(\delta+\delta')\), for every integer \(t\ge 0\),
\[
V_{p_0}(\hat\theta_t)
\;\ge\;
F^{\circ t}\big(V_{p_0}(\hat\theta_0)\big),
\]
where \(F^{\circ t}\) denotes the \(t\)-fold composition of \(F\). This completes the proof.

\end{proof}
\subsubsection*{\normalsize B.3\; Proof of Proposition \ref{prop:mono-F-multistep}}

\begin{proof}
This proposition is an immediate specialization of Corollary~\ref{cor:mono-F-multistep-a} by taking \(a=1\).
\end{proof}

\begin{lemma}\label{lem:mono-F-simple}
Let
\[
F(x) \;=\; 1 - \frac{\sigma}{\sqrt{x}},
\qquad x\in(0,1),
\]
with parameter $\sigma>0$. Assume
\(
0 < \sigma < \sqrt{4/27}.
\)
Let $x_-<x_+$ denote the two solutions in $(0,1)$ of
\(
x(1-x)^2 = \sigma^2.
\)
Then,
\[
x_+ - x_- \ge 1 - \frac{3\sqrt{3}}{2}\,\sigma,
\]
and for every $x\in(x_-,x_+)$ and every integer $t\ge 0$, all iterates
$F^{\circ t}(x)$ stay in $(x_-,x_+)$ and satisfy
\(
F^{\circ (t+1)}(x)
\;>\;
F^{\circ t}(x).
\)
\end{lemma}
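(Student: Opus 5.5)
The fixed-point structure of $F(x)=1-\sigma/\sqrt{x}$ on $(0,1)$ drives everything. Let $g(x):=F(x)-x$. Then $F(x)>x$ is equivalent to $1-x>\sigma/\sqrt{x}$, i.e.\ $\sqrt{x}(1-x)>\sigma$. Since both sides are positive on $(0,1)$, this is in turn equivalent to $x(1-x)^2>\sigma^2$.

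The function $\phi(x):=x(1-x)^2$ is unimodal on $[0,1]$. It vanishes at both endpoints and has a unique maximum at $x=1/3$, with $\phi(1/3)=4/27$. Since $0<\sigma^2<4/27$, the equation $\phi(x)=\sigma^2$ has exactly two roots $x_-<1/3<x_+$ in $(0,1)$. Moreover $\phi>\sigma^2$ exactly on $(x_-,x_+)$. Hence $F(x)>x$ on $(x_-,x_+)$, and $F(x_\pm)=x_\pm$ are the two fixed points.

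For invariance, note that $F$ is strictly increasing on $(0,1)$. For $x\in(x_-,x_+)$ this gives $x_-=F(x_-)<F(x)<F(x_+)=x_+$. Combined with $F(x)>x$, we get $F(x)\in(x,x_+)\subset(x_-,x_+)$. Induction then shows that every iterate $F^{\circ t}(x)$ stays in $(x_-,x_+)$. Applying the strict inequality $F(y)>y$ at $y=F^{\circ t}(x)$ gives $F^{\circ(t+1)}(x)>F^{\circ t}(x)$ for all $t\ge 0$.

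The interval-length bound is the only step needing real estimation, and I would get it from concavity. On $(0,1)$ the function $\psi(x):=\sqrt{\phi(x)}=\sqrt{x}(1-x)$ is concave, since $\psi''=-\tfrac14 x^{-3/2}-\tfrac34 x^{-1/2}<0$. It has $\psi(0)=\psi(1)=0$ and maximum $\psi(1/3)=2/(3\sqrt3)=:M$. Concavity on $[0,1/3]$ places $\psi$ above its chord, so $\psi(x)\ge 3Mx$ there. Because $\psi(x_-)=\sigma$, this gives $x_-\le \sigma/(3M)$. On $[1/3,1]$ the chord bound is $\psi(x)\ge \tfrac{3M}{2}(1-x)$, which gives $1-x_+\le 2\sigma/(3M)$. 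Adding the two estimates yields
\[
x_+-x_-\ \ge\ 1-\frac{\sigma}{3M}-\frac{2\sigma}{3M}=1-\frac{\sigma}{M}=1-\frac{3\sqrt3}{2}\sigma .
\]
This matches the claimed bound.

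The main point needing care is to verify concavity of $\psi$ rather than of $\phi$, because $\phi$ itself is not concave on $[1/3,1]$. Everything else is monotonicity bookkeeping.
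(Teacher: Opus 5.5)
Your proof is correct. The dynamical half is essentially the paper's argument: the equivalence $F(x)>x \iff x(1-x)^2>\sigma^2$, the fixed points $x_\pm$, and the induction via monotonicity of $F$.

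The length bound is where you take a genuinely different route. The paper solves the cubic $x^3-2x^2+x-\sigma^2=0$ trigonometrically and obtains the exact gap $x_+-x_-=\frac{2}{\sqrt3}\sin\bigl(\frac13\arccos(-1+\frac{27}{2}\sigma^2)\bigr)$. It then substitutes $\cos\theta=\frac{3\sqrt3}{2}\sigma$ and proves $\frac{2}{\sqrt3}\sin(2\theta/3)\ge 1-\cos\theta$ on $[0,\pi/2]$, using concavity in $\theta$ with equality at both endpoints.

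You never locate the roots. Concavity of $\psi(x)=\sqrt{x}(1-x)$ puts $\psi$ above the tent function with apex $(1/3,M)$. Hence $(x_-,x_+)=\{\psi>\sigma\}$ contains the tent's superlevel set, whose length is exactly $1-\sigma/M$.

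Your route is shorter and explains the constant as $\frac{3\sqrt3}{2}=1/\max\psi$. It also applies verbatim to any concave profile vanishing at the endpoints. The paper's trigonometric route buys an exact closed form for $x_+-x_-$. However, that formula is used only to derive this inequality, so nothing downstream is lost with your argument.

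Your remark that one must pass to $\psi$ rather than $\phi$ is exactly what makes the chord bounds valid, since $\phi''(x)=6x-4>0$ on $(2/3,1)$.
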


\begin{proof}
Consider
\[
h(x) \;:=\; x(1-x)^2, \qquad x\in[0,1].
\]
Then
\(
h'(x) = (1-x)^2 - 2x(1-x) = (1-x)(1-3x),
\)
so $h$ is strictly increasing on $(0,1/3)$ and strictly decreasing on $(1/3,1)$,
with a unique interior maximizer at $x=1/3$. Moreover,
\(
h(1/3)
= 4/27.
\)
So whenever
$\sigma^2<4/27$, the equation
\(
h(x)=\sigma^2
\)
has exactly two distinct solutions in $(0,1)$; we denote them by $x_-<x_+$.

Moreover, expanding $h(x)=\sigma^2$ gives the cubic
\(
x^3 - 2x^2 + x - \sigma^2 = 0.
\)
Set
\(
x = z +2/3,
\)
we have
\[
x^3 - 2x^2 + x - \sigma^2
= z^3 - \frac{1}{3}z + \Big(\frac{2}{27} - \sigma^2\Big)=0.
\]
Trigonometric solution of the roots of this depressed cubic is
\[
z_\ell = \frac{2}{3}\cos\Big[\frac{1}{3}\arccos\!\Big(-1 + \frac{27}{2}\sigma^2\Big)-\frac{2\pi\ell}{3} \Big],
\qquad \ell=0,1,2.
\]
so
\[
x_\ell = \frac{2}{3} + \frac{2}{3}\cos\Big[\frac{1}{3}\arccos\!\Big(-1 + \frac{27}{2}\sigma^2\Big)-\frac{2\pi\ell}{3} \Big],
\qquad \ell=0,1,2.
\]
Define
\[
u \;:=\; \frac{1}{3}\arccos\!\Big(-1 + \frac{27}{2}\sigma^2\Big)
\in \Big(0,\frac{\pi}{3}\Big),
\]
then
\[
x_+
= \frac{2}{3} + \frac{2}{3}\cos\Big(u-\frac{2\pi}{3}\Big),
\qquad
x_-
= \frac{2}{3} + \frac{2}{3}\cos\Big(u-\frac{4\pi}{3}\Big).
\]
Hence
\[
x_+ - x_- = \frac{2}{\sqrt{3}}\sin u=\frac{2}{\sqrt{3}}\,
\sin\!\left(
\frac{1}{3}\arccos\!\Big(-1+\frac{27}{2}\sigma^2\Big)
\right).
\]

We now prove the desired bound
\(
x_+ - x_- \;\ge\; 1 - \frac{3\sqrt{3}}{2}\,\sigma.
\)
Let
\(
\cos\theta := (3\sqrt{3}\sigma)/2 \in (0,1)
\)
with $\theta\in[0,\pi/2]$, then
\[
-1 + \frac{27}{2}\sigma^2
=2\cos^2\theta - 1 = \cos(2\theta),
\]
Hence
\(
x_+ - x_-
= \frac{2}{\sqrt{3}}\sin\Big(\frac{2\theta}{3}\Big).
\)
and
\(
1 - \frac{3\sqrt{3}}{2}\,\sigma
= 1 - \cos\theta.
\)

Thus it suffices to show that for all $\theta\in[0,\pi/2]$,
\[
\frac{2}{\sqrt{3}}\sin\Big(\frac{2\theta}{3}\Big)
\;\ge\;
1 - \cos\theta.
\]
Define
\[
g(\theta)
:= \frac{2}{\sqrt{3}}\sin\Big(\frac{2\theta}{3}\Big)
   - \big(1-\cos\theta\big),
\qquad \theta\in[0,\frac{\pi}{2}].
\]
We have
\(
g(0)
= g(\pi/2) = 0.
\)
Moreover,
\[
g''(\theta)
= -\frac{8}{9\sqrt{3}}\sin\Big(\frac{2\theta}{3}\Big)
  - \cos\theta \le 0\
\]
for $\theta\in[0,\pi/2]$ since we have $\sin(2\theta/3)\ge 0$ and $\cos\theta\ge 0$,
thus $g$ is concave on $[0,\pi/2]$ and vanishes at both endpoints. So
\(
g(\theta)\ge  0,
\)
which is equivalent to
\[
x_+ - x_- \;\ge\; 1 - \frac{3\sqrt{3}}{2}\,\sigma.
\]
Next, we characterize where $F(x)>x$. For $x\in(0,1)$, this is equivalent to
\(
\sigma^2 < x(1-x)^2.
\)
Recall that $x_\pm$ are the two solutions to $x(1-x)^2=\sigma^2$ in $(0,1)$,
and that $h(x)=x(1-x)^2$ is strictly increasing on $(0,1/3)$ and strictly
decreasing on $(1/3,1)$. Hence \(x(1-x)^2 > \sigma^2\) is equivalent to \(x\in(x_-,x_+)\),
so we have \(F(x) > x\) is equivalent to \(x\in(x_-,x_+)\).
Let $x\in(x_-,x_+)$ and define $x_t := F^{\circ t}(x)$ for $t\ge 0$.

We prove by induction that
\[
x_- < x_t < x_+,
\qquad
x_{t+1} > x_t,
\qquad \forall\,t\ge 0.
\]

For $t=0$, we have $x_0=x\in(x_-,x_+)$ by assumption, and \(x_1 = F(x_0) > x_0\).
Using monotonicity of $F$ and $F(x_\pm)=x_\pm$,
\[
x_- = F(x_-) < F(x_0) = x_1 < F(x_+) = x_+,
\]
so $x_1\in(x_-,x_+)$. Assume now that $x_t\in(x_-,x_+)$ and $x_{t}>x_{t-1}$ for some $t\ge 1$.
Then, since $x_t\in(x_-,x_+)$,
\(
x_{t+1} = F(x_t) > x_t.
\)
Furthermore, monotonicity of $F$ and the fixed-point property at $x_\pm$ give
\[
x_- = F(x_-) < F(x_t) = x_{t+1} < F(x_+) = x_+,
\]
so $x_{t+1}\in(x_-,x_+)$. This completes the induction and shows that for every
integer $t\ge 0$,
\(
F^{\circ (t+1)}(x)
\;>\;
F^{\circ t}(x).
\)
\end{proof}

\begin{corollary}
\label{cor:mono-F-multistep-a}
Let
\[
F(x;a,\nu)
\;:=\;
1-\gamma-\frac{c_\delta \nu}{c\sqrt{a x-c_{\delta'}\nu}},
\qquad x>\frac{c_{\delta'}\nu}{a}.
\]
Assume \((a,\nu)\) satisfies the validity conditions in
Definition~\ref{def:invariant-interval-Iak}, so that the interval
\[
\mathcal I(a,\nu)\;:=\;\big(x_-(a,\nu),\,x_+(a,\nu)\big)
\ \subset\
\left(\frac{c_{\delta'}\nu}{a},\,1-\gamma\right)
\]
in Definition~\ref{def:invariant-interval-Iak} is well-defined.
Then, for any \(x\in\mathcal I(a,\nu)\) and any integer \(t\ge 0\), \(F^{\circ t}(x;a,\nu)\in \mathcal I(a,\nu)\) and \(F^{\circ(t+1)}(x;a,\nu)>F^{\circ t}(x;a,\nu)\).
Moreover, the interval length satisfies
\[
|\mathcal I(a,\nu)|
\;\ge\;
\left(1-\gamma-\frac{c_{\delta'}\nu}{a}\right)
-\frac{3\sqrt3}{2}\cdot
\frac{c_\delta \nu}{c\sqrt{\,a(1-\gamma)-c_{\delta'}\nu\,}}.
\]
Furthermore, the family \(\{\mathcal I(a,\nu)\}\) is monotone in the sense of inclusion:
(i) for fixed \(\nu\), if \(a_2>a_1>0\), then \(\mathcal I(a_1,\nu)\subset \mathcal I(a_2,\nu)\);
(ii) for fixed \(a\), if \(\nu_2>\nu_1\), then \(\mathcal I(a,\nu_2)\subset \mathcal I(a,\nu_1)\).
\end{corollary}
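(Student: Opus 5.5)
The plan is to reduce $F(\cdot;a,\nu)$ to the normalized map of Lemma~\ref{lem:mono-F-simple} by an affine change of variables. Set $K:=1-\gamma-c_{\delta'}\nu/a>0$ and write $x=c_{\delta'}\nu/a+K y$, so that $ax-c_{\delta'}\nu=aKy=(a(1-\gamma)-c_{\delta'}\nu)\,y$. Then
\[
F(x;a,\nu)=1-\gamma-\frac{c_\delta\nu}{c\sqrt{(a(1-\gamma)-c_{\delta'}\nu)\,y}}
=\frac{c_{\delta'}\nu}{a}+K\Big(1-\frac{\sigma}{\sqrt y}\Big),
\]
with
\[
\sigma:=\frac{c_\delta\nu}{cK\sqrt{a(1-\gamma)-c_{\delta'}\nu}}=\frac{a\,c_\delta\nu}{c\,(a(1-\gamma)-c_{\delta'}\nu)^{3/2}}.
\]
Thus in the $y$-coordinate, $F$ is conjugate to $\tilde F(y)=1-\sigma/\sqrt y$ by an increasing affine bijection $\phi(y)=c_{\delta'}\nu/a+Ky$, so $F^{\circ t}=\phi\circ\tilde F^{\circ t}\circ\phi^{-1}$. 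The validity conditions in Definition~\ref{def:invariant-interval-Iak} are exactly $K>0$ and $0<\sigma<\sqrt{4/27}$, and $x_\pm(a,\nu)=\phi(y_\pm)$, where $y_\pm$ are the roots of $y(1-y)^2=\sigma^2$.

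Invariance and strict monotonicity of the iterates on $\mathcal I(a,\nu)$ then follow directly from Lemma~\ref{lem:mono-F-simple}, since $\phi$ preserves order. For the length bound, we have $|\mathcal I(a,\nu)|=K(y_+-y_-)\ge K(1-\tfrac{3\sqrt3}{2}\sigma)$, and $K\sigma=c_\delta\nu/(c\sqrt{a(1-\gamma)-c_{\delta'}\nu})$, which gives the stated bound.

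For the inclusions, I would not track the endpoints directly. Instead I would use the characterization from the lemma's proof: $x\in\mathcal I(a,\nu)$ iff $x>c_{\delta'}\nu/a$ and $F(x;a,\nu)>x$. Equivalently, $\mathcal I(a,\nu)=\{x: \Psi(x;a,\nu)>0\}$, where
\[
\Psi(x;a,\nu):=(1-\gamma-x)\sqrt{ax-c_{\delta'}\nu}-\frac{c_\delta\nu}{c}
\]
and we require $x<1-\gamma$ (needed since $F<1-\gamma$). For fixed $x\in(c_{\delta'}\nu/a,1-\gamma)$, the function $\Psi$ is increasing in $a$ and strictly decreasing in $\nu$. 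Hence if $a_2>a_1$ and $x\in\mathcal I(a_1,\nu)$, then $x>c_{\delta'}\nu/a_1>c_{\delta'}\nu/a_2$ and $\Psi(x;a_2,\nu)\ge\Psi(x;a_1,\nu)>0$, so $x\in\mathcal I(a_2,\nu)$. The argument for $\nu_2>\nu_1$ is symmetric: $x>c_{\delta'}\nu_2/a$ implies $x>c_{\delta'}\nu_1/a$, and $\Psi$ decreases in $\nu$.

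The main point needing care is confirming that $\{F>x\}$ coincides exactly with $\phi((y_-,y_+))$ on the domain. This holds because $y(1-y)^2>\sigma^2$ with $y\in(0,1)$ is precisely $\tilde F(y)>y$. The strictness of the inclusions is not essential, since non-strict inclusion suffices for the intended use.
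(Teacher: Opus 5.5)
Your proposal is correct. For invariance, strict increase of the iterates, and the length bound, it coincides with the paper's proof: the same affine conjugacy $x=c_{\delta'}\nu/a+Ky$ to $y\mapsto 1-\sigma/\sqrt{y}$, followed by Lemma~\ref{lem:mono-F-simple}. The inclusions are where you take a different route.

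The paper treats $x_\pm(a,\nu)$ as roots of $\Phi(x,a,\nu)=x-F(x;a,\nu)$ and applies the implicit function theorem. It computes $\partial_a\Phi<0$ and $\partial_\nu\Phi>0$, and gets the sign of $\partial_x\Phi$ at the endpoints from $g'(y_\pm)=(1-y_\pm)/(2y_\pm)$ with $y_-<1/3<y_+$. This yields $\partial_a x_-<0<\partial_a x_+$ and $\partial_\nu x_+<0<\partial_\nu x_-$.

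You instead use the superlevel-set description $\mathcal I(a,\nu)=\{x>c_{\delta'}\nu/a:\ \Psi(x;a,\nu)>0\}$ together with pointwise monotonicity of $\Psi$ in each parameter. This is purely order-theoretic: it needs no differentiability of the endpoints and no nondegeneracy check at the fixed points. It also shows automatically that the new parameter pair is valid, since its superlevel set is nonempty, which forces $\sigma<\sqrt{4/27}$. The implicit-function route tacitly needs the whole parameter segment to stay in the validity region; this is true, since $\sigma$ decreases in $a$ and increases in $\nu$, but the paper does not state it.

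What the paper's route gives directly is strict monotonicity of each endpoint, which Proposition~\ref{prop:mono-F-multistep} quotes. Your argument recovers this in one line, so you should not set strictness aside. Inclusion already gives the weak endpoint inequalities. Moreover, $\Psi$ is strictly monotone in $a$ (using $x>0$) and in $\nu$, while $\Psi(\cdot;a,\nu)$ vanishes on its domain exactly at $x_\pm(a,\nu)$. Hence a common endpoint of the two intervals would be a common zero of two strictly ordered functions, which is impossible, so those inequalities are strict.
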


\begin{proof}
We follow the notations in Definition~\ref{def:invariant-interval-Iak}. Define the affine change of variables
\[
x
=
\frac{c_{\delta'}\nu}{a}+\left(1-\gamma-\frac{c_{\delta'}\nu}{a}\right)\,y,
\qquad
y\in(0,1),
\]
A direct substitution shows that for every $y\in(0,1)$,
\begin{align*}
F \Big(\frac{c_{\delta'}\nu}{a}+\big(1-\gamma-\frac{c_{\delta'}\nu}{a}\big)\,y;a,\nu\Big)
&=
1-\gamma-\frac{c_\delta \nu}{c\sqrt{a\big(\frac{c_{\delta'}\nu}{a}+(1-\gamma-\frac{c_{\delta'}\nu}{a})y\big)-c_{\delta'}\nu}}\\
&=
1-\gamma-\frac{c_\delta \nu}{c\sqrt{\big(a(1-\gamma)-c_{\delta'}\nu\big)\,y}}\\
&=
\frac{c_{\delta'}\nu}{a}
+\left(1-\gamma-\frac{c_{\delta'}\nu}{a}\right)\left(
1-\frac{a\,c_\delta \nu}{c\,(a(1-\gamma)-c_{\delta'}\nu)^{3/2}}\cdot\frac{1}{\sqrt{y}}
\right).
\end{align*}
Therefore, if we denote
\[
g_{a,\nu}(y):=1-\frac{a\,c_\delta \nu}{c\,(a(1-\gamma)-c_{\delta'}\nu)^{3/2}}\cdot\frac{1}{\sqrt{y}},
\qquad y\in(0,1),
\]
then we have
\[
F \Big(\frac{c_{\delta'}\nu}{a}+\big(1-\gamma-\frac{c_{\delta'}\nu}{a}\big)\,y;a,\nu\Big)
=
\frac{c_{\delta'}\nu}{a}
+\left(1-\gamma-\frac{c_{\delta'}\nu}{a}\right)\,g_{a,\nu}(y).
\]
The map $g_{a,\nu}$ is exactly of the form in
Lemma~\ref{lem:mono-F-simple} with parameter
\[
\frac{a\,c_\delta \nu}{c\,(a(1-\gamma)-c_{\delta'}\nu)^{3/2}}
\in\Big(0,\sqrt{\frac{4}{27}}\Big).
\]
Then, applying Lemma~\ref{lem:mono-F-simple} to $g_{a,\nu}$ yields:
for every $y\in\big(y_-(a,\nu),y_+(a,\nu)\big)$ and every $t\ge 0$,
\[
g_{a,\nu}^{\circ t}(y)\in\big(y_-(a,\nu),y_+(a,\nu)\big)
\quad\text{and}\quad
g_{a,\nu}^{\circ(t+1)}(y)>g_{a,\nu}^{\circ t}(y),
\]
and moreover
\[
y_+(a,\nu)-y_-(a,\nu)
\;\ge\;
1-\frac{3\sqrt3}{2}\cdot
\frac{a\,c_\delta \nu}{c\,(a(1-\gamma)-c_{\delta'}\nu)^{3/2}}.
\]

Now take any \(x\in\mathcal I(a,\nu)\) and write it as
\(
x=\frac{c_{\delta'}\nu}{a}+\big(1-\gamma-\frac{c_{\delta'}\nu}{a}\big)\,y
\)
with $y\in\big(y_-(a,\nu),y_+(a,\nu)\big)$. Iterating the conjugacy identity gives
\[
F^{\circ t}(x;a,\nu)
=
\frac{c_{\delta'}\nu}{a}
+\left(1-\gamma-\frac{c_{\delta'}\nu}{a}\right)\,g_{a,\nu}^{\circ t}(y),
\qquad \forall\,t\ge 0,
\]
so $g_{a,\nu}^{\circ t}(y)\in\big(y_-(a,\nu),y_+(a,\nu)\big)$ implies
$F^{\circ t}(x;a,\nu)\in\mathcal I(a,\nu)$, and
$g_{a,\nu}^{\circ(t+1)}(y)>g_{a,\nu}^{\circ t}(y)$ implies
$F^{\circ(t+1)}(x;a,\nu)>F^{\circ t}(x;a,\nu)$.

For the interval length, using the lemma’s bound,
\begin{align*}
|\mathcal I(a,\nu)|=x_+(a,\nu)-x_-(a,\nu)
&=\left(1-\gamma-\frac{c_{\delta'}\nu}{a}\right)\big(y_+(a,\nu)-y_-(a,\nu)\big)\\
&\ge
\left(1-\gamma-\frac{c_{\delta'}\nu}{a}\right)\left(
1-\frac{3\sqrt3}{2}\cdot
\frac{a\,c_\delta \nu}{c\,(a(1-\gamma)-c_{\delta'}\nu)^{3/2}}
\right)\\
&=
\left(1-\gamma-\frac{c_{\delta'}\nu}{a}\right)
-\frac{3\sqrt3}{2}\cdot
\frac{c_\delta \nu}{c\sqrt{\,a(1-\gamma)-c_{\delta'}\nu\,}}.
\end{align*}

For any parameters $(a,\nu)$ in the validity range of Definition~\ref{def:invariant-interval-Iak},
recall that the endpoints $x_-(a,\nu)<x_+(a,\nu)$ are the two fixed points of $F(\cdot; a,\nu)$, i.e.,
\[
F\big(x_\pm(a,\nu);a,\nu\big)=x_\pm(a,\nu).
\]
Define the fixed-point equation
\[
\Phi(x, a, \nu):=x-F(x;a,\nu)=0.
\]
Whenever $\partial_x\Phi\big(x_\pm(a,\nu),a,\nu\big)\neq 0$, the implicit function theorem gives
\[
\frac{\partial}{\partial a}x_\pm(a,\nu)
=-\frac{\partial_a\Phi\big(x_\pm(a,\nu),a,\nu\big)}{\partial_x\Phi\big(x_\pm(a,\nu),a,\nu\big)},
\qquad
\frac{\partial}{\partial \nu}x_\pm(a,\nu)
=-\frac{\partial_\nu\Phi\big(x_\pm(a,\nu),a,\nu\big)}{\partial_x\Phi\big(x_\pm(a,\nu),a,\nu\big)}.
\]

Fix $\nu$ and view $\Phi$ as a function of $(a,x)$.
First, for any $x>c_{\delta'}\nu/a$,
\[
\partial_a \Phi(x,a,\nu)
=
-\partial_a F(x;a,\nu)
=
-\frac{c_\delta \nu}{c}\cdot \frac{x}{2\,(a x-c_{\delta'}\nu)^{3/2}}
\;<\;0.
\]
Next, $\partial_x\Phi(x,a,\nu)=1-\partial_xF(x;a,\nu)$.
To determine its sign at the fixed points, consider the affine map
\[
x=\frac{c_{\delta'}\nu}{a}+\left(1-\gamma-\frac{c_{\delta'}\nu}{a}\right)y,
\]
which maps $F(x;a,\nu)$ to $g_{a,\nu}(y)=1-\sigma(a,\nu)/\sqrt{y}$.
At a fixed point $y=g_{a,\nu}(y)$ we have $\sigma(a,\nu)=(1-y)\sqrt{y}$, hence
\[
g'_{a,\nu}(y)=\frac{\sigma(a,\nu)}{2y^{3/2}}=\frac{1-y}{2y}.
\]
Since $y_-(a,\nu)\in(0,1/3)$ and $y_+(a,\nu)\in(1/3,1)$, \(g'_{a,\nu}\big(y_-(a,\nu)\big)>1\) and \(g'_{a,\nu}\big(y_+(a,\nu)\big)<1.\)
Therefore \(\partial_x\Phi\big(x_-(a,\nu),a,\nu\big)<0\) and \(\partial_x\Phi\big(x_+(a,\nu),a,\nu\big)>0.\)
Combining with $\partial_a\Phi(x_\pm(a,\nu),a,\nu)<0$ yields
\[
\frac{\partial}{\partial a}x_-(a,\nu)<0,
\qquad
\frac{\partial}{\partial a}x_+(a,\nu)>0.
\]
Consequently, for any $a_2>a_1$ (with the same fixed $\nu$),
\[
x_-(a_2,\nu)<x_-(a_1,\nu),
\qquad
x_+(a_2,\nu)>x_+(a_1,\nu),
\]
i.e., $\mathcal I(a_1,\nu)\subset \mathcal I(a_2,\nu)$.

Finally, fix $a$ and view $\Phi$ as a function of $(\nu,x)$.
For any $x>c_{\delta'}\nu/a$,
\(
\partial_\nu\Phi(x,a,\nu)=-\partial_\nu F(x;a,\nu).
\)
A direct differentiation shows $\partial_\nu F(x;a,\nu)<0$, hence
\(
\partial_\nu\Phi(x,a,\nu)>0.
\)
Moreover, as established above,
\(\partial_x\Phi\big(x_-(a,\nu),a,\nu\big)<0\) and \(\partial_x\Phi\big(x_+(a,\nu),a,\nu\big)>0.\)
Therefore,
\[
\frac{\partial}{\partial \nu}x_-(a,\nu)>0,
\qquad
\frac{\partial}{\partial \nu}x_+(a,\nu)<0,
\]
Consequently, for any $\nu_2>\nu_1$ (with the same fixed $a$).
\[
x_-(a,\nu_2)>x_-(a,\nu_1),
\qquad
x_+(a,\nu_2)<x_+(a,\nu_1),
\]
which implies $\mathcal I(a,\nu_2)\subset \mathcal I(a,\nu_1)$.
\end{proof}

\stepcounter{section}
\setcounter{theorem}{0}

\subsection*{C.\; Proofs for Section~\ref{sec:E2H}}
\subsubsection*{\normalsize C.1\; Proof of Theorem \ref{thm:e2h-core-WI}}

\begin{proof}
We now prove the first part of the theorem. By Definition~\ref{def:MiN-explicit}, the conditions
\(\mathcal{M}_i\big(\beta',\beta,\nu,V_{p_0}(\hat\theta_0)\big)<0\) for all \(i\in[4]\)
are equivalent to requiring that both \(V_{p_0}(\hat\theta_0)\) and
\(
1-\gamma-c_\delta \nu/(c\sqrt{a_0 V_{p_0}(\hat\theta_0)-c_{\delta'}\nu})
\)
lie in the open interval \((x_-(2^{-\beta},\nu),\,x_+(2^{-\beta},\nu))\).
By Corollary~\ref{cor:mono-F-multistep-a}, since \(2^{-\beta}<1\), we have
\(V_{p_0}(\hat\theta_0)\in \mathcal I(2^{-\beta},\nu)\subset \mathcal I(1,\nu)\).
Therefore, by Corollary~\ref{cor:multi-step-binary} and Proposition~\ref{prop:mono-F-multistep}, it follows that, with high probability,
\[
V_{p_0}\big(\hat\theta^{\mathrm{B}}_L\big)
\;\ge\;
F^{\circ L}\Big(V_{p_0}(\hat\theta_0)\Big),
\]
and the sequence \(\{F^{\circ t}(V_{p_0}(\hat\theta_0))\}_{t\ge 0}\) is monotonically increasing in \(t\), where
\[
F(x)
\;=\;
1-\gamma-\frac{c_\delta \nu}{c\sqrt{x-c_{\delta'}\nu}}.
\]

Next, recall that $\hat\theta^{\mathrm{E2H}}_0=\hat\theta_0$ and for easy-to-hard, during iteration \(t\) we use distribution \(p_{t+1}\).
By Assumption~\ref{assump:e2h-powerlaw}, for every $i \in [L-1]$ and every $\theta$,
\[
\frac{V_{p_i}(\theta)}{V_{p_{i+1}}(\theta)}
\;\ge\;
\frac{i^{-\beta'}}{(i+1)^{-\beta'}}.
\]
Equivalently, $V_{p_{i+1}}(\theta)\le (i/(i+1))^{\beta'}V_{p_i}(\theta)$. Iterating
from $1$ to $i-1$ yields, 
\(
V_{p_i}(\theta)
\le
i^{-\beta'}\,V_{p_1}(\theta).
\)
Applying this with $\theta=\hat\theta_0$ and averaging over $i$ gives
\[
V_{p_0}(\hat\theta_0)
\;=\;
\frac{1}{L}\sum_{i=1}^L V_{p_i}(\hat\theta_0)
\;\le\;
\frac{1}{L}\Big(\sum_{i=1}^L i^{-\beta'}\Big)\,V_{p_1}(\hat\theta_0),
\]
hence
\[
V_{p_1}(\hat\theta_0)
\;\ge\;
a_0\,V_{p_0}(\hat\theta_0),
\qquad
a_0 \;:=\; \frac{L}{\sum_{i=1}^L i^{-\beta'}}.
\]
Note that since $i^{-\beta'}<1$ for all $i\ge 2$, we have
$\sum_{i=1}^L i^{-\beta'}<L$ (for $L\ge 2$), and therefore $a_0>1$.

Then,
at iteration $t$, easy-to-hard trains on $p_{t+1}$. By
Corollary~\ref{cor:multi-step-binary} (substitute \(p_0\) with $p_{t+1}$), with high probability,
\[
V_{p_{t+1}}\!\big(\hat\theta^{\mathrm{E2H}}_{t+1}\big)
\;\ge\;
F \Big(V_{p_{t+1}}\!\big(\hat\theta^{\mathrm{E2H}}_{t}\big)\Big),
\qquad t=0,1,\ldots,L-1.
\]
Again by Assumption~\ref{assump:e2h-powerlaw}, for every $t\in [L-1]$ and every
$\theta \in \Theta$,
\[
\frac{V_{p_t}(\theta)}{V_{p_{t+1}}(\theta)}
\;\le\;
\frac{t^{-\beta}}{(t+1)^{-\beta}},
\]
equivalently,
\[
V_{p_{t+1}}(\theta)
\;\ge\;
a_t\,V_{p_t}(\theta),
\qquad
a_t
\;:=\;
\frac{(t+1)^{-\beta}}{t^{-\beta}}
\;<\;1.
\]
Define
\[
H_t(x) \;:=\; F(a_t x)
\;=\;
1-\gamma-\frac{c_\delta \nu}{c\sqrt{a_t x-c_{\delta'}\nu}},
\qquad t=0,1,\ldots,L-1,
\]
with $\{a_t\}$ defined above, We now chain the previous
steps.
First, since $V_{p_1}(\hat\theta_0)\ge a_0 V_{p_0}(\hat\theta_0)$, we have
\[
V_{p_1}\!\big(\hat\theta^{\mathrm{E2H}}_{1}\big)
\;\ge\;
F\Big(V_{p_1}(\hat\theta_0)\Big)
\;\ge\;
F \Big(a_0 V_{p_0}(\hat\theta_0)\Big)
\;=\;
H_0 \Big(V_{p_0}(\hat\theta_0)\Big).
\]
Next, for $t=1$, 
\[
V_{p_2}\!\big(\hat\theta^{\mathrm{E2H}}_{2}\big)
\;\ge\;
F\Big(V_{p_2}(\hat\theta^{\mathrm{E2H}}_{1})\Big)
\;\ge\;
F \Big(a_1 V_{p_1}(\hat\theta^{\mathrm{E2H}}_{1})\Big)
\;=\;
H_1 \Big(V_{p_1}(\hat\theta^{\mathrm{E2H}}_{1})\Big).
\]
Continuing this argument inductively for $t=2,\ldots,L-1$ yields the recursion
\[
V_{p_{t+1}}\!\big(\hat\theta^{\mathrm{E2H}}_{t+1}\big)
\;\ge\;
H_t \Big(V_{p_t}(\hat\theta^{\mathrm{E2H}}_{t})\Big),
\qquad t=0,1,\ldots,L-1,
\]
and therefore, after $L$ steps,
\[
V_{p_L}\!\big(\hat\theta^{\mathrm{E2H}}_{L}\big)
\;\ge\;
(H_{L-1}\circ H_{L-2}\circ\cdots\circ H_0)\Big(V_{p_0}(\hat\theta_0)\Big).
\]
We finally lower bound $V_{p_0}(\hat\theta^{\mathrm{E2H}}_{L})$ in terms of
$V_{p_L}(\hat\theta^{\mathrm{E2H}}_{L})$. Using Assumption~\ref{assump:e2h-powerlaw} and
iterating as in for \(a_0\), for every $i\in\{1,\ldots,L\}$ and every $\theta \in \Theta$ we have
\[
V_{p_i}(\theta)
\;\ge\;
\Big(\frac{i^{-\beta'}}{L^{-\beta'}}\Big)\,V_{p_L}(\theta)
\;=\;
\Big(\frac{L}{i}\Big)^{\beta'} V_{p_L}(\theta).
\]
Averaging over $i$ gives
\[
V_{p_0}(\theta)
\;=\;
\frac{1}{L}\sum_{i=1}^L V_{p_i}(\theta)
\;\ge\;
\frac{1}{L}\sum_{i=1}^L \Big(\frac{L}{i}\Big)^{\beta'} V_{p_L}(\theta)
\;=\;
\frac{\sum_{i=1}^L i^{-\beta'}}{L^{1-\beta'}}\,V_{p_L}(\theta).
\]
Applying this with $\theta=\hat\theta^{\mathrm{E2H}}_{L}$ yields
\[
V_{p_0} \big(\hat\theta^{\mathrm{E2H}}_{L}\big)
\;\ge\;
a_L\,V_{p_L}\!\big(\hat\theta^{\mathrm{E2H}}_{L}\big),
\qquad
a_L \;:=\; \frac{\sum_{i=1}^L i^{-\beta'}}{L^{1-\beta'}},
\]
and clearly $a_L>1$ since $\sum_{i=1}^L i^{-\beta'} > L\cdot L^{-\beta'}=L^{1-\beta'}$.
Defining $G(x):=a_L x$, we obtain
\[
V_{p_0} \big(\hat\theta^{\mathrm{E2H}}_{L}\big)
\;\ge\;
(G\circ H_{L-1}\circ H_{L-2}\circ\cdots\circ H_0)\Big(V_{p_0}(\hat\theta_0)\Big),
\]
as claimed. Moreover, note that for $t \in [L-1]$,
\(
a_t
\)
is strictly increasing in $t$.
Therefore, by 
Corollary~\ref{cor:mono-F-multistep-a},
the associated intervals \(\mathcal I(a_t,\nu)\) expand as $t$ increases. Since
$V_{p_0}(\hat\theta_0)$ and \(H_t(V_{p_0}(\hat\theta_0))=
1-\gamma-c_\delta \nu/(c\sqrt{a_0 V_{p_0}(\hat\theta_0)-c_{\delta'}\nu})
\) lies in the smallest admissible interval \((x_-(2^{-\beta},\nu),\,x_+(2^{-\beta},\nu))\),
the chained lower bounds stay within the corresponding invariant intervals and satisfy\(
\{(H_t\circ H_{t-1}\circ\cdots\circ H_0)\big(V_{p_0}(\hat\theta_0)\big)\}_{t\ge 0}
\)
is monotonically increasing in $t$.

We now prove the second part of the theorem. Although the statement of
Theorem~\ref{thm:e2h-core-WI} concerns $\nu>0$, the maps involved are continuous in $\nu$,
and it is convenient to first analyze the case $\nu=0$.
To distinguish the dependence on $\nu$, we write the baseline map as $F_\nu(\cdot)$ and the
easy-to-hard maps as $H_{t,\nu}(\cdot)$.

When $\nu=0$, the maps simplify to constants:
\[
F_0(x)
\;=\;
1-\gamma,
\qquad
H_{t,0}(x)
\;=\;
1-\gamma,
\qquad t=0,1,\ldots,L-1,
\qquad
G(x)=a_L x.
\]
Define the comparison gap
\[
\Delta(\nu,x)
\;:=\;
\big(G\circ H_{L-1,\nu}\circ H_{L-2,\nu}\circ\cdots\circ H_{0,\nu}\big)(x)
\;-\;
F_\nu^{\circ L}(x).
\]
Then we have
\[
\Delta(0,x)
=
G\big(H_{L-1,0}\circ\cdots\circ H_{0,0}(x)\big)
-
F_0^{\circ L}(x)
=
a_L(1-\gamma)-(1-\gamma)
=
(a_L-1)(1-\gamma).
\]
Since $a_L>1$ and $1-\gamma>0$, it follows that
\(
\Delta \big(0,V_{p_0}(\hat\theta_0)\big)
=
(a_L-1)(1-\gamma)
\;>\;0.
\)
Therefore, our next step is to lower bound
\(
\Delta\big(\nu,V_{p_0}(\hat\theta_0)\big)
-
\Delta\big(0,V_{p_0}(\hat\theta_0)\big).
\)
For convenience, denote
\(
x_0 := V_{p_0}(\hat\theta_0).
\)\footnote{We will use this shorthand throughout the remainder of this proof (and in subsequent proofs) whenever the dependence on the initialization is through \(V_{p_0}(\hat\theta_0)\).}
Observe that the above difference involves two scalar sequences induced by the
iterated maps. For baseline, define $\{x_t^{\mathrm{B}}(\nu)\}_{t=0}^L$ by
\[
x_0^{\mathrm{B}}(\nu)=x_0,
\qquad
x_{t+1}^{\mathrm{B}}(\nu)=F_\nu\big(x_t^{\mathrm{B}}(\nu)\big),
\qquad t=0,1,\ldots,L-1.
\]
By construction,
\(
F_\nu^{\circ L}(x_0) \;=\; x_L^{\mathrm{B}}(\nu).
\)
Similarly, for easy-to-hard define $\{x_t^{\mathrm{E2H}}(\nu)\}_{t=0}^L$ by
\[
x_0^{\mathrm{E2H}}(\nu)=x_0,
\qquad
x_{t+1}^{\mathrm{E2H}}(\nu)=H_{t,\nu}\big(x_t^{\mathrm{E2H}}(\nu)\big),
\qquad t=0,1,\ldots,L-1.
\]
By construction,
\(
\big(G\circ H_{L-1,\nu}\circ H_{L-2,\nu}\circ\cdots\circ H_{0,\nu}\big)(x_0)
\;=\;
a_L\,x_L^{\mathrm{E2H}}(\nu).
\)

Combining the above identities, we obtain
\begin{align*}
\Delta(\nu,x_0)-\Delta(0,x_0)
&=
\Big(a_L x_L^{\mathrm{E2H}}(\nu)-x_L^{\mathrm{B}}(\nu)\Big)
-
\Big(a_L x_L^{\mathrm{E2H}}(0)-x_L^{\mathrm{B}}(0)\Big) \\
&=
\big(x_L^{\mathrm{B}}(0)-x_L^{\mathrm{B}}(\nu)\big)
-
a_L\big(x_L^{\mathrm{E2H}}(0)-x_L^{\mathrm{E2H}}(\nu)\big).
\end{align*}
Therefore, if we define the deviation sequences
\[
e_t^{\mathrm{B}}(\nu)
\;:=\;
x_t^{\mathrm{B}}(0)-x_t^{\mathrm{B}}(\nu),
\qquad
e_t^{\mathrm{E2H}}(\nu)
\;:=\;
x_t^{\mathrm{E2H}}(0)-x_t^{\mathrm{E2H}}(\nu),
\]
then the quantity of interest can be written succinctly as
\(
\Delta(\nu,x_0)-\Delta(0,x_0)
=
e_L^{\mathrm{B}}(\nu)-a_L\,e_L^{\mathrm{E2H}}(\nu).
\)
Hence, it suffices to derive a lower bound on $e_L^{\mathrm{B}}(\nu)$ and an upper bound
on $e_L^{\mathrm{E2H}}(\nu)$.

For the lower bound on $e_L^{\mathrm{B}}(\nu)$, ignoring the trivial case $t=0$, we start from
\[
e_1^{\mathrm{B}}(\nu)
=
F_0(x_0)-F_\nu(x_0)
=
\frac{c_\delta \nu}{c\sqrt{x_0-c_{\delta'}\nu}}.
\]
More generally, for every $t\ge 1$ we can write
\begin{align*}
e_t^{\mathrm{B}}(\nu)
&=
F_0 \big(x_{t-1}^{\mathrm{B}}(0)\big)-F_\nu\big(x_{t-1}^{\mathrm{B}}(\nu)\big) \\
&=
\Big(F_0\big(x_{t-1}^{\mathrm{B}}(0)\big)-F_\nu\big(x_{t-1}^{\mathrm{B}}(0)\big)\Big)
+
\Big(F_\nu\big(x_{t-1}^{\mathrm{B}}(0)\big)-F_\nu\big(x_{t-1}^{\mathrm{B}}(\nu)\big)\Big).
\end{align*}
The first term is nonnegative because $F_\nu(x)\le F_0(x)=1-\gamma$ for all admissible $x$
when $\nu>0$. The second term is also nonnegative because $F_\nu$ is increasing and
$x_{t-1}^{\mathrm{B}}(0)\ge x_{t-1}^{\mathrm{B}}(\nu)$. Hence,
\[
e_t^{\mathrm{B}}(\nu)\ge 0,
\qquad \forall t\ge 1.
\]

Next we derive a quantitative lower bound. 
The first difference is explicit:
\[
F_0\big(x_{t-1}^{\mathrm{B}}(0)\big)-F_\nu\big(x_{t-1}^{\mathrm{B}}(0)\big)
=
\frac{c_\delta \nu}{c\sqrt{\,1-\gamma-c_{\delta'}\nu\,}}.
\]
For the second difference, note that
\[
F_\nu'(x)
=
\frac{c_\delta \nu}{2c\,(x-c_{\delta'}\nu)^{3/2}},
\]
which is monotonically decreasing in $x$ over its domain. By the mean value theorem,
there exists
\(
\xi_{t-1}\in\big[x_{t-1}^{\mathrm{B}}(\nu),\,1-\gamma\big]
\)
such that
\(
F_\nu\big(x_{t-1}^{\mathrm{B}}(0)\big)-F_\nu\big(x_{t-1}^{\mathrm{B}}(\nu)\big)
=
F_\nu'(\xi_{t-1})\,e_{t-1}^{\mathrm{B}}(\nu).
\)
Since $F_\nu'$ is decreasing and $\xi_{t-1}\le 1-\gamma$, we have
\(
F_\nu'(\xi_{t-1})\ge F_\nu'(1-\gamma)
\),
and thus
\[
F_\nu\big(x_{t-1}^{\mathrm{B}}(0)\big)-F_\nu\big(x_{t-1}^{\mathrm{B}}(\nu)\big)
\;\ge\;
\frac{c_\delta \nu}{2c\,(1-\gamma-c_{\delta'}\nu)^{3/2}}\;e_{t-1}^{\mathrm{B}}(\nu).
\]
Combining the two pieces yields the recursion: for every $t=2,3,\ldots,L$,
\[
e_t^{\mathrm{B}}(\nu)
\;\ge\;
\frac{c_\delta \nu}{c\sqrt{\,1-\gamma-c_{\delta'}\nu\,}}
\;+\;
\frac{c_\delta \nu}{2c\,(1-\gamma-c_{\delta'}\nu)^{3/2}}\;e_{t-1}^{\mathrm{B}}(\nu).
\]
Iterating from $t=2$ up to $t=L$ gives
\[
e_L^{\mathrm{B}}(\nu)
\;\ge\;
\frac{c_\delta \nu}{c\sqrt{\,1-\gamma-c_{\delta'}\nu\,}}\cdot
\frac{
1-\left(\frac{c_\delta \nu}{2c\,(1-\gamma-c_{\delta'}\nu)^{3/2}}\right)^{\!L-1}
}{
1-\frac{c_\delta \nu}{2c\,(1-\gamma-c_{\delta'}\nu)^{3/2}}
}
\;+\;
\left(\frac{c_\delta \nu}{2c\,(1-\gamma-c_{\delta'}\nu)^{3/2}}\right)^{\!L-1}
\frac{c_\delta \nu}{c\sqrt{x_0-c_{\delta'}\nu}}.
\]

For the upper bound on $e_L^{\mathrm{E2H}}(\nu)$, ignoring the trivial case $t=0$, we start from
\[
e_1^{\mathrm{E2H}}(\nu)
=
H_{0,0}(x_0)-H_{0,\nu}(x_0)
=
\frac{c_\delta \nu}{c\sqrt{a_0x_0-c_{\delta'}\nu}}.
\]

More generally, for every $t\ge 1$ we can write
\begin{align*}
e_t^{\mathrm{E2H}}(\nu)
&=
H_{t-1,0}\big(x_{t-1}^{\mathrm{E2H}}(0)\big)
-
H_{t-1,\nu}\big(x_{t-1}^{\mathrm{E2H}}(\nu)\big) \\
&=
\Big(
H_{t-1,0}\big(x_{t-1}^{\mathrm{E2H}}(0)\big)
-
H_{t-1,\nu}\big(x_{t-1}^{\mathrm{E2H}}(0)\big)
\Big)
+
\Big(
H_{t-1,\nu}\big(x_{t-1}^{\mathrm{E2H}}(0)\big)
-
H_{t-1,\nu}\big(x_{t-1}^{\mathrm{E2H}}(\nu)\big)
\Big).
\end{align*}
Similarly, these two terms are nonnegative.
Hence,
\(
e_t^{\mathrm{E2H}}(\nu)\ge 0\),
\(\forall t\ge 1.
\)

Next we derive a quantitative upper bound.
For the first difference, note that $x_{t-1}^{\mathrm{E2H}}(0)=1-\gamma$ for all $t\ge 1$, and thus
\[
H_{t-1,0}\big(x_{t-1}^{\mathrm{E2H}}(0)\big)
-
H_{t-1,\nu}\big(x_{t-1}^{\mathrm{E2H}}(0)\big)
=
\frac{c_\delta \nu}{c\sqrt{a_{t-1}(1-\gamma)-c_{\delta'}\nu}}.
\]
For the second difference, we compute the derivative
\[
H_{t-1,\nu}'(x)
=
\frac{c_\delta \nu}{2c}\cdot\frac{a_{t-1}}{(a_{t-1}x-c_{\delta'}\nu)^{3/2}},
\]
which is monotonically decreasing in $x$ over its domain.
By the mean value theorem, there exists
\(
\zeta_{t-1}\in
\big[x_{t-1}^{\mathrm{E2H}}(\nu),\,1-\gamma\big]
\)
such that
\(
H_{t-1,\nu}\big(x_{t-1}^{\mathrm{E2H}}(0)\big)
-
H_{t-1,\nu}\big(x_{t-1}^{\mathrm{E2H}}(\nu)\big)
=
H_{t-1,\nu}'(\zeta_{t-1})\,e_{t-1}^{\mathrm{E2H}}(\nu).
\)
Since $H_{t-1,\nu}'$ is decreasing and $\zeta_{t-1}\ge x_{t-1}^{\mathrm{E2H}}(\nu)$, we have
\(
H_{t-1,\nu}'(\zeta_{t-1})\le H_{t-1,\nu}'(x_{t-1}^{\mathrm{E2H}}(\nu))
\).
Moreover, since $\{x_t^{\mathrm{E2H}}(\nu)\}_{t\ge 0}$ is monotonically increasing and
\[
x_1^{\mathrm{E2H}}(\nu)
=
H_{0,\nu}(x_0)
=
1-\gamma-\frac{c_\delta \nu}{c\sqrt{a_0x_0-c_{\delta'}\nu}},
\]
we have $x_{t-1}^{\mathrm{E2H}}(\nu)\ge x_1^{\mathrm{E2H}}(\nu)$ for all $t\ge 2$.
Using again that $H_{t-1,\nu}'$ is decreasing, it follows that for all $t\ge 2$,
\(
H_{t-1,\nu}'(x_{t-1}^{\mathrm{E2H}}(\nu))
\le
H_{t-1,\nu}'\big(x_1^{\mathrm{E2H}}(\nu)\big).
\)
Combining the above displays yields, for every $t=2,3,\ldots,L$,
\[
H_{t-1,\nu}\big(x_{t-1}^{\mathrm{E2H}}(0)\big)
-
H_{t-1,\nu}\big(x_{t-1}^{\mathrm{E2H}}(\nu)\big)
\;\le\;
H_{t-1,\nu}'\big(x_1^{\mathrm{E2H}}(\nu)\big)\;e_{t-1}^{\mathrm{E2H}}(\nu),
\]
where
\[
H_{t-1,\nu}'\big(x_1^{\mathrm{E2H}}(\nu)\big)
=
\frac{c_\delta \nu}{2c}\cdot
\frac{a_{t-1}}{\Big(a_{t-1}x_1^{\mathrm{E2H}}(\nu)-c_{\delta'}\nu\Big)^{3/2}}.
\]
Therefore, combining the two pieces, we obtain the recursion: for every $t=2,3,\ldots,L$,
\[
e_t^{\mathrm{E2H}}(\nu)
\;\le\;
\frac{c_\delta \nu}{c\sqrt{a_{t-1}(1-\gamma)-c_{\delta'}\nu}}
\;+\;
\frac{c_\delta \nu}{2c}\cdot
\frac{a_{t-1}}{\Big(a_{t-1}x_1^{\mathrm{E2H}}(\nu)-c_{\delta'}\nu\Big)^{3/2}}
\;e_{t-1}^{\mathrm{E2H}}(\nu).
\]
Unrolling gives
\begin{align*}
e_L^{\mathrm{E2H}}(\nu)
&\le
\sum_{j=1}^{L-1}
\left(
\prod_{s=j+1}^{L-1}
\frac{c_\delta \nu}{2c}\cdot
\frac{a_s}{\Big(a_sx_1^{\mathrm{E2H}}(\nu)-c_{\delta'}\nu\Big)^{3/2}}
\right)
\cdot
\frac{c_\delta \nu}{c\sqrt{a_j(1-\gamma)-c_{\delta'}\nu}}
\\
&\hspace{2em}
+
\left(
\prod_{s=1}^{L-1}
\frac{c_\delta \nu}{2c}\cdot
\frac{a_s}{\Big(a_sx_1^{\mathrm{E2H}}(\nu)-c_{\delta'}\nu\Big)^{3/2}}
\right)
\cdot
\frac{c_\delta \nu}{c\sqrt{a_0x_0-c_{\delta'}\nu}}.
\end{align*}

Since $a_t=(t/(t+1))^\beta$ is increasing in
$t$ for $t\ge 1$, we have $a_t\ge a_1=2^{-\beta}$ for all $t \in [L-1]$. Hence,
for every $j\in [L-1]$,
\[
\frac{1}{\sqrt{a_j(1-\gamma)-c_{\delta'}\nu}}
\;\le\;
\frac{1}{\sqrt{2^{-\beta}(1-\gamma)-c_{\delta'}\nu}},
\]
and for every $s\in [L-1]$,
\[
\frac{1}{\Big(a_sx_1^{\mathrm{E2H}}(\nu)-c_{\delta'}\nu\Big)^{3/2}}
\;\le\;
\frac{1}{\Big(2^{-\beta}x_1^{\mathrm{E2H}}(\nu)-c_{\delta'}\nu\Big)^{3/2}}.
\]
Applying these bounds to the unrolled expression yields
\begin{align*}
e_L^{\mathrm{E2H}}(\nu)
&\le
\frac{c_\delta \nu}{c\sqrt{2^{-\beta}(1-\gamma)-c_{\delta'}\nu}}
\sum_{j=1}^{L-1}
\left(
\frac{c_\delta \nu}{2c\Big(2^{-\beta}x_1^{\mathrm{E2H}}(\nu)-c_{\delta'}\nu\Big)^{3/2}}
\right)^{\!L-1-j}
\left(\prod_{s=j+1}^{L-1} a_s\right)
\\
&\hspace{2em}
+
\left(
\frac{c_\delta \nu}{2c\Big(2^{-\beta}x_1^{\mathrm{E2H}}(\nu)-c_{\delta'}\nu\Big)^{3/2}}
\right)^{\!L-1}
\left(\prod_{s=1}^{L-1} a_s\right)
\cdot
\frac{c_\delta \nu}{c\sqrt{a_0x_0-c_{\delta'}\nu}}.
\end{align*}

Next,
For $0\le j\le L-1$,
\(
\prod_{s=j+1}^{L-1} a_s
=
\prod_{s=j+1}^{L-1}\Big(\frac{s}{s+1}\Big)^\beta
=
\Big(\frac{j+1}{L}\Big)^\beta.
\)
Substituting these identities gives
\begin{align*}
e_L^{\mathrm{E2H}}(\nu)
&\le
\frac{c_\delta \nu}{c\sqrt{2^{-\beta}(1-\gamma)-c_{\delta'}\nu}}
\sum_{j=1}^{L-1}
\left(
\frac{c_\delta \nu}{2c\Big(2^{-\beta}x_1^{\mathrm{E2H}}(\nu)-c_{\delta'}\nu\Big)^{3/2}}
\right)^{\!L-1-j}
\Big(\frac{j+1}{L}\Big)^\beta
\\
&\hspace{2em}
+
\left(
\frac{c_\delta \nu}{2c\Big(2^{-\beta}x_1^{\mathrm{E2H}}(\nu)-c_{\delta'}\nu\Big)^{3/2}}
\right)^{\!L-1}
L^{-\beta}\cdot
\frac{c_\delta \nu}{c\sqrt{a_0x_0-c_{\delta'}\nu}} \\
&=
\frac{c_\delta \nu}{c\sqrt{2^{-\beta}(1-\gamma)-c_{\delta'}\nu}}
\sum_{m=0}^{L-2}
\left(
\frac{c_\delta \nu}{2c\Big(2^{-\beta}x_1^{\mathrm{E2H}}(\nu)-c_{\delta'}\nu\Big)^{3/2}}
\right)^{\!m}
\Big(1-\frac{m}{L}\Big)^\beta
\\
&\hspace{2em}
+
\left(
\frac{c_\delta \nu}{2c\Big(2^{-\beta}x_1^{\mathrm{E2H}}(\nu)-c_{\delta'}\nu\Big)^{3/2}}
\right)^{\!L-1}
L^{-\beta}\cdot
\frac{c_\delta \nu}{c\sqrt{a_0x_0-c_{\delta'}\nu}}.
\end{align*}

Finally, since for $m\in\{0,1,\ldots,L-2\}$,
\(
(1-\frac{m}{L})^\beta
\le
e^{-\frac{\beta}{L}m},
\)
\begin{align*}
e_L^{\mathrm{E2H}}(\nu)
&\le
\frac{c_\delta \nu}{c\sqrt{2^{-\beta}(1-\gamma)-c_{\delta'}\nu}}
\sum_{m=0}^{L-2}
\left[
\frac{c_\delta \nu}{2c\Big(2^{-\beta}x_1^{\mathrm{E2H}}(\nu)-c_{\delta'}\nu\Big)^{3/2}}
\cdot
e^{-\beta/L}
\right]^{\!m}
\\
&\hspace{2em}
+
\left(
\frac{c_\delta \nu}{2c\Big(2^{-\beta}x_1^{\mathrm{E2H}}(\nu)-c_{\delta'}\nu\Big)^{3/2}}
\right)^{\!L-1}
L^{-\beta}\cdot
\frac{c_\delta \nu}{c\sqrt{a_0x_0-c_{\delta'}\nu}} \\
&=
\frac{c_\delta \nu}{c\sqrt{2^{-\beta}(1-\gamma)-c_{\delta'}\nu}}
\cdot
\frac{
1-
\left[
\frac{c_\delta \nu}{2c\Big(2^{-\beta}\Big(1-\gamma-\frac{c_\delta \nu}{c\sqrt{a_0x_0-c_{\delta'}\nu}}\Big)-c_{\delta'}\nu\Big)^{3/2}}
\cdot
e^{-\beta/L}
\right]^{\!L-1}
}{
1-
\frac{c_\delta \nu}{2c\Big(2^{-\beta}\Big(1-\gamma-\frac{c_\delta \nu}{c\sqrt{a_0x_0-c_{\delta'}\nu}}\Big)-c_{\delta'}\nu\Big)^{3/2}}
\cdot
e^{-\beta/L}
}
\\
&\hspace{2em}
+
\left(
\frac{c_\delta \nu}{2c\Big(2^{-\beta}\Big(1-\gamma-\frac{c_\delta \nu}{c\sqrt{a_0x_0-c_{\delta'}\nu}}\Big)-c_{\delta'}\nu\Big)^{3/2}}
\right)^{\!L-1}
L^{-\beta}\cdot
\frac{c_\delta \nu}{c\sqrt{a_0x_0-c_{\delta'}\nu}}.
\end{align*}

Therefore, given
\(
\Delta(\nu,x_0)-\Delta(0,x_0)
\;=\;
e_L^{\mathrm{B}}(\nu)-a_L\,e_L^{\mathrm{E2H}}(\nu),
\)
combining the lower bound on $e_L^{\mathrm{B}}(\nu)$ and the upper bound on
$e_L^{\mathrm{E2H}}(\nu)$ yields
\begin{align*}
&\quad \Delta(\nu,x_0)-\Delta(0,x_0) \\
&\ge
\Bigg[
\frac{c_\delta \nu}{c\sqrt{\,1-\gamma-c_{\delta'}\nu\,}}\cdot
\frac{
1-\left(\frac{c_\delta \nu}{2c\,(1-\gamma-c_{\delta'}\nu)^{3/2}}\right)^{\!L-1}
}{
1-\frac{c_\delta \nu}{2c\,(1-\gamma-c_{\delta'}\nu)^{3/2}}
}
+
\left(\frac{c_\delta \nu}{2c\,(1-\gamma-c_{\delta'}\nu)^{3/2}}\right)^{\!L-1}
\frac{c_\delta \nu}{c\sqrt{x_0-c_{\delta'}\nu}}
\Bigg]
\\
&\hspace{2em}
-
a_L\Bigg[
\frac{c_\delta \nu}{c\sqrt{2^{-\beta}(1-\gamma)-c_{\delta'}\nu}}
\cdot
\frac{
1-
\left[
\frac{c_\delta \nu}{2c\Big(2^{-\beta}\Big(1-\gamma-\frac{c_\delta \nu}{c\sqrt{a_0x_0-c_{\delta'}\nu}}\Big)-c_{\delta'}\nu\Big)^{3/2}}
\cdot
e^{-\beta/L}
\right]^{\!L-1}
}{
1-
\frac{c_\delta \nu}{2c\Big(2^{-\beta}\Big(1-\gamma-\frac{c_\delta \nu}{c\sqrt{a_0x_0-c_{\delta'}\nu}}\Big)-c_{\delta'}\nu\Big)^{3/2}}
\cdot
e^{-\beta/L}
}
\\
&\hspace{5em}
+
\left(
\frac{c_\delta \nu}{2c\Big(2^{-\beta}\Big(1-\gamma-\frac{c_\delta \nu}{c\sqrt{a_0x_0-c_{\delta'}\nu}}\Big)-c_{\delta'}\nu\Big)^{3/2}}
\right)^{\!L-1}
L^{-\beta}\cdot
\frac{c_\delta \nu}{c\sqrt{a_0x_0-c_{\delta'}\nu}}
\Bigg]
\\[0.5em]
&>
\frac{c_\delta \nu}{c\sqrt{\,1-\gamma-c_{\delta'}\nu\,}}\cdot
\frac{
1-\left(\frac{c_\delta \nu}{2c\,(1-\gamma-c_{\delta'}\nu)^{3/2}}\right)^{\!L-1}
}{
1-\frac{c_\delta \nu}{2c\,(1-\gamma-c_{\delta'}\nu)^{3/2}}
}
\\
&\hspace{2em}
-
a_L\Bigg[
\frac{c_\delta \nu}{c\sqrt{2^{-\beta}(1-\gamma)-c_{\delta'}\nu}}
\cdot
\frac{
1
}{
1-
\frac{c_\delta \nu}{2c\Big(2^{-\beta}\Big(1-\gamma-\frac{c_\delta \nu}{c\sqrt{a_0x_0-c_{\delta'}\nu}}\Big)-c_{\delta'}\nu\Big)^{3/2}}
\cdot
e^{-\beta/L}
}
\\
&\hspace{5em}
+
\left(
\frac{c_\delta \nu}{2c\Big(2^{-\beta}\Big(1-\gamma-\frac{c_\delta \nu}{c\sqrt{a_0x_0-c_{\delta'}\nu}}\Big)-c_{\delta'}\nu\Big)^{3/2}}
\right)^{\!L-1}
L^{-\beta}\cdot
\frac{c_\delta \nu}{c\sqrt{a_0x_0-c_{\delta'}\nu}}
\Bigg]
\end{align*}

When the derived lower bound is greater than
\(
-\frac{1}{2}\,\Delta(0,x_0)
=
-\frac{1}{2}\,(a_L-1)(1-\gamma),
\)
then it follows immediately that
\[ \Delta(\nu,x_0) = \Delta(0,x_0)+\big(\Delta(\nu,x_0)-\Delta(0,x_0)\big) \;>\; \frac{1}{2}\,(a_L-1)(1-\gamma) \;>\;0. \]
Therefore, we define the corresponding constraint by
\(
\mathcal N \big(\beta',\beta,\nu,V_{p_0}(\hat\theta_0)\big)< 0
\)
as in Definition~\ref{def:MiN-explicit}, under this constraint, we conclude that
\[
\big(G\circ H_{L-1,\nu}\circ\cdots\circ H_{0,\nu}\big)(x_0)
\;>\;
F_\nu^{\circ L}(x_0),
\]
i.e., the easy-to-hard lower bound is strictly larger than the baseline lower bound.
\end{proof}
\subsubsection*{\normalsize C.2\; Proof of Corollary \ref{cor:e2h-feasible-init-length}}

\begin{proof}
Let
\[
H_0(V)
\;:=\;
1-\gamma-\frac{c_\delta \nu}{c\sqrt{a_0V-c_{\delta'}\nu}},
\]
which is strictly increasing in $V$. Then, the inequality
\(
x_-(2^{-\beta},\nu)<H_0\big(V_{p_0}(\hat\theta_0)\big)<x_+(2^{-\beta},\nu)
\)
is equivalent to
\(
V_-(\nu)<V_{p_0}(\hat\theta_0)<V_+(\nu),
\)
where $V_\pm(\nu)$ are the unique solutions to $H_0(V)=x_\pm(2^{-\beta},\nu)$, respectively.
Solving this gives
\[
\sqrt{a_0V_\pm(\nu)-c_{\delta'}\nu}
=
\frac{c_\delta \nu}{c\,(1-\gamma-x_\pm(2^{-\beta},\nu))}.
\]
Moreover, using the fact that $x_\pm(2^{-\beta},\nu)$ are the fixed
points, i.e.,
\[
x_\pm(2^{-\beta},\nu)
=
1-\gamma-\frac{c_\delta \nu}{c\sqrt{\,2^{-\beta}x_\pm(2^{-\beta},\nu)-c_{\delta'}\nu\,}},
\]
we obtain
\[
\frac{c_\delta \nu}{c\,(1-\gamma-x_\pm(2^{-\beta},\nu))}
=
\sqrt{\,2^{-\beta}x_\pm(2^{-\beta},\nu)-c_{\delta'}\nu\,}.
\]
Therefore,
\(
V_\pm(\nu)=\frac{2^{-\beta}}{a_0}\,x_\pm(2^{-\beta},\nu).
\)
Consequently, the second constraint
is equivalent to
\[
\frac{2^{-\beta}}{a_0}x_-(2^{-\beta},\nu)
<
V_{p_0}(\hat\theta_0)
<
\frac{2^{-\beta}}{a_0}x_+(2^{-\beta},\nu).
\]
Intersecting with the first constraint $x_-(2^{-\beta},\nu)<V_{p_0}(\hat\theta_0)<x_+(2^{-\beta},\nu)$
yields that the feasible set of $V_{p_0}(\hat\theta_0)$ is
\[
\mathcal I_{\mathcal M}(\beta',\beta,\nu)=\Big(x_-(2^{-\beta},\nu),\,\frac{2^{-\beta}}{a_0}x_+(2^{-\beta},\nu)\Big).
\]

For the interval length \(\big|\mathcal I_{\mathcal M}(\beta',\beta,\nu)\big|\), we have
\begin{align*}
\big|\mathcal I_{\mathcal M}(\beta',\beta,\nu)\big|
&=
\big(x_+(2^{-\beta},\nu)-x_-(2^{-\beta},\nu)\big)
-\Big(1-\frac{2^{-\beta}}{a_0}\Big)x_+(2^{-\beta},\nu)
\\
&\ge
\big(x_+(2^{-\beta},\nu)-x_-(2^{-\beta},\nu)\big)
-\Big(1-\frac{2^{-\beta}}{a_0}\Big)(1-\gamma).
\end{align*}
Applying Corollary~\ref{cor:mono-F-multistep-a} with parameter $2^{-\beta}$ gives
\[
\big|\mathcal I_{\mathcal M}(\beta',\beta,\nu)\big|
\;\ge\;
\frac{2^{-\beta}}{a_0}(1-\gamma)
-2^\beta c_{\delta'}\nu
-\frac{3\sqrt3}{2}\cdot
\frac{c_\delta \nu}{c\sqrt{\,2^{-\beta}(1-\gamma)-c_{\delta'}\nu\,}}.
\]

By Corollary~\ref{cor:mono-F-multistep-a}, we have
$x_-(2^{-\beta},0)=0$ and $x_+(2^{-\beta},0)=1-\gamma$, hence
\(
\big|\mathcal I_{\mathcal M}(\beta',\beta,0)\big|
=
\frac{2^{-\beta}}{a_0}(1-\gamma).
\)
Therefore,
\[
\big|\mathcal I_{\mathcal M}(\beta',\beta,0)\big|-\big|\mathcal I_{\mathcal M}(\beta',\beta,\nu)\big|
\;\le\;
2^\beta c_{\delta'}\nu
+\frac{3\sqrt3}{2}\cdot
\frac{c_\delta \nu}{c\sqrt{\,2^{-\beta}(1-\gamma)-c_{\delta'}\nu\,}}.
\]
Moreover,
\begin{align*}
\big|\mathcal I_{\mathcal M}(\beta',\beta,0)\big|-\big|\mathcal I_{\mathcal M}(\beta',\beta,\nu)\big|
&=
\frac{2^{-\beta}}{a_0}(1-\gamma)-\left(\frac{2^{-\beta}}{a_0}x_+(2^{-\beta},\nu)-x_-(2^{-\beta},\nu)\right)\\
&=
\frac{2^{-\beta}}{a_0}\big(1-\gamma-x_+(2^{-\beta},\nu)\big)+x_-(2^{-\beta},\nu).
\end{align*}
Since $x_+(2^{-\beta},\nu)<1-\gamma$, the first term above is nonnegative, hence
\(
\big|\mathcal I_{\mathcal M}(\beta',\beta,0)\big|-\big|\mathcal I_{\mathcal M}(\beta',\beta,\nu)\big|\ge x_-(2^{-\beta},\nu).
\)
By the definition of $x_-(2^{-\beta},\nu)$,
\[
x_-(2^{-\beta},\nu)
=
2^{\beta}c_{\delta'}\nu+\Big(1-\gamma-2^{\beta}c_{\delta'}\nu\Big)\,y_-(\nu)
\;\ge\;
2^{\beta}c_{\delta'}\nu,
\]
which completes the proof.
\end{proof}
\subsubsection*{\normalsize C.3\; Proof of Proposition \ref{prop:N-feasible-region-shrinks-in-k}}

\begin{proof}
Proposition~\ref{prop:N-feasible-region-shrinks-in-k} follows immediately by combining
Lemma~\ref{lem:N-feasible-interval-length-monotone},
Lemma~\ref{lem:N-threshold-expansion-near-zero}, and
Lemma~\ref{lem:N-critical-kc-blowup}.
\end{proof}

\begin{lemma}
\label{lem:N-feasible-interval-length-monotone}
Under the notation of Proposition~\ref{prop:N-feasible-region-shrinks-in-k}, for fixed $(\beta',\beta)$, the length of the interval
$\mathcal I_{\mathcal N}(\beta',\beta,\nu)$ for which
\(
\mathcal{N}\big(\beta',\beta,\nu,V_{p_0}(\hat\theta_0)\big)<0
\)
is monotonically decreasing in $\nu$, and $\mathcal I_{\mathcal N}(\beta',\beta,\nu)$ can be written in the form
\[
\mathcal I_{\mathcal N}(\beta',\beta,\nu)
=
\bigl(x(\beta', \beta, \nu),\,1-\gamma\bigr).
\]
\end{lemma}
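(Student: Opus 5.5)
The plan is to study $\mathcal N$ as a function of the initialization $x_0:=V_{p_0}(\hat\theta_0)$ for fixed $(\beta',\beta,\nu)$, show that it is strictly decreasing in $x_0$ (which gives the upper-interval shape), and then show that its zero level $x(\nu)$ moves to the right as $\nu$ grows. Throughout I work on the natural domain where all square roots in Definition~\ref{def:MiN-explicit} are real and positive, with $x_0<1-\gamma$ (the range in which the iterated lower bounds of Theorem~\ref{thm:e2h-core-WI} live).

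\textbf{Step 1: shape in $x_0$.} Write
\[
\mathcal N=-\tfrac12(a_L-1)(1-\gamma)-B(\nu)+a_L\Big[\frac{A(\nu)}{1-r(\nu,x_0)e^{-\beta/L}}+r(\nu,x_0)^{L-1}L^{-\beta}s(\nu,x_0)\Big],
\]
where:
\begin{itemize}
\item $s(\nu,x_0)=c_\delta\nu/(c\sqrt{a_0x_0-c_{\delta'}\nu})$;
\item $x_1=1-\gamma-s$;
\item $r=c_\delta\nu/\bigl(2c(2^{-\beta}x_1-c_{\delta'}\nu)^{3/2}\bigr)$;
\item $A(\nu)$ and $B(\nu)$ are the remaining terms, which do not depend on $x_0$.
\end{itemize}
For $\nu>0$, $s$ is strictly decreasing in $x_0$. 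Hence $x_1$ is increasing and $r$ is decreasing in $x_0$. Every $x_0$-dependent term therefore decreases, so $\mathcal N$ is strictly decreasing and continuous in $x_0$. As $x_0$ approaches the lower edge of the domain, $r$ (or $s$) blows up, or the denominator $1-re^{-\beta/L}$ hits zero, so $\mathcal N\to+\infty$ there.

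It follows that $\{x_0:\mathcal N<0\}$ is an interval whose right endpoint is the right end $1-\gamma$ of the admissible range. I set its left endpoint to be $x(\nu)=x(\beta',\beta,\nu)$, taken to be the unique root of $\mathcal N(\cdot)=0$ when one exists, and the domain edge otherwise. This proves the form $\mathcal I_{\mathcal N}=(x(\nu),1-\gamma)$. The length $1-\gamma-x(\nu)$ is then decreasing in $\nu$ exactly when $x(\nu)$ is increasing.

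\textbf{Step 2: monotonicity of $x(\nu)$.} I will use the implicit function theorem. Step 1 gives $\partial_{x_0}\mathcal N<0$, so it suffices to show $\partial_\nu\mathcal N>0$ on the zero set $\{\mathcal N=0\}$. This is the main obstacle, because $-B(\nu)$ is decreasing in $\nu$ while the $a_L[\cdot]$ bracket is increasing, so neither term dominates pointwise.

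To handle it, I will use the zero-set identity
\[
a_L[\cdot]=B+\tfrac12(a_L-1)(1-\gamma)>B,
\]
together with a log-derivative comparison. Each building block has the form $\nu/\sqrt{K-c_{\delta'}\nu}$ or $\nu/(K-c_{\delta'}\nu)^{3/2}$, with logarithmic $\nu$-derivative $1/\nu+\tfrac{p\,c_{\delta'}}{2(K-c_{\delta'}\nu)}$, which is larger when $K$ is smaller. The blocks inside the bracket have $K\in\{2^{-\beta}(1-\gamma),\,2^{-\beta}x_1,\,a_0x_0\}$. These are no larger than the corresponding $K=1-\gamma$ in $B$ once $a_0x_0$ is compared appropriately; in the bracket's $s$-term, the growth of $1/\sqrt{a_0x_0-c_{\delta'}\nu}$ and the decrease of $x_1$ in $\nu$ only increase the growth further. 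The geometric sums $1/(1-re^{-\beta/L})$ versus $(1-\rho^{L-1})/(1-\rho)$ are handled by expanding them as power series in $r$ and $\rho$ and applying the same comparison termwise.

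From this I aim to derive
\[
\partial_\nu(a_L[\cdot])\ \ge\ \kappa(\nu)\,a_L[\cdot]\ \ge\ \kappa(\nu)\,B\ \ge\ \partial_\nu B
\]
for a common rate $\kappa(\nu)\ge 1/\nu$, where the last inequality uses that $\kappa(\nu)$ upper-bounds the log-derivative of $B$. The middle inequality is strict on the zero set, which gives $\partial_\nu\mathcal N>0$ there.

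If the termwise bookkeeping for the $s$-term (where $a_0x_0$ need not be smaller than $1-\gamma$) fails, my fallback is to drop that nonnegative term's derivative. Its derivative is itself nonnegative, so discarding it only weakens the left side, and the remaining comparison still closes. Step 2 then yields $x'(\nu)>0$, and the lemma follows.
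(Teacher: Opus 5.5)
Your Step 1 is essentially the paper's argument, and so is the overall shape of Step 2: the implicit function theorem plus a sign condition on $\partial_\nu\mathcal N$. The paper, however, does not restrict to the zero set. It shows $\mathcal N$ is strictly increasing in $\nu$ for every $x_0$ by a direct domination argument. Write $B=UV$ with $U=c_\delta\nu/(c\sqrt{1-\gamma-c_{\delta'}\nu})$, $V=\sum_{j=0}^{L-2}q^j$ and $q=c_\delta\nu/(2c(1-\gamma-c_{\delta'}\nu)^{3/2})$, and put $\tilde r:=re^{-\beta/L}$ and $\tilde V=1/(1-\tilde r)$. The paper proves $U<A$, $U'<A'$, $V<\tilde V$, $V'<\tilde V'$ and then uses $a_L>1$.

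\textbf{The gap: the geometric-sum comparison in Step 2.} Comparing the $\nu$-log-derivatives of the individual terms $\tilde r^j$ and $q^j$ does not control $\partial_\nu\log\tilde V$ against $\partial_\nu\log V$. The log-derivative of a sum is the average of the termwise ones, weighted by $\tilde r^j$ and by $q^j$ respectively. When the ratios and $c_{\delta'}\nu$ are small, $\nu\,\partial_\nu\log\tilde V\approx\tilde r$ and $\nu\,\partial_\nu\log V\approx q$. So you need the value inequality $q<\tilde r$, and the discount $e^{-\beta/L}$ in $\tilde r$, which no $\nu$-derivative sees, works against you. This is the missing idea, and it is exactly what the paper proves:
\[
2^{-\beta}x_1-c_{\delta'}\nu\;\le\;2^{-\beta}(1-\gamma)-c_{\delta'}\nu\;<\;e^{-2\beta/(3L)}(1-\gamma-c_{\delta'}\nu),
\]
using $2^{-\beta}<e^{-2\beta/(3L)}$ for $L\ge2$.

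With $q<\tilde r$ you get $\tilde V>V$, hence $a_LA\tilde V>UV=B$ for every $x_0$. Your zero-set identity then becomes superfluous, and your route collapses onto the paper's stronger global statement. That global statement (and the value comparison itself) is reused later in Lemmas~\ref{lem:N-critical-kc-blowup}, \ref{lem:kstar-monotone-in-beta} and \ref{lem:kstar-monotone-in-betaprime}. For the same reason, your remark that ``neither term dominates pointwise'' is not accurate: $(a_LA\tilde V)'>B'$ holds everywhere.

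\textbf{Your fallback does not close.} If you discard $T_2=r^{L-1}L^{-\beta}s$ on the derivative side, you can only reach $\kappa\,a_LA\tilde V$. The zero-set identity bounds $a_L(A\tilde V+T_2)$ from below by $B$, not $a_LA\tilde V$. The fallback is also unnecessary. You have $\nu s'/s\ge1$, and $\nu r'/r\ge\nu q'/q$ because the radicand is smaller and $x_1$ decreases in $\nu$. Hence
\[
\nu T_2'/T_2\;\ge\;(L-1)\,\nu q'/q+1\;>\;\nu U'/U+(L-2)\,\nu q'/q\;\ge\;\nu B'/B,
\]
regardless of how $a_0x_0$ compares with $1-\gamma$.

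\textbf{A minor point in Step 1.} Since $\mathcal N\to+\infty$ at the lower edge and is decreasing in $x_0$, the ``no root'' case means the admissible set is empty, not the whole domain. You must secure $\mathcal N<0$ near $1-\gamma$. The paper gets this from $\mathcal N=-\tfrac12(a_L-1)(1-\gamma)<0$ at $\nu=0$ together with continuity.
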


\begin{proof}
Let \(x_0 := V_{p_0}(\hat\theta_0)\) for convenience. First, we show that the additional constraint
\(
\mathcal{N}(\beta',\beta,\nu,x_0)<0
\)
induces an admissible region that is monotonically decreasing in $\nu$.

Define
\[
\begin{aligned}
\mathcal{E}(\beta',\beta,\nu,x_0)
:=
&\;
\frac{c_\delta \nu}{c\sqrt{\,1-\gamma-c_{\delta'}\nu\,}}\cdot
\frac{
1-\left(\frac{c_\delta \nu}{2c\,(1-\gamma-c_{\delta'}\nu)^{3/2}}\right)^{\!L-1}
}{
1-\frac{c_\delta \nu}{2c\,(1-\gamma-c_{\delta'}\nu)^{3/2}}
}
\\
&\;
-a_L\Bigg[
\frac{c_\delta \nu}{c\sqrt{2^{-\beta}(1-\gamma)-c_{\delta'}\nu}}
\cdot
\frac{1}{
1-
\frac{c_\delta \nu}{2c\Big(2^{-\beta}\Big(1-\gamma-\frac{c_\delta \nu}{c\sqrt{a_0x_0-c_{\delta'}\nu}}\Big)-c_{\delta'}\nu\Big)^{3/2}}
\cdot
e^{-\beta/L}
}
\\
&\hspace{2.2em}
+
\left(
\frac{c_\delta \nu}{2c\Big(2^{-\beta}\Big(1-\gamma-\frac{c_\delta \nu}{c\sqrt{a_0x_0-c_{\delta'}\nu}}\Big)-c_{\delta'}\nu\Big)^{3/2}}
\right)^{\!L-1}
L^{-\beta}\cdot
\frac{c_\delta \nu}{c\sqrt{a_0x_0-c_{\delta'}\nu}}
\Bigg].
\end{aligned}
\]
Then, by Theorem~\ref{thm:e2h-core-WI},
\(
\mathcal{N}(\beta',\beta,\nu,x_0)<0
\)
is equivalent to
\(
\mathcal{E}(\beta',\beta,\nu,x_0)
>
-\frac{1}{2}\,(a_L-1)(1-\gamma).
\)
Moreover, the right-hand side above is a fixed constant since $\beta'$ is fixed.

Next, note that $\mathcal{E}(\beta',\beta,\nu,x_0)$ is monotonically increasing in $x_0$.
Moreover, as $x_0$ decreases to $0$, for denominators, the term
\[
1-
\frac{c_\delta \nu}{2c\Big(2^{-\beta}\Big(1-\gamma-\frac{c_\delta \nu}{c\sqrt{a_0x_0-c_{\delta'}\nu}}\Big)-c_{\delta'}\nu\Big)^{3/2}}
\cdot
e^{-\beta/L}
\]
is the first to approach $0$, which implies that
$\mathcal{E}(\beta',\beta,\nu,x_0)\to -\infty$.
On the other hand, when $\nu=0$ we have
\[
\mathcal{E}(\beta',\beta,0,x_0)=0>-\frac{1}{2}\,(a_L-1)(1-\gamma)
\]
for all $x_0$. Hence, by continuity, for sufficiently small $\nu>0$, the set of $x_0$
satisfying
\(
\mathcal{E}(\beta',\beta,\nu,x_0)>-\frac{1}{2}\,(a_L-1)(1-\gamma)
\)
is non-empty.
Combining the above, as $\nu$ increases gradually from $0$, the set of $x_0$ such that
\(
\mathcal{N}(\beta',\beta,\nu,x_0)<0
\)
must lie in a non-empty interval of the form
\[
\mathcal I_{\mathcal N}(\beta',\beta,\nu)
=
\bigl(x(\beta', \beta, \nu),\,1-\gamma\bigr).
\]
Therefore, it suffices to show that the minimal admissible threshold
$x(\beta', \beta, \nu)$ is monotonically increasing in $\nu$.
Define 
\[
\Phi(\beta',\beta,\nu,x)
\;:=\;
\mathcal E(\beta',\beta,\nu,x)
+\frac12\,(a_L-1)(1-\gamma),
\]
By definition of $x(\beta', \beta, \nu)$, we have
\(
\Phi\big(\beta',\beta,\nu,x(\beta', \beta, \nu)\big)=0.
\)
The implicit function theorem implies that 
\[
x'(\nu)
=
-\frac{\partial_\nu\Phi\big(\beta',\beta,\nu,x(\beta', \beta, \nu)\big)}
{\partial_x\Phi\big(\beta',\beta,\nu,x(\beta', \beta, \nu)\big)}
=
-\frac{\partial_\nu\mathcal E\big(\beta',\beta,\nu,x(\beta', \beta, \nu)\big)}
{\partial_x\mathcal E\big(\beta',\beta,\nu,x(\beta', \beta, \nu)\big)}.
\]
Since $\partial_x\mathcal E(\beta',\beta,\nu,x)>0$, to show \(x'(\nu) > 0\), it suffices to show that $\mathcal{E}(\beta',\beta,\nu,x_0)$ is
monotonically decreasing in $\nu$.

Note first that the last term inside the brackets,
\[
\left(
\frac{c_\delta \nu}{2c\Big(2^{-\beta}\Big(1-\gamma-\frac{c_\delta \nu}{c\sqrt{a_0x_0-c_{\delta'}\nu}}\Big)-c_{\delta'}\nu\Big)^{3/2}}
\right)^{\!L-1}
L^{-\beta}\cdot
\frac{c_\delta \nu}{c\sqrt{a_0x_0-c_{\delta'}\nu}},
\]
is increasing as $\nu$ increases. Hence, after
multiplying by the coefficient $-a_L$, this contribution is decreasing in $\nu$. It
therefore suffices to prove that the remaining part,
\begin{align*}
&\quad \frac{c_\delta \nu}{c\sqrt{\,1-\gamma-c_{\delta'}\nu\,}}\cdot
\frac{
1-\left(\frac{c_\delta \nu}{2c\,(1-\gamma-c_{\delta'}\nu)^{3/2}}\right)^{\!L-1}
}{
1-\frac{c_\delta \nu}{2c\,(1-\gamma-c_{\delta'}\nu)^{3/2}}
}
\\
&
-\;
a_L\Bigg[
\frac{c_\delta \nu}{c\sqrt{2^{-\beta}(1-\gamma)-c_{\delta'}\nu}}
\cdot
\frac{1}{
1-
\frac{c_\delta \nu}{2c\Big(2^{-\beta}\Big(1-\gamma-\frac{c_\delta \nu}{c\sqrt{a_0x_0-c_{\delta'}\nu}}\Big)-c_{\delta'}\nu\Big)^{3/2}}
\cdot
e^{-\beta/L}
}
\Bigg].
\end{align*}
has strictly negative derivative with respect to $\nu$.

To this end, it is enough to verify the following two comparison statements:
first, both the function value and the derivative (with respect to $\nu$) of
\[
U(\nu)=\frac{c_\delta \nu}{c\sqrt{\,1-\gamma-c_{\delta'}\nu\,}}
\]
are strictly smaller than those of
\[
\widetilde U(\nu)=\frac{c_\delta \nu}{c\sqrt{\,2^{-\beta}(1-\gamma)-c_{\delta'}\nu\,}},
\]
and second, both the function value and the derivative (with respect to $\nu$) of
\[
V(\nu)=\frac{
1-\left(\frac{c_\delta \nu}{2c\,(1-\gamma-c_{\delta'}\nu)^{3/2}}\right)^{\!L-1}
}{
1-\frac{c_\delta \nu}{2c\,(1-\gamma-c_{\delta'}\nu)^{3/2}}
}
\]
are strictly smaller than those of
\[
\widetilde V(\nu)=\frac{1}{
1-
\frac{c_\delta \nu}{2c\Big(2^{-\beta}\Big(1-\gamma-\frac{c_\delta \nu}{c\sqrt{a_0x_0-c_{\delta'}\nu}}\Big)-c_{\delta'}\nu\Big)^{3/2}}
\cdot
e^{-\beta/L}
}.
\]
Indeed, by the product rule,
\[
\frac{\mathrm d}{\mathrm d \nu}\Big(U(\nu)V(\nu)-a_L\,\widetilde U(\nu)\widetilde V(\nu)\Big)
=
\Big(U'(\nu)V(\nu)+U(\nu)V'(\nu)\Big)
-
a_L\Big(\widetilde U'(\nu)\widetilde V(\nu)+\widetilde U(\nu)\widetilde V'(\nu)\Big) <0,
\]
since $a_L>1$, and
\[
U(\nu),\widetilde U(\nu), V(\nu),\widetilde V(\nu),
U'(\nu), \widetilde U'(\nu), V'(\nu), \widetilde V'(\nu)>0.
\]

We first prove that $U(\nu)<\widetilde U(\nu)$. Since
$0<2^{-\beta}<1$, we have
\[
2^{-\beta}(1-\gamma)-c_{\delta'}\nu
\;<\;
1-\gamma-c_{\delta'}\nu,
\]
and hence $U(\nu)<\widetilde U(\nu)$.

We second prove that $U'(\nu)<\widetilde U'(\nu)$. By direct differentiation, we have
\[
U'(\nu)=\frac{c_\delta}{c}\left(
(1-\gamma-c_{\delta'}\nu)^{-1/2}
+\frac{c_{\delta'}\nu}{2}(1-\gamma-c_{\delta'}\nu)^{-3/2}
\right).
\]
Similarly,
\[
\widetilde U'(\nu)=\frac{c_\delta}{c}\left(
(2^{-\beta}(1-\gamma)-c_{\delta'}\nu)^{-1/2}
+\frac{c_{\delta'}\nu}{2}(2^{-\beta}(1-\gamma)-c_{\delta'}\nu)^{-3/2}
\right).
\]
Since
\(
2^{-\beta}(1-\gamma)-c_{\delta'}\nu<(1-\gamma)-c_{\delta'}\nu,
\)
$U'(\nu)<\widetilde U'(\nu)$.

We third prove that $V(\nu)<\widetilde V(\nu)$ We can rewrite $V(\nu)$ as the finite geometric sum
\[
V(\nu)
=
\sum_{j=0}^{L-2}
\left(\frac{c_\delta \nu}{2c\,(1-\gamma-c_{\delta'}\nu)^{3/2}}\right)^{\!j}.
\]
Similarly, 
\[
\widetilde V(\nu)
=
\sum_{j=0}^{\infty}
\left(
\frac{c_\delta \nu}{2c\Big(2^{-\beta}\Big(1-\gamma-\frac{c_\delta \nu}{c\sqrt{a_0x_0-c_{\delta'}\nu}}\Big)-c_{\delta'}\nu\Big)^{3/2}}
\cdot
e^{-\beta/L}
\right)^{\!j}.
\]

Therefore, it is enough to show that the common ratio of the geometric sum defining
$V(\nu)$ is strictly smaller than the common ratio of the geometric series defining
$\widetilde V(\nu)$, namely,
\[
\frac{c_\delta \nu}{2c\,(1-\gamma-c_{\delta'}\nu)^{3/2}}
\;<\;
\frac{c_\delta \nu}{2c\Big(2^{-\beta}\Big(1-\gamma-\frac{c_\delta \nu}{c\sqrt{a_0x_0-c_{\delta'}\nu}}\Big)-c_{\delta'}\nu\Big)^{3/2}}
\cdot
e^{-\beta/L}.
\]
This is equivalent to
\[
2^{-\beta}\Big(1-\gamma-\frac{c_\delta \nu}{c\sqrt{a_0x_0-c_{\delta'}\nu}}\Big)-c_{\delta'}\nu
\;<\;
e^{-2\beta/(3L)}\,(1-\gamma-c_{\delta'}\nu).
\]
Since $L\ge2$ implies
$\frac{1}{L}<\frac{3}{2}\log 2$, we have
\(
e^{\beta(\frac{3}{2}\log 2-\frac{1}{L})}>1.
\)
Rearranging gives
\(
e^{-2\beta/(3L)} > 2^{-\beta}.
\)
Moreover,
\[
2^{-\beta}\Big(1-\gamma-\frac{c_\delta \nu}{c\sqrt{a_0x_0-c_{\delta'}\nu}}\Big)-c_{\delta'}\nu
\;\le\;
2^{-\beta}(1-\gamma)-c_{\delta'}\nu,
\]
and
\[
2^{-\beta}(1-\gamma)-c_{\delta'}\nu
\;\le\;
e^{-2\beta/(3L)}(1-\gamma)-e^{-2\beta/(3L)}c_{\delta'}\nu
=
e^{-2\beta/(3L)}(1-\gamma-c_{\delta'}\nu).
\]
Combining the last two displays gives
\[
2^{-\beta}\Big(1-\gamma-\frac{c_\delta \nu}{c\sqrt{a_0x_0-c_{\delta'}\nu}}\Big)-c_{\delta'}\nu
\;<\;
e^{-2\beta/(3L)}(1-\gamma-c_{\delta'}\nu),
\]
which proves the desired ratio inequality.

We finally prove that $V'(\nu)<\widetilde V'(\nu)$. 
Differentiating term-by-term, we obtain
\[
V'(\nu)
=
\frac{\mathrm{d}}{\mathrm{d}\nu}\left(\frac{c_\delta \nu}{2c\,(1-\gamma-c_{\delta'}\nu)^{3/2}}\right)
\cdot
\sum_{j=1}^{L-2}
j\left(\frac{c_\delta \nu}{2c\,(1-\gamma-c_{\delta'}\nu)^{3/2}}\right)^{\!j-1},
\]
and
\[
\begin{aligned}
\widetilde V'(\nu)
&=
\frac{\mathrm{d}}{\mathrm{d}\nu}\left(
\frac{c_\delta \nu}{2c\Big(2^{-\beta}\Big(1-\gamma-\frac{c_\delta \nu}{c\sqrt{a_0x_0-c_{\delta'}\nu}}\Big)-c_{\delta'}\nu\Big)^{3/2}}
\cdot
e^{-\beta/L}
\right)
\\
&\quad\cdot
\sum_{j=1}^{\infty}
j\left(
\frac{c_\delta \nu}{2c\Big(2^{-\beta}\Big(1-\gamma-\frac{c_\delta \nu}{c\sqrt{a_0x_0-c_{\delta'}\nu}}\Big)-c_{\delta'}\nu\Big)^{3/2}}
\cdot
e^{-\beta/L}
\right)^{\!j-1}.
\end{aligned}
\]
Since we have already shown that
\[
\frac{c_\delta \nu}{2c\,(1-\gamma-c_{\delta'}\nu)^{3/2}}
\;<\;
\frac{c_\delta \nu}{2c\Big(2^{-\beta}\Big(1-\gamma-\frac{c_\delta \nu}{c\sqrt{a_0x_0-c_{\delta'}\nu}}\Big)-c_{\delta'}\nu\Big)^{3/2}}
\cdot
e^{-\beta/L},
\]
it remains to prove that
\[
\frac{\mathrm{d}}{\mathrm{d}\nu}\left(\frac{c_\delta \nu}{2c\,(1-\gamma-c_{\delta'}\nu)^{3/2}}\right)
\;<\;
\frac{\mathrm{d}}{\mathrm{d}\nu}\left(
\frac{c_\delta \nu}{2c\Big(2^{-\beta}\Big(1-\gamma-\frac{c_\delta \nu}{c\sqrt{a_0x_0-c_{\delta'}\nu}}\Big)-c_{\delta'}\nu\Big)^{3/2}}
\cdot
e^{-\beta/L}
\right).
\]

For the left-hand side, direct differentiation gives
\[
\begin{aligned}
\frac{\mathrm{d}}{\mathrm{d}\nu}\left(\frac{c_\delta \nu}{2c\,(1-\gamma-c_{\delta'}\nu)^{3/2}}\right)
&=
\frac{c_\delta}{2c}\left(
(1-\gamma-c_{\delta'}\nu)^{-3/2}
+\frac{3}{2}c_{\delta'}\nu\,(1-\gamma-c_{\delta'}\nu)^{-5/2}
\right)
\\
&=
\frac{c_\delta}{2c}\cdot
\frac{1-\gamma+\frac12 c_{\delta'}\nu}{(1-\gamma-c_{\delta'}\nu)^{5/2}}.
\end{aligned}
\]

For the right-hand side, direct differentiation yields
\[
\begin{aligned}
&\frac{\mathrm{d}}{\mathrm{d}\nu}\left(
\frac{c_\delta \nu}{2c\Big(2^{-\beta}\Big(1-\gamma-\frac{c_\delta \nu}{c\sqrt{a_0x_0-c_{\delta'}\nu}}\Big)-c_{\delta'}\nu\Big)^{3/2}}
\cdot
e^{-\beta/L}
\right)
\\
&\qquad=
\frac{c_\delta}{2c}\,e^{-\beta/L}\Bigg[
\Big(2^{-\beta}\Big(1-\gamma-\frac{c_\delta \nu}{c\sqrt{a_0x_0-c_{\delta'}\nu}}\Big)-c_{\delta'}\nu\Big)^{-3/2}
\\
&\qquad\qquad
-\frac{3}{2}\nu\,
\frac{\mathrm{d}}{\mathrm{d}\nu}\Big(2^{-\beta}\Big(1-\gamma-\frac{c_\delta \nu}{c\sqrt{a_0x_0-c_{\delta'}\nu}}\Big)-c_{\delta'}\nu\Big)
\cdot
\Big(2^{-\beta}\Big(1-\gamma-\frac{c_\delta \nu}{c\sqrt{a_0x_0-c_{\delta'}\nu}}\Big)-c_{\delta'}\nu\Big)^{-5/2}
\Bigg]
\\
&\qquad=
\frac{c_\delta}{2c}\,e^{-\beta/L}\cdot
\frac{
2^{-\beta}\Big(1-\gamma-\frac{c_\delta \nu}{c\sqrt{a_0x_0-c_{\delta'}\nu}}\Big)-c_{\delta'}\nu
-\frac{3}{2}\nu\,
\frac{\mathrm{d}}{\mathrm{d}\nu}\Big(2^{-\beta}\Big(1-\gamma-\frac{c_\delta \nu}{c\sqrt{a_0x_0-c_{\delta'}\nu}}\Big)-c_{\delta'}\nu\Big)
}{
\Big(2^{-\beta}\Big(1-\gamma-\frac{c_\delta \nu}{c\sqrt{a_0x_0-c_{\delta'}\nu}}\Big)-c_{\delta'}\nu\Big)^{5/2}
}.
\end{aligned}
\]

Next we compute the inner derivative explicitly:
\[
\begin{aligned}
\frac{\mathrm{d}}{\mathrm{d}\nu}\Big(2^{-\beta}\Big(1-\gamma-\frac{c_\delta \nu}{c\sqrt{a_0x_0-c_{\delta'}\nu}}\Big)-c_{\delta'}\nu\Big)
&=
2^{-\beta}\left(
-\frac{c_\delta}{c\sqrt{a_0x_0-c_{\delta'}\nu}}
-\frac{c_\delta \nu}{c}\cdot\frac{c_{\delta'}}{2}\,(a_0x_0-c_{\delta'}\nu)^{-3/2}
\right)
-c_{\delta'}.
\end{aligned}
\]
Substituting this expression back and simplifying, we obtain
\[
\begin{aligned}
&2^{-\beta}\Big(1-\gamma-\frac{c_\delta \nu}{c\sqrt{a_0x_0-c_{\delta'}\nu}}\Big)-c_{\delta'}\nu
-\frac{3}{2}\nu\,
\frac{\mathrm{d}}{\mathrm{d}\nu}\Big(2^{-\beta}\Big(1-\gamma-\frac{c_\delta \nu}{c\sqrt{a_0x_0-c_{\delta'}\nu}}\Big)-c_{\delta'}\nu\Big)
\\
&\qquad=
2^{-\beta}(1-\gamma)
+2^{-\beta-1}\frac{c_\delta \nu}{c\sqrt{a_0x_0-c_{\delta'}\nu}}
+\frac12\,c_{\delta'}\nu
+3\cdot2^{-\beta-2}\frac{c_\delta c_{\delta'}\nu^2}{c\,(a_0x_0-c_{\delta'}\nu)^{3/2}}
\\
&\qquad\ge
2^{-\beta}(1-\gamma)+\frac12\,c_{\delta'}\nu.
\end{aligned}
\]
Therefore,
\[
\begin{aligned}
&\frac{\mathrm{d}}{\mathrm{d}\nu}\left(
\frac{c_\delta \nu}{2c\Big(2^{-\beta}\Big(1-\gamma-\frac{c_\delta \nu}{c\sqrt{a_0x_0-c_{\delta'}\nu}}\Big)-c_{\delta'}\nu\Big)^{3/2}}
\cdot
e^{-\beta/L}
\right)
\\
&\qquad\ge
\frac{c_\delta}{2c}\,e^{-\beta/L}\cdot
\frac{2^{-\beta}(1-\gamma)+\frac12 c_{\delta'}\nu}{
\Big(2^{-\beta}\Big(1-\gamma-\frac{c_\delta \nu}{c\sqrt{a_0x_0-c_{\delta'}\nu}}\Big)-c_{\delta'}\nu\Big)^{5/2}
}.
\end{aligned}
\]

Moreover, we have
\[
2^{-\beta}\Big(1-\gamma-\frac{c_\delta \nu}{c\sqrt{a_0x_0-c_{\delta'}\nu}}\Big)-c_{\delta'}\nu
\;\le\;
2^{-\beta}(1-\gamma)-c_{\delta'}\nu
\;\le\;
2^{-\beta}(1-\gamma-c_{\delta'}\nu),
\]
Hence, plugging this into the previous display yields
\[
\begin{aligned}
&\frac{\mathrm{d}}{\mathrm{d}\nu}\left(
\frac{c_\delta \nu}{2c\Big(2^{-\beta}\Big(1-\gamma-\frac{c_\delta \nu}{c\sqrt{a_0x_0-c_{\delta'}\nu}}\Big)-c_{\delta'}\nu\Big)^{3/2}}
\cdot
e^{-\beta/L}
\right)
\\
&\qquad\ge
\frac{c_\delta}{2c}\cdot
\frac{
e^{-\beta/L}\Big(2^{3\beta/2}(1-\gamma)+2^{5\beta/2}\cdot \frac12 c_{\delta'}\nu\Big)
}{
(1-\gamma-c_{\delta'}\nu)^{5/2}
}.
\end{aligned}
\]

Finally, since $L\ge2$ implies $\frac{1}{L}<\frac{3}{2}\log 2$, we have
\(
e^{-\beta/L}2^{3\beta/2}
=
e^{\beta(\frac{3}{2}\log 2-\frac{1}{L})}
>1,
\)
and similarly (because $\frac{1}{L}<\frac{5}{2}\log 2$) we have
\(
e^{-\beta/L}2^{5\beta/2}
=
e^{\beta(\frac{5}{2}\log 2-\frac{1}{L})}
>1.
\)
Therefore,
\[
e^{-\beta/L}\Big(2^{3\beta/2}(1-\gamma)+2^{5\beta/2}\cdot \frac12 c_{\delta'}\nu\Big)
>
1-\gamma+\frac12 c_{\delta'}\nu,
\]
which implies
\[
\begin{aligned}
&\frac{\mathrm{d}}{\mathrm{d}\nu}\left(
\frac{c_\delta \nu}{2c\Big(2^{-\beta}\Big(1-\gamma-\frac{c_\delta \nu}{c\sqrt{a_0x_0-c_{\delta'}\nu}}\Big)-c_{\delta'}\nu\Big)^{3/2}}
\cdot
e^{-\beta/L}
\right)
\\
&\qquad>
\frac{c_\delta}{2c}\cdot
\frac{1-\gamma+\frac12 c_{\delta'}\nu}{(1-\gamma-c_{\delta'}\nu)^{5/2}}
=
\frac{\mathrm{d}}{\mathrm{d}\nu}\left(
\frac{c_\delta \nu}{2c\,(1-\gamma-c_{\delta'}\nu)^{3/2}}
\right)
\end{aligned}
\]
as desired.
\end{proof}

\begin{lemma}
\label{lem:N-threshold-expansion-near-zero}
Under the notation of Proposition~\ref{prop:N-feasible-region-shrinks-in-k} and
Lemma~\ref{lem:N-feasible-interval-length-monotone}, for fixed $(\beta',\beta)$, the threshold
$x(\nu)$ satisfies $x(0)=0$ and
\[
x'(\nu)
=
\frac{c_{\delta'}}{a_0}
+\frac{2}{a_0}\Big(\frac{c_\delta}{c(1-\gamma)}\Big)^{\!2}\nu
+O\big(\nu^{5/3}\big)
\quad
\text{as }\nu\to 0,
\]
where \(a_0 = L/\sum_{i=1}^L i^{-\beta'}\).
\end{lemma}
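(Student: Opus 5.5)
The plan is to avoid working with $\mathcal E$ directly in the variable $x$. Instead I would parametrize the threshold through the inner quantity that drives the blow-up. Write $w:=\frac{c_\delta\nu}{c\sqrt{a_0x-c_{\delta'}\nu}}$ and $D:=2^{-\beta}(1-\gamma-w)-c_{\delta'}\nu$, and let $r:=\frac{c_\delta\nu}{2c\,D^{3/2}}$ be the ratio appearing in $\widetilde V$. Inverting the definition of $w$ gives the \emph{exact} identity
\[
x=\frac{1}{a_0}\Big(c_{\delta'}\nu+\Big(\frac{c_\delta\nu}{c\,w}\Big)^{2}\Big),
\]
so the whole question reduces to describing the threshold value $w(\nu)$ near $\nu=0$. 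In particular, $x(0)=0$ would follow at once from this formula, provided $w(\nu)$ stays bounded away from $0$.

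\textbf{Step 1 (locating the threshold).} Recall from the proof of Lemma~\ref{lem:N-feasible-interval-length-monotone} that $\Phi(\beta',\beta,\nu,x)=0$ defines $x(\nu)$. In that equation, the terms $U\cdot V$ and $a_L\widetilde U\cdot\widetilde V$ carry prefactors of order $\nu$, except through $\widetilde V=(1-re^{-\beta/L})^{-1}$ and through the last bracket term $r^{L-1}L^{-\beta}w$. Hence the equation can hold, with right-hand side the fixed constant $-\tfrac12(a_L-1)(1-\gamma)$, only if two things happen: $w=\Theta(1)$, and $1-re^{-\beta/L}=\Theta(\nu)$. The latter forces $r=\Theta(1)$, i.e.\ $D^{3/2}=\Theta(\nu)$, so $D=\Theta(\nu^{2/3})$. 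I would therefore introduce the rescaled unknown $s:=D/\nu^{2/3}$ and set $\varepsilon:=\nu^{1/3}$. Expressed in $(s,\varepsilon)$, one has $r=\frac{c_\delta}{2c\,s^{3/2}}$, $w=1-\gamma-2^{\beta}(\varepsilon^2 s+c_{\delta'}\varepsilon^3)$, and $U,\widetilde U=O(\varepsilon^3)$. Multiplying $\Phi=0$ by $(1-re^{-\beta/L})$ then yields an equation $\Psi(s,\varepsilon)=0$ that is smooth (indeed analytic) in $(s,\varepsilon)$ near $\varepsilon=0$.

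\textbf{Step 2 (implicit function theorem).} At $\varepsilon=0$, the equation $\Psi(s,0)=0$ reduces to a scalar equation in $s$, in which only the $\widetilde V$ blow-up and the $r^{L-1}L^{-\beta}w$ term survive. I would verify that it has a simple root $s_0>0$ and that $\partial_s\Psi(s_0,0)\neq0$, using monotonicity in $s$ of each term. Given this, the implicit function theorem gives $s(\varepsilon)=s_0+O(\varepsilon)$, analytic in $\varepsilon$. Consequently $1-\gamma-w(\nu)=O(\nu^{2/3})$ and $\frac{d}{d\nu}w(\nu)=O(\nu^{-1/3})$. I expect this step to be the main obstacle, because the leading-order equation mixes the geometric blow-up with the $r^{L-1}$ term. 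If a clean simple-root argument is awkward, my fallback is a weaker statement: only the order bounds $1-\gamma-w=O(\nu^{2/3})$ and $w'=O(\nu^{-1/3})$ are needed. I would obtain these from monotonicity of $\Phi$ in $x$ (already used in Lemma~\ref{lem:N-feasible-interval-length-monotone}) together with the implicit-function formula for $x'(\nu)$, bounding numerator and denominator separately.

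\textbf{Step 3 (differentiating the exact identity).} Differentiating $a_0x=c_{\delta'}\nu+(c_\delta/c)^2\nu^2w^{-2}$ gives
\[
a_0x'(\nu)=c_{\delta'}+2\Big(\frac{c_\delta}{c}\Big)^{2}\frac{\nu}{w^2}-2\Big(\frac{c_\delta}{c}\Big)^{2}\frac{\nu^2w'}{w^3}.
\]
Using $w^{-2}=(1-\gamma)^{-2}\big(1+O(\nu^{2/3})\big)$, the middle term equals $2\big(\frac{c_\delta}{c(1-\gamma)}\big)^2\nu+O(\nu^{5/3})$. Using $w'=O(\nu^{-1/3})$, the last term is $O(\nu^{5/3})$. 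Dividing by $a_0$ then gives the claimed expansion of $x'(\nu)$, and $x(0)=0$ follows from the identity since $w(0)=1-\gamma>0$.
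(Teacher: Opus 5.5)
Your plan of inverting to the exact identity $a_0x=c_{\delta'}\nu+\big(c_\delta\nu/(cw)\big)^2$ and differentiating it is sound. It also makes explicit a derivative control that the paper leaves implicit. The paper proves $x(\nu)=\frac{c_{\delta'}}{a_0}\nu+\frac{1}{a_0}\big(\frac{c_\delta}{c(1-\gamma)}\big)^2\nu^2+O(\nu^{8/3})$ by dominant balance and then declares that this suffices for $x'(\nu)$. Your Step~3 shows that what is really needed is $w'=O(\nu^{-1/3})$; the term $\nu^2w^{-3}w'$ is genuinely of order $\nu^{5/3}$.

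The gap is in Steps~1--2. The necessity claim in Step~1 is false. The constant $-\frac12(a_L-1)(1-\gamma)$ can be matched by $a_Lr^{L-1}L^{-\beta}w$ alone with $\widetilde V$ bounded, which is why the paper keeps both a $T_2$-dominated and a $T_3$-dominated scenario. Your own reduced equation shows this: $\Psi(s,0)=(1-\gamma)\big(1-r(s)e^{-\beta/L}\big)\big(\frac12(a_L-1)-a_LL^{-\beta}r(s)^{L-1}\big)$. It vanishes both at $r=e^{\beta/L}$ and at $r=r_0:=\big(\frac{(a_L-1)L^{\beta}}{2a_L}\big)^{1/(L-1)}$, so no monotonicity argument can give a unique simple root.

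The admissible zero is selected by the well-definedness constraint $r<e^{\beta/L}$:
\begin{itemize}
\item If $r_0<e^{\beta/L}$, it is $r_0$, and then $1-re^{-\beta/L}=\Theta(1)$, contrary to Step~1.
\item If $r_0>e^{\beta/L}$, it is $e^{\beta/L}$.
\item If $r_0=e^{\beta/L}$, the two zeros coalesce and $\partial_s\Psi(s_0,0)=0$. The implicit function theorem fails, and in fact $s-s_0\asymp\varepsilon^{3/2}$, so $s(\varepsilon)$ is not analytic in $\varepsilon$.
\end{itemize}

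Your fallback can close the gap, but not through the quotient formula for $x'(\nu)$. Since $w'=\partial_\nu w+\partial_xw\,x'$ with $|\partial_xw|\asymp\nu^{-2}$, the bound $w'=O(\nu^{-1/3})$ is equivalent to knowing $x'$ up to $O(\nu^{5/3})$, which is the claim itself. Keep $s$ as the unknown instead.

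First confine $s$ to a compact interval. Well-definedness gives $s>s_*$, where $r(s_*)=e^{\beta/L}$. There is also a fixed $S$ with $s\le S$. Indeed, for $s\ge S$ one has $\widetilde V\le 2$ and $a_LL^{-\beta}r^{L-1}w\le a_LL^{-\beta}r(S)^{L-1}(1-\gamma)$, so $|\mathcal E|<\frac12(a_L-1)(1-\gamma)$ once $\nu$ is small. This already gives $1-\gamma-w=2^{\beta}(s\nu^{2/3}+c_{\delta'}\nu)=O(\nu^{2/3})$, and hence the paper's expansion of $x(\nu)$.

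For the derivative, work at fixed $\varepsilon$:
\begin{itemize}
\item Both contributions to $\partial_s\Phi$, from $\widetilde U\widetilde V$ and from $r^{L-1}w$, are positive. This gives $\partial_s\Phi\ge c_0+c_1\varepsilon^3\widetilde V^2$ on $[s_*,S]$.
\item Since $\widetilde V$ depends on $s$ alone, $|\partial_\varepsilon\Phi|\le C(\varepsilon+\varepsilon^2\widetilde V)$.
\end{itemize}
By AM--GM, $|ds/d\varepsilon|=O(\varepsilon^{1/2})$ uniformly, including in the degenerate case. Therefore $\frac{d}{d\nu}(1-\gamma-w)=O(\nu^{-1/3})$, and your Step~3 goes through.
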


\begin{proof}
By the notation and proof of Lemma~\ref{lem:N-feasible-interval-length-monotone}, for fixed $(\beta',\beta)$, it suffices to prove that, as $\nu\to 0$,
\[
x(\nu)
=
\frac{c_{\delta'}}{a_0}\nu
+\frac{1}{a_0}\Big(\frac{c_\delta}{c(1-\gamma)}\Big)^{\!2}\nu^2
+O(\nu^{8/3}).
\]

By definition of the threshold $x(\nu)$,
it satisfies the boundary condition
\(
\mathcal{E}(\beta',\beta,\nu,x(\nu))
=
-\frac{1}{2}\,(a_L-1)(1-\gamma).
\)
Note that the right-hand side is $\Theta(1)$.
Consequently, as $\nu\to 0$, the left-hand side must also be $\Theta(1)$.
Equivalently, if we expand $\mathcal{E}(\beta',\beta,\nu,x(\nu))$ in a Puiseux series, then the smallest exponent of $\nu$
appearing in the expansion of $\mathcal{E}(\beta',\beta,\nu,x(\nu))$ must be $0$.

We analyze $\mathcal{E}(\beta',\beta,\nu,x(\nu))$ term-by-term. Consider first
\[
T_1(\nu)
:=
\frac{c_\delta \nu}{c\sqrt{\,1-\gamma-c_{\delta'}\nu\,}}\cdot
\frac{
1-\left(\frac{c_\delta \nu}{2c\,(1-\gamma-c_{\delta'}\nu)^{3/2}}\right)^{\!L-1}
}{
1-\frac{c_\delta \nu}{2c\,(1-\gamma-c_{\delta'}\nu)^{3/2}}
}.
\]
As $\nu\to 0$,
\[
\frac{c_\delta \nu}{c\sqrt{1-\gamma-c_{\delta'}\nu}}
=
\frac{c_\delta}{c\sqrt{1-\gamma}}\,\nu
+O(\nu^2).
\]
Next, define
\[
q(\nu):=\frac{c_\delta \nu}{2c\,(1-\gamma-c_{\delta'}\nu)^{3/2}}.
\]
Then we have
\[
q(\nu)=\frac{c_\delta}{2c(1-\gamma)^{3/2}}\,\nu+O(\nu^2),
\]
Moreover,
\[
\frac{1-q(\nu)^{L-1}}{1-q(\nu)}
=
\sum_{j=0}^{L-2} q(\nu)^j
=
1+q(\nu)+O\big(q(\nu)^2\big)
=
1+O(\nu),
\]Therefore,
\[
T_1(\nu)
=
\frac{c_\delta}{c\sqrt{1-\gamma}}\,\nu+O(\nu^2).
\]
In particular, the smallest power of $\nu$ appearing in the Puiseux expansion of $T_1(\nu)$
is $\nu$. Consequently, $T_1(\nu)$ cannot contribute a
$\Theta(1)$ term to $\mathcal{E}(\beta',\beta,\nu,x(\nu))$ as $\nu\to 0$.

Next we analyze
\[
T_2(\nu)
:=
\left(
\frac{c_\delta \nu}{2c\Big(2^{-\beta}\Big(1-\gamma-\frac{c_\delta \nu}{c\sqrt{a_0x(\nu)-c_{\delta'}\nu}}\Big)-c_{\delta'}\nu\Big)^{3/2}}
\right)^{\!L-1}
L^{-\beta}\cdot
\frac{c_\delta \nu}{c\sqrt{a_0x(\nu)-c_{\delta'}\nu}}.
\]
Our goal is to determine whether there exists a choice of $x(\nu)$ such that $T_2(\nu)$
can contribute a $\Theta(1)$ term to $\mathcal{E}(\beta',\beta,\nu,x(\nu))$ as $\nu\to 0$.
To this end, consider the Puiseux expansion of
\[
r(\nu):=\frac{c_\delta \nu}{c\sqrt{a_0x(\nu)-c_{\delta'}\nu}}.
\]
When $r(\nu)$ has a negative leading exponent, $r(\nu)\to +\infty$ as $\nu\to 0$.
Then the inner radicand
\(
2^{-\beta}\Big(1-\gamma-r(\nu)\Big)-c_{\delta'}\nu
\)
tends to $-\infty$, and hence becomes negative for all sufficiently small $\nu$.
This violates our standing well-definedness requirement that every denominator appearing in $\mathcal{E}(\beta',\beta,\nu,x(\nu))$ remain strictly
positive.

When $r(\nu)$ has a positive leading exponent, $r(\nu)=o(1)$ as $\nu\to 0$. In order for $T_2(\nu)$ to be $\Theta(1)$, the factor
\[
\left(
\frac{c_\delta \nu}{2c\Big(2^{-\beta}\big(1-\gamma-r(\nu)\big)-c_{\delta'}\nu\Big)^{3/2}}
\right)^{\!L-1}
\]
must contribute a negative power of $\nu$, which forces
\[
\frac{c_\delta \nu}{2c\Big(2^{-\beta}\big(1-\gamma-r(\nu)\big)-c_{\delta'}\nu\Big)^{3/2}}
\to +\infty
\qquad\text{as }\nu\to 0.
\]
However, this implies that in the term
\[
T_3(\nu):=
\frac{c_\delta \nu}{c\sqrt{2^{-\beta}(1-\gamma)-c_{\delta'}\nu}}
\cdot
\frac{1}{
1-
\frac{c_\delta \nu}{2c\Big(2^{-\beta}\big(1-\gamma-r(\nu)\big)-c_{\delta'}\nu\Big)^{3/2}}
\cdot
e^{-\beta/L}
},
\]
the denominator
becomes negative for all sufficiently small $\nu$, again contradicting the requirement
that all denominators remain strictly positive.

Consequently, the only remaining possibility consistent with well-definedness is that
$r(\nu)$ has leading exponent $0$.
Therefore,
\[
\frac{c_\delta \nu}{2c\Big(2^{-\beta}\big(1-\gamma-r(\nu)\big)-c_{\delta'}\nu\Big)^{3/2}}
\]
must also have leading exponent $0$.
We now show that this is equivalent to the existence of a constant $d_2>0$ such that
\(
r(\nu)=1-\gamma-d_2\,\nu^{2/3}+o(\nu^{2/3}).
\)
Since the above quantity is $\Theta(1)$, its denominator must satisfy
\(2^{-\beta}\big(1-\gamma-r(\nu)\big)-c_{\delta'}\nu=\Theta(\nu^{2/3}).
\)
Since $\nu=o(\nu^{2/3})$ as $\nu\to 0$, we have \(1-\gamma-r(\nu)=\Theta(\nu^{2/3})\).
Thus there exists a constant $d>0$ such that
\(
1-\gamma-r(\nu)=d_2\,\nu^{2/3}+o(\nu^{2/3}),
\)
where the sign $d>0$ is required to ensure the radicand
\(2^{-\beta}(1-\gamma-r(\nu))-c_{\delta'}\nu\) (and hence the denominator) remains positive. Note that the value of $d$ is not universal: it is determined by the matching
condition coming from the $\Theta(1)$ order balance (in particular, it depends on
$-\frac{1}{2}\,(a_L-1)(1-\gamma)$).
Equivalently,
\(
r(\nu)=1-\gamma-d_2\,\nu^{2/3}+o(\nu^{2/3}),
\)
as claimed.

Next, from
\[
\frac{c_\delta \nu}{c\sqrt{a_0x(\nu)-c_{\delta'}\nu}}
=(1-\gamma)-d_2\,\nu^{2/3}+o(\nu^{2/3}),
\]
rearranging gives
\[
\sqrt{a_0x(\nu)-c_{\delta'}\nu}
=
\frac{c_\delta \nu}{c\big((1-\gamma)-d_2\,\nu^{2/3}+o(\nu^{2/3})\big)}.
\]
Using the expansion
\[
\frac{1}{(1-\gamma)-d_2\,\nu^{2/3}+o(\nu^{2/3})}
=
\frac{1}{1-\gamma}\left(1+\frac{d_2}{1-\gamma}\nu^{2/3}+o(\nu^{2/3})\right),
\]
we obtain
\[
\sqrt{a_0x(\nu)-c_{\delta'}\nu}
=
\frac{c_\delta}{c(1-\gamma)}\nu
+\frac{c_\delta d_2}{c(1-\gamma)^2}\nu^{5/3}
+o(\nu^{5/3}).
\]
Squaring yields
\[
a_0x(\nu)-c_{\delta'}\nu
=
\Big(\frac{c_\delta}{c(1-\gamma)}\Big)^{\!2}\nu^2
+O(\nu^{8/3}),
\]
Therefore,
\[
x(\nu)
=
\frac{c_{\delta'}}{a_0}\nu
+\frac{1}{a_0}\Big(\frac{c_\delta}{c(1-\gamma)}\Big)^{\!2}\nu^2
+O(\nu^{8/3}),
\]
as claimed.

Finally, we analyze the choice of $x(\nu)$ for which
\[
T_3(\nu):=
\frac{c_\delta \nu}{c\sqrt{2^{-\beta}(1-\gamma)-c_{\delta'}\nu}}
\cdot
\frac{1}{
1-
\frac{c_\delta \nu}{2c\Big(2^{-\beta}\big(1-\gamma-r(\nu)\big)-c_{\delta'}\nu\Big)^{3/2}}
\cdot
e^{-\beta/L}
}
\]
contributes a $\Theta(1)$ term. First, as $\nu\to 0$ we have
\[
\frac{c_\delta \nu}{c\sqrt{2^{-\beta}(1-\gamma)-c_{\delta'}\nu}}
=
\frac{c_\delta}{c\sqrt{2^{-\beta}(1-\gamma)}}\,\nu+O(\nu^2).
\]
Hence, in order
for $T_3(\nu)$ to be $\Theta(1)$,
\[
1-
\frac{c_\delta \nu}{2c\Big(2^{-\beta}\big(1-\gamma-r(\nu)\big)-c_{\delta'}\nu\Big)^{3/2}}
\cdot
e^{-\beta/L}
=\Theta(\nu)
\]
Equivalently, there exists a constant $d_3>0$ such that
\[
\frac{c_\delta \nu}{2c\Big(2^{-\beta}\big(1-\gamma-r(\nu)\big)-c_{\delta'}\nu\Big)^{3/2}}
=
e^{\beta/L}\big(1-d_3 \nu+o(\nu)\big).
\]
Invert the above display to obtain
\[
\Big(2^{-\beta}\big(1-\gamma-r(\nu)\big)-c_{\delta'}\nu\Big)^{3/2}
=
\frac{c_\delta}{2c}\,e^{-\beta/L}\,\nu\cdot
\frac{1}{1-d_3'\nu+o(\nu)}
=
\frac{c_\delta}{2c}\,e^{-\beta/L}\,\nu\big(1+d_3'\nu+o(\nu)\big),
\]
i.e.,
\[
2^{-\beta}\big(1-\gamma-r(\nu)\big)-c_{\delta'}\nu
=
\left(\frac{c_\delta}{2c}\right)^{\!2/3}e^{-2\beta/(3L)}\,\nu^{2/3}
\Big(1+d_3'\nu+o(\nu)\Big)^{2/3}.
\]
Using the binomial expansion $(1+t)^{2/3}=1+\frac{2}{3}t+o(t)$ as $t\to 0$, we obtain
\[
2^{-\beta}\big(1-\gamma-r(\nu)\big)-c_{\delta'}\nu
=
\left(\frac{c_\delta}{2c}\right)^{\!2/3}e^{-2\beta/(3L)}\,\nu^{2/3}
+\frac{2}{3}\left(\frac{c_\delta}{2c}\right)^{\!2/3}e^{-2\beta/(3L)}\,d_3'\,\nu^{5/3}
+o(\nu^{5/3}).
\]
Rearranging yields
\[
r(\nu)
=
1-\gamma
-2^\beta\left(\frac{c_\delta}{2c}\right)^{\!2/3}e^{-2\beta/(3L)}\,\nu^{2/3}
+o(\nu^{2/3}).
\]

Substituting this asymptotic form of $r(\nu)$ back into its definition
$r(\nu)=\frac{c_\delta \nu}{c\sqrt{a_0x(\nu)-c_{\delta'}\nu}}$ and repeating the same algebra
as above yields
\[
x(\nu)
=
\frac{c_{\delta'}}{a_0}\nu
+\frac{1}{a_0}\Big(\frac{c_\delta}{c(1-\gamma)}\Big)^{\!2}\nu^2
+O(\nu^{8/3}).
\]
Moreover, it is straightforward to verify that if we substitute the $r(\nu)$ obtained
from the $\Theta(1)$-balancing of $T_2(\nu)$ into $T_3(\nu)$, or conversely substitute the
$r(\nu)$ obtained from the $\Theta(1)$-balancing of $T_3(\nu)$ into $T_2(\nu)$, then in both
cases the resulting Puiseux expansion does not introduce any negative leading
power of $\nu$. In particular, neither substitution violates the well-definedness
regime.

Consequently, regardless of whether the $\Theta(1)$ constant term in
$\mathcal{E}(\beta',\beta,\nu,x(\nu))$ is predominantly contributed by $T_2(\nu)$ or by
$T_3(\nu)$, and regardless of the specific constant value on the right-hand side
\(
-\frac{1}{2}\,(a_L-1)(1-\gamma),
\)
both scenarios yield the same expansion for the threshold $x(\nu)$ up to order $\nu^2$:
\[
x(\nu)
=
\frac{c_{\delta'}}{a_0}\nu
+\frac{1}{a_0}\Big(\frac{c_\delta}{c(1-\gamma)}\Big)^{\!2}\nu^2
+O(\nu^{8/3}).
\]
The only difference lies in higher-order coefficients (beyond the $\nu^2$ term), which
does not affect our conclusion. Which of $T_2(\nu)$ or $T_3(\nu)$ provides the dominant
$\Theta(1)$ contribution depends on the finer $\Theta(1)$ matching (i.e., the
constant-level balance) in the boundary condition
$\mathcal{E}(\beta',\beta,\nu,x(\nu))=-\frac{1}{2}(a_L-1)(1-\gamma)$, and hence on the
specific interplay among $(\beta',\beta,L)$ and the constants
$(c_\delta,c_{\delta'},c,\gamma)$ through the corresponding $\Theta(1)$ coefficients.
\end{proof}

\begin{lemma}
\label{lem:N-critical-kc-blowup}
Under the notation of Proposition~\ref{prop:N-feasible-region-shrinks-in-k} and
Lemma~\ref{lem:N-feasible-interval-length-monotone}, for fixed $(\beta',\beta)$, there exists a unique critical value $\nu_c>0$ and a constant $C(\nu_c)>0$ such that
\[
x'(\nu)
\;=\;
\frac{C(\nu_c)}{(\nu_c-\nu)^3}\,\bigl(1+O(\nu_c-\nu)\bigr)
\quad
\text{as }\nu\uparrow \nu_c.
\]
\end{lemma}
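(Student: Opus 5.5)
We work with the boundary function \(\Phi(\nu,x):=\mathcal E(\beta',\beta,\nu,x)+\tfrac12(a_L-1)(1-\gamma)\) from the proof of Lemma~\ref{lem:N-feasible-interval-length-monotone}. The threshold curve \(x(\nu)\) is defined by \(\Phi(\nu,x(\nu))=0\), and the implicit function theorem gives
\[
x'(\nu)=-\partial_\nu\Phi/\partial_x\Phi .
\]
The blow-up of \(x'\) must therefore come from \(\partial_x\Phi\to0\) while \(\partial_\nu\Phi\) stays bounded away from zero. The critical value \(\nu_c\) should be the point where the threshold curve runs into the upper end of the admissible range, i.e.\ \(x(\nu)\uparrow 1-\gamma\). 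There the interval \(\mathcal I_{\mathcal N}\) collapses.

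\textbf{Existence and uniqueness of \(\nu_c\).} Lemma~\ref{lem:N-feasible-interval-length-monotone} shows \(x(\nu)\) is strictly increasing. Lemma~\ref{lem:N-threshold-expansion-near-zero} gives \(x(0)=0\). As \(\nu\) grows, the lower bound \(U V\) is at most of order \(\nu\) in the relevant range, while \(a_L\widetilde U\widetilde V\) grows and even diverges when the common ratio of \(\widetilde V\) reaches \(1\). Hence \(\Phi(\nu,1-\gamma)\) is strictly decreasing in \(\nu\) (same product-rule comparison as in that lemma) and eventually becomes negative. So there is a unique \(\nu_c\) with \(\Phi(\nu_c,1-\gamma)=0\), and \(x(\nu)\uparrow 1-\gamma\) as \(\nu\uparrow\nu_c\). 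Uniqueness follows from strict monotonicity in \(\nu\).

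\textbf{Local structure near \((\nu_c,1-\gamma)\).} The key step is to expand \(\Phi\) around this point. Write \(s:=\nu_c-\nu\) and \(u:=1-\gamma-x\).

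The constraint is essentially a boundary/degenerate point in \(x\). So instead of a regular IFT I would expand \(\Phi(\nu_c-s,1-\gamma-u)\) as a two-variable series. In \(s\) it is linear with coefficient \(-\partial_\nu\Phi>0\). In \(u\), I expect the first nonvanishing term to have order \(u^{1/3}\) or \(u^{3/2}\)-type fractional behaviour, coming from the nested radicals \((a_0x-c_{\delta'}\nu)^{-1/2}\) inside \((\cdot)^{-3/2}\).

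The target exponent \(3\) in \((\nu_c-\nu)^{-3}\) dictates the balance. If
\[
x'(\nu)\sim C s^{-3},
\]
then \(x\) itself behaves like \(s^{-2}\), which is incompatible with \(x\to1-\gamma\). So the correct reading is that the boundary is approached through a variable in which the relation is \(s\sim w^{1/4}\)-type, i.e.\ an inverse relation with \(w\propto s^{-2}\) emerging from a denominator. Concretely, I would identify which denominator of \(\mathcal E\) (most likely \(1-\widetilde q\,e^{-\beta/L}\), whose vanishing drives \(\mathcal E\to-\infty\) in the proof of Lemma~\ref{lem:N-feasible-interval-length-monotone}) vanishes at \(\nu_c\). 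I would then compute the Puiseux exponents there and match them to obtain
\[
\partial_x\Phi\sim \kappa\, s^{3}.
\]
Then \(x'(\nu)=C(\nu_c)s^{-3}(1+O(s))\), with \(C(\nu_c)=-\partial_\nu\Phi(\nu_c,\cdot)/\kappa\).

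\textbf{Main obstacle.} Pinning down this exact local exponent is the hard part. It requires a careful Puiseux expansion of the nested-radical expression for \(\mathcal E\) near the degenerate point, analogous to the \(\nu^{2/3}\) balancing in Lemma~\ref{lem:N-threshold-expansion-near-zero}. It also requires checking that the \(O(s)\) correction is uniform. I would first test the leading balance numerically along the lines of Figure~\ref{fig:e2h-improvement-sim} to confirm the exponent before carrying out the expansion rigorously.
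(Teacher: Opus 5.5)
There is a genuine gap. You place \(\nu_c\) at the point where the threshold reaches \(x(\nu)=1-\gamma\), but nothing singular happens there. The function \(\mathcal E(\beta',\beta,\nu,x)\) depends on \(x\) only through \(r(\nu,x)=c_\delta\nu/(c\sqrt{a_0x-c_{\delta'}\nu})\). This is smooth in \(x\) with \(\partial_x r\neq 0\) for \(\nu>0\), and \(\mathcal E\) is smooth and strictly decreasing in \(r\) wherever it is finite. On the threshold curve \(\mathcal E\) equals the finite constant \(-\frac12(a_L-1)(1-\gamma)\), so the denominator \(1-\widetilde q\,e^{-\beta/L}\) you single out cannot vanish there either. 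Hence at \((\nu,1-\gamma)\) both \(\partial_\nu\Phi\) and \(\partial_x\Phi\) are finite and nonzero, and the implicit function theorem gives a finite \(x'(\nu)\). There are no \(u^{1/3}\)- or \(u^{3/2}\)-type terms, and no estimate \(\partial_x\Phi\sim\kappa s^3\) can hold at that point. The crossing of \(1-\gamma\) only marks where \(\mathcal I_{\mathcal N}\) becomes empty; it is not the critical value of the lemma. Your own consistency check pointed to the right picture, which you then discarded: \(x'\sim Cs^{-3}\) means \(x\sim C's^{-2}\), i.e.\ \(x(\nu)\to\infty\).

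That divergence is what the paper proves, treating \(\mathcal E\) as a function of its scalar argument beyond the admissible range. Define \(\mathcal E_\infty(\nu):=\lim_{x_0\to\infty}\mathcal E(\beta',\beta,\nu,x_0)\), i.e.\ \(\mathcal E\) with \(r=0\).
\begin{enumerate}
\item The comparison argument of Lemma~\ref{lem:N-feasible-interval-length-monotone} shows \(\mathcal E_\infty\) is strictly decreasing, with \(\mathcal E_\infty(0)=0\) and \(\mathcal E_\infty\to-\infty\). So there is a unique \(\nu_c\) with \(\mathcal E_\infty(\nu_c)=-\frac12(a_L-1)(1-\gamma)\), and all denominators are still positive there.
\item Since \(\mathcal E(\nu,x)<\mathcal E_\infty(\nu)\) for finite \(x\), a subsequence argument forces \(x(\nu)\to\infty\) as \(\nu\uparrow\nu_c\).
\item A first-order expansion in \(r\) gives \(\mathcal E_\infty(\nu)-\mathcal E(\nu,x)=a_L(C_1+C_2)\,r+O(r^2)\) with \(C_1,C_2>0\). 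Hence on the threshold curve \(r(\nu,x(\nu))=\lambda(\nu_c-\nu)(1+O(\nu_c-\nu))\), where \(\lambda=-\mathcal E_\infty'(\nu_c)/(a_L(C_1+C_2))>0\).
\item Inverting \(a_0x-c_{\delta'}\nu=(c_\delta\nu/(c\,r))^2\) yields \(x(\nu)\sim C'(\nu_c-\nu)^{-2}\), and hence the \((\nu_c-\nu)^{-3}\) law.
\end{enumerate}

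Your implicit-function-theorem framing does survive in this corrected picture, and it is a clean way to get the derivative statement directly. Write \(\partial_x\Phi=\partial_r\mathcal E\cdot\partial_x r\), where \(\partial_x r=-\frac{a_0c^2}{2c_\delta^2\nu^2}\,r^3\) and \(\partial_r\mathcal E\to-a_L(C_1+C_2)\). Then \(\partial_x\Phi\sim\kappa s^3\) with \(\kappa>0\) does hold. The cause is \(r\sim\lambda s\) as \(x\to\infty\), not any degeneracy at \(x=1-\gamma\), so no Puiseux analysis of the nested radicals is needed.
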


\begin{proof}
By the notation and proof of Lemma~\ref{lem:N-feasible-interval-length-monotone}, for fixed $(\beta',\beta)$, define
\(
\mathcal{E}_\infty(\nu)
\;:=\;
\lim_{x_0\to\infty}\mathcal{E}(\beta',\beta,\nu,x_0).
\)\footnote{
The limit \(x_0\to\infty\) is taken only for the auxiliary scalar argument in the
analytic extension of \(\mathcal N\) and \(\mathcal E\). It does not assert that
the original quantity \(V_{p_0}(\hat\theta_0)\) can exceed its admissible range.
}
Then,
\[
\begin{aligned}
\mathcal{E}_\infty(\nu)
=
&\;\frac{c_\delta \nu}{c\sqrt{\,1-\gamma-c_{\delta'}\nu\,}}\cdot
\frac{
1-\left(\frac{c_\delta \nu}{2c\,(1-\gamma-c_{\delta'}\nu)^{3/2}}\right)^{\!L-1}
}{
1-\frac{c_\delta \nu}{2c\,(1-\gamma-c_{\delta'}\nu)^{3/2}}
}
\\[6pt]
&\;-\;
a_L\cdot
\frac{c_\delta \nu}{c\sqrt{\,2^{-\beta}(1-\gamma)-c_{\delta'}\nu\,}}
\cdot
\frac{1}{
1-
\frac{c_\delta \nu}{2c\big(2^{-\beta}(1-\gamma)-c_{\delta'}\nu\big)^{3/2}}
\cdot e^{-\beta/L}
}.
\end{aligned}
\]
We further define
\(
\mathcal{N}_\infty(\nu)
:=
\lim_{x_0\to\infty}
\mathcal{N}(\beta',\beta,\nu,x_0).
\)

We first prove $\mathcal{N}_\infty'(\nu)>0$ and the existence of a unique $\nu_c>0$ such
that $\mathcal{N}_\infty(\nu_c)=0$. It suffices to prove that
$\mathcal{E}_\infty'(\nu)<0$ and that there exists a unique $\nu_c>0$ such that
\(
\mathcal{E}_\infty(\nu_c)=-\frac{1}{2}\,(a_L-1)(1-\gamma).
\)

In the proof of Lemma~\ref{lem:N-feasible-interval-length-monotone}, it is
straightforward to verify that $\mathcal{E}(\beta',\beta,\nu,x_0)$ is monotonically
decreasing in $\nu$ also holds when $x_0\to\infty$, i.e.,
setting
\[
r(\nu,x_0)
\;:=\;
\frac{c_\delta \nu}{c\sqrt{a_0x_0-c_{\delta'}\nu}}
\;=\;0,
\]
and the same monotonicity argument continues to hold. Hence
\(
\mathcal{E}_\infty'(\nu)\;<\;0.
\)
Moreover, we clearly have $\mathcal{E}_\infty(0)=0$. On the other hand, when $\nu$ is
large enough so that
\[
1-
\frac{c_\delta \nu}{2c\big(2^{-\beta}(1-\gamma)-c_{\delta'}\nu\big)^{3/2}}\cdot e^{-\beta/L}
\]
approaches \(0\),
the second term in $\mathcal{E}_\infty(\nu)$ diverges to $-\infty$, and thus
\(
\mathcal{E}_\infty(\nu)\to-\infty.
\)
Therefore, by continuity and strict monotonicity of $\mathcal{E}_\infty(\nu)$ in $\nu$,
there exists a unique $\nu_c>0$ such that
\(
\mathcal{E}_\infty(\nu_c)=-\frac{1}{2}\,(a_L-1)(1-\gamma).
\)

Next, define
\[
T(\nu,x_0)
\;:=\;
1-
\frac{c_\delta \nu}{2c\Big(2^{-\beta}(1-\gamma-r(\nu,x_0))-c_{\delta'}\nu\Big)^{3/2}}
\cdot e^{-\beta/L},
\]
and
\[
T(\nu,\infty)
\;=\;
1-
\frac{c_\delta \nu}{2c\Big(2^{-\beta}(1-\gamma)-c_{\delta'}\nu\Big)^{3/2}}
\cdot e^{-\beta/L}.
\]
Moreover, since
\[
2^{-\beta}(1-\gamma-r(\nu,x_0))-c_{\delta'}\nu
=
\Big(2^{-\beta}(1-\gamma)-c_{\delta'}\nu\Big)-2^{-\beta}r(\nu,x_0),
\]
a Taylor expansion
yields, as $x_0\to\infty$,
\[
T(\nu,\infty)-T(\nu,x_0)
=
\frac{3c_\delta \nu\,2^{-\beta}e^{-\beta/L}}{4c\,\big(2^{-\beta}(1-\gamma)-c_{\delta'}\nu\big)^{5/2}}\,
r(\nu,x_0)
\;+\;
O\big(r(\nu,x_0)^2\big),
\]

Next, since $T(\nu,\infty)>0$ in the well-definedness regime, we may expand the
reciprocal around $T(\nu,\infty)$ as
\[
\frac{1}{T(\nu,x_0)}-\frac{1}{T(\nu,\infty)}
=
\frac{T(\nu,\infty)-T(\nu,x_0)}{T(\nu,\infty)^2}
\;+\;
O \Big(\big(T(\nu,\infty)-T(\nu,x_0)\big)^2\Big),
\qquad x_0\to\infty.
\]
Furthermore, since $T(\nu,\infty)-T(\nu,x_0)=O\big(r(\nu,x_0)\big)$,
\[
\frac{c_\delta \nu}{c\sqrt{2^{-\beta}(1-\gamma)-c_{\delta'}\nu}}
\left(\frac{1}{T(\nu,x_0)}-\frac{1}{T(\nu,\infty)}\right)
=
C_1(\nu)\,r(\nu,x_0)
\;+\;
O \big(r(\nu,x_0)^2\big),
\qquad x_0\to\infty,
\]
where, after combining like terms, the coefficient $C_1(\nu)$ is given by
\[
C_1(\nu)
=
\frac{3\cdot 2^{-\beta}e^{-\beta/L}(c_\delta \nu)^2}{4}\,
\frac{1}{c^2\Big(2^{-\beta}(1-\gamma)-c_{\delta'}\nu\Big)^3\Bigg(
1-\frac{c_\delta \nu}{2c\Big(2^{-\beta}(1-\gamma)-c_{\delta'}\nu\Big)^{3/2}}\,e^{-\beta/L}
\Bigg)^{\!2}}.
\]

Second, we can similarly obtain, by a Taylor expansion around $r(\nu,x_0)=0$, that
\begin{align*}
&\left(
\frac{c_\delta \nu}{2c\Big(2^{-\beta}(1-\gamma-r(\nu,x_0))-c_{\delta'}\nu\Big)^{3/2}}
\right)^{\!L-1}
L^{-\beta}\cdot r(\nu,x_0)
\\
&\qquad=
\left(
\frac{c_\delta \nu}{2c\Big(2^{-\beta}(1-\gamma)-c_{\delta'}\nu\Big)^{3/2}}
\right)^{\!L-1}L^{-\beta}\cdot r(\nu,x_0)
\;+\;
O \big(r(\nu,x_0)^2\big),
\qquad x_0\to\infty.
\end{align*}

Therefore, if we define
\[
C_2(\nu)
:=
\left(
\frac{c_\delta \nu}{2c\Big(2^{-\beta}(1-\gamma)-c_{\delta'}\nu\Big)^{3/2}}
\right)^{\!L-1}L^{-\beta},
\]
then combining the previous expansion with the definition of $\mathcal{E}_\infty(\nu)$
yields, as $x_0\to\infty$,
\[
\mathcal{E}_\infty(\nu)-\mathcal{E}(\beta',\beta,\nu,x_0)
=
a_L\Big(C_1(\nu)+C_2(\nu)\Big)\,r(\nu,x_0)
\;+\;
O\big(r(\nu,x_0)^2\big).
\]
Moreover, by the boundary equation defining $x(\nu)$,
\(
\mathcal{E}(\beta',\beta,\nu,x(\nu))
=
-\frac12(a_L-1)(1-\gamma).
\)
Therefore
\[
\mathcal{E}_\infty(\nu)+\frac12(a_L-1)(1-\gamma)
=
a_L\Big(C_1(\nu)+C_2(\nu)\Big)\,r\big(\nu,x(\nu)\big)
\;+\;
O \big(r\big(\nu,x(\nu)\big)^2\big).
\]

We next justify that \(x(\nu)\) goes to infinity as \(\nu\) approaches \(\nu_c\).
Suppose, for contradiction, that $x(\nu)$ does not diverge as $\nu\uparrow \nu_c$.
Then there exist a sequence $\nu_n\uparrow \nu_c$ and a constant $M<\infty$ such that \(x(\nu_n)\le M\) for all \(n\). By passing to a subsequence, we may assume $x(\nu_n)\to x_\star$ for some
$x_\star\in(0,M]$.
By continuity of $\mathcal{E}(\beta',\beta,\nu,x_0)$ in $(\nu,x_0)$ within the
well-definedness regime, and using the boundary equation
\(
\mathcal{E}(\beta',\beta,\nu_n,x(\nu_n))=-\frac12(a_L-1)(1-\gamma),
\)
we obtain after taking $n\to\infty$ that
\[
\mathcal{E}(\beta',\beta,\nu_c,x_\star)
=
-\frac12(a_L-1)(1-\gamma).
\]
However, since $x_\star<\infty$, the strict inequality above gives
\[
\mathcal{E}(\beta',\beta,\nu_c,x_\star)
\;<\;
\mathcal{E}_\infty(\nu_c)
\;=\;
-\frac12(a_L-1)(1-\gamma),
\]
a contradiction. Therefore $x(\nu)$ must be unbounded as $\nu\uparrow \nu_c$.
Finally, since Lemma~\ref{lem:N-feasible-interval-length-monotone} shows that
$x(\nu)$ is monotonically increasing in $\nu$, the only possibility is \(x(\nu)\) goes to infinity as \(\nu\) approaches \(\nu_c\)
as claimed.

We next start from the expansion obtained above:
\[
\mathcal{E}_\infty(\nu)+\frac12(a_L-1)(1-\gamma)
=
a_L\big(C_1(\nu)+C_2(\nu)\big)\,r\big(\nu,x(\nu)\big)
+
R(\nu),
\qquad
R(\nu)=O \Big(r\big(\nu,x(\nu)\big)^2\Big),
\]
as $\nu\uparrow \nu_c$.
Obviously $C_1(\nu)>0$ and $C_2(\nu)>0$, hence
$a_L\big(C_1(\nu)+C_2(\nu)\big)>0$. Then, by continuity, there exist $\varepsilon>0$ and a
constant $M', m>0$ such that \(a_L\big(C_1(\nu)+C_2(\nu)\big)\ge m>0\) and \(|R(\nu)|\le M'\,r\big(\nu,x(\nu)\big)^2\) for all \(\nu\in(\nu_c-\varepsilon,\nu_c)\).
Since $x(\nu)\to\infty$ as $\nu\uparrow \nu_c$, we have $r\big(\nu,x(\nu)\big)\to 0$. Hence we
may further assume that
\[
M\,r\big(\nu,x(\nu)\big)\le \frac12\,m.
\]
when $\nu\in(\nu_c-\varepsilon,\nu_c)$.
Then,
\[
\begin{aligned}
\mathcal{E}_\infty(\nu)+\frac12(a_L-1)(1-\gamma)
&\ge
a_L\big(C_1(\nu)+C_2(\nu)\big)\,r\big(\nu,x(\nu)\big)-|R(\nu)|
\\
&\ge
m\,r\big(\nu,x(\nu)\big)-M\,r\big(\nu,x(\nu)\big)^2
\\
&\ge
\frac12\,m\,r\big(\nu,x(\nu)\big),
\end{aligned}
\]
Consequently,
\[
r\big(\nu,x(\nu)\big)^2
=
O\left(
\left(\mathcal{E}_\infty(\nu)+\frac12(a_L-1)(1-\gamma)\right)^{\!2}
\right).
\]
Now,
\[
r\big(\nu,x(\nu)\big)
=
\frac{\mathcal{E}_\infty(\nu)+\frac12(a_L-1)(1-\gamma)}{a_L\big(C_1(\nu)+C_2(\nu)\big)}
-\frac{R(\nu)}{a_L\big(C_1(\nu)+C_2(\nu)\big)}.
\]
Therefore
\[
\frac{R(\nu)}{a_L\big(C_1(\nu)+C_2(\nu)\big)}
=
O\Big(r\big(\nu,x(\nu)\big)^2\Big)
=
O\left(
\left(\mathcal{E}_\infty(\nu)+\frac12(a_L-1)(1-\gamma)\right)^{\!2}
\right),
\]
and
\[
r\big(\nu,x(\nu)\big)
=
\frac{\mathcal{E}_\infty(\nu)+\frac12(a_L-1)(1-\gamma)}{a_L\big(C_1(\nu)+C_2(\nu)\big)}
+
O\left(
\left(\mathcal{E}_\infty(\nu)+\frac12(a_L-1)(1-\gamma)\right)^{\!2}
\right),
\qquad \nu\uparrow \nu_c.
\]

By differentiability of $\mathcal{E}_\infty$ at $\nu_c$ and the identity
$\mathcal{E}_\infty(\nu_c)=-\frac12(a_L-1)(1-\gamma)$, we have the first-order
expansion
\[
\mathcal{E}_\infty(\nu)+\frac12(a_L-1)(1-\gamma)
=
\mathcal{E}_\infty'(\nu_c)\,(\nu-\nu_c)+o(\nu-\nu_c)
=
\bigl(-\mathcal{E}_\infty'(\nu_c)\bigr)\,(\nu_c-\nu)+o(\nu_c-\nu),
\qquad \nu\uparrow \nu_c,
\]
where $-\mathcal{E}_\infty'(\nu_c)>0$.
Hence,
we obtain
\[
r\big(\nu,x(\nu)\big)
=
\frac{-\mathcal{E}_\infty'(\nu_c)}{a_L\big(C_1(\nu)+C_2(\nu)\big)}\,(\nu_c-\nu)
\;+\;
O\big((\nu_c-\nu)^2\big)=\frac{-\mathcal{E}_\infty'(\nu_c)(\nu_c-\nu)}{a_L\big(C_1(\nu)+C_2(\nu)\big)}\,\Big(1+O(\nu_c-\nu)\Big).
\]
Now invert the definition
\[
r\big(\nu,x(\nu)\big)=\frac{c_\delta \nu}{c\sqrt{a_0x(\nu)-c_{\delta'}\nu}}
\]
to get the identity
\[
a_0x(\nu)-c_{\delta'}\nu
=
\left(\frac{c_\delta \nu}{c\,r\big(\nu,x(\nu)\big)}\right)^{\!2}.
\]
Since
\[
\left(\frac{c_\delta \nu}{c\,r\big(\nu,x(\nu)\big)}\right)^{\!2}
=
\left(
\frac{c_\delta \nu\,a_L\big(C_1(\nu)+C_2(\nu)\big)}
{c\,\bigl(-\mathcal{E}_\infty'(\nu_c)\bigr)}
\right)^{\!2}
\frac{1}{(\nu_c-\nu)^2}\Big(1+O(\nu_c-\nu)\Big),
\qquad \nu\uparrow \nu_c,
\]
we have,
\[
x(\nu)
=
\frac{c_{\delta'}\nu}{a_0}
+\frac{1}{a_0}\left(
\frac{c_\delta \nu\,a_L\big(C_1(\nu)+C_2(\nu)\big)}
{c\,\bigl(-\mathcal{E}_\infty'(\nu_c)\bigr)}
\right)^{\!2}
\frac{1}{(\nu_c-\nu)^2}\Big(1+O(\nu_c-\nu)\Big).
\]
Finally, since $\nu=\nu_c+O(\nu_c-\nu)$ and $C_1(\nu)+C_2(\nu)=C_1(\nu_c)+C_2(\nu_c)+O(\nu_c-\nu)$ by
continuity, we may replace $\nu$ by $\nu_c$ and $C_1(\nu)+C_2(\nu)$ by $C_1(\nu_c)+C_2(\nu_c)$
inside the prefactor at the cost of a multiplicative $(1+O(\nu_c-\nu))$ factor, giving
\[
x(\nu)
=
\frac{1}{a_0}\left(
\frac{c_\delta \nu_c\,a_L\big(C_1(\nu_c)+C_2(\nu_c)\big)}
{c\,\bigl(-\mathcal{E}_\infty'(\nu_c)\bigr)}
\right)^{\!2}
\frac{1}{(\nu_c-\nu)^2}\Big(1+O(\nu_c-\nu)\Big).
\]

Differentiating the asymptotic expansion
gives
\[
x'(\nu)
=
\frac{C(\nu_c)}{(\nu_c-\nu)^3}\Big(1+O(\nu_c-\nu)\Big),
\]
where
\[
C(\nu_c)
=
\frac{2}{a_0}\left(
\frac{c_\delta \nu_c\,a_L\big(C_1(\nu_c)+C_2(\nu_c)\big)}
{c\,\bigl(-\mathcal{E}_\infty'(\nu_c)\bigr)}
\right)^{\!2}.
\]
This completes the proof.
\end{proof}
\subsubsection*{\normalsize C.4\; Proof of Corollary \ref{cor:kstar-monotonicity}}

\begin{corollary}[Formal version of Corollary~\ref{cor:kstar-monotonicity}]
\label{cor:kstar-monotonicity-formal}
In Proposition~\ref{prop:N-feasible-region-shrinks-in-k}, fix any initialization
\(V_{p_0}(\hat\theta_0)\in(0,1-\gamma)\), and
let \(\nu^\star(\beta',\beta)\) be defined by the threshold equation
\(
x\big(\beta', \beta, \nu^\star(\beta',\beta)\big)
=
V_{p_0}(\hat\theta_0).
\)
Then
\[
\nu^\star(\beta',\beta)
\;=\;
\sup\big\{\,\nu>0:\ \mathcal{N}\big(\beta',\beta,\nu,V_{p_0}(\hat\theta_0)\big)<0\,\big\}.
\]
Moreover,
\noindent\textnormal{(i)} fixing \(\beta'\), for \(\beta>\beta'\), \(\nu^\star(\beta',\beta)\) is decreasing in \(\beta\);
\noindent\textnormal{(ii)} fixing \(\beta\), for \(\beta'\in(0,\beta)\), \(\nu^\star(\beta',\beta)\) is increasing in \(\beta'\);
\noindent\textnormal{(iii)} fixing \(\Delta=\beta-\beta'\) and writing \(\nu^\star(\beta',\beta)\) as \(\nu^\star(\beta',\beta'+\Delta)\),
when \(\beta'\) is small, \(\nu^\star(\beta',\beta'+\Delta)\) is increasing in \(\beta'\) and 
\[
\nu^\star(\beta',\beta'+\Delta)
=
\frac{c(1-\gamma)^{3/2}\log\!\Big(\frac{L}{(L!)^{1/L}}\Big)}{2c_\delta\big(2^{\Delta/2}-1\big)}\,\beta'
\;+\;
o(\beta')
\]
as \(\beta'\to 0\). Moreover, let \(\nu_T>0\) be the (unique) solution to
\[
\frac{c_\delta \nu}{c\sqrt{\,1-\gamma-c_{\delta'}\nu\,}}\cdot
\frac{
1-\left(\frac{c_\delta \nu}{2c\,(1-\gamma-c_{\delta'}\nu)^{3/2}}\right)^{\!L-1}
}{
1-\frac{c_\delta \nu}{2c\,(1-\gamma-c_{\delta'}\nu)^{3/2}}
}
=\frac{1-\gamma}{2}.
\]
Define
\[
h(\beta')
:=
\frac{a_L(\beta')\big(a_L(\beta')-1\big)}{a_L'(\beta')}.
\]
Then for any fixed
\(
B>h^{-1}\!\left(\frac{2}{\log 2}\right)>0,
\)
there exists a constant \(\nu_0(B)>0\) such that whenever
\(\nu_T<\nu_0(B)\), we have
\(
\nu^\star(\beta',\beta'+\Delta)<\nu_T
\)
for all
\(\beta'\in(0,B]\),
and
\(\nu^\star(\beta',\beta'+\Delta)\) is guaranteed to be first increasing and then decreasing in
\(\beta'\) on \([0,B]\), with a unique optimizer on this interval.
Moreover, for sufficiently large \(\beta'\), the threshold
\(\nu^\star(\beta',\beta'+\Delta)\) satisfies
\(
\nu^\star(\beta',\beta'+\Delta)
=
O(2^{-\beta'})
\)
and hence
\(
\nu^\star(\beta',\beta'+\Delta)\to 0
\) as
\(\beta'\to\infty.
\)
\end{corollary}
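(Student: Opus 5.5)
Throughout, write $x_0:=V_{p_0}(\hat\theta_0)$ and, as in the proof of Lemma~\ref{lem:N-feasible-interval-length-monotone}, $\Phi(\beta',\beta,\nu,x_0):=\mathcal E(\beta',\beta,\nu,x_0)+\tfrac12(a_L-1)(1-\gamma)$, so that $\mathcal N<0$ is equivalent to $\Phi>0$. Decompose $\mathcal E=UV-a_L B$, where $UV$ is the baseline-deviation term and $B$ is the bracketed easy-to-hard term.

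\textbf{Supremum characterization.} I would argue in $\nu$ at fixed $x_0$. Lemma~\ref{lem:N-feasible-interval-length-monotone} shows that $\Phi$ is strictly decreasing in $\nu$ and that $\Phi(\cdot,\cdot,0,x_0)>0$. It also shows that $x(\nu)$ is increasing, with $x(0)=0$ and $x(\nu)\to\infty$ as $\nu\uparrow\nu_c$ (Lemma~\ref{lem:N-critical-kc-blowup}). Hence $\{\nu:\mathcal N<0\}=(0,\nu^\star)$, where $\nu^\star$ is the unique root of $x(\nu)=x_0$. This gives the $\sup$ identity.

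\textbf{Monotonicity in $\beta$ and $\beta'$.} By the implicit function theorem applied to $\Phi(\beta',\beta,\nu^\star,x_0)=0$,
\[
\partial\nu^\star=-\partial\Phi/\partial_\nu\Phi,
\qquad \partial_\nu\Phi<0 .
\]
So $\operatorname{sign}\partial\nu^\star=\operatorname{sign}\partial\Phi$, and it suffices to sign the partial derivatives of $\Phi$.

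For (i), $\beta$ enters only through $B$. Increasing $\beta$ shrinks $2^{-\beta}(1-\gamma)-c_{\delta'}\nu$ and the analogous radicands. It also increases the ratio $\propto 2^{3\beta/2}e^{-\beta/L}$, exactly as in the lemma, using $\tfrac1L<\tfrac32\log 2$. Finally, it increases $2^{3\beta(L-1)/2}L^{-\beta}$, since $2^{3(L-1)/2}>L$ for $L\ge2$. Thus $B$ increases in $\beta$, so $\partial_\beta\Phi<0$ and $\nu^\star$ decreases in $\beta$.

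For (ii), $\beta'$ enters through $a_0$ and $a_L$, both of which are increasing in $\beta'$.
\begin{itemize}
\item Increasing $a_0$ decreases $r=c_\delta\nu/(c\sqrt{a_0x_0-c_{\delta'}\nu})$. This decreases $B$ and hence increases $\Phi$.
\item We have $\partial_{a_L}\Phi=\tfrac{1-\gamma}{2}-B$. At the root, $a_LB=UV+\tfrac12(a_L-1)(1-\gamma)$. The lemma already gives $B>\widetilde U\widetilde V>UV$, so $a_L UV<UV+\tfrac12(a_L-1)(1-\gamma)$, i.e. $UV<\tfrac{1-\gamma}{2}$. Substituting back yields $B<\tfrac{1-\gamma}{2}$, so $\partial_{a_L}\Phi>0$.
\end{itemize}
Therefore $\nu^\star$ increases in $\beta'$.

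\textbf{Part (iii): small $\beta'$.} As $\beta'\to0$, $a_L-1=\beta'\log\!\big(L/(L!)^{1/L}\big)+O(\beta'^2)$ and $a_0\to1$, so $\nu^\star\to0$. I would linearize $\Phi$ in $\nu$ to get
\[
UV\approx\frac{c_\delta\nu}{c\sqrt{1-\gamma}},
\qquad
a_LB\approx\frac{c_\delta\nu\,2^{\Delta/2}}{c\sqrt{1-\gamma}} .
\]
Here the $r$-term contributes only $O(\nu^2)$ once $x_0>0$ is fixed. Solving the resulting linear equation gives the stated coefficient, and positivity of that coefficient gives monotone increase near $0$.

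\textbf{Part (iii): large $\beta'$.} Here $\beta=\beta'+\Delta\to\infty$. Well-definedness of $\widetilde V$ forces $c_\delta\nu\,e^{-\beta/L}\lesssim 2c\big(2^{-\beta}(1-\gamma)\big)^{3/2}$, and $B\ge \widetilde U\gtrsim \nu2^{\beta/2}$ while $a_L$ grows at most like $L^{\beta'}$. Balancing $a_LB$ against $\tfrac12 a_L(1-\gamma)$ gives $\nu^\star=O(2^{-\beta/2})$ at worst. I would refine this via the radicand constraint to reach the claimed $O(2^{-\beta'})$.

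\textbf{Unimodality on $[0,B]$ (main obstacle).} Write $\tfrac{d}{d\beta'}\nu^\star\propto \partial_{\beta'}\Phi+\partial_\beta\Phi$. I would first restrict to $\nu<\nu_T$, where $UV<\tfrac{1-\gamma}2$ holds uniformly; smallness $\nu_T<\nu_0(B)$ ensures $\nu^\star<\nu_T$ on $(0,B]$. In the large-sample regime, $\Phi\approx\tfrac12(a_L-1)(1-\gamma)+\nu K(\beta)-\nu a_L 2^{\beta/2}K'$ to leading order. Then $\nu^\star\approx \mathrm{const}\cdot(a_L-1)/\big(a_L2^{\beta/2}-1\big)$. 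Its logarithmic derivative changes sign exactly once where $a_L'/\big(a_L(a_L-1)\big)$ crosses a level of order $\log2/2$, i.e. where $h(\beta')$ passes $2/\log 2$. The last step is to show $h$ is monotone, which makes the sign change unique, and to control the $O(\nu)$ corrections uniformly on $[0,B]$ by choosing $\nu_0(B)$. This uniform error control near the crossing is the delicate part; I would handle it by a transversality argument, using that the derivative of the leading-order expression is nonzero at the crossing.
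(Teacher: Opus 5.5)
The supremum identity, parts (i) and (ii), the small-$\beta'$ slope and the $O(2^{-\beta'})$ tail all follow the paper's proof. In (ii), your root-level chain $B>\widetilde U\widetilde V>UV\Rightarrow UV<\frac{1-\gamma}{2}\Rightarrow B<\frac{1-\gamma}{2}$ is a compressed form of the paper's comparison through $\nu_T$.

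The gap is in the unimodality claim on $[0,B]$, in three places.

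First, you have the stationarity condition wrong. Let $\nu^\ell_\Delta(\beta')=c(1-\gamma)^{3/2}(a_L-1)/\bigl(2c_\delta(a_L2^{(\beta'+\Delta)/2}-1)\bigr)$ be the linearized root. Differentiating $\log\nu^\ell_\Delta$ shows that $\frac{d}{d\beta'}\nu^\ell_\Delta$ has the sign of $g(\beta')-h(\beta')$, where $g(\beta')=\frac{2}{\log 2}\bigl(1-2^{-(\beta'+\Delta)/2}\bigr)$. So the critical point is where $h=g$, not where ``$h$ passes $2/\log 2$''. Since $g<2/\log 2$, the hypothesis $B>h^{-1}(2/\log 2)$ is exactly what traps the root of $h=g$ inside $(0,B)$.

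Second, monotonicity of $h$ does not make the crossing unique, because $g$ is itself increasing. You need $h-g$ strictly increasing. The paper gets this from three facts: strict convexity of $h$ (Lemma~\ref{lem:ratio-convex}), concavity of $g$, and $(h-g)'(0)=1-2^{-\Delta/2}>0$. A more direct route: write $h=(a_L-1)/\mathbb{E}[X]$ for the tilted variable with $\Pr(X=\log(L/i))\propto i^{-\beta'}$. Then $h'=1+(a_L-1)\bigl(2-\mathbb{E}[X^2]/\mathbb{E}[X]^2\bigr)>1>2^{-(\beta'+\Delta)/2}=g'$.

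Third, either route rests on the moment inequality $\mathbb{E}[X^2]<2\,\mathbb{E}[X]^2$ of Lemma~\ref{lem:second-moment-ineq}, proved there by induction. Your plan only says ``show $h$ is monotone'' and does not supply this. Without it you have neither the monotonicity of $h$ (needed even to define $h^{-1}$) nor uniqueness and transversality of the crossing.

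Your transversality idea for the perturbation step is reasonable: once $h'>g'$, $\frac{d}{d\beta'}\nu^\ell_\Delta$ crosses zero with nonzero slope. But it needs $C^1$-uniform control of $\frac{d}{d\beta'}\bigl(\nu^\star_\Delta-\nu^\ell_\Delta\bigr)$, i.e.\ second-derivative bounds on the remainder. A $C^0$ estimate like the paper's does not give that.

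More fundamentally, shrinking $\nu_T$ does not make the error small relative to the main term, and this affects the paper's argument too. Since $T_1(\nu)\ge c_\delta\nu/(c\sqrt{1-\gamma})$, we have $\nu_T\le c(1-\gamma)^{3/2}/(2c_\delta)$. So $\nu_T$ is small only if $c_\delta/c$ or $c_{\delta'}$ is large.
\begin{itemize}
\item If $c_\delta/c\to\infty$, then $\nu^\ell_\Delta$ and its $\beta'$-derivative are themselves of order $c/c_\delta$, comparable to $\nu_T$. Meanwhile the remainder constants grow like $(c_\delta/c)^2$ (for example the second-order term of $T_1$). Equivalently, in the variable $u=c_\delta\nu/c$ the threshold equation converges to a fixed equation in $u$ that stays nonlinear through $\widetilde V$ and $T_2$. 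It does not converge to its linearization.
\item If instead $c_{\delta'}\to\infty$, then $\nu^\star_\Delta<\nu_T\to 0$, while $\nu^\ell_\Delta$ does not depend on $c_{\delta'}$.
\end{itemize}
So ``choose $\nu_0(B)$ small'' does not by itself transfer unimodality from $\nu^\ell_\Delta$ to $\nu^\star_\Delta$. A complete argument has to analyze the limiting nonlinear equation directly, or bound its deviation from the linear one quantitatively.
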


\begin{proof}
By Proposition~\ref{prop:N-feasible-region-shrinks-in-k}, for fixed $(\beta',\beta)$ we have
\(
\mathcal I_{\mathcal N}(\beta',\beta,\nu)
=
\bigl(x(\beta',\beta,\nu),\,1-\gamma\bigr) \) and
\(x(\beta',\beta,\nu)\) is monotonically increasing in \(\nu\).
Therefore, for any $0<\nu<\nu^\star(\beta',\beta)$ we have
\(x(\beta',\beta,\nu)<V_{p_0}(\hat\theta_0)\),
so \(V_{p_0}(\hat\theta_0)\in \mathcal I_{\mathcal N}(\beta',\beta,\nu)\), i.e.,
\(
\mathcal{N}\big(\beta',\beta,\nu,V_{p_0}(\hat\theta_0)\big)<0.
\)
Conversely, for any $\nu\ge \nu^\star(\beta',\beta)$ we have
\(x(\beta',\beta,\nu)\ge V_{p_0}(\hat\theta_0)\),
so \(V_{p_0}(\hat\theta_0)\notin \mathcal I_{\mathcal N}(\beta',\beta,\nu)\), i.e.,
\(
\mathcal{N}\big(\beta',\beta,\nu,V_{p_0}(\hat\theta_0)\big)\ge 0.
\)
Hence,
\[
\nu^\star(\beta',\beta)
\;=\;
\sup\Big\{\,\nu>0:\ \mathcal{N}\big(\beta',\beta,\nu,V_{p_0}(\hat\theta_0)\big)<0\,\Big\}.
\]
The remaining claims follow by directly combining
Lemma~\ref{lem:kstar-monotone-in-beta},
Lemma~\ref{lem:kstar-monotone-in-betaprime}, 
Lemma~\ref{lem:kstar-asymptotic-fixed-Delta1}, and
Lemma~\ref{lem:kstar-asymptotic-fixed-Delta2}.
\end{proof}

\begin{lemma}
\label{lem:kstar-monotone-in-beta}
In Proposition~\ref{prop:N-feasible-region-shrinks-in-k}, fix any initialization
\(V_{p_0}(\hat\theta_0)\in(0,1-\gamma)\), and define
\[
\nu^\star(\beta',\beta)
\;=\;
\sup\Big\{\,\nu>0:\ \mathcal{N}\big(\beta',\beta,\nu,V_{p_0}(\hat\theta_0)\big)<0\,\Big\}.
\]
Then, fixing \(\beta'\), for \(\beta>\beta'\), \(\nu^\star(\beta',\beta)\) is decreasing in \(\beta\).
\end{lemma}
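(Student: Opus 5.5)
The plan is to fix $\beta'$ and $x_0:=V_{p_0}(\hat\theta_0)$, and reduce the claim to showing that the function $\mathcal E(\beta',\beta,\nu,x_0)$ from the proof of Lemma~\ref{lem:N-feasible-interval-length-monotone} is strictly decreasing in $\beta$. Recall the equivalence
\[
\mathcal N<0 \iff \mathcal E(\beta',\beta,\nu,x_0)>-\tfrac12(a_L-1)(1-\gamma).
\]
The right-hand side depends only on $\beta'$ through $a_L$, and $a_0$ also depends only on $\beta'$. So the only $\beta$-dependence of $\mathcal N$ sits inside the bracket multiplied by $-a_L$. It therefore suffices to show that every term in that bracket is increasing in $\beta$.

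\textbf{Step 1: monotonicity of each term in $\beta$.} Write $A:=1-\gamma-\frac{c_\delta\nu}{c\sqrt{a_0x_0-c_{\delta'}\nu}}$, which is independent of $\beta$. Also write $D_\beta:=2^{-\beta}A-c_{\delta'}\nu>0$ and $\rho_\beta:=\frac{c_\delta\nu}{2c D_\beta^{3/2}}$.

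The first factor $\frac{c_\delta\nu}{c\sqrt{2^{-\beta}(1-\gamma)-c_{\delta'}\nu}}$ is clearly increasing in $\beta$.

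For the geometric factor $1/(1-\rho_\beta e^{-\beta/L})$, I will compute the logarithmic derivative
\[
\partial_\beta\log(\rho_\beta e^{-\beta/L})=\tfrac32\ln 2\cdot\frac{2^{-\beta}A}{D_\beta}-\tfrac1L\ \ge\ \tfrac32\ln2-\tfrac12>0,
\]
using $2^{-\beta}A\ge D_\beta$ and $L\ge2$. This factor is positive in the well-definedness regime, so it is increasing in $\beta$.

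For the last term $\rho_\beta^{L-1}L^{-\beta}\cdot\frac{c_\delta\nu}{c\sqrt{a_0x_0-c_{\delta'}\nu}}$, the analogous log-derivative is at least
\[
(L-1)\tfrac32\ln2-\ln L\ \ge\ (L-1)\big(\tfrac32\ln2-1\big)+\big((L-1)-\ln L\big)>0,
\]
because $\tfrac32\ln 2>1$ and $\ln L\le L-1$. Since $a_L>0$ is fixed, this gives
\[
\partial_\beta\mathcal E(\beta',\beta,\nu,x_0)<0
\]
at every admissible point.

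\textbf{Step 2: transfer to $\nu^\star$.} Take $\beta_2>\beta_1>\beta'$ and set $\nu_1:=\nu^\star(\beta',\beta_1)$. By continuity and the characterization in Corollary~\ref{cor:kstar-monotonicity-formal}, the threshold equation $\mathcal E(\beta',\beta_1,\nu_1,x_0)=-\frac12(a_L-1)(1-\gamma)$ holds. Step 1 then gives
\[
\mathcal E(\beta',\beta_2,\nu_1,x_0)<-\tfrac12(a_L-1)(1-\gamma).
\]
The case where $(\beta_2,\nu_1)$ leaves the well-definedness regime is the same conclusion in degenerate form, since the denominators $D_\beta$ and $1-\rho_\beta e^{-\beta/L}$ only shrink as $\beta$ grows. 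So $\mathcal N(\beta',\beta_2,\nu_1,x_0)\ge0$.

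Lemma~\ref{lem:N-feasible-interval-length-monotone} shows that $\mathcal E$ is strictly decreasing in $\nu$. Hence the admissible set $\{\nu:\mathcal N(\beta',\beta_2,\nu,x_0)<0\}$ is an initial interval $(0,\nu^\star(\beta',\beta_2))$ that excludes $\nu_1$. Therefore $\nu^\star(\beta',\beta_2)\le\nu_1$.

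Strictness follows from continuity in $\nu$. The strict gap at $\nu_1$ persists on a neighborhood, so $\nu^\star(\beta',\beta_2)<\nu_1$. Equivalently, one can apply the implicit function theorem:
\[
\partial_\beta\nu^\star=-\partial_\beta\mathcal E/\partial_\nu\mathcal E<0.
\]

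\textbf{Main obstacle.} The main obstacle is Step 1 for the third term. There the factor $L^{-\beta}$ decreases in $\beta$, and one must check that the $(L-1)$-fold power of $\rho_\beta$ dominates it uniformly in $L\ge2$. The elementary inequality $\ln L\le L-1$ combined with $\tfrac32\ln2>1$ handles this. A secondary care point is making sure the domain of well-definedness is monotone in $\beta$, so that the comparison in Step 2 is legitimate.
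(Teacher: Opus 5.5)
Your proposal is correct and follows the paper's proof essentially step for step. It uses the same reduction to showing that $\mathcal E(\beta',\beta,\nu,x_0)$ is strictly decreasing in $\beta$ (with $a_0$ and $a_L$ depending only on $\beta'$), the same logarithmic-derivative bounds $\frac{3}{2}\ln 2-\frac{1}{L}>0$ for the geometric factor and $(L-1)\frac{3}{2}\ln 2-\ln L>0$ for the $L^{-\beta}$ term, and the same transfer to $\nu^\star$ via strict monotonicity of $\mathcal E$ in $\nu$. Your version is slightly more careful in two places: you justify $(L-1)\frac{3}{2}\ln 2>\ln L$ explicitly via $\ln L\le L-1$, and your comparison argument in Step 2, which the paper replaces with a one-line implicit-function-theorem computation, also covers the case where $(\beta_2,\nu_1)$ leaves the well-definedness regime, which the paper leaves implicit.
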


\begin{proof}
Let \(x_0 := V_{p_0}(\hat\theta_0)\) for convenience.
By Theorem~\ref{thm:e2h-core-WI}, the inequality
\(
\mathcal{N}(\beta',\beta,\nu,x_0)<0
\)
is equivalent to
\[
\mathcal{E}(\beta',\beta,\nu,x_0)
>
-\frac{1}{2}\,(a_L-1)(1-\gamma).
\]
Note that when \(\beta'>0\) is fixed, the right-hand side above is a fixed constant.
Moreover, Lemma~\ref{lem:N-feasible-interval-length-monotone} has already shown that
\(\mathcal{E}(\beta',\beta,\nu,x_0)\) is monotonically (strictly) decreasing in \(\nu\). Therefore, 
to prove that \(\nu^\star(\beta',\beta)\) is monotonically decreasing in \(\beta\), it suffices to
show that for each fixed \((\beta', \nu,x_0)\) in the well-definedness regime,
\(\mathcal{E}(\beta',\beta,\nu,x_0)\) is strictly decreasing as a function of \(\beta\)
(with \(\beta>\beta'\)).
Indeed, define
\[
\Phi(\beta',\beta,\nu)
\;:=\;
\mathcal E(\beta',\beta,\nu,x_0)
+\frac12\,(a_L-1)(1-\gamma).
\]
Then $\Phi\big(\beta',\beta,\nu^\star(\beta',\beta)\big)=0$.
If
$\mathcal E(\beta',\beta,\nu,x_0)$ is strictly decreasing in $\beta$,
equivalently $\partial_\beta \Phi(\beta',\beta,\nu)<0$,
since $\partial_\nu\Phi(\beta',\beta,\nu)<0$ and
$\partial_\nu\Phi\big(\beta',\beta,\nu^\star(\beta',\beta)\big)\neq 0$,
the implicit function theorem applies to the equation
$\Phi(\beta',\beta,\nu)=0$ around $\nu=\nu^\star(\beta',\beta)$ and yields
\[
\frac{\partial}{\partial \beta}\nu^\star(\beta',\beta)
\;=\;
-\frac{\partial_\beta\Phi\big(\beta',\beta,\nu^\star(\beta',\beta)\big)}
{\partial_\nu\Phi\big(\beta',\beta,\nu^\star(\beta',\beta)\big)} \; < \; 0.
\]

Therefore, it suffices to prove that
\[
\frac{c_\delta \nu}{c\sqrt{2^{-\beta}(1-\gamma)-c_{\delta'}\nu}}
\cdot
\frac{1}{
1-
\frac{c_\delta \nu}{2c\Big(2^{-\beta}\Big(1-\gamma-\frac{c_\delta \nu}{c\sqrt{a_0x_0-c_{\delta'}\nu}}\Big)-c_{\delta'}\nu\Big)^{3/2}}
\cdot
e^{-\beta/L}
}
\]
and
\[
\left(
\frac{c_\delta \nu}{2c\Big(2^{-\beta}\Big(1-\gamma-\frac{c_\delta \nu}{c\sqrt{a_0x_0-c_{\delta'}\nu}}\Big)-c_{\delta'}\nu\Big)^{3/2}}
\right)^{\!L-1}
L^{-\beta}\cdot
\frac{c_\delta \nu}{c\sqrt{a_0x_0-c_{\delta'}\nu}}
\]
are both strictly increasing as a function of \(\beta\).
For
\[
\widetilde U(\beta)
=
\frac{c_\delta \nu}{c\sqrt{2^{-\beta}(1-\gamma)-c_{\delta'}\nu}}.
\]
Clearly, as \(\beta\) increases, \(2^{-\beta}\) decreases, and hence \(\widetilde U(\beta)\)
is strictly increasing in \(\beta\).
Next consider
\[
\widetilde V(\beta)
=
\frac{1}{
1-
\frac{c_\delta \nu}{2c\Big(2^{-\beta}\Big(1-\gamma-\frac{c_\delta \nu}{c\sqrt{a_0x_0-c_{\delta'}\nu}}\Big)-c_{\delta'}\nu\Big)^{3/2}}
\cdot
e^{-\beta/L}
}.
\]
Differentiating gives
\begin{align*}
&\quad \frac{\mathrm d}{\mathrm d\beta}\log \!\left(
\frac{c_\delta \nu}{2c\Big(2^{-\beta}\Big(1-\gamma-\frac{c_\delta \nu}{c\sqrt{a_0x_0-c_{\delta'}\nu}}\Big)-c_{\delta'}\nu\Big)^{3/2}}
\cdot
e^{-\beta/L}
\right)
\\
&
=
-\frac{3}{2}\cdot
\frac{
-\log 2\cdot 2^{-\beta}\Big(1-\gamma-\frac{c_\delta \nu}{c\sqrt{a_0x_0-c_{\delta'}\nu}}\Big)
}{
2^{-\beta}\Big(1-\gamma-\frac{c_\delta \nu}{c\sqrt{a_0x_0-c_{\delta'}\nu}}\Big)-c_{\delta'}\nu
}
-\frac{1}{L}
\\
&
=
\frac{3}{2}\log 2\cdot
\frac{
2^{-\beta}\Big(1-\gamma-\frac{c_\delta \nu}{c\sqrt{a_0x_0-c_{\delta'}\nu}}\Big)
}{
2^{-\beta}\Big(1-\gamma-\frac{c_\delta \nu}{c\sqrt{a_0x_0-c_{\delta'}\nu}}\Big)-c_{\delta'}\nu
}
-\frac{1}{L} \\
& >
\frac{3}{2}\log 2-\frac{1}{L}
\end{align*}
Since \(L\ge 2\), we have \(\frac{3}{2}\log 2-\frac{1}{L}>0\), and hence the above
derivative is strictly positive. Thus
\[
\frac{c_\delta \nu}{2c\Big(2^{-\beta}\Big(1-\gamma-\frac{c_\delta \nu}{c\sqrt{a_0x_0-c_{\delta'}\nu}}\Big)-c_{\delta'}\nu\Big)^{3/2}}
\cdot
e^{-\beta/L}
\]
is strictly increasing in \(\beta\), which implies that 
\(\widetilde V(\beta)\) is strictly increasing in \(\beta\).
Combining with the monotonicity of \(\widetilde U(\beta)\), we conclude that
\(\widetilde U(\beta)\widetilde V(\beta)\) is strictly increasing in \(\beta\).

Next, consider
\[
T_2(\beta)
=
\left(
\frac{c_\delta \nu}{2c\Big(2^{-\beta}\Big(1-\gamma-\frac{c_\delta \nu}{c\sqrt{a_0x_0-c_{\delta'}\nu}}\Big)-c_{\delta'}\nu\Big)^{3/2}}
\right)^{\!L-1}
L^{-\beta}\cdot
\frac{c_\delta \nu}{c\sqrt{a_0x_0-c_{\delta'}\nu}}.
\]
We compute the derivative of \(\log T_2(\beta)\). Since the factor
\(\frac{c_\delta \nu}{c\sqrt{a_0x_0-c_{\delta'}\nu}}\) does not depend on \(\beta\), we have
\begin{align*}
\frac{\mathrm d}{\mathrm d\beta}\log T_2(\beta)
&=
(L-1)\cdot
\frac{\mathrm d}{\mathrm d\beta}\log \!\left(
\frac{c_\delta \nu}{2c\Big(2^{-\beta}\Big(1-\gamma-\frac{c_\delta \nu}{c\sqrt{a_0x_0-c_{\delta'}\nu}}\Big)-c_{\delta'}\nu\Big)^{3/2}}
\right)
-\log L
\\
&=
(L-1)\cdot
\frac{3}{2}\log 2\cdot
\frac{
2^{-\beta}\Big(1-\gamma-\frac{c_\delta \nu}{c\sqrt{a_0x_0-c_{\delta'}\nu}}\Big)
}{
2^{-\beta}\Big(1-\gamma-\frac{c_\delta \nu}{c\sqrt{a_0x_0-c_{\delta'}\nu}}\Big)-c_{\delta'}\nu
}
-\log L
\\
&>
(L-1)\cdot\frac{3}{2}\log 2-\log L.
\end{align*}
In particular, since \(L\ge 2\), we have \((L-1)\cdot\frac{3}{2}\log 2-\log L>0\), and
therefore
\(T_2(\beta)\) is strictly increasing in \(\beta\).
Consequently, the \(\beta\)-dependent terms inside the bracket in
\(\mathcal{E}(\beta',\beta,\nu,x_0)\) are strictly increasing in \(\beta\), and since the
bracket is multiplied by the negative coefficient \(-a_L\), it follows that
\(\mathcal{E}(\beta',\beta,\nu,x_0)\) is strictly decreasing in \(\beta\). This completes
the proof.
\end{proof}

\begin{lemma}
\label{lem:kstar-monotone-in-betaprime}
In Proposition~\ref{prop:N-feasible-region-shrinks-in-k}, fix any initialization
\(V_{p_0}(\hat\theta_0)\in(0,1-\gamma)\), and define
\[
\nu^\star(\beta',\beta)
\;=\;
\sup\Big\{\,\nu>0:\ \mathcal{N}\big(\beta',\beta,\nu,V_{p_0}(\hat\theta_0)\big)<0\,\Big\}.
\]
Then, fixing \(\beta\), for \(\beta'\in(0,\beta)\), \(\nu^\star(\beta',\beta)\) is increasing in \(\beta'\).
\end{lemma}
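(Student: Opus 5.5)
The plan mirrors the proof of Lemma~\ref{lem:kstar-monotone-in-beta}. Write $x_0:=V_{p_0}(\hat\theta_0)$ and $c_\gamma:=\tfrac12(1-\gamma)$, and set
\[
\Phi(\beta',\nu):=\mathcal E(\beta',\beta,\nu,x_0)+(a_L-1)c_\gamma ,
\]
with $\beta$ fixed. By Theorem~\ref{thm:e2h-core-WI}, the condition $\mathcal N<0$ is equivalent to $\Phi>0$, and $\nu^\star$ is characterized by $\Phi(\beta',\nu^\star)=0$. Lemma~\ref{lem:N-feasible-interval-length-monotone} gives $\partial_\nu\Phi<0$. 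The implicit function theorem then yields
\[
\partial_{\beta'}\nu^\star=-\frac{\partial_{\beta'}\Phi}{\partial_\nu\Phi}.
\]
So it suffices to show $\partial_{\beta'}\Phi>0$ at the point $\nu=\nu^\star$. Unlike the $\beta$ case, here the threshold $-(a_L-1)c_\gamma$ also moves with $\beta'$, so I need the sign only at the boundary, not globally.

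First I will isolate how $\beta'$ enters. Write $\mathcal E=UV-a_L B$, where $U$ and $V$ are as in the proof of Lemma~\ref{lem:N-feasible-interval-length-monotone} and $B=B(a_0)$ is the bracketed term of $\mathcal E$. Then:
\begin{itemize}
\item $U$ and $V$ do not depend on $\beta'$.
\item $\beta'$ enters only through $a_0=L/\sum_i i^{-\beta'}$ and $a_L=\tfrac1L\sum_i (L/i)^{\beta'}$.
\item Both $a_0$ and $a_L$ are strictly increasing in $\beta'$. For $a_L$, differentiate termwise and use $L/i\ge1$, with strict inequality for $i<L$. For $a_0$, the denominator $\sum_i i^{-\beta'}$ is decreasing.
\item $B$ is decreasing in $a_0$. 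Indeed, $r=c_\delta\nu/(c\sqrt{a_0x_0-c_{\delta'}\nu})$ decreases in $a_0$. A smaller $r$ enlarges the radicand $2^{-\beta}(1-\gamma-r)-c_{\delta'}\nu$. This shrinks the common ratio inside $\widetilde V$ and shrinks every factor of the $T_2$ term, so $B$ decreases.
\end{itemize}
Consequently
\[
\partial_{\beta'}\Phi=-a_L B'(a_0)\,a_0'(\beta')+a_L'(\beta')\,(c_\gamma-B).
\]
The first summand is nonnegative.

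The key step, which I expect to be the main obstacle, is the sign of the second summand, that is, showing $B<c_\gamma$ at $\nu=\nu^\star$. I will get this from the boundary equation itself, not from any explicit bound. The equation $\Phi=0$ reads
\[
a_L(B-c_\gamma)=UV-c_\gamma .
\]
The proof of Lemma~\ref{lem:N-feasible-interval-length-monotone} shows $U<\widetilde U$ and $V<\widetilde V$. Hence $B\ge \widetilde U\widetilde V>UV$, where the first inequality holds because the $T_2$ part of $B$ is nonnegative.

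Now argue by contradiction. Suppose $B\ge c_\gamma$. Then, using $a_L>1$ in the first step and $B>UV$ in the last,
\[
B-c_\gamma\le a_L(B-c_\gamma)=UV-c_\gamma<B-c_\gamma ,
\]
which is impossible. Therefore $B<c_\gamma$, so $\partial_{\beta'}\Phi>0$ at the boundary. With $\partial_\nu\Phi<0$, this gives $\partial_{\beta'}\nu^\star>0$.

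Finally, I will extend the local statement to all of $\beta'\in(0,\beta)$. The sign $\partial_{\beta'}\Phi>0$ holds at every boundary point $(\beta',\nu^\star(\beta'))$. The map $\beta'\mapsto\nu^\star$ is $C^1$ wherever the implicit function theorem applies, and it applies at every such point. So its derivative is positive throughout, and $\nu^\star(\beta',\beta)$ is increasing in $\beta'$. Along the way I must also check that changing $\beta'$ keeps every radicand and denominator positive, i.e. stays in the well-definedness regime. This holds because increasing $a_0$ only makes $r$ smaller, which only enlarges those quantities.
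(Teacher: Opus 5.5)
Your proposal is correct and follows essentially the paper's approach. Both split $\mathcal E = T_1 - a_L(T_2+T_3)$, use that $a_0$ and $a_L$ increase in $\beta'$, and reduce everything to the sign condition $\tfrac{1-\gamma}{2}-T_2-T_3>0$ at the boundary, obtained from $T_3>T_1$. The differences are only in execution: you use the implicit function theorem where the paper compares two values $\beta'_1<\beta'_2$ directly, and you read off $B<c_\gamma$ from the boundary equation together with $a_L>1$, which is a cleaner route than the paper's detour through $\nu_T$ (the paper keeps that detour because it reuses the bound $\nu^\star<\nu_T$ in Lemma~\ref{lem:kstar-asymptotic-fixed-Delta2}).
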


\begin{proof}
Let \(x_0 := V_{p_0}(\hat\theta_0)\) for convenience. Define
\[
T_1(\nu)
:=
\frac{c_\delta \nu}{c\sqrt{\,1-\gamma-c_{\delta'}\nu\,}}\cdot
\frac{
1-\left(\frac{c_\delta \nu}{2c\,(1-\gamma-c_{\delta'}\nu)^{3/2}}\right)^{\!L-1}
}{
1-\frac{c_\delta \nu}{2c\,(1-\gamma-c_{\delta'}\nu)^{3/2}}
},
\]
\[
T_2(\nu,\beta')
:=
\left(
\frac{c_\delta \nu}{2c\Big(2^{-\beta}\Big(1-\gamma-\frac{c_\delta \nu}{c\sqrt{a_0(\beta')x_0-c_{\delta'}\nu}}\Big)-c_{\delta'}\nu\Big)^{3/2}}
\right)^{\!L-1}
L^{-\beta}\cdot
\frac{c_\delta \nu}{c\sqrt{a_0(\beta')x_0-c_{\delta'}\nu}},
\]
and
\[
T_3(\nu,\beta')
:=
\frac{c_\delta \nu}{c\sqrt{2^{-\beta}(1-\gamma)-c_{\delta'}\nu}}
\cdot
\frac{1}{
1-
\frac{c_\delta \nu}{2c\Big(2^{-\beta}\Big(1-\gamma-\frac{c_\delta \nu}{c\sqrt{a_0(\beta')x_0-c_{\delta'}\nu}}\Big)-c_{\delta'}\nu\Big)^{3/2}}
\cdot
e^{-\beta/L}
}.
\]
Then
\(
\mathcal{N}(\beta',\beta,\nu,x_0)<0
\)
is equivalent to
\[
T_1(\nu)-a_L(\beta')\big(T_2(\nu,\beta')+T_3(\nu,\beta')\big)
>
-\frac{1}{2}\,(a_L(\beta')-1)(1-\gamma).
\]
Equivalently,
\[
T_1(\nu)-\frac{1-\gamma}{2}
+
a_L(\beta')\left(\frac{1-\gamma}{2}-T_2(\nu,\beta')-T_3(\nu,\beta')\right)
>
0.
\]

We first verify that both \(a_0(\beta')\) and \(a_L(\beta')\) are strictly increasing in
\(\beta'\). Recall the definitions
\[
a_0(\beta')
\;=\;
\frac{L}{\sum_{i=1}^L i^{-\beta'}},
\qquad
a_L(\beta')
\;=\;
\frac{\sum_{i=1}^L i^{-\beta'}}{L^{1-\beta'}}.
\]
\(a_0(\beta')\) is strictly increasing in \(\beta'\) since
\[
\frac{\mathrm d}{\mathrm d\beta'}\left(\sum_{i=1}^L i^{-\beta'}\right)
<0,
\]
Next,
\[
a_L(\beta')
=
\frac{1}{L}\sum_{i=1}^L \left(\frac{L}{i}\right)^{\beta'}.
\]
Therefore
\(a_L(\beta')\) is strictly increasing in \(\beta'\).
Then, we conclude that both \(T_2(\nu,\beta')\) and \(T_3(\nu,\beta')\) are strictly decreasing in \(\beta'\).

Suppose that for every \(\beta'\in(0,\beta)\), we have
\[
\frac{1-\gamma}{2}-T_2\big(\nu^\star(\beta',\beta),\beta'\big)-T_3\big(\nu^\star(\beta',\beta),\beta'\big)\ge 0.
\]
Then for any \(0<\beta'_1<\beta'_2<\beta\), using that \(a_L(\beta')\) is strictly
increasing in \(\beta'\) and that \(T_2(\nu,\beta')\) and \(T_3(\nu,\beta')\) are strictly
decreasing in \(\beta'\), we obtain
\begin{align*}
&T_1\big(\nu^\star(\beta'_1,\beta)\big)-\frac{1-\gamma}{2}
+
a_L(\beta'_2)\left(\frac{1-\gamma}{2}-T_2\big((\nu^\star(\beta'_1,\beta),\beta'_2\big)-T_3\big((\nu^\star(\beta'_1,\beta),\beta'_2\big)\right)
\\
&\qquad>
T_1\big(\nu^\star(\beta'_1,\beta)\big)-\frac{1-\gamma}{2}
+
a_L(\beta'_1)\left(\frac{1-\gamma}{2}-T_2\big((\nu^\star(\beta'_1,\beta),\beta'_1\big)-T_3\big((\nu^\star(\beta'_1,\beta),\beta'_1\big)\right)=0.
\end{align*}
Since
Lemma~\ref{lem:N-feasible-interval-length-monotone} shows that
\(\mathcal{E}(\beta',\beta,\nu,x_0)\) is strictly decreasing in \(\nu\), it follows that
\(
\nu^\star(\beta'_2,\beta)>\nu^\star(\beta'_1,\beta).
\)
Therefore, it remains to prove that for every \(\beta'\in(0,\beta)\),
\[
\frac{1-\gamma}{2}-T_2\big(\nu^\star(\beta',\beta),\beta'\big)-T_3\big(\nu^\star(\beta',\beta),\beta'\big)\ge 0.
\]

Note that
\[
T_1\big(\nu^\star(\beta',\beta)\big)-\frac{1-\gamma}{2}
+
a_L(\beta')\left(\frac{1-\gamma}{2}-T_2\big(\nu^\star(\beta',\beta),\beta'\big)-T_3\big(\nu^\star(\beta',\beta),\beta'\big)\right)=0.
\]
Since \(a_L(\beta')>0\), it suffices to prove that
\(
T_1\big(\nu^\star(\beta',\beta)\big)-\frac{1-\gamma}{2}<0.
\)

We next prove that \(T_1(\nu)\) is strictly increasing in \(\nu\) within the
well-definedness regime. This is because
\(
\frac{c_\delta \nu}{c\sqrt{\,1-\gamma-c_{\delta'}\nu\,}}
\)
is positive and clearly strictly increasing in \(\nu\). Moreover,
\[
\frac{
1-\left(\frac{c_\delta \nu}{2c\,(1-\gamma-c_{\delta'}\nu)^{3/2}}\right)^{\!L-1}
}{
1-\frac{c_\delta \nu}{2c\,(1-\gamma-c_{\delta'}\nu)^{3/2}}
}
=
\sum_{j=0}^{L-2}\left(\frac{c_\delta \nu}{2c\,(1-\gamma-c_{\delta'}\nu)^{3/2}}\right)^{\!j},
\]
where
\(
\frac{c_\delta \nu}{2c\,(1-\gamma-c_{\delta'}\nu)^{3/2}}
\)
is positive and clearly strictly increasing in \(\nu\). 
Therefore, let \(\nu_T\) denote a (necessarily unique, by strict monotonicity) value satisfying
\(
T_1(\nu_T)-\frac{1-\gamma}{2}=0.
\)
Then, since \(T_1(\nu)\) is increasing in \(\nu\), it suffices to prove that
\(
0<\nu^\star(\beta',\beta)<\nu_T.
\)

By the proof of Lemma~\ref{lem:N-feasible-interval-length-monotone}, we have
\(
T_3(\nu,\beta')>T_1(\nu)
\)
. Hence \(T_3(\nu,\beta')+T_2(\nu,\beta')>T_1(\nu)\), and therefore,
\[
\begin{aligned}
&T_1\big(\nu_T\big)-\frac{1-\gamma}{2}
+
a_L(\beta')\left(\frac{1-\gamma}{2}-T_2\big(\nu_T,\beta'\big)-T_3\big(\nu_T,\beta'\big)\right)
\\
&\qquad=
a_L(\beta')\left(\frac{1-\gamma}{2}-T_2\big(\nu_T,\beta'\big)-T_3\big(\nu_T,\beta'\big)\right)
\\
&\qquad<
a_L(\beta')\left(\frac{1-\gamma}{2}-T_1(\nu_T)\right)
=0.
\end{aligned}
\]

On the other hand, since \(T_1(0)=T_2(0,\beta')=T_3(0,\beta')=0\), we have
\[
T_1(0)-\frac{1-\gamma}{2}
+
a_L(\beta')\left(\frac{1-\gamma}{2}-T_2(0,\beta')-T_3(0,\beta')\right)
=
\big(a_L(\beta')-1\big)\frac{1-\gamma}{2}
>0.
\]
Finally, since
\[
T_1(\nu)-\frac{1-\gamma}{2}
+
a_L(\beta')\left(\frac{1-\gamma}{2}-T_2(\nu,\beta')-T_3(\nu,\beta')\right)
\]
is strictly decreasing in \(\nu\) as well (for each fixed \(\beta'\)), we must have \(0<\nu^\star(\beta',\beta)<\nu_T\), as desired.
\end{proof}

\begin{lemma}
\label{lem:kstar-asymptotic-fixed-Delta1}
In Proposition~\ref{prop:N-feasible-region-shrinks-in-k}, fix any initialization
\(V_{p_0}(\hat\theta_0)\in(0,1-\gamma)\), and define
\[
\nu^\star(\beta',\beta)
\;=\;
\sup\Big\{\,\nu>0:\ \mathcal{N}\big(\beta',\beta,\nu,V_{p_0}(\hat\theta_0)\big)<0\,\Big\}.
\]
Fix \(\Delta=\beta-\beta'>0\) and write \(\nu^\star(\beta',\beta)=\nu^\star(\beta',\beta'+\Delta)\).
When \(\beta'\) is small, \(\nu^\star(\beta',\beta'+\Delta)\) is increasing in \(\beta'\) and
\[
\nu^\star(\beta',\beta'+\Delta)
=
\frac{c(1-\gamma)^{3/2}\log\!\Big(\frac{L}{(L!)^{1/L}}\Big)}{2c_\delta\big(2^{\Delta/2}-1\big)}\,\beta'
\;+\;
o(\beta')
\qquad
\text{as }\beta'\to 0.
\]
\end{lemma}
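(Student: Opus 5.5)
The plan is to reduce the threshold equation defining $\nu^\star$ to a scalar equation $\Phi(\beta',\nu)=0$ and to analyse it jointly near $(\beta',\nu)=(0,0)$.

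\textbf{Setting up the threshold equation.} Write $x_0=V_{p_0}(\hat\theta_0)$ and substitute $\beta=\beta'+\Delta$. With $T_1,T_2,T_3$ as in the proof of Lemma~\ref{lem:kstar-monotone-in-betaprime}, now with $\beta$ replaced by $\beta'+\Delta$, set
\[
\Phi(\beta',\nu):=T_1(\nu)-a_L(\beta')\bigl(T_2(\nu,\beta')+T_3(\nu,\beta')\bigr)+\tfrac12\bigl(a_L(\beta')-1\bigr)(1-\gamma).
\]
By Lemma~\ref{lem:N-feasible-interval-length-monotone}, $\nu^\star(\beta',\beta'+\Delta)$ is the unique root in $\nu$ of $\Phi(\beta',\cdot)=0$. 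Two facts about this function will be used:
\begin{itemize}
\item $\Phi(\beta',0)=\tfrac12(a_L(\beta')-1)(1-\gamma)>0$;
\item $\partial_\nu\Phi<0$, which is exactly the monotonicity established in that lemma.
\end{itemize}

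\textbf{Size of the constant term.} Expand $a_L(\beta')=\frac1L\sum_{i=1}^L (L/i)^{\beta'}$ at $\beta'=0$:
\[
a_L(\beta')=1+\beta'\,\frac1L\sum_{i=1}^L\log(L/i)+O(\beta'^2)=1+\beta'\log\!\Big(\frac{L}{(L!)^{1/L}}\Big)+O(\beta'^2).
\]
So the constant term $\Phi(\beta',0)$ is $\Theta(\beta')$. Combined with $\partial_\nu\Phi(0,0)<0$ (computed in the next step), this forces $\nu^\star\to0$ as $\beta'\to0$. Hence only the small-$\nu$ Taylor expansion of the remaining terms matters.

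\textbf{First-order expansion of the $\nu$-dependent terms.} Expand each term for small $\nu$, uniformly for $\beta'$ in a neighbourhood of $0$.
\begin{itemize}
\item $T_1(\nu)=\frac{c_\delta\nu}{c\sqrt{1-\gamma}}+O(\nu^2)$.
\item $T_3(\nu,\beta')=\frac{c_\delta\nu}{c\sqrt{2^{-\beta'-\Delta}(1-\gamma)}}+O(\nu^2)$. The geometric factor in $T_3$ equals $1+O(\nu)$, and the inner $r(\nu)=\frac{c_\delta\nu}{c\sqrt{a_0x_0-c_{\delta'}\nu}}$ is $O(\nu)$ because $x_0>0$ is fixed.
\item $T_2=O(\nu^{L})$, since its ratio factor is $O(\nu)$ and is raised to the power $L-1$, and it multiplies $r(\nu)=O(\nu)$.
\end{itemize}
Together with $a_L=1+O(\beta')$, this gives
\[
\Phi(\beta',\nu)=\tfrac12(1-\gamma)\log\!\Big(\tfrac{L}{(L!)^{1/L}}\Big)\beta'-\frac{c_\delta}{c\sqrt{1-\gamma}}\bigl(2^{\Delta/2}-1\bigr)\nu+O(\beta'^2+\beta'\nu+\nu^2).
\]
Setting $\Phi=0$ and solving gives the stated linear coefficient
\[
\frac{c(1-\gamma)^{3/2}\log\!\big(L/(L!)^{1/L}\big)}{2c_\delta(2^{\Delta/2}-1)}.
\]
The error is $o(\beta')$ because $\nu^\star=O(\beta')$. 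To make this rigorous, I would apply the implicit function theorem to $\Phi$ at $(0,0)$. This is legitimate since $\Phi$ is $C^1$ there, as all radicands stay bounded away from $0$, and $\partial_\nu\Phi(0,0)=-\frac{c_\delta}{c\sqrt{1-\gamma}}(2^{\Delta/2}-1)\neq0$ because $\Delta>0$.

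\textbf{Monotonicity in $\beta'$.} The same implicit-function-theorem step gives
\[
\frac{d\nu^\star}{d\beta'}=-\frac{\partial_{\beta'}\Phi}{\partial_\nu\Phi},
\]
which converges to the positive leading coefficient as $\beta'\to0$. Indeed, $\partial_{\beta'}\Phi=\tfrac12(1-\gamma)a_L'(0)+O(\nu)$, where the $O(\nu)$ collects the $\beta'$-derivatives of $a_LT_3$ and $a_LT_2$; these include the $2^{-\beta'}$ dependence inside $T_3$ and the dependence through $a_0(\beta')$. Since $\nu^\star\to0$, these corrections vanish. Hence $\nu^\star$ is increasing for all sufficiently small $\beta'$.

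\textbf{Main obstacle.} The delicate point is uniformity: I must check that the $O(\nu^2)$ remainders and the $\beta'$-derivatives of $T_2,T_3$ are bounded uniformly near $(0,0)$. In particular, the nested radicand $2^{-\beta}\bigl(1-\gamma-r(\nu)\bigr)-c_{\delta'}\nu$ has to stay bounded away from zero. This holds because $r(\nu)=O(\nu)$ for fixed $x_0>0$, so I expect a direct bound to suffice.
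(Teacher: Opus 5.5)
Your proposal is correct and follows essentially the same route as the paper. Both apply the implicit function theorem to the threshold equation $\mathcal N(\beta',\beta'+\Delta,\nu,x_0)=0$ at $(\beta',\nu)=(0,0)$ (your $\Phi$ is just $-\mathcal N$), using the same partial derivatives $\partial_\nu\mathcal N=(2^{\Delta/2}-1)\,c_\delta/(c\sqrt{1-\gamma})$ and $\partial_{\beta'}\mathcal N=-\frac{1}{2}a_L'(0)(1-\gamma)$, where $a_L'(0)=\log(L/(L!)^{1/L})$. You go slightly further than the paper by showing that the local implicit-function branch coincides with $\nu^\star$ (through uniqueness of the root and $\nu^\star\to 0$) and by deriving monotonicity near $0$ from continuity of $d\nu^\star/d\beta'$; the paper leaves both steps implicit.
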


\begin{proof}
Let \(x_0 := V_{p_0}(\hat\theta_0)\) and \(\nu^\star_\Delta(\beta'):=\nu^\star(\beta',\beta'+\Delta)\) convenience.
Following the notation of Theorem~\ref{thm:e2h-core-WI}, we write
\[
\mathcal{N}(\beta',\beta'+\Delta,\nu,x_0)
=
-\mathcal{E}(\beta',\beta'+\Delta,\nu,x_0)
-\frac{1}{2}\,\big(a_L(\beta')-1\big)(1-\gamma).
\]
Moreover, \(\mathcal{N}(\beta',\beta'+\Delta,\nu,x_0)\) is increasing in \(\nu\), and for every
\(\beta'>0\) we have
\(
\mathcal{N}(\beta',\beta'+\Delta,0,x_0)<0.
\)
Furthermore, \(\nu^\star_\Delta(\beta')\) satisfies
\[
\mathcal{N}\big(\beta',\beta'+\Delta,\nu^\star_\Delta(\beta'),x_0\big)=0,
\]
and note that when \(\beta'=0\), the corresponding threshold satisfies
\(
\nu^\star_\Delta(0)=0.
\)

We first compute the partial derivatives of
\(\mathcal{N}(\beta',\beta'+\Delta,\nu,x_0)\) with respect to \(\nu\) and \(\beta'\) at
\((\beta',\nu)=(0,0)\), namely \(\partial_\nu \mathcal{N}(0,\Delta,0,x_0)\) and
\(\partial_{\beta'} \mathcal{N}(0,\Delta,0,x_0)\).
For \(\partial_\nu \mathcal{N}(0,\Delta,0,x_0)=-\partial_\nu\mathcal{E}(0,\Delta,0,x_0)\),
note that the derivative of \(\mathcal{E}(\beta',\beta'+\Delta,\nu,x_0)\) at \(\nu=0\) only
depends on its linear term in \(\nu\).
Since
\[
\frac{c_\delta \nu}{c\sqrt{\,1-\gamma-c_{\delta'}\nu\,}}\cdot
\frac{
1-\left(\frac{c_\delta \nu}{2c\,(1-\gamma-c_{\delta'}\nu)^{3/2}}\right)^{\!L-1}
}{
1-\frac{c_\delta \nu}{2c\,(1-\gamma-c_{\delta'}\nu)^{3/2}}
}
=
\frac{c_\delta}{c\sqrt{\,1-\gamma\,}}\,\nu + o(\nu),
\]
\[
\frac{c_\delta \nu}{c\sqrt{2^{-\beta}(1-\gamma)-c_{\delta'}\nu}}
\cdot
\frac{1}{
1-
\frac{c_\delta \nu}{2c\Big(2^{-\beta}\Big(1-\gamma-\frac{c_\delta \nu}{c\sqrt{a_0x_0-c_{\delta'}\nu}}\Big)-c_{\delta'}\nu\Big)^{3/2}}
\cdot
e^{-\beta/L}
}
=
2^{\Delta/2}\frac{c_\delta}{c\sqrt{\,1-\gamma\,}}\,\nu + o(\nu),
\]
and
\[
\left(
\frac{c_\delta \nu}{2c\Big(2^{-\beta}\Big(1-\gamma-\frac{c_\delta \nu}{c\sqrt{a_0x_0-c_{\delta'}\nu}}\Big)-c_{\delta'}\nu\Big)^{3/2}}
\right)^{\!L-1}
L^{-\beta}\cdot
\frac{c_\delta \nu}{c\sqrt{a_0x_0-c_{\delta'}\nu}}
=
O(\nu^L),
\]
we obtain
\[
\partial_\nu \mathcal{N}(0,\Delta,0,x_0)
=
-\partial_\nu \mathcal{E}(0,\Delta,0,x_0)
=
\Big(2^{\Delta/2}-1\Big)\frac{c_\delta}{c\sqrt{\,1-\gamma\,}}>0.
\]

Next we compute \(\partial_{\beta'} \mathcal{N}(0,\Delta,0,x_0)\).
Since \(\mathcal{E}(\beta',\beta'+\Delta,\nu,x_0)\equiv 0\) at \(\nu=0\), we have
\(
\partial_{\beta'} \mathcal{N}(0,\Delta,0,x_0)
=
-\frac{1}{2}\,a_L'(0)(1-\gamma).
\)
Recall that
\(
a_L(\beta')
=
\frac{\sum_{i=1}^L i^{-\beta'}}{L^{1-\beta'}}.
\)
Hence
\[
\log a_L(\beta')
=
\log\!\left(\sum_{i=1}^L i^{-\beta'}\right) - \log L + \beta'\log L,
\]
and
\[
\left.\frac{\mathrm d}{\mathrm d\beta'}\log a_L(\beta')\right|_{\beta'=0}
=
\frac{-\sum_{i=1}^L \log i}{L} + \log L
=
\log\!\Big(\frac{L}{(L!)^{1/L}}\Big).
\]
Since \(a_L(0)=\frac{\sum_{i=1}^L 1}{L}=1\), it follows that
\[
a_L'(0)
=
a_L(0)\cdot \left.\frac{\mathrm d}{\mathrm d\beta'}\log a_L(\beta')\right|_{\beta'=0}
=
\log\!\Big(\frac{L}{(L!)^{1/L}}\Big)
>
0.
\]
Therefore,
\[
\partial_{\beta'} \mathcal{N}(0,\Delta,0,x_0)
=
-\frac{1}{2}\,a_L'(0)(1-\gamma)
=
-\frac{1}{2}\,\log\!\Big(\frac{L}{(L!)^{1/L}}\Big)\,(1-\gamma)
<
0.
\]

Finally, since \(\nu^\star_\Delta(\beta')\) is defined implicitly by
\(
\mathcal{N}\big(\beta',\beta'+\Delta,\nu^\star_\Delta(\beta'),x_0\big)=0,
\)
the implicit function theorem
yields
\[
\frac{\mathrm d}{\mathrm d\beta'}\nu^\star_\Delta(0)
=
-\frac{\partial_{\beta'} \mathcal{N}(0,\Delta,0,x_0)}{\partial_\nu \mathcal{N}(0,\Delta,0,x_0)}=\frac{c(1-\gamma)^{3/2}\log\!\Big(\frac{L}{(L!)^{1/L}}\Big)}{2c_\delta\big(2^{\Delta/2}-1\big)}
\;>\;0.
\]
Therefore, as \(\beta'\to 0\),
\[
\nu^\star_\Delta(\beta')
=
\nu^\star_\Delta(0)+\frac{\mathrm d}{\mathrm d\beta'}\nu^\star_\Delta(0)\,\beta' + o(\beta')
=
\frac{c(1-\gamma)^{3/2}\log\!\Big(\frac{L}{(L!)^{1/L}}\Big)}{2c_\delta\big(2^{\Delta/2}-1\big)}\,\beta'
\;+\;
o(\beta').
\]
This completes the proof.
\end{proof}

\begin{lemma}
\label{lem:kstar-asymptotic-fixed-Delta2}
In Proposition~\ref{prop:N-feasible-region-shrinks-in-k}, fix any initialization
\(V_{p_0}(\hat\theta_0)\in(0,1-\gamma)\), and define
\[
\nu^\star(\beta',\beta)
\;=\;
\sup\Big\{\,\nu>0:\ \mathcal{N}\big(\beta',\beta,\nu,V_{p_0}(\hat\theta_0)\big)<0\,\Big\}.
\]
Fix \(\Delta=\beta-\beta'>0\) and write
\(\nu^\star(\beta',\beta)=\nu^\star(\beta',\beta'+\Delta)\).
Let \(\nu_T>0\) be the solution to
\[
\frac{c_\delta \nu}{c\sqrt{\,1-\gamma-c_{\delta'}\nu\,}}\cdot
\frac{
1-\left(\frac{c_\delta \nu}{2c\,(1-\gamma-c_{\delta'}\nu)^{3/2}}\right)^{\!L-1}
}{
1-\frac{c_\delta \nu}{2c\,(1-\gamma-c_{\delta'}\nu)^{3/2}}
}
=\frac{1-\gamma}{2}.
\]
Define
\[
h(\beta')
:=
\frac{a_L(\beta')\big(a_L(\beta')-1\big)}{a_L'(\beta')}.
\]
Then for any fixed
\(
B>h^{-1}\!\left(\frac{2}{\log 2}\right)>0,
\)
there exists a constant \(\nu_0(B)>0\) such that whenever
\(\nu_T<\nu_0(B)\), we have
\(
\nu^\star(\beta',\beta'+\Delta)<\nu_T
\)
for all
\(\beta'\in(0,B]\),
and
\(\nu^\star(\beta',\beta'+\Delta)\) is guaranteed to be first increasing and then decreasing in
\(\beta'\) on \([0,B]\), with a unique optimizer on this interval.
Moreover, for sufficiently large \(\beta'\), the threshold
\(\nu^\star(\beta',\beta'+\Delta)\) satisfies
\(
\nu^\star(\beta',\beta'+\Delta)
=
O(2^{-\beta'})
\)
and hence
\(
\nu^\star(\beta',\beta'+\Delta)\to 0\) as
\(
\beta'\to\infty .
\)
\end{lemma}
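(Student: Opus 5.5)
We work with the rewriting already used in the proof of Lemma~\ref{lem:kstar-monotone-in-betaprime}. Write $x_0:=V_{p_0}(\hat\theta_0)$ and $\nu^\star_\Delta(\beta'):=\nu^\star(\beta',\beta'+\Delta)$. Define
\[
\Psi(\beta',\nu):=T_1(\nu)-\tfrac{1-\gamma}{2}+a_L(\beta')\Big(\tfrac{1-\gamma}{2}-T_2(\nu,\beta')-T_3(\nu,\beta')\Big),
\]
with $\beta=\beta'+\Delta$ inside $T_2,T_3$. Then $\nu^\star_\Delta(\beta')$ is the unique zero of $\Psi(\beta',\cdot)$, and $\Psi$ is strictly decreasing in $\nu$ by Lemma~\ref{lem:N-feasible-interval-length-monotone}.

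\textbf{Step 1: the bound $\nu^\star_\Delta<\nu_T$.} The argument closing the proof of Lemma~\ref{lem:kstar-monotone-in-betaprime} never used that $\beta$ is fixed. It only used $T_2+T_3>T_1$ and $\Psi(\beta',0)>0$. So it gives $\nu^\star_\Delta(\beta')<\nu_T$ for every $\beta'$, and in particular on $(0,B]$.

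\textbf{Step 2: the large-$\beta'$ tail.} For fixed $\nu>0$, the factor $2^{-\beta}$ in the radicands of $T_2,T_3$ makes $T_3\gtrsim 2^{\beta/2}\nu$. Meanwhile $a_L(\beta')\sim L^{\beta'-1}\zeta$-type, i.e.\ it grows like $L^{\beta'}/L$, and the right side of $\Psi=0$ is $\Theta(a_L)$. Dividing $\Psi=0$ by $a_L$, the equation becomes
\[
T_2+T_3=\tfrac{1-\gamma}{2}+O(1/a_L).
\]
For the radicand $2^{-\beta}(1-\gamma)-c_{\delta'}\nu$ to stay positive, which is the well-definedness requirement, we need $\nu\lesssim 2^{-\beta}(1-\gamma)/c_{\delta'}$. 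This gives $\nu^\star_\Delta=O(2^{-\beta'})$ directly. Alternatively, one can use the blow-up of the geometric factor in $T_3$, which forces $\nu\lesssim 2^{-3\beta/2}e^{\beta/L}$. Either way the bound is $O(2^{-\beta'})$.

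\textbf{Step 3: unimodality on $[0,B]$ (main obstacle).} By the implicit function theorem,
\[
\frac{d\nu^\star_\Delta}{d\beta'}=-\frac{\partial_{\beta'}\Psi}{\partial_\nu\Psi},\qquad \partial_\nu\Psi<0,
\]
so the sign of the derivative is that of $\partial_{\beta'}\Psi$ along the curve. Compute
\[
\partial_{\beta'}\Psi=a_L'(\beta')\Big(\tfrac{1-\gamma}{2}-T_2-T_3\Big)-a_L\big(\partial_{\beta'}T_2+\partial_{\beta'}T_3\big).
\]
Here the $\beta'$-dependence of $T_2,T_3$ enters through $\beta=\beta'+\Delta$ (increasing, by Lemma~\ref{lem:kstar-monotone-in-beta}) and through $a_0$ (decreasing). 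On the curve, $\tfrac{1-\gamma}{2}-T_2-T_3=(\tfrac{1-\gamma}{2}-T_1)/a_L$. Substituting,
\[
\partial_{\beta'}\Psi=\frac{a_L'}{a_L}\Big(\tfrac{1-\gamma}{2}-T_1\Big)-a_L\,\partial_{\beta'}(T_2+T_3).
\]

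In the small-$\nu$ regime enforced by $\nu_T<\nu_0(B)$, Step~1 gives $\nu^\star<\nu_T$. There we expand to leading order: $T_1\approx\tfrac{c_\delta}{c\sqrt{1-\gamma}}\nu$ and $T_3\approx 2^{\beta/2}\tfrac{c_\delta}{c\sqrt{1-\gamma}}\nu$, with $\partial_\beta T_3\approx\tfrac{\log 2}{2}T_3$, while $T_2=O(\nu^L)$ is negligible. Using the curve relation again, $T_3\approx\tfrac{(1-\gamma)(a_L-1)}{2a_L}$ to leading order. Hence
\[
\operatorname{sign}\partial_{\beta'}\Psi\approx\operatorname{sign}\Big(a_L'-\tfrac{\log 2}{2}\,a_L(a_L-1)\Big)=\operatorname{sign}\Big(\tfrac{2}{\log 2}-h(\beta')\Big).
\]
Showing that $h$ is strictly increasing with $h(0)=0$ then yields a single sign change at $h^{-1}(2/\log 2)<B$. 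This gives first increasing, then decreasing, with a unique optimizer. The increase of $h$ comes from $a_L$ being a log-convex average of exponentials, i.e.\ $a_L'/a_L$ is increasing while $a_L-1$ grows.

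The hard part is making the $O(\nu)$ error terms uniformly small on the compact range $[0,B]$ so that the sign is not flipped near the crossing point. I would handle this by choosing $\nu_0(B)$ via continuity and compactness on $[0,B]$. Near $h=2/\log 2$ the leading term vanishes, so I would argue monotonicity of the perturbed function through a nonzero $\beta'$-derivative of the leading term there, which keeps the zero unique for small $\nu_0$.
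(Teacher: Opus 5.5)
Your Steps~1 and~2 follow the paper. The paper also obtains $\nu^\star_\Delta<\nu_T$ by re-running the end of the proof of Lemma~\ref{lem:kstar-monotone-in-betaprime}, and gets the tail from positivity of the radicand $2^{-(\beta'+\Delta)}(1-\gamma)-c_{\delta'}\nu$. Your ``alternative'' bound $2^{-3\beta/2}e^{\beta/L}$ is not $O(2^{-\beta'})$ when $L=2$, but you do not need it.

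The gap is in Step~3. When you pass to leading order you drop $T_1$, but $T_1$ is not lower order on the curve. To first order, $\Psi=0$ gives $\frac{c_\delta\nu^\star}{c\sqrt{1-\gamma}}\approx\frac{(1-\gamma)(a_L-1)}{2(a_Lu-1)}$ with $u:=2^{(\beta'+\Delta)/2}$. Hence $T_1/\tfrac{1-\gamma}{2}\approx\frac{a_L-1}{a_Lu-1}$, a $\Theta(1)$ ratio that does not shrink as $\nu_T\to 0$. Small $\nu_T$ only suppresses the nonlinear corrections, because $\Psi=0$ balances the $O(\nu)$ terms against the $O(1)$ constant. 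Keep $T_1$ both in $\frac{1-\gamma}{2}-T_1$ and in $T_3\approx\frac{(1-\gamma)(a_L-1)}{2a_L}+\frac{T_1}{a_L}$. Your own exact formula for $\partial_{\beta'}\Psi$ then becomes
\[
\partial_{\beta'}\Psi\approx\frac{1-\gamma}{2(a_Lu-1)}\Big[a_L'(u-1)-\tfrac{\log 2}{2}\,u\,a_L(a_L-1)\Big].
\]
Its sign is that of $g(\beta')-h(\beta')$, where $g(\beta'):=\frac{2}{\log 2}\big(1-2^{-(\beta'+\Delta)/2}\big)$. This is exactly the criterion the paper gets by differentiating the linearized root $\nu^\ell_\Delta$. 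So the optimizer is the root $\bar\beta'_s$ of $h=g$, which lies strictly below $h^{-1}(2/\log 2)$. The value $h^{-1}(2/\log 2)$ is only an upper bound on the turning point, which is why the statement requires $B>h^{-1}(2/\log 2)$. On $(\bar\beta'_s,h^{-1}(2/\log 2))$ the sign you claim is wrong.

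Once the comparison function is the increasing $g$ rather than the constant $2/\log 2$, strict monotonicity of $h$ no longer gives a unique crossing. This is the missing ingredient. The paper proves $h$ strictly convex (Lemma~\ref{lem:ratio-convex}, which needs the third-moment bound $2\mathbb{E}[X]\mathbb{E}[X^3]<3\mathbb{E}[X^2]^2$ from Lemma~\ref{lem:second-moment-ineq}) and $g$ strictly concave. Then $h-g$ is strictly convex, negative at $0$, and tends to $+\infty$, so it has exactly one root. A shorter route is $h'=1+(a_L-1)\big(2-\mathbb{E}[X^2]/\mathbb{E}[X]^2\big)>1>2^{-(\beta'+\Delta)/2}=g'$, which uses only $\mathbb{E}[X^2]<2\mathbb{E}[X]^2$. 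It also gives the transversality at the crossing that your perturbation step needs.

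Your heuristic for why $h$ increases also points the wrong way. Since $h=(a_L-1)/(a_L'/a_L)$, an increasing $a_L'/a_L$ sits in the denominator and works against monotonicity. What is actually needed is $\Var(X)<\frac{a_L}{a_L-1}\mathbb{E}[X]^2$ for the tilted variable $X$ of Lemma~\ref{lem:ratio-monotone}, which again comes from Lemma~\ref{lem:second-moment-ineq}.
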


\begin{proof}
Let
\(
x_0:=V_{p_0}(\hat\theta_0)
\),
\(
\nu^\star_\Delta(\beta'):=\nu^\star(\beta',\beta'+\Delta)
\),
and write
\(
\mathcal N(\beta',\nu)
:=
\mathcal N(\beta',\beta'+\Delta,\nu,x_0)
\)
for brevity.

First, by the proof of Lemma~\ref{lem:kstar-monotone-in-betaprime}, we
directly obtain
\(
0<\nu^\star_\Delta(\beta')<\nu_T < \nu_0(B),
\)
where \(\nu_0(B)\) will be chosen later.
On any interval
\(\beta'\in[0,B]\), when \(\nu_T\) is sufficiently small, all radicands
appearing in \(\mathcal N\) remain strictly positive.
Thus, for
\(\beta'\in[0,B]\) and \(0\le\nu\le\nu_T\), we have the expansion
\[
\mathcal{N}(\beta',\nu)
=
-\frac{1}{2}\,\big(a_L(\beta')-1\big)(1-\gamma)
+
\frac{c_\delta}{c\sqrt{1-\gamma}}
\Big(a_L(\beta')\,2^{(\beta'+\Delta)/2}-1\Big)\nu
+
R_{B}(\beta',\nu),
\]
where there exist constants \(C_0(B),C_1(B),C_2(B)>0\) such that
\[
|R_{B}(\beta',\nu)|\le C_0(B)\nu^2,
\qquad
\big|\partial_\nu R_{B}(\beta',\nu)\big|\le C_1(B)\nu,
\qquad
\big|\partial_{\beta'} R_{B}(\beta',\nu)\big|\le C_2(B)\nu^2.
\]
Moreover, note that for fixed \(\Delta>0\), we always have
\[
\frac{c_\delta}{c\sqrt{1-\gamma}}
\Big(a_L(\beta')\,2^{(\beta'+\Delta)/2}-1\Big)>0,
\]
and on \([0,B]\) it is bounded away from zero.

Define the linear approximation of \(\mathcal{N}(\beta',\nu)\) by dropping the remainder
term:
\[
\mathcal{N}_\ell(\beta',\nu)
=
-\frac{1}{2}\,\big(a_L(\beta')-1\big)(1-\gamma)
+
\frac{c_\delta}{c\sqrt{1-\gamma}}
\Big(a_L(\beta')\,2^{(\beta'+\Delta)/2}-1\Big)\,\nu.
\]
Then the linearized root is given by
\[
\nu^\ell_\Delta(\beta')
:=
\frac{
c(1-\gamma)^{3/2}\big(a_L(\beta')-1\big)
}{
2c_\delta\Big(a_L(\beta')\,2^{(\beta'+\Delta)/2}-1\Big)
}.
\]

Note that the behavior of \(\nu^{\ell}_\Delta(\beta')\) as \(\beta'\to 0\) is consistent
with Lemma~\ref{lem:kstar-asymptotic-fixed-Delta1}. We now verify this. Recall that
\(
a_L(\beta')
=
\frac{\sum_{i=1}^L i^{-\beta'}}{L^{1-\beta'}}.
\)
With explicit expansions we have
\[
a_L(\beta')
=
1+\log\!\Big(\frac{L}{(L!)^{1/L}}\Big)\,\beta'+o(\beta').
\]
Next, since
\[
2^{(\beta'+\Delta)/2}
=
2^{\Delta/2}\Big(1+\frac{\log 2}{2}\beta'+o(\beta')\Big),
\]
we obtain
\[
a_L(\beta')\,2^{(\beta'+\Delta)/2}-1
=
\big(2^{\Delta/2}-1\big)
+
2^{\Delta/2}\Big(\log\!\Big(\frac{L}{(L!)^{1/L}}\Big)+\frac{\log 2}{2}\Big)\beta'
+
o(\beta').
\]
Therefore, as \(\beta'\to 0\),
\[
\nu^{\ell}_\Delta(\beta')
=
\frac{c(1-\gamma)^{3/2}\log\!\Big(\frac{L}{(L!)^{1/L}}\Big)}
{2c_\delta\big(2^{\Delta/2}-1\big)}\,\beta'
\;+\;
o(\beta'),
\]
which is consistent with Lemma~\ref{lem:kstar-asymptotic-fixed-Delta1}.
Therefore, when \(\beta'\) is sufficiently small, the above expansion implies that
\(\nu^{\ell}_\Delta(\beta')\) is strictly increasing in \(\beta'\).

We further characterize the shape of this auxiliary root. Direct differentiation
shows that
\(
\frac{\mathrm d}{\mathrm d\beta'}\nu^\ell_\Delta(\beta')=0
\)
is equivalent to
\[
\frac{a_L(\beta')\big(a_L(\beta')-1\big)}{a_L'(\beta')}
=
\frac{2^{(\beta'+\Delta)/2}-1}
{\frac{\mathrm d}{\mathrm d\beta'}2^{(\beta'+\Delta)/2}}.
\]
Let the right-hand side be denoted by \(g(\beta')\). Then
\[
g(\beta')
:=
\frac{2^{(\beta'+\Delta)/2}-1}
{\frac{\mathrm d}{\mathrm d\beta'}2^{(\beta'+\Delta)/2}}
=
\frac{2}{\log 2}
\left(1-2^{-(\beta'+\Delta)/2}\right).
\]
This is strictly increasing in \(\beta'\), and satisfies
\[
g(0)=\frac{2}{\log 2}\left(1-2^{-\Delta/2}\right)>0,
\qquad
\lim_{\beta'\to\infty}g(\beta')=\frac{2}{\log 2}.
\]
Let the left-hand side be denoted by \(h(\beta')\), namely
\[
h(\beta')
:=
\frac{a_L(\beta')\big(a_L(\beta')-1\big)}{a_L'(\beta')}.
\]
Lemma~\ref{lem:ratio-monotone} shows that \(h(\beta')\) is also strictly increasing in
\(\beta'\). Moreover,
\[
\lim_{\beta'\to0}h(\beta')=0,
\qquad
\lim_{\beta'\to\infty}h(\beta')=+\infty.
\]
Thus \(h^{-1}\) is well-defined on \((0,\infty)\).
By Lemma~\ref{lem:ratio-convex}, \(h\) is strictly convex in
\(\beta'\), while
\(
g''(\beta')
=
-\frac{\log 2}{2}2^{-(\beta'+\Delta)/2}<0.
\)
Hence, \(h(\beta')-g(\beta')\) is strictly convex. Since
\[
h(0)-g(0)<0,
\qquad
\frac{\mathrm d}{\mathrm d\beta'}\big(h(\beta')-g(\beta')\big)\Big|_{\beta'=0}
=
1-2^{-\Delta/2}>0,
\]
and \(\lim_{\beta'\to\infty}(h(\beta')-g(\beta'))=+\infty\), the equation
\(h(\beta')=g(\beta')\) admits a unique solution, denoted by \(\bar\beta'_s\).
Equivalently,
\(
\frac{\mathrm d}{\mathrm d\beta'}\nu^\ell_\Delta(\bar\beta'_s)=0.
\)
Furthermore, since \(h(\bar\beta'_s)=g(\bar\beta'_s)\) and
\(g(\bar\beta'_s)=\frac{2}{\log 2}\bigl(1-2^{-(\bar\beta'_s+\Delta)/2}\bigr)<\frac{2}{\log 2}\), the strict monotonicity of \(h\) gives
\(\bar\beta'_s<h^{-1}\!\left(\frac{2}{\log 2}\right)\).

Choose \(B>h^{-1}\!\left(\frac{2}{\log 2}\right)\), we have
\(
\bar\beta'_s \in (0,B).
\)
Thus \(\nu^\ell_\Delta(\beta')\) is first increasing and then decreasing on
\([0,B]\), with a unique optimizer on this interval.
We now show that the true threshold \(\nu^\star_\Delta(\beta')\) inherits the
same monotonicity pattern on \([0,B]\) when \(\nu_T\) is sufficiently small.

On the one hand, since
\[
\partial_\nu \mathcal{N}(\beta',\nu)
=
\frac{c_\delta}{c\sqrt{1-\gamma}}
\Big(a_L(\beta')\,2^{(\beta'+\Delta)/2}-1\Big)
+
\partial_\nu R_{B}(\beta',\nu),
\]
\(
\big|\partial_\nu R_{B}(\beta',\nu)\big|\le C_1(B)\nu\le C_1(B)\nu_T,
\)
and
\(
\frac{c_\delta}{c\sqrt{1-\gamma}}
\Big(a_L(\beta')\,2^{(\beta'+\Delta)/2}-1\Big)
\)
admits a strictly positive lower bound on \([0,B]\), it follows that
\[
\partial_\nu \mathcal{N}(\beta',\nu)
=
\frac{c_\delta}{c\sqrt{1-\gamma}}
\Big(a_L(\beta')\,2^{(\beta'+\Delta)/2}-1\Big)
\bigl(1+O_{B}(\nu_T)\bigr).
\]

On the other hand,
\(
\partial_{\beta'}\mathcal{N}(\beta',\nu)
=
\partial_{\beta'}\mathcal{N}_\ell(\beta',\nu)
+
\partial_{\beta'}R_{B}(\beta',\nu),
\)
so plugging in \(\nu=\nu^\star_\Delta(\beta')\) gives
\[
\partial_{\beta'}\mathcal{N}
\big(\beta',\nu^\star_\Delta(\beta')\big)
=
\partial_{\beta'}\mathcal{N}_\ell
\big(\beta',\nu^\star_\Delta(\beta')\big)
+
\partial_{\beta'}R_{B}
\big(\beta',\nu^\star_\Delta(\beta')\big).
\]
Moreover, since
\[
\partial_{\beta'}\mathcal{N}_\ell(\beta',\nu)
=
-\frac{1}{2}a_L'(\beta')(1-\gamma)
+
\frac{\mathrm d}{\mathrm d\beta'}\!\left[
\frac{c_\delta}{c\sqrt{1-\gamma}}
\Big(a_L(\beta')\,2^{(\beta'+\Delta)/2}-1\Big)
\right]\nu,
\]
we have
\[
\begin{aligned}
\partial_{\beta'}\mathcal{N}_\ell
\big(\beta',\nu^\star_\Delta(\beta')\big)
=
\partial_{\beta'}\mathcal{N}_\ell
\big(\beta',\nu^\ell_\Delta(\beta')\big)
+
\frac{\mathrm d}{\mathrm d\beta'}\!\left[
\frac{c_\delta}{c\sqrt{1-\gamma}}
\Big(a_L(\beta')\,2^{(\beta'+\Delta)/2}-1\Big)
\right]
\Big(\nu^\star_\Delta(\beta')-\nu^\ell_\Delta(\beta')\Big).
\end{aligned}
\]
Hence
\[
\begin{aligned}
\partial_{\beta'}\mathcal{N}
\big(\beta',\nu^\star_\Delta(\beta')\big)
&=
\partial_{\beta'}\mathcal{N}_\ell
\big(\beta',\nu^\ell_\Delta(\beta')\big)
\\
&\quad+
\frac{\mathrm d}{\mathrm d\beta'}\!\left[
\frac{c_\delta}{c\sqrt{1-\gamma}}
\Big(a_L(\beta')\,2^{(\beta'+\Delta)/2}-1\Big)
\right]
\Big(\nu^\star_\Delta(\beta')-\nu^\ell_\Delta(\beta')\Big)
\\
&\quad+
\partial_{\beta'}R_{B}
\big(\beta',\nu^\star_\Delta(\beta')\big).
\end{aligned}
\]

For the remainder derivative, we use the bound
\(
\big|\partial_{\beta'}R_{B}(\beta',\nu)\big|
\le C_2(B)\nu^2
\le C_2(B)\nu_T^2.
\)
Next, to control
\(\nu^\star_\Delta(\beta')-\nu^\ell_\Delta(\beta')\), note that
\(\nu^\star_\Delta(\beta')\) satisfies
\[
-\frac{1}{2}\big(a_L(\beta')-1\big)(1-\gamma)
+
\frac{c_\delta}{c\sqrt{1-\gamma}}
\Big(a_L(\beta')\,2^{(\beta'+\Delta)/2}-1\Big)
\nu^\star_\Delta(\beta')
+
R_{B}\big(\beta',\nu^\star_\Delta(\beta')\big)
=0,
\]
while \(\nu^\ell_\Delta(\beta')\) satisfies
\[
-\frac{1}{2}\big(a_L(\beta')-1\big)(1-\gamma)
+
\frac{c_\delta}{c\sqrt{1-\gamma}}
\Big(a_L(\beta')\,2^{(\beta'+\Delta)/2}-1\Big)
\nu^\ell_\Delta(\beta')
=0.
\]
Subtracting the two equations yields
\[
\nu^\star_\Delta(\beta')-\nu^\ell_\Delta(\beta')
=
-
\frac{
R_{B}\big(\beta',\nu^\star_\Delta(\beta')\big)
}{
\frac{c_\delta}{c\sqrt{1-\gamma}}
\Big(a_L(\beta')\,2^{(\beta'+\Delta)/2}-1\Big)
}.
\]
Since
\(
|R_{B}(\beta',\nu)|\le C_0(B)\nu^2\le C_0(B)\nu_T^2,
\)
and the denominator admits a strictly positive lower bound on \([0,B]\), we conclude that
\(
\nu^\star_\Delta(\beta')-\nu^\ell_\Delta(\beta')
=
O_{B}(\nu_T^2).
\)
Therefore, because the derivative
\(
\frac{\mathrm d}{\mathrm d\beta'}\!\left[
\frac{c_\delta}{c\sqrt{1-\gamma}}
\Big(a_L(\beta')\,2^{(\beta'+\Delta)/2}-1\Big)
\right]
\)
is bounded on \([0,B]\), we obtain
\[
\partial_{\beta'}\mathcal{N}
\big(\beta',\nu^\star_\Delta(\beta')\big)
=
\partial_{\beta'}\mathcal{N}_\ell
\big(\beta',\nu^\ell_\Delta(\beta')\big)
+
O_{B}(\nu_T^2),
\]
uniformly for \(\beta'\in[0,B]\).

Then, by the implicit function theorem,
\[
\frac{\mathrm d}{\mathrm d\beta'}\nu^\star_\Delta(\beta')
=
-
\frac{
\partial_{\beta'}\mathcal{N}
\big(\beta',\nu^\star_\Delta(\beta')\big)
}{
\partial_\nu\mathcal{N}
\big(\beta',\nu^\star_\Delta(\beta')\big)
}
=
-
\frac{
\partial_{\beta'}\mathcal{N}_\ell
\big(\beta',\nu^\ell_\Delta(\beta')\big)
+
O_{B}(\nu_T^2)
}{
\frac{c_\delta}{c\sqrt{1-\gamma}}
\Big(a_L(\beta')\,2^{(\beta'+\Delta)/2}-1\Big)
\bigl(1+O_{B}(\nu_T)\bigr)
}.
\]
Equivalently,
\[
\frac{\mathrm d}{\mathrm d\beta'}\nu^\star_\Delta(\beta')
=
-
\frac{
\partial_{\beta'}\mathcal{N}_\ell
\big(\beta',\nu^\ell_\Delta(\beta')\big)
}{
\frac{c_\delta}{c\sqrt{1-\gamma}}
\Big(a_L(\beta')\,2^{(\beta'+\Delta)/2}-1\Big)
}
+
O_{B}(\nu_T).
\]
Therefore,
\[
\frac{\mathrm d}{\mathrm d\beta'}\nu^\star_\Delta(\beta')
=
\frac{\mathrm d}{\mathrm d\beta'}\nu^\ell_\Delta(\beta')
+
O_{B}(\nu_T),
\]
uniformly on \([0,B]\).
Since \(\nu^\ell_\Delta(\beta')\) is first increasing and then decreasing on
\([0,B]\), with a unique optimizer on this interval, the above uniform perturbation implies that,
by taking \(\nu_0(B)>0\) sufficiently small, whenever
\(\nu_T<\nu_0(B)\), the map
\(
\beta'\mapsto \nu^\star_\Delta(\beta')
\)
is also first increasing and then decreasing on
\([0,B]\), with a unique optimizer on this interval. This proves the first part of the lemma.

It remains to prove the large-\(\beta'\) behavior. By the standing
well-definedness requirement of
\(\mathcal N(\beta',\beta'+\Delta,\nu,x_0)\), every radicand appearing in the
denominators must remain strictly positive. In particular, since
\(\beta=\beta'+\Delta\), we must have
\(
2^{-(\beta'+\Delta)}(1-\gamma)
-
c_{\delta'}\,\nu^\star_\Delta(\beta')
>
0.
\)
Therefore,
\[
\nu^\star_\Delta(\beta')
<
\frac{1-\gamma}{c_{\delta'}}
2^{-(\beta'+\Delta)}
=
O(2^{-\beta'}).
\]
It follows immediately that
\(
\nu^\star_\Delta(\beta')\to 0 \)
as \(
\beta'\to\infty.
\)
This completes the proof.
\end{proof}

\begin{lemma}\label{lem:ratio-monotone}
Let $L\ge 2$ be an integer and define
\(
a_L(\beta') \coloneqq \frac{1}{L}\sum_{i=1}^L \Big(\frac{L}{i}\Big)^{\beta'}.
\)
Then the mapping
\[
\beta' \ \longmapsto\ \frac{a_L(\beta')\big(a_L(\beta')-1\big)}{a_L'(\beta')},
\qquad \beta'>0,
\]
is strictly increasing on $(0,\infty)$. Moreover, it satisfies
\[
\lim_{\beta'\to 0}\frac{a_L(\beta')\big(a_L(\beta')-1\big)}{a_L'(\beta')}=0,
\qquad
\lim_{\beta'\to\infty}\frac{a_L(\beta')\big(a_L(\beta')-1\big)}{a_L'(\beta')}=+\infty.
\]
\end{lemma}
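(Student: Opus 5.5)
We would prove Lemma~\ref{lem:ratio-monotone} by viewing $a_L$ as a moment generating function. Set $c_i:=\log(L/i)\ge 0$ for $i\in[L]$, with $c_1=\log L$ the largest and $c_L=0$, and let $X$ be uniform on $\{c_1,\dots,c_L\}$. Then
\[
a_L(\beta')=\mathbb E\big[e^{\beta' X}\big],\qquad a_L'(\beta')=\mathbb E\big[Xe^{\beta' X}\big],\qquad a_L''(\beta')=\mathbb E\big[X^2e^{\beta' X}\big].
\]
All three are positive and smooth. Moreover $a_L(0)=1$, $a_L>1$ and $a_L'>0$ for $\beta'>0$. Write $h:=a_L(a_L-1)/a_L'$.

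\textbf{Limits.} As $\beta'\to0$ we have $a_L\to1$ and $a_L-1\to0$, while $a_L'\to\mathbb E[X]=\log\!\big(L/(L!)^{1/L}\big)>0$, as already computed in the proof of Lemma~\ref{lem:kstar-asymptotic-fixed-Delta1}. Hence $h\to0$. As $\beta'\to\infty$ the $i=1$ term dominates, so $a_L\sim L^{\beta'-1}$ and $a_L'\sim(\log L)L^{\beta'-1}$. Therefore $h\sim (a_L-1)/\log L\to\infty$.

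\textbf{Monotonicity: reduction.} We compute the logarithmic derivative
\[
\frac{h'}{h}=\frac{a_L'}{a_L}+\frac{a_L'}{a_L-1}-\frac{a_L''}{a_L'}.
\]
Clearing the positive denominators, $h'>0$ is equivalent to $(2a_L-1)a_L'^2>a_L(a_L-1)a_L''$. Regrouping, this is
\[
a_L\,a_L'^2\;>\;(a_L-1)\big(a_La_L''-a_L'^2\big).
\]
The bracket on the right is nonnegative by Cauchy--Schwarz; it equals $a_L^2$ times the variance of $X$ under the tilted law $w_i\propto e^{\beta' c_i}$. Dividing by $a_L^3$ and writing $\mathbb E_\beta,\operatorname{Var}_\beta$ for that tilted law, the target becomes
\[
\mathbb E_\beta[X]^2\;>\;\Big(1-\frac1{a_L}\Big)\operatorname{Var}_\beta(X).
\]

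\textbf{Monotonicity: the key inequality (main obstacle).} The crude sufficient condition obtained by dropping the factor $1-1/a_L$ is $\operatorname{Var}_\beta(X)<\mathbb E_\beta[X]^2$. This is borderline. At $\beta'=0$ and large $L$, $X$ is approximately $\mathrm{Exp}(1)$, for which the two sides are equal. So the factor $1-1/a_L$ must be used. Near $0$ it is harmless, since $1-1/a_L\approx\beta'\,\mathbb E[X]$ vanishes.

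We would first try to prove $\operatorname{Var}_\beta(X)\le\mathbb E_\beta[X]^2$ for every finite $L$ and $\beta'\ge0$. The plan has two parts.
\begin{itemize}
\item[(a)] At $\beta'=0$, compare $\frac1L\sum_i\log^2(L/i)$ with $2\big(\frac1L\sum_i\log(L/i)\big)^2$. We expect to do this by viewing the sums as Riemann-type sums of $\int_0^1\log^2(1/u)\,du=2$ and $\int_0^1\log(1/u)\,du=1$ and using convexity of $\log$ to control the discretization error.
\item[(b)] Show that exponential tilting toward the top atom $c_1=\log L$ only decreases the squared coefficient of variation $\operatorname{Var}_\beta/\mathbb E_\beta^2$. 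To do this, differentiate the ratio in $\beta'$; the derivative involves the third central moment, whose sign we would control via the log-concave, decreasing-density shape of the atoms $c_i$.
\end{itemize}
Strictness then comes from the factor $1-1/a_L<1$, because $\mathbb E_\beta[X]^2>0$ for $\beta'>0$.

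If (b) fails, the fallback is a symmetrized triple-sum argument. Let $X,Y,Z$ be i.i.d.\ copies of $X$ and set $W:=e^{\beta'(X+Y+Z)}$. Each of $a_La_L'^2$, $a_L^2a_L''$ and $a_L'^2$ can be written as a symmetrized expectation such as $\mathbb E[YZ\,W]$ or $\mathbb E[X^2W]$. The goal would then be to show the integrand of the combined difference is pointwise nonnegative after pairing terms that contain the zero atom $c_L=0$. This works at least in the regime where $\beta'$ is large. We regard this key inequality as the real content of the lemma.
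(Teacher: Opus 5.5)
Your reduction is correct and matches the paper's. Writing $a_L'=a_L\,\mathbb{E}_\beta[X]$, the map equals $(a_L-1)/\mathbb{E}_\beta[X]$, and its derivative is positive iff $\mathbb{E}_\beta[X]^2>(1-1/a_L)\mathrm{Var}_\beta(X)$. Your two limits are also fine.

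The gap is that the key inequality $\mathrm{Var}_\beta(X)\le\mathbb{E}_\beta[X]^2$, i.e.\ $\mathbb{E}_\beta[X^2]\le 2\,\mathbb{E}_\beta[X]^2$, is never proved, even though you rightly call it the whole content of the lemma.

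\begin{itemize}
\item Step (a) is only an expectation, and it has no room to spare: equality holds at $L=2$, $\beta'=0$.
\item Step (b) asks for more than is needed. Along the tilt, $\partial_{\beta'}\mu=\sigma^2$ and $\partial_{\beta'}\sigma^2=\kappa_3$, so $\partial_{\beta'}(\sigma^2/\mu^2)=(\mu\kappa_3-2\sigma^4)/\mu^3$. For large $L$ and any fixed $\beta'\in(0,1)$, the tilted law is close to an exponential law with rate $1-\beta'$, for which $\mu\kappa_3=2\sigma^4$ exactly. So the sign you need is decided by truncation and discretization corrections everywhere on $(0,1)$, not just near $\beta'=0$.
\item Your proposed control of (b) via a ``decreasing, log-concave density'' does not describe the tilted law once $\beta'>1$. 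There the effective density is $\propto e^{(\beta'-1)x}$ and mass accumulates at $c_1=\log L$.
\item The triple-sum fallback is, by your own account, only for large $\beta'$.
\end{itemize}
As written, monotonicity on all of $(0,\infty)$ is not established.

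The missing idea, which is the paper's Lemma~\ref{lem:second-moment-ineq}, is an induction on the number of atoms at fixed $\beta'>0$.
\begin{itemize}
\item \textbf{Setup.} Let $X_k$ take the value $\log(k/i)$ with probability $w_i/H_k$, where $w_i=i^{-\beta'}$ and $H_k=\sum_{i\le k}w_i$.
\item \textbf{Moment recursion.} Going from $k$ to $k+1$ shifts every old atom by $\ell_k=\log\frac{k+1}{k}$ and adds one atom at $0$. Hence $\mathbb{E}[X_{k+1}^r]=\frac{H_k}{H_{k+1}}\mathbb{E}[(X_k+\ell_k)^r]$.
\item \textbf{Reduced step.} With the inductive hypothesis, the step reduces to $\frac{w_{k+1}}{H_k}<\frac{\ell_k(2m+\ell_k)}{2m^2+2\ell_k m+\ell_k^2}$, where $m=\mathbb{E}[X_k]$. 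The right-hand side is decreasing in $m$.
\item \textbf{Upper bound on $m$.} One has $m\le\ell_kH_{k-1}/w_k$. This follows from the telescoping identity $H_km=\sum_{j<k}\log\frac{j+1}{j}\,H_j$, from $j\log(1+\frac1j)\le k\log(1+\frac1k)$, and from monotonicity of $H_j/(jw_j)$, which is a convexity argument for $x^{-\beta'}$.
\item \textbf{Closing the step.} Set $u=H_{k-1}/w_k$. Then $\frac{w_{k+1}}{H_k}=\frac{w_{k+1}/w_k}{1+u}<\frac{1}{1+u}$. The right-hand side of the reduced step at $m=\ell_k u$ equals $\frac{2u+1}{2u^2+2u+1}$, which exceeds $\frac{1}{1+u}$ for $u>0$.
\item \textbf{Base case.} For $k=2$, $X_2$ is a two-point law whose mass at $\log 2$ is $1/(1+2^{-\beta'})>\frac12$.
\end{itemize}
This gives $\mathrm{Var}_\beta(X)<\mathbb{E}_\beta[X]^2$ strictly for every finite $L$ and every $\beta'>0$, directly. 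You therefore need neither a $\beta'=0$ base case nor monotonicity of the coefficient of variation, and the factor $1-1/a_L$ is pure slack.
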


\begin{proof}
Define
\[
g(\beta') \coloneqq \frac{a_L(\beta')\big(a_L(\beta')-1\big)}{a_L'(\beta')}.
\]
We first prove that $g(\beta')$ is increasing on $(0,\infty)$. For any $i\in [L]$, define
\(
x_i \coloneqq \log\!\Big(\frac{L}{i}\Big).
\)
Then $x_i\in[0,\log L]$, and we can rewrite
\[
a_L(\beta')=\frac{1}{L}\sum_{i=1}^L e^{\beta' x_i}.
\]
We define a random variable $X$ supported on $\{x_1,x_2,\dots,x_L\}$ with probability mass function
\[
p_i \coloneqq \frac{e^{\beta' x_i}}{\sum_{j=1}^L e^{\beta' x_j}}
= \frac{e^{\beta' x_i}}{L\,a_L(\beta')},\qquad i=1,\dots,L.
\]
Note that this notation $p_i$ is unrelated to the ``question distribution'' used elsewhere in the paper; we reuse the symbol $p$ here only to follow standard convention.
With this definition, we have
\[
a_L'(\beta')=\frac{1}{L}\sum_{i=1}^L x_i e^{\beta' x_i}
= a_L(\beta')\,\mathbb{E}[X],
\qquad
a_L''(\beta')=\frac{1}{L}\sum_{i=1}^L x_i^2 e^{\beta' x_i}
= a_L(\beta')\,\mathbb{E}[X^2].
\]

Moreover, note that
\[
\frac{\mathrm d}{\mathrm d\beta'}\mathbb{E}[X]
=\frac{\mathrm d}{\mathrm d\beta'}\Big(\frac{a_L'(\beta')}{a_L(\beta')}\Big)
=\frac{a_L''(\beta')a_L(\beta')-\big(a_L'(\beta')\big)^2}{\big(a_L(\beta')\big)^2}
=\mathbb{E}[X^2]-\mathbb{E}[X]^2.
\]
Therefore,
\[
g(\beta')
=\frac{a_L(\beta')\big(a_L(\beta')-1\big)}{a_L(\beta')\,\mathbb{E}[X]}
=\frac{a_L(\beta')-1}{\mathbb{E}[X]}.
\]
Differentiating yields
\[
g'(\beta')
=\frac{a_L'(\beta')\,\mathbb{E}[X]-\big(a_L(\beta')-1\big)\frac{\mathrm d}{\mathrm d\beta'}\mathbb{E}[X]}{\mathbb{E}[X]^2}=\frac{a_L(\beta')\,\mathbb{E}[X]^2-\big(a_L(\beta')-1\big)\Var(X)}{\mathbb{E}[X]^2}.
\]
Hence, to prove that $g(\beta')$ is increasing, it suffices to show that $g'(\beta')>0$, i.e.,
\[
\Var(X)
< \frac{a_L(\beta')}{a_L(\beta')-1}\,\mathbb{E}[X]^2.
\]

By Lemma~\ref{lem:second-moment-ineq} with \(k=L\), the distribution of
\(X_L\) is the same as the distribution of \(X\). Indeed, since
\(
e^{\beta' x_i}
=
\left(\frac{L}{i}\right)^{\beta'}
=
L^{\beta'} i^{-\beta'},
\)
the normalizing factor \(L^{\beta'}\) cancels, and hence
\(
p_i
=
\frac{e^{\beta' x_i}}{\sum_{j=1}^L e^{\beta' x_j}}
=
\frac{i^{-\beta'}}{\sum_{j=1}^L j^{-\beta'}}.
\)
Therefore, Lemma~\ref{lem:second-moment-ineq} gives
\(
\mathbb{E}[X^2]<2\mathbb{E}[X]^2.
\)
It follows that
\[
\Var(X)
=
\mathbb{E}[X^2]-\mathbb{E}[X]^2
<
\mathbb{E}[X]^2<
\frac{a_L(\beta')}{a_L(\beta')-1}\,\mathbb{E}[X]^2.
\]

It remains to verify the two limits. 
By a first-order Taylor expansion,
\[
a_L(\beta')=\frac{1}{L}\sum_{i=1}^L e^{\beta' x_i}
=1+\beta'\cdot \frac{1}{L}\sum_{i=1}^L x_i + o(\beta'),
\qquad
a_L'(\beta')=\frac{1}{L}\sum_{i=1}^L x_i e^{\beta' x_i}
=\frac{1}{L}\sum_{i=1}^L x_i + o(1),
\]
as $\beta'\to 0$. Since $\frac{1}{L}\sum_{i=1}^L x_i>0$, it follows that
\[
\lim_{\beta'\to 0} g(\beta')
=\lim_{\beta'\to 0}\frac{a_L(\beta')\big(a_L(\beta')-1\big)}{a_L'(\beta')}
=\lim_{\beta'\to 0}\frac{\big(1+o(1)\big)\big(\beta'\cdot \frac{1}{L}\sum_{i=1}^L x_i + o(\beta')\big)}{\frac{1}{L}\sum_{i=1}^L x_i + o(1)}
=0.
\]

As $\beta'\to\infty$, the sum defining $a_L(\beta')$ is dominated by its largest term, i.e.,
\[
a_L(\beta')=\frac{1}{L}\sum_{i=1}^L e^{\beta' x_i}
\sim \frac{1}{L}e^{\beta' x_1},
\qquad
a_L'(\beta')=\frac{1}{L}\sum_{i=1}^L x_i e^{\beta' x_i}
\sim \frac{1}{L}x_1 e^{\beta' x_1},
\]
and hence
\[
\mathbb{E}[X]=\frac{a_L'(\beta')}{a_L(\beta')}\to x_1=\log L,
\qquad
a_L(\beta')\to\infty.
\]
Therefore,
\[
\lim_{\beta'\to\infty} g(\beta')
=\lim_{\beta'\to\infty}\frac{a_L(\beta')-1}{\mathbb{E}[X]}
=+\infty.
\]
This completes the proof.
\end{proof}

\begin{lemma}\label{lem:ratio-convex}
Let $L\ge 2$ be an integer and define
\(
a_L(\beta') \coloneqq \frac{1}{L}\sum_{i=1}^L \Big(\frac{L}{i}\Big)^{\beta'}.
\)
Then the mapping
\[
\beta' \ \longmapsto\ \frac{a_L(\beta')\big(a_L(\beta')-1\big)}{a_L'(\beta')},
\qquad \beta'>0,
\]
is strictly convex on $(0,\infty)$.
\end{lemma}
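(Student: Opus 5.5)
The plan is to reuse the exponential-family representation from the proof of Lemma~\ref{lem:ratio-monotone}. Set $x_i=\log(L/i)$ and $a(\beta')\coloneqq a_L(\beta')=\frac1L\sum_i e^{\beta' x_i}$. Let $X$ be the tilted random variable with weights $p_i\propto e^{\beta' x_i}\propto i^{-\beta'}$, and write
\[
m=\mathbb E[X],\qquad V=\Var(X),\qquad \kappa_3=\mathbb E[(X-m)^3].
\]
The standard cumulant identities for this family are $a'=am$, $m'=V$ and $V'=\kappa_3$. From the proof of Lemma~\ref{lem:ratio-monotone} we already have
\[
h=\frac{a-1}{m},\qquad h'=\frac{am^2-(a-1)V}{m^2}=a-\frac{(a-1)V}{m^2}.
\]

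Differentiating once more with these identities gives
\[
h''=\frac{a(m^2-V)}{m}\;+\;(a-1)\,\frac{2V^2-\kappa_3 m}{m^3}.
\]
The first term is strictly positive. Indeed, $m>0$ for $\beta'>0$, and Lemma~\ref{lem:second-moment-ineq} gives $\mathbb E[X^2]<2m^2$, i.e.\ $V<m^2$. Multiplying through by $m^3>0$, strict convexity reduces to the moment inequality
\[
a\,(m^2-V)\,m^2+(a-1)\,(2V^2-\kappa_3 m)>0 .
\]
Since $a>a-1>0$ for $\beta'>0$ and $m^2-V>0$, the left-hand side exceeds $(a-1)\big[(m^2-V)m^2+2V^2-\kappa_3 m\big]$. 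So it suffices to prove
\[
\kappa_3\, m\;\le\; m^4-Vm^2+2V^2 .
\]
When $\kappa_3\le 0$ this is immediate, because the right-hand side is positive given $V<m^2$.

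The main obstacle is the case $\kappa_3>0$. Two pieces of structure are available: $X\ge 0$ with support in $[0,\log L]$, and $X$ has the specific tilted-Zipf law $p_i\propto i^{-\beta'}$. My first attempt will be a third-moment analogue of Lemma~\ref{lem:second-moment-ineq}. Expanding $\kappa_3=\mathbb E X^3-3mV-m^3$, the target becomes a bound of the form $\mathbb E[X^3]\le c\,m^3$ with $c$ small enough; for instance, $\mathbb E X^3\le 2m^3+2mV+2V^2/m$ would suffice. I would prove such a bound by the same route as Lemma~\ref{lem:second-moment-ineq}: induct on the support size $k$, or compare $\sum_i i^{-\beta'}\log^3(L/i)$ with $\big(\sum_i i^{-\beta'}\log(L/i)\big)^3$ through a Chebyshev-sum or rearrangement argument. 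The point is that $\log(L/i)$ is decreasing and concave-type in $i$, while the weights $i^{-\beta'}$ are also decreasing.

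If a uniform constant fails for some range of $\beta'$, I will fall back on the two asymptotic regimes and a continuity argument. As $\beta'\to0$, $X$ is close to uniform on $\{\log(L/i)\}$, and the inequality can be checked through the explicit limit. As $\beta'\to\infty$, $X$ concentrates at $\log L$, so $V,\kappa_3\to0$ exponentially while $m\to\log L$, and the inequality holds trivially. The intermediate range would then be covered by the $\mathbb E X^3$ bound.
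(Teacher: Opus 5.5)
Your reduction is correct, and it is the paper's own computation written in cumulants. Your $h''$ coincides with the paper's $g''=a'(2-r)-(a-1)r'$ with $r=\mathbb E[X^2]/m^2$, and $\kappa_3 m\le m^4-Vm^2+2V^2$ is a valid sufficient condition. The gap is that you never prove this inequality. For $\kappa_3>0$ it carries the whole content of the lemma, and you give strategies rather than an argument.

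The missing ingredient is the \emph{second} conclusion of Lemma~\ref{lem:second-moment-ineq}; you cited that lemma only for $\mathbb E[X^2]<2m^2$. It also proves $2\,\mathbb E[X]\,\mathbb E[X^3]<3\,\mathbb E[X^2]^2$, jointly with the second-moment bound, by induction on the support size $k$. Substituting $\mathbb E[X^3]=\kappa_3+3mV+m^3$ and $\mathbb E[X^2]=V+m^2$, this reads $m\kappa_3<\tfrac12m^4+\tfrac32V^2$. Since $(m^4-Vm^2+2V^2)-(\tfrac12m^4+\tfrac32V^2)=\tfrac12(m^2-V)^2\ge0$, your target follows at once, with no case split on the sign of $\kappa_3$. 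The paper inserts $m\,\mathbb E[X^3]<\tfrac32\mathbb E[X^2]^2$ directly into $r'$ and finishes with $a-\tfrac{a-1}{2}r>1$; your $a>a-1$ step is an equally valid finish.

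The routes you sketch would not have closed the gap on their own.
Chebyshev-sum or rearrangement comparisons of similarly ordered sequences ($\log(L/i)$ and $i^{-\beta'}$ are both decreasing) give lower bounds such as $\mathbb E[X^3]\ge\mathbb E[X]\,\mathbb E[X^2]$. That is the wrong direction for an upper bound on $\mathbb E[X^3]$.

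The fallback is not a proof either. Positivity in the limits $\beta'\to0$ and $\beta'\to\infty$, plus continuity, only gives positivity near the two ends. Covering the intermediate range with the $\mathbb E[X^3]$ bound makes the fallback depend on the very bound it was meant to replace.

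Support and sign information alone cannot suffice. Your target is equivalent to $m\,\mathbb E[X^3]\le 2m^4+2m^2V+2V^2$, which is homogeneous of degree four in $X$, so rescaling into $[0,\log L]$ is harmless. It fails for general nonnegative $X$: put mass $\tfrac12$ at $1$, mass $\epsilon^2$ at $1/\epsilon$, and the rest at $0$. The left side is then about $1/(2\epsilon)$ while the right side stays bounded. So the tilted-Zipf law has to enter, and that is exactly what the induction in Lemma~\ref{lem:second-moment-ineq} exploits.
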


\begin{proof}
As in Lemma~\ref{lem:ratio-monotone}, define
\[
g(\beta') \coloneqq \frac{a_L(\beta')\big(a_L(\beta')-1\big)}{a_L'(\beta')}.
\] For $i\in[L]$, define
\(
x_i\coloneqq \log\!\left(\frac{L}{i}\right).
\)
Let $X$ be the random variable supported on
$\{x_1,\dots,x_L\}$ with probability mass function
\[
p_i
\coloneqq
\frac{e^{\beta' x_i}}{\sum_{j=1}^L e^{\beta' x_j}}
=
\frac{e^{\beta' x_i}}{L\,a_L(\beta')},
\qquad i=1,\dots,L.
\]
Then
\(
a_L'(\beta')
=
a_L(\beta')\mathbb{E}[X]
\)
and
\(
g(\beta')
=
\frac{a_L(\beta')-1}{\mathbb{E}[X]}.
\)
We also have
\[
\frac{\mathrm d}{\mathrm d\beta'}\mathbb{E}[X]
=
\mathbb{E}[X^2]-\mathbb{E}[X]^2,
\]
and
\[
\frac{\mathrm d}{\mathrm d\beta'}\mathbb{E}[X^2]
=
\mathbb{E}[X^3]-\mathbb{E}[X]\mathbb{E}[X^2].
\]
Differentiating $g(\beta')$ gives
\[
\begin{aligned}
g'(\beta')
=
\frac{
a_L'(\beta')\mathbb{E}[X]
-
\big(a_L(\beta')-1\big)
\frac{\mathrm d}{\mathrm d\beta'}\mathbb{E}[X]
}{
\mathbb{E}[X]^2
} 
=
1+
\big(a_L(\beta')-1\big)
\left(
2-\frac{\mathbb{E}[X^2]}{\mathbb{E}[X]^2}
\right).
\end{aligned}
\]
Hence
\[
\begin{aligned}
g''(\beta')
&=
a_L'(\beta')
\left(
2-\frac{\mathbb{E}[X^2]}{\mathbb{E}[X]^2}
\right)
-
\big(a_L(\beta')-1\big)
\frac{\mathrm d}{\mathrm d\beta'}
\left(
\frac{\mathbb{E}[X^2]}{\mathbb{E}[X]^2}
\right) \\
&=
a_L(\beta')\mathbb{E}[X]
\left(
2-\frac{\mathbb{E}[X^2]}{\mathbb{E}[X]^2}
\right)
-
\big(a_L(\beta')-1\big)
\frac{\mathrm d}{\mathrm d\beta'}
\left(
\frac{\mathbb{E}[X^2]}{\mathbb{E}[X]^2}
\right).
\end{aligned}
\]
Moreover,
\[
\begin{aligned}
\frac{\mathrm d}{\mathrm d\beta'}
\left(
\frac{\mathbb{E}[X^2]}{\mathbb{E}[X]^2}
\right)
&=
\frac{
\mathbb{E}[X]
\mathbb{E}[X^3]
+
\mathbb{E}[X]^2\mathbb{E}[X^2]
-
2\mathbb{E}[X^2]^2
}{
\mathbb{E}[X]^3
}.
\end{aligned}
\]

Now apply Lemma~\ref{lem:second-moment-ineq} with $k=L$. The distribution of $X_L$ is the same as the distribution of $X$, because
\(
e^{\beta' x_i}
=
\left(\frac{L}{i}\right)^{\beta'}
=
L^{\beta'} i^{-\beta'}.
\)
Thus
\(
\mathbb{E}[X^2]<2\mathbb{E}[X]^2
\)
and
\(2\mathbb{E}[X]\mathbb{E}[X^3]
<
3\mathbb{E}[X^2]^2.
\)
The second inequality implies
\(
\mathbb{E}[X]\mathbb{E}[X^3]
<
\frac{3}{2}\mathbb{E}[X^2]^2.
\)
Therefore,
\[
\begin{aligned}
\frac{\mathrm d}{\mathrm d\beta'}
\left(
\frac{\mathbb{E}[X^2]}{\mathbb{E}[X]^2}
\right)
&<
\frac{
\frac32\mathbb{E}[X^2]^2
+
\mathbb{E}[X]^2\mathbb{E}[X^2]
-
2\mathbb{E}[X^2]^2
}{
\mathbb{E}[X]^3
} =
\frac{\mathbb{E}[X^2]}{\mathbb{E}[X]}
\left(
1-\frac{\mathbb{E}[X^2]}{2\mathbb{E}[X]^2}
\right).
\end{aligned}
\]
Consequently,
\[
\begin{aligned}
g''(\beta')
&>
a_L(\beta')\mathbb{E}[X]
\left(
2-\frac{\mathbb{E}[X^2]}{\mathbb{E}[X]^2}
\right) -
\big(a_L(\beta')-1\big)
\frac{\mathbb{E}[X]}{2}
\cdot
\frac{\mathbb{E}[X^2]}{\mathbb{E}[X]^2}
\left(
2-\frac{\mathbb{E}[X^2]}{\mathbb{E}[X]^2}
\right) \\
&=
\mathbb{E}[X]
\left(
2-\frac{\mathbb{E}[X^2]}{\mathbb{E}[X]^2}
\right)
\left[
a_L(\beta')
-
\frac{a_L(\beta')-1}{2}
\cdot
\frac{\mathbb{E}[X^2]}{\mathbb{E}[X]^2}
\right].
\end{aligned}
\]
Since
\(
\mathbb{E}[X^2]<2\mathbb{E}[X]^2,
\)
we have
\(
2-\frac{\mathbb{E}[X^2]}{\mathbb{E}[X]^2}>0.
\)
Also,
\[
a_L(\beta')
-
\frac{a_L(\beta')-1}{2}
\cdot
\frac{\mathbb{E}[X^2]}{\mathbb{E}[X]^2}
>
a_L(\beta')-\big(a_L(\beta')-1\big)
=
1.
\]
Finally, $\mathbb{E}[X]>0$ for $L\ge 2$ and $\beta'>0$. Therefore,
\(
g''(\beta')>0.
\)
This completes the proof.
\end{proof}

\begin{lemma}\label{lem:second-moment-ineq}
Fix $\beta'>0$. Let $w_i\coloneqq i^{-\beta'}$ and
\(
H_k\coloneqq \sum_{i=1}^k w_i\) for \( k\ge 1.
\)
For each $k\ge 2$, let $X_k$ be the discrete random variable supported on
\[
\left\{\log\!\left(\frac{k}{1}\right),\log\!\left(\frac{k}{2}\right),\dots,\log\!\left(\frac{k}{k}\right)\right\}
\]
with probability mass function
\[
\Pr\!\left(X_k=\log\!\left(\frac{k}{i}\right)\right)=\frac{w_i}{H_k},
\qquad i=1,\dots,k.
\]
Then, for every $k\ge 2$,
\(
\mathbb{E}[X_k^2]<2\mathbb{E}[X_k]^2,
\)
and
\(
2\mathbb{E}[X_k]\mathbb{E}[X_k^3]
<
3\mathbb{E}[X_k^2]^2.
\)
\end{lemma}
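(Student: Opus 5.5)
The plan is to recognize both inequalities as moment inequalities that are \emph{equalities for the exponential distribution}. For $\beta'=0$ and $k\to\infty$, $X_k=\log(k/i)$ with $i$ uniform on $[k]$ is approximately $\mathrm{Exp}(1)$, which has $\mathbb E[X^2]=2\mathbb E[X]^2$ and $2\mathbb E[X]\mathbb E[X^3]=12=3\mathbb E[X^2]^2$. Writing $\mu_r:=\mathbb E[X_k^r]$, both claims say that $r\mapsto \mu_r/r!$ is strictly log-concave at $r=1$ and at $r=2$:
\[
\Big(\tfrac{\mu_1}{1!}\Big)^2>\tfrac{\mu_0}{0!}\cdot\tfrac{\mu_2}{2!},
\qquad
\Big(\tfrac{\mu_2}{2!}\Big)^2>\tfrac{\mu_1}{1!}\cdot\tfrac{\mu_3}{3!}.
\]
This is the classical Barlow--Proschan property of increasing-failure-rate (IFR) lifetimes. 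So I would prove that $X_k$ is "more concentrated than exponential" in the IFR sense, and then run the standard moment argument.

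First I compute the survival function. For $x\ge 0$,
\[
S(x):=\Pr(X_k\ge x)=\Pr\big(i\le k e^{-x}\big)=H_{\lfloor k e^{-x}\rfloor}/H_k .
\]
Then I use $\mu_r=r\int_0^\infty x^{r-1}S(x)\,\mathrm dx$. The structural fact I need is that $\log S$ lies below every chord in the appropriate sense, i.e.\ $S(x+y)\le S(x)S(y)$-type comparisons (NBU) together with concavity of a suitable continuous majorant. Concretely, $\log S(x)=\log H_{\lfloor ke^{-x}\rfloor}-\log H_k$. Since $j\mapsto H_j$ has decreasing increments $j^{-\beta'}$ and $j\mapsto \log j$ is concave, the map $t\mapsto \log H_{e^{t}}$ is concave on the continuous interpolation; composing with $t=\log k-x$ gives concavity of $\log S$ up to the floor. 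The strictness for $\beta'>0$ should come from the strictly decreasing weights $i^{-\beta'}$, and for $\beta'\to0$ from the discreteness and the bounded support $[0,\log k]$.

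Given this, the proof of each inequality is the usual one. For the first, NBUE gives $\int_x^\infty S\le \mu_1 S(x)$, and integrating against $\mathrm dx$ yields $\mu_2/2\le\mu_1^2$. For the second, I apply the Barlow--Proschan log-concavity of $\mu_r/\Gamma(r+1)$: I write $\mu_r/r!$ as an integral of $\int_x^\infty S$ against $x^{r-2}/(r-2)!$ and use Cauchy--Schwarz/total positivity of order 2 of the kernel $x^{r}/r!$ together with log-concavity of $S$. Strictness is then inherited from the first step.

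The main obstacle is that $S$ is a step function with atoms, so $\log S$ is not literally concave. I would first try to replace $X_k$ by a continuous variable with piecewise-exponential (log-linear) survival function between the atoms $\log(k/j)$. I would check that this replacement only moves the moment ratios in the favorable direction, or handle the atoms directly in the moment integrals.

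If that becomes messy, the fallback is induction on $k$, which is the reason $H_k$ appears in the statement. Here $X_{k+1}$ equals $X_k+\log\frac{k+1}{k}$ with probability $H_k/H_{k+1}$ and equals $0$ otherwise. This gives explicit recursions for $\mu_1,\mu_2,\mu_3$. The base case $k=2$ is a two-point distribution that can be checked directly, and the inductive step requires combining the recursions with the inductive hypotheses and the estimate $w_{k+1}/H_{k+1}\le 1/(k+1)\approx\log\frac{k+1}{k}$.
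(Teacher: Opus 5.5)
\textbf{Gap 1: the IFR/Barlow--Proschan route cannot be applied to $X_k$.} The reduction that would make it applicable is exactly the part you leave unproved.

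\begin{itemize}
\item $X_k$ has an atom at $0$ of mass $w_k/H_k$, so $\mathbb{E}[X_k\mid X_k>0]=\mu_1H_k/H_{k-1}>\mu_1$. In your notation, for small $\epsilon>0$ one has $\int_\epsilon^\infty S=\mu_1-\epsilon H_{k-1}/H_k>\mu_1 S(\epsilon)$.
\item Hence $X_k$ is not even NBUE, so the NBUE step you use for the first inequality is false for $X_k$. A fortiori $X_k$ is not IFR, which your argument for the second inequality needs.
\item Your justification of the concavity of $t\mapsto\log H_{e^t}$ does not work. Decreasing increments of $H$ plus concavity of $\log$ give concavity of $\log H$ in $j$, not in $\log j$.
\item For $H(s)=s^{1-\beta'}$ this map is exactly linear. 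That is the exponential case, where both inequalities are equalities.
\item For the piecewise-linear interpolation of $(H_j)$, the elasticity $sH'(s)/H(s)$ increases on every linear piece, so the map is not concave.
\item For $\beta'<1$ the law of $X_k$ converges to $\mathrm{Exp}(1-\beta')$ as $k\to\infty$. So both inequalities are asymptotically tight, and their strictness comes entirely from finite-$k$ effects (truncation at $\log k$ and discreteness).
\end{itemize}
Your claim that a continuous surrogate ``only moves the moment ratios in the favorable direction'' is therefore where all the difficulty lies, and you do not address it.

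\textbf{Gap 2: the induction fallback is the paper's route, but as you sketch it, it does not close.} Write $m_r=\mathbb{E}[X_k^r]$ and $\ell=\log\frac{k+1}{k}$. Given the hypothesis $m_2<2m_1^2$, for the first inequality at $k+1$ it suffices that
\[
\frac{w_{k+1}}{H_k}<\frac{\ell(2m_1+\ell)}{2m_1^2+2\ell m_1+\ell^2}.
\]
\begin{itemize}
\item The right-hand side is decreasing in $m_1$ and is already below $\ell<1/k$ at $m_1=1$.
\item Your estimate $w_{k+1}/H_{k+1}\le 1/(k+1)$ gives only $w_{k+1}/H_k\le 1/k$. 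This is too weak once $m_1\ge 1$, and that does occur: $m_1\to 1/(1-\beta')$ for $\beta'<1$, and $m_1\to\infty$ for $\beta'\ge 1$.
\item Worse, for $\beta'<1$ both sides are $(1-\beta')/k$ to leading order. What you need is an upper bound on $m_1$ that is coupled to the same weight ratio.
\end{itemize}

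\textbf{How the paper closes the step.} It proves $m_1\le \ell u$ with $u=H_{k-1}/w_k$, using three ingredients:
\begin{itemize}
\item the telescoping identity $H_k\mathbb{E}[X_k]=\sum_{j=1}^{k-1}\log\frac{j+1}{j}\,H_j$;
\item the bound $\log\frac{j+1}{j}\le\frac{k}{j}\,\ell$;
\item the monotonicity of $H_j/(jw_j)$ in $j$, obtained from convexity of $x^{-\beta'}$.
\end{itemize}
The right-hand side above is then at least $\frac{2u+1}{2u^2+2u+1}$, and $\frac{2u+1}{2u^2+2u+1}>\frac{1}{u+1}>\frac{w_{k+1}}{H_k}$.

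\textbf{The second inequality needs no weight estimate.} The factor $H_k/H_{k+1}$ cancels, so the step reduces to stability under the shift $X_k\mapsto X_k+\ell$. That follows from both inductive hypotheses via $m_3<3m_2^2/(2m_1)<3m_1m_2$.
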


\begin{proof}
We prove the two inequalities simultaneously by induction on $k$.
First consider $k=2$. Since
\[
\Pr(X_2=\log 2)=\frac{1}{1+2^{-\beta'}},
\qquad
\Pr(X_2=0)=\frac{2^{-\beta'}}{1+2^{-\beta'}},
\]
we have
\[
\mathbb{E}[X_2]=\frac{\log 2}{1+2^{-\beta'}},
\qquad
\mathbb{E}[X_2^2]=\frac{(\log 2)^2}{1+2^{-\beta'}},
\qquad
\mathbb{E}[X_2^3]=\frac{(\log 2)^3}{1+2^{-\beta'}}.
\]
Because
\(
\frac{1}{1+2^{-\beta'}}>\frac12,
\)
we obtain
\(
\mathbb{E}[X_2^2]<2\mathbb{E}[X_2]^2.
\)
Moreover,
\[
2\mathbb{E}[X_2]\mathbb{E}[X_2^3]
=
2\left(\frac{1}{1+2^{-\beta'}}\right)^2(\log 2)^4
<
3\left(\frac{1}{1+2^{-\beta'}}\right)^2(\log 2)^4
=
3\mathbb{E}[X_2^2]^2.
\]
Thus the claim holds for $k=2$.

Now assume that, for some $k\ge 2$,
\(
\mathbb{E}[X_k^2]<2\mathbb{E}[X_k]^2
\)
and
\(
2\mathbb{E}[X_k]\mathbb{E}[X_k^3]
<
3\mathbb{E}[X_k^2]^2.
\)
We prove the two corresponding inequalities for $X_{k+1}$.

For $i=1,\dots,k$,
\(
\log\!\left(\frac{k+1}{i}\right)
=
\log\!\left(\frac{k}{i}\right)
+
\log\!\left(\frac{k+1}{k}\right),
\)
while the point $i=k+1$ contributes value $0$. Hence
\[
\mathbb{E}[X_{k+1}]
=
\frac{H_k}{H_{k+1}}
\left(
\mathbb{E}[X_k]
+
\log\!\left(\frac{k+1}{k}\right)
\right),
\]
\[
\mathbb{E}[X_{k+1}^2]
=
\frac{H_k}{H_{k+1}}
\left(
\mathbb{E}[X_k^2]
+
2\log\!\left(\frac{k+1}{k}\right)\mathbb{E}[X_k]
+
\log^2\!\left(\frac{k+1}{k}\right)
\right),
\]
and
\[
\mathbb{E}[X_{k+1}^3]
=
\frac{H_k}{H_{k+1}}
\left(
\mathbb{E}[X_k^3]
+
3\log\!\left(\frac{k+1}{k}\right)\mathbb{E}[X_k^2]
+
3\log^2\!\left(\frac{k+1}{k}\right)\mathbb{E}[X_k]
+
\log^3\!\left(\frac{k+1}{k}\right)
\right).
\]

We first prove
\(
\mathbb{E}[X_{k+1}^2]<2\mathbb{E}[X_{k+1}]^2.
\)
By the induction hypothesis, it is enough to prove
\[
2\mathbb{E}[X_k]^2
+
2\log\!\left(\frac{k+1}{k}\right)\mathbb{E}[X_k]
+
\log^2\!\left(\frac{k+1}{k}\right)
<
\frac{2H_k}{H_{k+1}}
\left(
\mathbb{E}[X_k]
+
\log\!\left(\frac{k+1}{k}\right)
\right)^2.
\]
Since $H_{k+1}=H_k+w_{k+1}$, this is equivalent to
\[
\frac{w_{k+1}}{H_k}
<
\frac{
\log\!\left(\frac{k+1}{k}\right)
\left(
2\mathbb{E}[X_k]
+
\log\!\left(\frac{k+1}{k}\right)
\right)
}{
2\mathbb{E}[X_k]^2
+
2\log\!\left(\frac{k+1}{k}\right)\mathbb{E}[X_k]
+
\log^2\!\left(\frac{k+1}{k}\right)
}.
\]

We next prove this last inequality. By the telescoping identity
\(
\log\!\left(\frac{k}{i}\right)
=
\sum_{j=i}^{k-1}\log\!\left(\frac{j+1}{j}\right),
\)
we get
\[
\begin{aligned}
H_k\mathbb{E}[X_k]
=
\sum_{i=1}^k w_i\log\!\left(\frac{k}{i}\right) 
=
\sum_{j=1}^{k-1}\log\!\left(\frac{j+1}{j}\right)H_j.
\end{aligned}
\]
For $1\le j\le k-1$,
\(
\log\!\left(\frac{j+1}{j}\right)
\le
\frac{k}{j}\log\!\left(\frac{k+1}{k}\right).
\)
Therefore,
\[
H_k\mathbb{E}[X_k]
\le
k\log\!\left(\frac{k+1}{k}\right)
\sum_{j=1}^{k-1}\frac{H_j}{j}.
\]
We now show that the sequence
\(
\left\{\frac{H_k}{k w_k}\right\}_{k\ge 1}
\)
is nondecreasing in \(k\). To this end, note that
\[
\frac{H_k}{k w_k}
=
\frac{1}{k}\sum_{i=1}^k \Big(\frac{i}{k}\Big)^{-\beta'}.
\]
Let \(f(x)\coloneqq x^{-\beta'}\), which is convex on \((0,\infty)\) for
\(\beta'>0\). For each \(i\in\{1,\dots,k\}\), observe that
\[
\frac{i}{k}
=
\Big(1-\frac{i}{k}\Big)\frac{i}{k+1}
+
\frac{i}{k}\cdot\frac{i+1}{k+1}.
\]
By convexity of \(f\), we obtain
\[
\Big(\frac{i}{k}\Big)^{-\beta'}
\le
\Big(1-\frac{i}{k}\Big)
\Big(\frac{i}{k+1}\Big)^{-\beta'}
+
\frac{i}{k}
\Big(\frac{i+1}{k+1}\Big)^{-\beta'}.
\]
Summing over \(i=1,\dots,k\) yields
\[
\sum_{i=1}^k \Big(\frac{i}{k}\Big)^{-\beta'}
\le
\frac{1}{k}\sum_{i=1}^k (k-i)
\Big(\frac{i}{k+1}\Big)^{-\beta'}
+
\frac{1}{k}\sum_{i=1}^k i
\Big(\frac{i+1}{k+1}\Big)^{-\beta'}.
\]
Re-indexing the second sum, we get
\[
\sum_{i=1}^k \Big(\frac{i}{k}\Big)^{-\beta'}
\le
\frac{1}{k}
\left[
\sum_{j=1}^k (k-j)
\Big(\frac{j}{k+1}\Big)^{-\beta'}
+
\sum_{j=2}^{k+1} (j-1)
\Big(\frac{j}{k+1}\Big)^{-\beta'}
\right].
\]
Since the coefficient of \(\big(\frac{j}{k+1}\big)^{-\beta'}\) is
\((k-j)+(j-1)=k-1\) for \(j=2,\dots,k\), while it is \(k-1\) for \(j=1\) and
\(k\) for \(j=k+1\), the right-hand side equals
\[
\frac{k-1}{k}
\sum_{j=1}^k
\Big(\frac{j}{k+1}\Big)^{-\beta'}
+
1.
\]
Dividing both sides by \(k\) and using
\[
\frac{H_{k+1}}{(k+1)w_{k+1}}
=
\frac{1}{k+1}
\sum_{j=1}^{k+1}
\Big(\frac{j}{k+1}\Big)^{-\beta'},
\qquad
\sum_{j=1}^k
\Big(\frac{j}{k+1}\Big)^{-\beta'}
=
(k+1)\frac{H_{k+1}}{(k+1)w_{k+1}}-1,
\]
we obtain
\[
\frac{H_k}{k w_k}
\le
\left(1-\frac{1}{k^2}\right)
\frac{H_{k+1}}{(k+1)w_{k+1}}
+
\frac{1}{k^2}.
\]
Since
\(
\frac{H_{k+1}}{(k+1)w_{k+1}}\ge 1,
\)
it follows that
\[
\frac{H_k}{k w_k}
\le
\frac{H_{k+1}}{(k+1)w_{k+1}},
\]
i.e.,
\(
\left\{\frac{H_k}{k w_k}\right\}_{k\ge 1}
\)
is nondecreasing.
Hence,
\[
\sum_{j=1}^{k-1}\frac{H_j}{j}
=
\sum_{j=1}^{k-1}\frac{H_j}{j w_j}w_j
\le
\frac{H_k}{k w_k}\sum_{j=1}^{k-1}w_j
=
\frac{H_kH_{k-1}}{k w_k}.
\]
It follows that
\(
H_k\mathbb{E}[X_k]
\le
\log\!\left(\frac{k+1}{k}\right)\frac{H_kH_{k-1}}{w_k},
\)
and hence
\[
\mathbb{E}[X_k]
\le
\log\!\left(\frac{k+1}{k}\right)\frac{H_{k-1}}{w_k}.
\]

Moreover, the expression
\[
\frac{
\log\!\left(\frac{k+1}{k}\right)
\left(
2\mathbb{E}[X_k]
+
\log\!\left(\frac{k+1}{k}\right)
\right)
}{
2\mathbb{E}[X_k]^2
+
2\log\!\left(\frac{k+1}{k}\right)\mathbb{E}[X_k]
+
\log^2\!\left(\frac{k+1}{k}\right)
}
\]
is decreasing as a function of $\mathbb{E}[X_k]$, because its derivative with respect to $\mathbb{E}[X_k]$ equals
\[
-\frac{
4\log\!\left(\frac{k+1}{k}\right)\mathbb{E}[X_k]
\left(
\mathbb{E}[X_k]+\log\!\left(\frac{k+1}{k}\right)
\right)
}{
\left(
2\mathbb{E}[X_k]^2
+
2\log\!\left(\frac{k+1}{k}\right)\mathbb{E}[X_k]
+
\log^2\!\left(\frac{k+1}{k}\right)
\right)^2
}
<0.
\]
Therefore,
\[
\begin{aligned}
&\frac{
\log\!\left(\frac{k+1}{k}\right)
\left(
2\mathbb{E}[X_k]
+
\log\!\left(\frac{k+1}{k}\right)
\right)
}{
2\mathbb{E}[X_k]^2
+
2\log\!\left(\frac{k+1}{k}\right)\mathbb{E}[X_k]
+
\log^2\!\left(\frac{k+1}{k}\right)
}  \ge
\frac{
2\frac{H_{k-1}}{w_k}+1
}{
2\left(\frac{H_{k-1}}{w_k}\right)^2
+
2\frac{H_{k-1}}{w_k}
+
1
}.
\end{aligned}
\]
On the other hand,
\[
\frac{w_{k+1}}{H_k}
=
\frac{\frac{w_{k+1}}{w_k}}{\frac{H_{k-1}}{w_k}+1}
<
\frac{1}{\frac{H_{k-1}}{w_k}+1}.
\]
Finally,
\[
\frac{
2\frac{H_{k-1}}{w_k}+1
}{
2\left(\frac{H_{k-1}}{w_k}\right)^2
+
2\frac{H_{k-1}}{w_k}
+
1
}
-
\frac{1}{\frac{H_{k-1}}{w_k}+1}
=
\frac{
\frac{H_{k-1}}{w_k}
}{
\left(\frac{H_{k-1}}{w_k}+1\right)
\left(
2\left(\frac{H_{k-1}}{w_k}\right)^2
+
2\frac{H_{k-1}}{w_k}
+
1
\right)
}
>0.
\]
Hence
\[
\frac{w_{k+1}}{H_k}
<
\frac{
\log\!\left(\frac{k+1}{k}\right)
\left(
2\mathbb{E}[X_k]
+
\log\!\left(\frac{k+1}{k}\right)
\right)
}{
2\mathbb{E}[X_k]^2
+
2\log\!\left(\frac{k+1}{k}\right)\mathbb{E}[X_k]
+
\log^2\!\left(\frac{k+1}{k}\right)
}.
\]
This proves
\(
\mathbb{E}[X_{k+1}^2]<2\mathbb{E}[X_{k+1}]^2.
\)

It remains to prove
\(
2\mathbb{E}[X_{k+1}]\mathbb{E}[X_{k+1}^3]
<
3\mathbb{E}[X_{k+1}^2]^2.
\)
Using the displayed formulas for the first three moments of $X_{k+1}$, 
it is enough to prove
\[
\begin{aligned}
&2
\left(
\mathbb{E}[X_k]
+
\log\!\left(\frac{k+1}{k}\right)
\right)
\left(
\mathbb{E}[X_k^3]
+
3\log\!\left(\frac{k+1}{k}\right)\mathbb{E}[X_k^2]
+
3\log^2\!\left(\frac{k+1}{k}\right)\mathbb{E}[X_k]
+
\log^3\!\left(\frac{k+1}{k}\right)
\right) \\
&\qquad<
3
\left(
\mathbb{E}[X_k^2]
+
2\log\!\left(\frac{k+1}{k}\right)\mathbb{E}[X_k]
+
\log^2\!\left(\frac{k+1}{k}\right)
\right)^2.
\end{aligned}
\]
The difference between the right-hand side and the left-hand side equals
\[
\begin{aligned}
&
3\mathbb{E}[X_k^2]^2
-
2\mathbb{E}[X_k]\mathbb{E}[X_k^3]+
2\log\!\left(\frac{k+1}{k}\right)
\left(
3\mathbb{E}[X_k]\mathbb{E}[X_k^2]
-
\mathbb{E}[X_k^3]
\right) \\
&\quad+
6\log^2\!\left(\frac{k+1}{k}\right)\mathbb{E}[X_k]^2
+
4\log^3\!\left(\frac{k+1}{k}\right)\mathbb{E}[X_k]
+
\log^4\!\left(\frac{k+1}{k}\right).
\end{aligned}
\]
By the induction hypothesis,
\(
3\mathbb{E}[X_k^2]^2
-
2\mathbb{E}[X_k]\mathbb{E}[X_k^3]
>0.
\)
Furthermore, since
\(
2\mathbb{E}[X_k]\mathbb{E}[X_k^3]
<
3\mathbb{E}[X_k^2]^2
\)
and
\(
\mathbb{E}[X_k^2]<2\mathbb{E}[X_k]^2,
\)
we have
\[
\mathbb{E}[X_k^3]
<
\frac{3\mathbb{E}[X_k^2]^2}{2\mathbb{E}[X_k]}
<
3\mathbb{E}[X_k]\mathbb{E}[X_k^2].
\]
Thus
\(
3\mathbb{E}[X_k]\mathbb{E}[X_k^2]
-
\mathbb{E}[X_k^3]
>0.
\)
All the remaining terms are also strictly positive. Hence the displayed difference is strictly positive, and therefore
\(
2\mathbb{E}[X_{k+1}]\mathbb{E}[X_{k+1}^3]
<
3\mathbb{E}[X_{k+1}^2]^2.
\)
This completes the induction.
\end{proof}

\section*{Appendix: Experimental Details and Additional Discussions}

\subsection*{D.\; Implementation Details and Training Setup}
\label{app:exp-details}
\subsubsection*{\normalsize D.1\; Experiments on Synthetic Tasks}
\label{app:exp-details-syn}
\paragraph{Graph Generation and Sample Pool.} We construct a balanced sample pool across configurations specified by the number of nodes \(N\), target expected out-degree \(\bar d\), and target distance \(l\) via rejection sampling.
Concretely, we first generate a directed unweighted graph \(\mathcal G\) on \(N\) labeled vertices by independently including each possible directed edge \((v_i,v_j)\) with probability
\(\bar d/(N-1)\),
so that the expected out-degree is \(\bar d\).
Given \(\mathcal G\), we then sample two distinct query vertices \(v_s\neq v_t\) and compute the shortest path length from \(v_s\) to \(v_t\) using a standard Breadth-First Search routine on directed edges.
If the resulting distance equals the target \(l\) (with the convention \(l=-1\) when \(v_t\) is unreachable from \(v_s\)), we add the instance \((\mathcal G,v_s,v_t,l)\) to the pool associated with \((N,\bar d,l)\).
We repeat this procedure until we either collect \(2000\) instances for each \((N,\bar d,l)\) combination or reach a preset sampling limit; if fewer than \(2000\) instances are found for a combination, we keep all collected instances.

In our experiments, we take
\(N\in\{6,8,10,12,14,16,18\}\),
\(\bar d\in\{2,\ldots,\lfloor N/2\rfloor\}\),
and \(l\in\{-1,1,2,3\}\).
Each retained instance is rendered into a natural language prompt using a unified template.
The prompt template is:

\begin{quote}
{\small\ttfamily
You are given a directed unweighted graph with nodes labeled 1..N.\\
N = \(\langle N\rangle\)\\
Edges are listed as ordered pairs (u,v), where each (u,v) represents a directed edge from u to v:\\
\(\langle\texttt{edge list}\rangle\)\\
Start s = \(\langle v_s\rangle\), Target t = \(\langle v_t\rangle\)\\
Question: Output the length (number of edges) of the shortest path from s to t. If no path exists, output -1.\\
Answer with a single integer only.
}
\end{quote}

The union of all instances across \((N,\bar d,l)\) forms our overall sample pool, from which we subsequently construct dataset splits for different experimental settings.

\paragraph{Warm-up and Self-Improvement Finetuning Details.}
Our synthetic shortest path task allows control of the initialization reward
\(V_{p_0}(\hat\theta_0)\) and \(V_{p_i}(\hat\theta_0)\) for \(i\in[L]\).
Note that, in our binary reward setting, \(V_p(\theta)\) corresponds to the (population) Pass@1 accuracy of the model \(\theta\) evaluated on questions drawn from \(p\).
Across experiments, we obtain different initial Pass@1 accuracies (corresponding to different values of \(V_p(\theta)\)) by varying the initialization model via warm-up finetuning and by varying the task difficulty through selecting different subsets of the overall sample pool.

Concretely, our warm-up datasets and the datasets used for self-improvement are sampled as disjoint subsets from the overall sample pool.
Each warm-up dataset has a balanced composition across different \((N,\bar d,l)\) combinations, i.e., it contains equal numbers of samples for each \((N,\bar d,l)\).
We warm up the pretrained base LLM using different warm-up datasets under different training configurations (learning rate and random seed) to obtain different initialization models.

\begin{table*}[h]
\centering
{\fontsize{8pt}{9pt}\selectfont
\setlength{\tabcolsep}{8pt}
\renewcommand{\arraystretch}{1.15}
\begin{tabularx}{0.24\linewidth}{@{} l l @{}}
\toprule
Hyperparameter & Value \\
\midrule
Learning rate & \(2\times 10^{-4}\) \\
Batch size & \(8\) \\
LoRA rank & \(16\) \\
LoRA scaling & \(32\) \\
LoRA dropout & \(5\times 10^{-2}\) \\
\bottomrule
\end{tabularx}
}
\vspace{1mm}
\caption{\small Self-improvement finetuning hyperparameters used at each iteration on synthetic tasks.}
\label{tab:si-hparams}
\vspace{-3mm}
\end{table*}

\begin{table*}[h]
\centering
{\fontsize{8pt}{9pt}\selectfont
\setlength{\tabcolsep}{8pt}
\renewcommand{\arraystretch}{1.15}
\begin{tabularx}{0.55\linewidth}{@{} l c c c c c @{}}
\toprule
Figure & Initial test Pass@1 & $n$ & $m$ & $\beta'$ & $\Delta$ \\
\midrule
Figure~\ref{fig:isi_sp_main}(a) & -- & 5,000 & 1 & -- & -- \\
Figure~\ref{fig:isi_sp_main}(b) & 0.32 & -- & 1 & -- & -- \\
Figure~\ref{fig:isi_sp_main}(c) & 0.32 & 4,000 & -- & -- & -- \\
Figure~\ref{fig:e2h_sp_main}(a) & -- & 3,000 & 1 & -- & 0.04 \\
Figure~\ref{fig:e2h_sp_main}(b) & -- & -- & 1 & 0.25 & 0.04 \\
\bottomrule
\end{tabularx}
}
\vspace{1mm}
\caption{\small Experimental settings for Figures~\ref{fig:isi_sp_main} and~\ref{fig:e2h_sp_main}.}
\label{tab:si-exp-settings}
\vspace{-3mm}
\end{table*}

Next, for self-improvement, given an initialization model \(\hat\theta_0\), we select an appropriate training set of size \(nL\) together with a held-out test set such that the empirical Pass@1 accuracy of \(\hat\theta_0\) matches the target value \(V_{p_0}(\hat\theta_0)\), enabling control of the initialization performance.
We fix the test set across all \(L\) iterations, with a test size of \(1000\).
To mitigate the effect of class imbalance, we enforce that, for both the per-iteration training set and the test set, the number of questions in each distance class \(l\in\{-1,1,2,3\}\) is balanced; moreover, within each class, we also match the number of questions that are initially answered correctly by \(\hat\theta_0\).
For experiments comparing easy-to-hard against the baseline, we require a different form of control. Specifically, the training set of size $nL$ should admit two different partitions:
(i) a baseline partition whose $L$ subsets (each containing $n$ questions for one iteration) have the same (up to a small tolerance) initial Pass@1, matching $V_{p_0}(\hat\theta_0)$; and
(ii) an easy-to-hard partition whose initial Pass@1 values across iterations satisfy Assumption~\ref{assump:e2h-powerlaw}.
we impose these constraints jointly and solve the resulting data selection problem using a CP-SAT solver.

Given an initialization model $\hat\theta_0$ and a constructed dataset, we perform iterative self-improvement exactly following the setup in Sections~\ref{sec:prob} and~\ref{subsec:e2h-baseline}.
At each iteration, we use the same finetuning hyperparameters summarized in Table~\ref{tab:si-hparams}, and train for one epoch by default, with a minimum of $50$ optimization steps for cases with too few accepted samples.
Additionally, each data point in Figures~\ref{fig:isi_sp_main} and~\ref{fig:e2h_sp_main} is obtained by averaging over five runs with different random seeds.
Table~\ref{tab:si-exp-settings} summarizes the experimental settings that are held fixed within each panel of Figures~\ref{fig:isi_sp_main} and~\ref{fig:e2h_sp_main}.

\subsubsection*{\normalsize D.2\; Experiments on Standard Mathematical Reasoning Benchmarks}
\label{app:exp-details-real}

\paragraph{Dataset split details.} 
GSM8K~\cite{cobbe2021training} is released with 7473 training examples and 1319 test examples.
Each GSM8K question is rendered into a natural language prompt using a unified template.
The prompt template is:

\begin{quote}
{\small\ttfamily
Solve the following grade-school math word problem.\\
Give only the final answer as an integer. Do not show steps.\\
\\
Problem:
$\langle$question$\rangle$\\
\\
Answer:
}
\end{quote}

Throughout our GSM8K experiments, we fix the official test split for evaluation.
For training, we fix the number of self-improvement iterations to \(L=3\), and partition the training data into three rounds such that the model's initial performance on each round matches its initial performance on the full training set.
Except in the experiment that explicitly varies the question budget \(n\), we set \(n=2400\).

DeepMind Mathematics~\citep{saxton2019analysing} contains multiple categories of mathematical reasoning problems, each further divided into fine-grained modules, with a native easy/medium/hard difficulty partition.
To facilitate evaluation, we filter out all modules whose answers are not single numeric values.
For the remaining modules, each question is rendered into a natural language prompt using the following unified template:

\begin{quote}
{\small\ttfamily
Solve the following school-level math problem.\\
Give only the final answer as a single integer, decimal, or fraction. Do not show steps.\\
\\
Problem:
$\langle$question$\rangle$\\
\\
Answer:
}
\end{quote}

We construct a relatively simple task that yields the high initial model performance reported in Table~\ref{tab:si-traj-models-new} by selecting data from the easy splits of the nine modules on which \textsc{Qwen3-8B} achieves the highest accuracy.
These modules are \texttt{algebra\_linear\_1d}, \texttt{arithmetic\_add\_or\_sub}, \texttt{arithmetic\_div}, \texttt{arithmetic\_mul}, \texttt{arithmetic\_nearest\_integer\_root}, \texttt{measurement\_conversion}, \texttt{numbers\_gcd}, \texttt{numbers\_place\_value}, and \texttt{numbers\_round\_number}.
Our training and test sets contain samples drawn uniformly from these modules.
We then partition the training data into three iterations such that the model's initial performance on each round matches (up to a small tolerance) its initial performance on the selected training set.
For Table~\ref{tab:si-traj-models-new}, we set \(n=400\).
For the iterative self-improvement with easy-to-hard curriculum experiments, we consider the three largest categories, namely algebra, arithmetic, and numbers, and run experiments on each module in these categories.
The data construction procedure is described in Section~\ref{subsubsec:math-benchmarks-results}.
For Table~\ref{tab:gap-rho-qwen3-8b}, we set \(n=1000\).

\paragraph{Self-Improvement Finetuning Details.}
Similar to Appendix~\hyperref[app:exp-details-syn]{D.1}, for both GSM8K and DeepMind Mathematics, we perform iterative self-improvement exactly following the setup in Sections~\ref{sec:prob} and~\ref{subsec:e2h-baseline}.
Across all experiments on standard mathematical reasoning benchmarks, we use the same self-improvement finetuning hyperparameters summarized in Table~\ref{tab:si-hparams-real}.

\begin{table*}[h]
\centering
{\fontsize{8pt}{9pt}\selectfont
\setlength{\tabcolsep}{8pt}
\renewcommand{\arraystretch}{1.15}
\begin{tabularx}{0.24\linewidth}{@{} l l @{}}
\toprule
Hyperparameter & Value \\
\midrule
Learning rate & \(2\times 10^{-4}\) \\
Weight decay & \(1\times 10^{-4}\) \\
Batch size & \(8\) \\
LoRA rank & \(16\) \\
LoRA scaling & \(32\) \\
LoRA dropout & \(5\times 10^{-2}\) \\
\bottomrule
\end{tabularx}
}
\vspace{1mm}
\caption{\small Self-improvement finetuning hyperparameters used at each iteration on standard mathematical reasoning benchmarks.}
\label{tab:si-hparams-real}
\vspace{-3mm}
\end{table*}

\subsection*{\Large E.\; Additional Discussions}
\label{app:add_dis}
Our self-improvement framework has the potential to be extended to broader settings beyond the formulation considered in this paper. First, our reward formulation follows the clean verification setting also adopted in many prior theoretical analyses of self-improvement (e.g., \citep{huang2024self}) to isolate the core feedback mechanism. However, considering reward noise is also practically important, since in realistic LLM pipelines the reward signal can be imperfect: verifiers may make mistakes, learned reward models may be biased, and self-verification signals may depend on the current model's own capabilities. To address this issue, our framework can in principle be extended to noisy-reward settings once an explicit relationship between the observed noisy reward and the latent clean reward is specified. Nevertheless, more realistic formulations of reward noise generally depend on assumptions about the specific data distribution, verifier behavior, and model structure. We therefore view this as an important future direction.

Second, our framework also has a mechanistic connection to test-time compute methods. Although test-time compute methods based purely on search or self-correction without parameter updates~\citep{chen2025sets} are outside the scope of our current theory, many test-time training approaches~\citep{sun2024learning, zhang2025test} improve model performance via gradient-based parameter updates, making them mechanistically close to our setting. Nevertheless, since our current finite-sample bounds rely on empirical maximum-likelihood training over reward-filtered samples, the specific bounds and feedback map do not directly apply to these methods. Accordingly, deriving analogous finite-sample penalty terms and feedback loops for the specific loss objectives and update rules of such test-time training methods is also an interesting direction for future work.

\end{document}